\newcommand{\Tcal}{\mathcal{T}}
\newcommand{\ie}{\textit{i.e.}}
\newcommand{\RNum}[1]{\uppercase\expandafter{\romannumeral #1\relax}}
\newcommand{\Jcal}{\mathcal{J}}
\newcommand{\Eg}{\textit{E.g.}}
\definecolor{mygreen}{rgb}{0,0.6,0}
\DeclareMathOperator*{\argmax}{arg\,max}
\DeclareMathOperator*{\argmin}{arg\,min}
\Crefname{assumption}{Assumption}{Assumptions}
\crefname{lemma}{lemma}{lemmas}
\Crefname{lemma}{Lemma}{Lemmas}
\def\cref@getref#1#2{%
  \xdef\@lastusedlabel{#1}%
  \expandafter\let\expandafter#2\csname r@#1@cref\endcsname%
  \expandafter\expandafter\expandafter\def%
    \expandafter\expandafter\expandafter#2%
    \expandafter\expandafter\expandafter{%
      \expandafter\@firstoftwo#2}}%
\newcommand\enameref[1]{%
  \expandafter\ifx\csname r@#1\endcsname\relax
  ??\typeout{^^JLaTeX Warning: Reference #1 undefined on input line \the\inputlineno}%
  \else
    \expandafter\expandafter\expandafter\@thirdoffive\csname r@#1\endcsname
  \fi
}    
\newcommand{\rea}{realizability\,}
\newcommand{\comp}{completeness\,}
\def\Holder{{H\"{o}lder}}
\newcommand{\MDL}{\mathrm{mdl}}
\newcommand{\MIL}{\mathrm{mil}}
\newcommand{\bN}{\mathcal{N}}
\newcommand{\bigO}{\mathcal{O}} %
\newcommand{\ts}{\textstyle}
\newcommand{\hP}{\mathbb{P}_n}
\newcommand{\op}{\mathrm{o}_{p}}
\newcommand{\E}{\mathbb{E}}
\newcommand{\pa}{\mathrm{\pa}}
\newcommand{\rI}{\mathrm{I}}
\newcommand{\var}{\mathrm{var}}
\newcommand{\rE}{\mathrm{E}}
\newcommand{\rd}{\mathrm{d}}
\newcommand{\prns}[1]{\left(#1\right)}
\newcommand{\braces}[1]{\left\{#1\right\}}
\newcommand{\bracks}[1]{\left[#1\right]}
\newcommand{\abs}[1]{\left|#1\right|}
\newcommand{\Rl}{\mathbb{R}}
\newcommand{\epol}{\pi^\mathrm{e}}
\newcommand{\bpol}{\pi^{\mathrm{b}}}
\newcommand{\Fcal}{\mathcal{F}}
\newcommand{\Scal}{\mathcal{S}}
\newcommand{\Acal}{\mathcal{A}}
\newcommand{\Bcal}{\mathcal{B}}
\newcommand{\Rbbb}{\mathbb{R}}
\newcommand{\Wbbb}{\mathbb{W}}
\newcommand{\Qbbb}{\mathbb{Q}}
\newcommand{\Vbbb}{\mathbb{V}}
\newcommand{\Xcal}{\mathcal{X}}
\newcommand{\Wcal}{\mathcal{W}}
\newcommand{\RR}{\mathbb{R}}
\newcommand{\Dcal}{\mathcal{D}}
\newcommand{\Gcal}{\mathcal{G}}
\newcommand{\Rcal}{\mathcal{R}}
\newcommand{\Ncal}{\mathcal{N}}
\newcommand{\Error}{\mathrm{Err}}
\renewcommand{\hat}{\widehat}
\renewcommand{\tilde}{\widetilde}
\renewcommand{\eqref}[1]{(\ref{#1})}
\newcommand{\RN}[1]{%
  \textup{\uppercase\expandafter{\romannumeral#1}}%
}
\def\boxit#1{\vbox{\hrule\hbox{\vrule\kern6pt\vbox{\kern6pt#1\kern6pt}\kern6pt\vrule}\hrule}}
\newcommand{\kibitz}[2]{\ifnum\Comments=1\textcolor{#1}{#2}\fi}
\newcommand{\Pcal}{\mathcal{P}}
\newtheorem{theorem}{Theorem}[section]
\newtheorem{lemma}[theorem]{Lemma}
\newtheorem{corollary}[theorem]{Corollarry}
\theoremstyle{remark}
\newtheorem{example}{Example}
\theoremstyle{definition}
\newtheorem{assumption}{Assumption}
\newtheorem{remark}{Remark}
\title{{\Large Finite Sample Analysis of Minimax Offline Reinforcement Learning:\\ Completeness, Fast Rates and First-Order Efficiency}}
\author[1]{Masatoshi Uehara \footnote{ \href{mailto:mu223@cornell.edu}{mu223@cornell.edu}}}
\author[2]{Masaaki Imaizumi }
\author[3]{Nan Jiang}   
\author[1]{Nathan Kallus}
\author[1]{Wen Sun}
\author[3]{Tengyang Xie}
\affil[1]{Cornell University  }
\affil[2]{The University of Tokyo}
\affil[3]{University of Illinois at Urbana-Champaign }
\date{}
\begin{document}
%

\maketitle

\begin{abstract}%
We offer a theoretical characterization of off-policy evaluation (OPE) in reinforcement learning using function approximation for marginal importance weights and $q$-functions when these are estimated using recent minimax methods. Under various combinations of realizability and completeness assumptions, we show that the minimax approach enables us to achieve a fast rate of convergence for weights and quality functions, characterized by the critical inequality \citep{bartlett2005}. Based on this result, we analyze convergence rates for OPE. In particular, we introduce novel alternative completeness conditions under which OPE is feasible and we present the first finite-sample result with first-order efficiency in non-tabular environments, i.e., having the minimal coefficient in the leading term.
\end{abstract}






%

\maketitle

\section{Introduction} \label{sec:intro}


 Off-policy evaluation (OPE) is the problem of estimating the expected return in an unknown Markov decision process (MDP) of a given decision policy, known as the evaluation policy, using historical data generated by another policy, known as the behavior policy \citep{Precup2000,pmlr-v37-thomas15,pmlr-v97-bibaut19a,LevineSergey2020ORLT,NamkoongHongseok2020OPEF,ShiChengchun2021DOIE,MaCong2021MOEf}. OPE is especially important in applications where online experimentation may be costly or dangerous, such as in medicine \citep{LuckettDanielJ.2018EDTR,2019PM,chakraborty2013statistical}, recommendation systems \citep{ChenMinmin2019TOCf}, and education \citep{Mandel2014}. OPE is also utilized as an intermediate step for 
 off-policy 
 policy optimization, where the goal is finding a well-performing policy using the historical data \citep{pmlr-v37-thomas15,LazaricAlessandro2016AoCP}. 
 
Our goal is to develop estimators for OPE and study their convergence rates.
To give a high-level overview of our results, we briefly introduce our problem setting here, with formal definitions deferred to \cref{sec:preparation}. We consider an MDP represented by a tuple $M=\{\Scal,\Acal,P_{S'|S,A},P_{R|S,A},\gamma,d_0\}$, where $\Scal$ is a state space, $\Acal$ is an action space, $P_{S'\mid S,A}:\Scal\times \Acal\to \Delta(\Scal)$ is an \emph{unknown} transition distribution, $P_{R\mid S,A}:\Scal\times \Acal\to \Delta(\RR)$ is an \emph{unknown} reward distribution, $\gamma \in [0,1)$ is a discount factor, and $d_0\in\Delta(\Scal)$ is an initial state distribution on $\Scal$. A policy  $\pi:\Scal\to \Delta(\Acal)$ maps from state to a distribution over $\Acal$. In OPE, we consider two policies: an evaluation policy $\epol$ and a behavior policy $\bpol$. We define the policy value of the evaluation policy $\epol$ as
\begin{align}
    J := (1-\gamma) \E_{\epol} \left[ \sum_{t=0}^\infty \gamma^t r_t \right], \label{def:policy_value}
\end{align}
where the expectation is taken under the MDP and evaluation policy $\epol$. In other words, the above expectation is taken with respect to the distribution over trajectories $s_0,a_0,r_0,s_1,a_1,\cdots$ given by the product $d_0(s_0)\epol(a_0| s_0)P_{R|S,A}(r_0|s_0,a_0)P_{S'|S,A}(s_1|s_0,a_0)\epol(a_1| s_1)\cdots$.

We wish to estimate $J$ based on $n$ independent and identically distributed transition observations
generated by the behavior policy $\bpol$ under the same MDP:
\begin{align}
    \{(s_i,a_i,r_i,s_i')\}_{i=1}^{n}\sim P_{S,A,R,S'} (s,a,r,s') =  P_{S}(s)\bpol(a\mid s)P_{S'|S,A}(s'\mid s,a)P_{R|S,A}(r \mid s,a). \label{def:data_generating_process} 
\end{align}
where $P_{S}$ is some state distribution, possibly different from $d_0$. 

When $\gamma=0$ (and understanding $0^0=1$), the OPE problem is reduced to the problem of treatment effect estimation from unconfounded data in causal inference \citep{ImbensGuido2015Cifs,ChernozhukovVictor2018DMLo,robins94}. However, the reinforcement learning (RL) problem ($\gamma>0$) is more complicated in that we need to deal with the sequential nature of the problem and leverage its Markovianity.  
 
Standard approaches to estimating the policy value $J$ use the $q$-function, the policy value when starting at a given state-action pair, or the $w$-function, the density ratio between the evaluation policy's discounted state-action average occupancy distribution and the state-action distribution of the observed data (see \cref{sec:preparation} for precise definitions).
These approaches use the fact that for the true $q$-function, $q_{\epol} : \Scal\times \Acal \to \RR$, and for the true $w$-function, $w_{\epol} : \Scal\times \Acal \to \RR$, we can express $J$ as
\begin{align}
    J&=(1-\gamma) \E_{s_0\sim d_0,a\sim \epol(\cdot \mid s_0)}[q_{\epol}(s_0,a)]=\E_{s \sim P_S, a \sim \epol(\cdot \mid s), r\sim P_{R\mid S,A}(\cdot \mid s,a)} [w_{\epol}(s,a)r]\notag\\
    &=\label{eq:Jforms}
    (1-\gamma) \E_{s_0\sim d_0,a\sim \epol(\cdot \mid s_0)}[q_{\epol}(s_0,a)]
    \\&\phantom{=}+
    \E_{s \sim P_S, a \sim \epol(\cdot \mid s), r\sim P_{R\mid S,A}(\cdot \mid s,a)} [w_{\epol}(s,a)(r-q_{\epol}(s,a)+\gamma \E_{a' \sim \epol(\cdot \mid s')}[q_{\epol}(s',a')])]
    .\notag
\end{align}
Therefore, we can estimate $J$ by estimating $q_{\epol}$ and/or $w_{\epol}$ from the data and then plugging these in above and replacing true expectations with empirical averages.
Plugging into the reformulation in the first equality yields to the so-called direct method of estimation, the second yields importance sampling \citep{Liu2018}, and the third yields the doubly robust method \citep{KallusNathan2019EBtC}.

\textit{Minimax estimators} have recently attracted significant attention for estimating $q$-functions \citep{antos2008learning,FengYihao2019AKLf,UeharaMasatoshi2019MWaQ} and $w$-functions \citep{NachumOfir2019APGf,zhang2019gendice,UeharaMasatoshi2019MWaQ}.
These minimax estimators are defined as solutions of zero-sum games between two function classes, the function to be learned and a critic function. Minimax estimators are easily amenable to function approximation as one can choose any function class, and models for the MDP itself are never needed.
In our work, we consider a unified formulation for minimax $q$- and $w$-estimation that incorporates the possibility of a \emph{stabilizer} term, which penalizes the norm of the critic function (distinguished from a regularizer, which penalizes the learned function). This unified formulation, in particular, also recovers as special cases many existing estimators, including modified RBM \citep{antos2008learning}, the DICE family of estimators \citep{NachumOfir2019APGf,zhang2019gendice}, MWL and MQL \citep{UeharaMasatoshi2019MWaQ}.
Minimax $q$-estimators can be seen as one-step alternative to the multi-step Fitted Q-Iteration (FQI) method \citep{ernst2005tree,le2019batch}, which uses least-squares regression to repeatedly apply backward recursion of values.
For $w$-estimation with general function approximation, the minimax approach is the predominant method.

In this paper, we study minimax estimators of $q$- and $w$-functions and their resulting OPE estimators. 
We first answer the following question: 

\vspace{1ex}
~\hfill\parbox{0.9\textwidth}{\it Under what sufficient 
conditions can we obtain (non-asymptotic) statistical rates with polynomial dependence on problem-dependent quantities for learning $q$-functions, $w$-functions, and policy value using minimax methods with general function approximation, and how do the rates change with different conditions?}\hfill~
\vspace{1ex}

\noindent The problem-dependent quantities mentioned here include the effective horizon $1/(1-\gamma)$, an upper bound on the reward $r$, and an upper bound on $w_{\epol}$. We emphasize that a number of standard methods for OPE utilizing importance sampling based on sequential density ratios \citep{jiang,Precup2000,robins99,thomas2016,pmlr-v97-bibaut19a,zhang2013robust,jiang2019entropy} do \emph{not} satisfy the polynomial dependence, since their error is exponential in the effective horizon.

%
%
In our analyses, two types of expressiveness conditions on the function classes play important roles: \textit{realizability} and \textit{completeness}.  
Given hypothesis classes $\Qbbb$ and $\Wbbb$ for the $q$- and $w$-functions, respectively, realizability refers to the assumption that they contain the true functions, namely, $q_{\epol} \in \Qbbb$ and $w_{\epol} \in \Wbbb$, respectively.
Completeness for $q$-functions states that the hypothesis class is closed under the Bellman operator $\Bcal$ (precisely defined in Section \ref{sec:preparation}), i.e., that $\{\Bcal q: q \in  \Qbbb\}\subset\Qbbb$ holds \citep{ChenJinglin2019ICiB,munos2008finite}. 
Completeness for $w$-functions is a new analogous condition we introduce with respect to the \emph{backward} Bellman operator.
We also introduce new adjoint versions of both $q$- and $w$-completeness. We show how different combinations of these realizability, completeness, and adjoint completeness assumptions provide sufficient conditions and how the rates differ in each case and with or without stabilizer terms.

For $q$-estimation, is is well-known that $q$-realizability and -completeness together yield finite-sample guarantees for FQI \citep{munos2008finite,FanJianqing2019ATAo}. Completeness plays a crucial role in handling the dynamics of the problem, and when we only assume realizability we generally suffer exponential dependence on the effective horizon \citep{edsarx.2010.1189520200101}, which is undesirable in RL \citep{Liu2018,KallusUehara2019}.
Unlike FQI, which only applies to $q$-estimation, statistical properties of minimax estimators are neither well understood nor tightly characterized, despite of their recent popularity. In particular, no work currently derives finite-sample convergence rates for $w$-estimation with general function approximation. For minimax $q$-estimators, although there is some work that derives finite-sample errors \citep{antos2008learning,ChenJinglin2019ICiB,JMLR:v17:13-016}, their results are either limited to finite hypothesis classes or obtain slow rates of $O(n^{-1/4})$ even in settings where $O(n^{-1/2})$ rates can be attained, as we will do.
Finally, we obtain rates of convergence under completely new combinations of realizability and completeness conditions, showing that estimation without bad horizon dependence is possible in new regimes.

To tackle this first question, under a variety of realizability and completeness condition combinations,
we establish \emph{fast} rates of convergence of minimax $q$- and $w$-estimators in terms of Bellman residual errors and L2 errors (precisely defined in \cref{sec:without_sta_rate}). The fast rates are characterized by the critical radius of a localized Rademacher complexities of the function classes \citep{bartlett2005}. This same quantity is often used to tightly characterize fast rates for regression and density estimation \citep{WainwrightMartinJ2019HS:A}. The results also satisfy our requirement of polynomial dependence on the problem-dependent quantities. In particular, we obtain the \emph{first} finite-sample result with a fast rate for any $w$-estimator. We also obtain the \emph{first} finite-sample result with a fast rate for minimax $q$-estimators with general function approximation.



A second question we tackle is

\vspace{1ex}
~\hfill\parbox{0.9\textwidth}{\it Under what sufficient 
conditions can we obtain (non-asymptotic) statistical rates with first-order efficiency for minimax policy value estimation?}\hfill~
\vspace{1ex}

For OPE in the infinite-horizon setting we study,
\citep{KallusNathan2019EBtC} derived the semiparametric efficiency bound, which is essentially the smallest-possible value of the limit of $\sqrt{n}$-scaled mean squared error (for more detail on semiparametric efficiency bounds see \citep{VaartA.W.vander1998As,TsiatisAnastasiosA2006STaM,KosorokMichaelR2008ItEP}).
Representing this quantity by $\mathrm{EB}$, we say that an error bound with probability $1-\delta$ is first-order efficient if it has the form $\sqrt{2(\mathrm{EB})\log(1/\delta)/n} + o(1/\sqrt{n})$, that is, if the constant on the leading term is the smallest-possible instance-dependent constant for a sub-Gaussian tail.
We provide the first first-order efficient finite-sample guarantees for OPE in general function approximation settings.
In comparison, \citep{KallusNathan2019EBtC} only showed that the efficiency bound is achievable asymptotically, hiding many important and possibly exponential terms in the higher-order terms, and they rely on the estimation of $q$- and $w$-functions at some rates as a basic assumption, where it is not clear what sufficient conditions may make such estimation possible.
Aside from this, the only first-order efficient guarantees are in the time-varying tabular setting (i.e., finite state and action spaces with time-dependent transition and reward distributions) using model-based estimates \citep{YinMing2020AEOE}.
Under \rea and completeness of both $w$- and $q$-functions, we obtain the
\emph{first} finite-sample result with first-order efficiency for OPE in general environments with general function approximation. 
We next give a brief overview of the paper and the results. We summarize some of their implications in \cref{tab:summary}.
\begin{itemize}
   \item (\cref{sec:without_sta}) Under $q$- and $w$-realizability, we prove general finite-sample error bounds for estimating policy value using minimax estimators (without stabilizers) by refining the analysis of \cite{UeharaMasatoshi2019MWaQ}. We derive specific rates of convergence depending on the types of hypothesis classes, such as the VC classes, Sobolev balls, and neural networks. 
   \item (\cref{sec:fast_slow,sec:without_sta_rate}) 
   We relax one of the realizability conditions and impose instead one of a variety of completeness-type conditions. We show minimax estimators (with stabilizers) can achieve a fast rate of convergence in estimating the $q$- and $w$-functions themselves in either Bellman residual error or L2-error. 
   The rates depends on the specific hypothesis class as above.
   We apply these results to derive the rate of policy value estimators, which is generally slower than the rate in \cref{sec:without_sta}.  
   \item (\cref{sec:efficiency}) Under both \rea and completeness of both $w$- and $q$-functions, we show that the policy value estimator achieves a convergence rate that is generally faster than the cases in  \cref{sec:without_sta,sec:fast_slow,sec:without_sta_rate}. Under mild conditions, we obtain the first non-asymptotic first-order-efficient bound in general settings. As a byproduct, we also obtain the super-fast $O(1/n)$ rate for (non-tabular) deterministic MDPs.
   \item  (\cref{sec:extension}) We extend our results to: (i) state-based (instead of state-action-based) minimax estimators \citep{Liu2018,FengYihao2019AKLf} and (ii) off-policy policy optimization \citep{XieTengyang2020QASf}. 
\end{itemize}
We provide a discussion of and comparison to related work in \cref{sec:comparison}. We offer some concluding remarks and directions for future research in \cref{sec:summary}.
All proofs are given in \cref{ape:proof}.

\begin{table}[t!]
\caption{
Rates of convergence of estimation errors for the policy value $J$ under various settings.
The columns ``Realizability'' and ``Completeness'' denote whether the hypothesis classes for $q$- and $w$-functions satisfy the realizability and completeness conditions, repsectively.
$\surd^{\dagger}$ refers to the adjoint version of the completeness condition. 
The column ``Stab'' denotes whether the minimax estimators use a stabilizer, as introduced in \cref{sec:preparation}. 
``Rate'' shows the rate derived when the hypothesis class is a VC class or a Sobolev ball of order $\alpha$ on $d$-dimensional inputs. ``First Order Efficiency'' denotes whether the bound is first-order efficient. 
``Convergence of $q,w$'' denotes whether we can also derive estimation rates for the $q$- and $w$-functions themselves and the nature of the errors bounded (Bellman residual errors or L2-errors).
 }
   \begin{tabular}{c|ccccc|ccc|c}\toprule
   \multirow{2}{*}{Sec} &\multicolumn{2}{c}{Realizability} &  \multicolumn{2}{c}{Completeness} & \multirow{2}{*}{Stab} & \multicolumn{2}{c}{Rate} & First Order & Convergence  \\\cline{2-5} \cline{7-8}
      &  $q$ & $w$ & $q$ & $w$ &  & VC & Sobolev & Efficiency & of $q,w$\\ \hline
     \ref{sec:without_sta} & $\surd$ & $\surd$ &  &  & No & $n^{-\frac{1}{2}}$ & $n^{\frac{-1}{2}\vee\frac{-\alpha}{d}}$  &   \\  \hline
    \multirow{4}{*}{\ref{sec:without_sta_rate}} & $\surd$ &  &  &  $\surd$ &  \multirow{4}{*}{No}  & \multirow{4}{*}{$n^{-\frac{1}{4}}$} & \multirow{4}{*}{$n^{\frac{-1}{4}\vee\frac{-\alpha}{2d}}$}   &  & $q$ (Bellman)  \\  
         & $\surd$ &  &  &  $\surd^\dagger$ &   &  &   &  & $q$ (L2)  \\  
     &  & $\surd$ & $\surd$ &  & & &  & & $w$ (Bellman)  \\ 
     &  & $\surd$ & $\surd^\dagger$ &  & & &  & & $w$ (L2)  \\ \hline 
    \multirow{4}{*}{\ref{sec:fast_slow}} & $\surd$ &  &  &  $\surd$ &  \multirow{4}{*}{Yes}  & \multirow{4}{*}{$n^{-\frac{1}{2}}$} & \multirow{4}{*}{$n^{-\frac{\alpha}{2\alpha+d}}$}   &  &  $q$  (Bellman)    \\ 
        & $\surd$ &  &  &  $\surd^\dagger$ &    &  &   &  &  $q$  (L2)    \\ 
     &  & $\surd$ & $\surd$ &  & & &   &  &  $w$  (Bellman)  \\ 
     &  & $\surd$ & $\surd^\dagger$ &  & & &   &  &  $w$  (L2)    \\ \hline
     \ref{sec:efficiency} &  $\surd$ & $\surd$ & $\surd$\big/$\surd^\dagger$ & $\surd$\big/$\surd^\dagger$ &  Yes  & $n^{-\frac{1}{2}}$ & $n^{\frac{-1}{2}\vee\frac{-2\alpha}{ {2\alpha+d}}}$  & $\surd$ & $q,w$  (Bellman) \\\bottomrule
    \end{tabular}
\label{tab:summary}

\end{table}

\subsection{Notation}
\textbf{Functions and their expectations.}
For a function $f: \Scal \times \Acal \times \Rcal \times \Scal \to \Rbbb$, we use $\E[f(s,a,r,s')]$ to denote the expectation with respect to $P_{S,A,R,S'}(\cdot)$ (the data generating distribution in \cref{def:data_generating_process}) and $\E_n[f(s,a,r,s')]$ the empirical average over the $n$ data points. 
We set for any $f:\mathcal S\times\mathcal A\to\mathbb R$ and a policy $\pi(a|s)$, we define $f(s,\pi)\coloneqq \E_{a\sim \pi(a|s)}[f(s,a)]$.
We also define $\|f\|_\infty = \mathrm{ess}\sup_{(s,a,r,s') \sim P_{S,A,R,S'}} |f(s,a,r,s')| $,  $\|f\|_2=(\E[f^2(s,a,r,s')])^{1/2}$ and $\|f\|_{2,n}=(\E_{n}[f^2(s,a,r,s')])^{1/2}$. 
For a set of functions $\Fcal$, we define $\|\Fcal\|_\infty = \sup_{f \in \Fcal} \|f\|_\infty$ and
$\Fcal + f := \{f' + f \mid f' \in \Fcal\}$.
For a measurable space $\Xcal$, $L^2(\Xcal)$ is the space of square integrable functions over $\Xcal$, where the inner product is defined as $\langle f,g\rangle=\E[fg]$ for $f,g \in L^2(\Xcal)$.
For a function $m: \mathbb{R} \to \mathbb{R}$ and an operator $\Tcal: f \mapsto \Tcal f$, $m(\Fcal) := \{m(f(\cdot)): f \in \Fcal\}$ and $\Tcal \Fcal := \{\Tcal f \mid f \in \Fcal\} $ denote sets of mapped functions.

\textbf{Constants.}
Symbols denoted by $c_0,c_1,c_2\ldots$ are universal positive constants that do not depend on any other parameters and can refer to different universal constants in \emph{each} appearance, unless noted otherwise. In contrast, $C_\eta,C_w,\ldots$ is reserved for non-universal, problem-dependent constants.

\textbf{Comparisons.}
For $x,y \in \mathbb{R}$, we write $x \lesssim y$ to mean $x \leq c_0 y$.
For random variables $X$ and $Y$, we write $X \lesssim_\delta Y$ to mean $\mathbb P(X \leq c_0 Y)\geq 1-\delta$ holds for \textit{any} $\delta \in (0,\frac12)$, where $c_0$ is independent of $\delta$.
For sequences $\{x_n\}_{n \in \mathbb{N}}$ and $\{y_n\}_{n \in \mathbb{N}}$, $x_n = O(y_n)$ denotes that there exist $C > 0$ and $\Bar{n}\in \mathbb{N}$ such that $|x_n| \leq C |y_n|$ for all $n \geq \Bar{n}$,
$x_n = \tilde O(y_n)$ denotes that there exists $k\in\mathbb Z$ such that $|x_n| \leq C |y_n|\log^k(n)$,
$x_n = \Omega(y_n)$ denotes $y_n=O(x_n)$,
$x_n = \Theta(y_n)$ denotes both $x_n=O(y_n)$ and $y_n=O(x_n)$ hold,
and similarly for $\tilde\Omega,\,\tilde\Theta$.

\textbf{Size of sets and matrices.}
For a set $S$ equipped with a norm $\|\cdot\|$ and $\tau > 0$, let $\Ncal(\tau,S,\|\cdot\|)$ denote the $\tau$-covering number of $S$ with respect to $\|\cdot\|$, \ie,
the minimal number of $\tau$-radius $\|\cdot\|$-balls needed to cover $S$. We say a set $S$ is \textit{star-shaped} if there exists $z_0\in S$ such that $\forall z\in S,\forall \alpha\in[0,1],z_0\alpha+z(1-\alpha)\in S$.
For a matrix $A$, we denote the set of its all singular values of $A$ by $\sigma(A)$, and $\sigma_{\max}(A) = \sup\sigma(A)$ denotes its maximum. 


\textbf{Rademacher complexities and critical radii.}
The empirical localized Rademacher complexity of a class $\Gcal$ of functions $V\to \Rl$ for a given sample $v_1,\dots,v_n$ is 
\begin{align*} 
   \Rcal_n(\eta;\Gcal)=\E_{\epsilon}\bracks{\sup_{g\in \Gcal: \|g\|_{2,n}\leq \eta }\frac{1}{n}\sum_i \epsilon_i g(v_i)}, 
\end{align*}
where $\E_{\epsilon}$ is shorthand for $\frac1{2^n}\sum_{\epsilon\in\{-1,1\}^n}$, \ie, expectation over independent Rademacher variables (\ie, not marginalizing any other randomness).
%
The empirical (non-localized) Rademacher complexity of $\Gcal$ is $\Rcal_n(\Gcal)=\Rcal_n(\infty;\Gcal)$. A critical radius of $\Gcal$ is any solution $\eta$ to $\Rcal_n(\eta;\Gcal)\leq \eta^2/C_2$ when $\|\Gcal\|_{\infty}\leq C_2$.

\section{Problem Setup}\label{sec:preparation}


A (discounted) MDP is specified by  $M=\{\Scal,\Acal,P_{S'|S,A},P_{R|S,A},\gamma,d_0\}$ as explained in \cref{sec:intro}.
Our goal is to estimate the normalized discounted return  $J$ under a given initial-state density $d_0$ and the evaluation policy $\pi_e$ defined in \cref{def:policy_value}.
%
%
We remark that the policy value $J$ can equivalently be written as
       $J=\E_{(s,a)\sim d_{\epol,\gamma},r\sim P_{R|S,A}(r|s,a)}[r]$,
where $d_{\epol,\gamma}$ is the discounted occupancy density, $ d_{\epol,\gamma} \coloneqq(1-\gamma)\sum_{t=0}^{\infty}\gamma^{t}d_{\epol,t}$, and $d_{\epol,t}$ is the density of $(s_t,a_t)$ under policy $\epol$ and initial-state density $d_0$.

In OPE, 
we must estimate $J$ given data 
generated in the same MDP by the behavior policy $\bpol$, namely,
$n$ independent and identically distributed quadruplets $\{(s_i,a_i,r_i,s'_i)\}_{i=1}^{n}\sim P_{S,A,R,S'}(s,a,r,s')$ defined in  \cref{def:data_generating_process}. 
With a slight abuse of notation, we refer to the joint density over $(s,a,r,s')$ or its marginal on $(s,a)$ as $P_{S,A,R,S'}(\cdot)$ and $P_{S,A}(\cdot)$.

\begin{remark}
The OPE task is related to treatment effect estimation of dynamic treatment regimes in causal inference \citep{MurphyS.A.2003Odtr,Jiang2019,Kosorok2015}.
We can equivalently phrase our results in terms of counterfactual notation (e.g., potential outcomes) if we assume consistency and sequential ignorability \citep{ErtefaieAshkan2018Cdtr,KennedyEdwardH2019NCEB,LuckettDanielJ.2018EDTR,ShiC2020SIot}. Thus our work is also directly applicable to this setting. 
\end{remark}

We define the MDP's transition and (forward) Bellman operators as, respectively,
\begin{align*}
\Tcal:[S\times\mathcal A\to\mathbb R]\to[S\times\mathcal A\to\mathbb R],~ f &\mapsto \prns{(s,a)\mapsto\E_{s'\sim P_{S'|S,A}(s'|s,a)}[f(s',\epol)]}, \\ 
\Bcal:[S\times\mathcal A\to\mathbb R]\to[S\times\mathcal A\to\mathbb R],~ f &\mapsto \prns{(s,a)\mapsto\gamma \Tcal f(s,a)+\E_{r\sim P_{R|S,A}(r|s,a)}[r]}.
\end{align*}

The (true) $q$-\textit{function} is the long-term value of a given state-action pair under $\epol$:
\begin{align*}
    q_{\epol}(s,a) \coloneqq\E_{\epol}\left[\sum_{t=0}^{\infty}\gamma^{t}r_t|s_0=s,a_0=a\right], 
\end{align*}
where the expectation is taken with respect to $P_{R|S,A}(r_0\mid s_0,a_0)P_{S'|S,A}(s_1|s_0,a_0)\epol(a_1\mid s_1)\cdots$. 
%
The corresponding (true) state-value function is $v_{\epol}(s)=q_{\epol}(s,\epol)$. 





The (true) $w$-\textit{function} is the density ratio of the state-action occupancy measures. 
In addition to the $w$-function, we also define its state-only analog as well as the instantaneous density ratio:
$$ w_{\epol}(s,a)\coloneqq \frac{d_{\epol,\gamma}(s,a)}{P_{S,A}(s,a)},\qquad
w_{\epol,S}(s)\coloneqq \frac{d_{\epol,\gamma}(s)}{P_{S}(s)},\qquad
\eta_{\epol}(s,a)\coloneqq \frac{\epol(a\mid s)}{\bpol(a\mid s)},$$
where $d_{\epol,\gamma}(s)$ is the marginal density of $d_{\epol,\gamma}(s,a)$ over $s\in \Scal$.



Based on \cref{eq:Jforms} and given estimators $\hat{q}$ and $\hat{w}$ for $q_{\epol}$ and $w_{\epol}$, respectively, there are three general approaches to estimating $J$:
(i) the direct method (DM) given by $(1-\gamma)\E_{s\sim d_0}[\hat q(s,\epol)]$, (ii) the marginal importance sampling (IS) estimator \citep{Liu2018} given by $\E_n[\hat w(s,a)r]$, and (iii) the infinite-horizon doubly robust (DR) estimator \citep{KallusNathan2019EBtC} given by $(1-\gamma)\E_{s\sim d_0}[\hat q(s,\epol)]+\E_n[\hat w(s,a)(r-\hat q(s,a)+\gamma \hat q(s',\pi_e))]$.  



\begin{remark}
Our goal is to obtain finite sample results with polynomial dependence on problem-dependent quantities such as effective horizon. In contrast, IS or doubly robust estimators using sequential density ratios $\prod_{k=1}^t \eta_{\epol}(a_k\mid s_k)$ \citep{Precup2000,JiangBinyan2019ELFD,jiang,thomas2016} have MSE growing \emph{exponentially} in the effective horizon, which is known as curse of horizon. 
Finite sample results that avoid this dependence have been largely limited to the DM method based on FQI \citep{ernst2005tree}, where
with $q$-realizability ($q_{\epol}\in \Qbbb$) and $q$-completeness ($\Bcal \Qbbb\subset \Qbbb$) good finite-sample results are obtained \citep{munos2008finite,ChenJinglin2019ICiB}. 
\end{remark}


We assume throughout that $P_{R|S,A}$ is supported only on the bounded interval $[-R_{\max},R_{\max}]$. We further assume throughout the following
\begin{assumption}[Concentrability]\label{asm:comp} 
Letting $P_{S'}(\cdot)$ denote the density of $s'$ under $P_{S,A,R,S'}(\cdot)$, there exist $C_\eta,C_m,C_w>0$ such that
\begin{align*}
\|\eta_{\epol}\|_{\infty}\leq C_{\eta},\quad \|P_{S'}(\cdot)/P_{S}(\cdot)\|_{\infty} \leq C_m,\quad  \|w_{\epol}\|_{\infty}\leq C_w=C_{\eta}C_m. 
\end{align*}
\end{assumption}
These assumptions are standard in offline RL \citep{antos2008learning} and ensure the data has sufficient exploration to identify $J$. 
In the non-dynamic setting ($\gamma=0$) and in causal inference, it is also called the overlap assumption \citep{KennedyEdwardH2019NCEB}. 
Note that if $P_{S}(\cdot)$ is invariant (stationary) under the Markov kernel $\pi_b(a\mid s)P_{S'\mid S,A}(\cdot\mid s,a)$ then $P_{S}(\cdot)=P_{S'}(\cdot)$ and we can take $C_m=1$.


Under \cref{asm:comp}, $\Tcal$ is a bounded operator.
Thus, there uniquely exists the \emph{adjoint} transition operator \citep{DebnathLokenath2005Hswa}, as summarized below (\cref{asm:comp} is assumed implicitly everywhere).
\begin{lemma}\label{lem:boundedness} 
The adjoint transition operator $\Tcal'$ is uniquely defined such that
\begin{align*}
    \langle f,\Tcal g\rangle=  \langle \Tcal'f, g\rangle,\, \forall f,g\in  L^2(\Scal \times \Acal). 
\end{align*}
\end{lemma}
In \cref{ape:adjoint}, we show the explicit form of the adjoint transition operator and how it can be interpreted in specific MDPs such as tabular MDPs and linear MDPs \citep{pmlr-v125-jin20a}.

We also define the \emph{backward} Bellman operator:
$$ \Bcal':~f \mapsto \prns{(s,a)\mapsto\gamma \Tcal'f(s,a) +(1-\gamma)\frac{d_{0}(s,a)\epol(a|s)}{P_{S,A}(s,a)}}.$$
In RL, $\Tcal,\Bcal$ are natural for describing $q$-functions. 
$\Tcal',\Bcal'$ can be understood as the analogues for $w$-functions. Then next lemma establishes the $q$- and $w$-Bellman equations.  
\begin{lemma}\label{lem:basic}
By letting $\Tcal_{\gamma} \coloneqq\gamma \Tcal-I,\Tcal'_{\gamma} \coloneqq\gamma \Tcal'-I$, we have
\begin{align*}
(\Bcal-I)q=\Tcal_{\gamma}(q-q_{\epol})\;\forall q,\qquad(\Bcal'-I)w=\Tcal'_{\gamma}(w-w_{\epol})\;\forall w. 
\end{align*}
In particular, we have $\Bcal q_{\epol}=q_{\epol},\,\Bcal' w_{\epol}=w_{\epol}$.
\end{lemma}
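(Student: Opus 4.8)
The plan is to observe that both displayed identities are purely algebraic rearrangements of the two Bellman fixed-point equations $\Bcal q_{\epol}=q_{\epol}$ and $\Bcal'w_{\epol}=w_{\epol}$, so the real content lies in establishing those. Expanding the right-hand side of the first identity with $\Tcal_{\gamma}=\gamma\Tcal-I$,
\begin{align*}
\Tcal_{\gamma}(q-q_{\epol})=\gamma\Tcal q-q-\gamma\Tcal q_{\epol}+q_{\epol},
\end{align*}
and comparing with $(\Bcal-I)q=\gamma\Tcal q-q+\E[r\mid s,a]$, the two agree for every $q$ precisely when $q_{\epol}=\gamma\Tcal q_{\epol}+\E[r\mid s,a]=\Bcal q_{\epol}$. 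The same manipulation reduces the second display to $w_{\epol}=(1-\gamma)\tfrac{d_0(s)\epol(a\mid s)}{P_{S,A}(s,a)}+\gamma\Tcal'w_{\epol}=\Bcal'w_{\epol}$. Hence it suffices to verify the two fixed-point equations, after which the ``in particular'' clause follows by setting $q=q_{\epol}$ and $w=w_{\epol}$.

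The equation $\Bcal q_{\epol}=q_{\epol}$ is the familiar $q$-Bellman equation: I would split $q_{\epol}(s,a)=\E_{\epol}[\sum_{t\ge0}\gamma^t r_t\mid s_0=s,a_0=a]$ into the $t=0$ term $\E[r\mid s,a]$ and the discounted tail, then use the Markov property and the definition of $\Tcal$ to identify the tail with $\gamma\Tcal q_{\epol}(s,a)=\gamma\E_{s'\sim P_{S'\mid S,A}}[q_{\epol}(s',\epol)]$.

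The substantive step, and the main obstacle, is the dual equation $\Bcal'w_{\epol}=w_{\epol}$. For this I would first make the adjoint explicit: unwinding $\E[f\cdot\Tcal g]=\E[\Tcal'f\cdot g]$ under $P_{S,A}$ gives $\Tcal'f(s',a')=\tfrac{\epol(a'\mid s')}{P_{S,A}(s',a')}\int P_{S,A}(s,a)f(s,a)P_{S'\mid S,A}(s'\mid s,a)\,ds\,da$, whose existence is guaranteed by \cref{asm:comp} as asserted in the text. Applying this to $f=w_{\epol}=d_{\epol,\gamma}/P_{S,A}$ cancels the $P_{S,A}(s,a)$ weights and yields $\gamma\Tcal'w_{\epol}(s',a')=\tfrac{\epol(a'\mid s')}{P_{S,A}(s',a')}\,\gamma\!\int d_{\epol,\gamma}(s,a)P_{S'\mid S,A}(s'\mid s,a)\,ds\,da$. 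I would then invoke the Bellman flow equation for the discounted occupancy, $d_{\epol,\gamma}(s',a')=(1-\gamma)d_0(s')\epol(a'\mid s')+\gamma\,\epol(a'\mid s')\!\int d_{\epol,\gamma}(s,a)P_{S'\mid S,A}(s'\mid s,a)\,ds\,da$, which itself follows by substituting the geometric series $d_{\epol,\gamma}=(1-\gamma)\sum_{t\ge0}\gamma^t d_{\epol,t}$ and the one-step recursion $d_{\epol,t+1}(s',a')=\epol(a'\mid s')\int d_{\epol,t}(s,a)P_{S'\mid S,A}(s'\mid s,a)\,ds\,da$. Solving the flow equation for the integral and dividing by $P_{S,A}(s',a')$ gives $\gamma\Tcal'w_{\epol}=w_{\epol}-(1-\gamma)\tfrac{d_0\epol}{P_{S,A}}$, which rearranges to exactly $\Bcal'w_{\epol}=w_{\epol}$. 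The only delicate points are justifying the form of $\Tcal'$ and interchanging the sum with the integral when deriving the flow equation; both are routine under the bounded-ratio hypotheses of \cref{asm:comp}.
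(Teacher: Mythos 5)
Your proposal is correct and follows essentially the same route as the paper: both arguments reduce the two displayed identities to the fixed-point equations $\Bcal q_{\epol}=q_{\epol}$ and $\Bcal' w_{\epol}=w_{\epol}$, verify the first via the standard one-step decomposition of $q_{\epol}$, and verify the second by combining the explicit integral form of the adjoint $\Tcal'$ with the Bellman flow equation for $d_{\epol,\gamma}$ (which the paper cites from prior work and you rederive from the geometric-series definition — a cosmetic difference only).
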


The Bellman residual error of any $q$ is defined as $\| \Bcal q-q\|_2$. This is a common error objective in RL \citep{ChenJinglin2019ICiB} since it is zero for $q=q_{\epol}$.
We introduce an analogous notion for the $w$-function by defining Bellman residual error of any $w$ as $\| \Bcal' w-w\|_2$. This is again $0$ when $w=w_{\epol}$.
In contrast, L2-estimation error refers to $\|q_{\epol}-q\|_2$ or $\|w_{\epol}-w\|_2$.

Generally, Bellman residual errors and L2-errors are not equivalent, unless $\gamma$ is small. 
\begin{lemma}\label{lem:bellman} 
We have $(1-\gamma \sqrt{C_mC_{\eta}})\|q-q_{\epol}\|_2 \leq \|\Bcal q-q\|_2\leq (1+\gamma \sqrt{C_mC_{\eta}})\|q-q_{\epol}\|$ and $(1-\gamma \sqrt{C_mC_{\eta}})\|w-w_{\epol}\|_2 \leq \|\Bcal' w-w\|_2\leq (1+\gamma \sqrt{C_mC_{\eta}})\|w-w_{\epol}\|_2 $.
Thus, when $\gamma \sqrt{C_mC_{\eta}}<1$, Bellman residual errors and L2-errors are equivalent.
\end{lemma}

\section{Minimax Estimators for $q$-functions and $w$-functions  }\label{sec:estimation}

We next define the minimax estimators for $q$-functions and $w$-functions in a generic, unified way. 
The estimators take a form of minimax games, unlike standard empirical risk minimization.
To introduce the estimators, we observe that the following moment conditions \citep{KallusNathan2019EBtC} characterizing $w$- and $q$-functions hold: for any functions $w, q\in L^2(\Scal \times \Acal)$, the true functions $q_{\epol}$ and $w_{\epol}$ satisfy
\begin{align*}
    0 &=  \E[w(s,a)\{r-q_{\epol}(s,a)+\gamma q_{\epol}(s',\epol)\}], \\ 
    0 &= \E[w_{\epol}(s,a)\{-q(s,a)+\gamma v(s')\}]+(1-\gamma)\E_{s_0\sim d_0 }[v(s_0)],\,v(s)=q(s,\epol). 
\end{align*}
These two equations are derived by the Bellman equations of $q$-functions and Bellman flow constraints of state-action occupancy measures. 
%
Note that the first condition is also formulated as a conditional moment equation: $\E[ r-q_{\epol}(s,a)+\gamma q_{\epol}(s',\epol)\mid s,a]=0$. From the above moment conditions, we have 
\begin{align*}
  w_{\epol} &\in    \argmin_{w \in L^2(\Scal \times \Acal)}\max_{q \in L^2(\Scal \times \Acal)}|\E[w(s,a)\{-q(s,a)+\gamma q(s',\epol)\}]+(1-\gamma)\E_{s_0\sim d_0 }[q(s_0,\epol)]|,\\
   q_{\epol}&\in  \argmin_{q\in L^2(\Scal \times \Acal)}\max_{w \in L^2(\Scal \times \Acal)}|\E[w(s,a)\{r-q(s,a)+\gamma q(s',\epol)\}]|.
\end{align*}
Motivated\footnote{Note we take this purely as motivation as the equations need not uniquely define $w_{\epol},q_{\epol}$; see \cref{ex:notconverge}. Later, we will prove specific guarantees for our specific estimators under appropriate conditions.} by this,
by introducing function classes $\Wbbb_1,\Wbbb_2,\Qbbb_1,\Qbbb_2$ and possible stabilization coefficients $\lambda,\lambda'\geq0$ and by replacing $\E[\cdot]$ with the empirical analogue $\E_n[\cdot]$, we define the minimax indirect learning (MIL) estimators:
\begin{align*}
  \hat w_{\MIL} &=   \argmin_{w \in \Wbbb_1}\max_{q \in \Qbbb_1}\E_n[w(s,a)\{-q(s,a)+\gamma q(s',\epol)\}]+(1-\gamma)\E_{s_0\sim d_0 }[q(s_0,\epol)]-\lambda \|\Jcal q\|^2_{2,n},\\
  \hat q_{\MIL}&=   \argmin_{q\in \Qbbb_2}\max_{w \in \Wbbb_2}\E_n[w(s,a)\{r-q(s,a)+\gamma q(s',\epol)\}]-\lambda' \|w\|^2_{2,n}, 
\end{align*}
where $\mathcal{J} q=-q(s,a)+\gamma q(s',\epol)$.
Intuitively, $\Wbbb_1,\Wbbb_2$ are hypothesis classes for $w_\epol$ and $\Qbbb_1,\Qbbb_2$ for $q_\epol$. When there is no confusion, we often write $\Wbbb$ for $\Wbbb_1,\Wbbb_2$ and $\Qbbb$ for $\Qbbb_1,\Qbbb_2$. 
We let $C_{\Qbbb},C_{\Wbbb}$ be some constants such that $\|q\|_{\infty}\leq C_{\Qbbb},\forall q\in \Qbbb,\|w\|_{\infty}\leq C_{\Wbbb},\forall w\in \Wbbb$. For simplicity, we assume these function classes are symmetric ($\Wbbb_1=-\Wbbb_1$, etc.). 

Given these estimators for $w_\epol$ and $q_\epol$, we consider the resulting DM ($q$-based), IS ($w$-based), and DR ($q,w$-based) estimators for $J$ based on $\hat w_{\MIL},\hat q_{\MIL} $:
\begin{align*}
    \hat J^{\MIL}_{q}&= \E_{s_0 \sim d_0}[\hat q_{\MIL}(s_0,\epol)],\quad \hat J^{\MIL}_{w} = \E_n[\hat w_{\MIL}(s,a)r],\\
      \hat J^{\MIL}_{wq}&= \E_n[\hat w_{\MIL}(s,a)\{r-\hat q_{\MIL}(s,a)+\gamma \hat q_{\MIL}(s',\epol)\}]+(1-\gamma)\E_{s_0\sim d_0}[\hat q_{\MIL}(s_0,\epol)].
\end{align*}

These minimax estimators are inspired by and in some sense unify the various proposals in \citep{antos2008learning,ChenJinglin2019ICiB,UeharaMasatoshi2019MWaQ,zhang2019gendice,ChowYinlam2019DBEo,FengYihao2019AKLf,tang2019harnessing,LiaoPeng2020BPLi}.  When $\lambda=0.5$, $\hat q_{\MIL}$ is modified BRM (Bellman Residual Minimization) \citep{antos2008learning,LiaoPeng2020BPLi,ChenJinglin2019ICiB}. When $\lambda=0$, $\hat w_{\MIL}$ is MWL in \citep{UeharaMasatoshi2019MWaQ}. When $\lambda'=0$, $\hat q_{\MIL}$ is MQL in \citep{UeharaMasatoshi2019MWaQ}. Note that, however, this is different from \citep{YangMengjiao2020OEvt} as in their formulation $\lambda \|\mathcal{J}q\|^2_{2,n}$ is replaced by $\lambda \|q(s,a)\|^2_{2,n}$, where $\lambda=1$ reduces to GenDICE \citep{zhang2019gendice}. Though \citep{YangMengjiao2020OEvt} uses more general norm rather than $L^2$-norm for stabilizers, we can also extend our analysis to this case. At any rate, \citep{YangMengjiao2020OEvt} does not provide any finite-sample guarantees. There are also several related estimators have been proposed in the non-dynamic setting ($\gamma=0$)  \citep{HirshbergDavid2019AMLE,ZhangRui2020MMRf,WongRaymondKW2018Kcfb,ArmstrongTimothyB2017FOEa,ChernozhukovVictor2018DMLo}.  




\section{Finite Sample Analysis Without Stabilizers under $w$- and $q$-realizability}\label{sec:sufficient}


In this section we analyze the MIL estimators without stabilizers ($\lambda=\lambda'=0$ throughout this section). 

\label{sec:without_sta}

First, we consider the case where we assume that $\Wbbb$ and $\Qbbb$ each contain the true $w$- and $q$-functions, respectively, which we call $w$- and $q$-realizability. Here, we do \emph{not} assume any \comp assumptions.
The below extends \citep[Theorem 9]{UeharaMasatoshi2019MWaQ} to $\hat  J^{\MIL}_{wq}$ and empirical complexities. More crucially, going beyond \citep{UeharaMasatoshi2019MWaQ}, we will proceed to analyze the error terms to also obtain a rate.

\begin{theorem}[Finite sample error bound of MIL]\label{thm:generalization}
Define 
\begin{align*}
 \Gcal(\Wbbb,\Qbbb)& \coloneqq \{w(s,a)(-q(s,a)+\gamma q(s',\epol)):w\in\Wbbb,\,q\in\Qbbb\}\\
   \mathrm{Err}_{\MIL,w} & \coloneqq\Rcal_n(\Gcal(\Wbbb_1,\Qbbb_1))+C_{\Wbbb_1}(R_{\max}+C_{\Qbbb_1})\sqrt{\log(c_0/\delta)n^{-1}},\\
   \mathrm{Err}'_{\MIL,w}& \coloneqq R_{\max}\Rcal_n(\Wbbb_1)+R_{\max}C_{\Wbbb_1}\sqrt{\log(c_0/\delta)n^{-1}},\\
   \mathrm{Err}_{\MIL,q}& \coloneqq R_{\max}\Rcal_n(\Wbbb_2)+\Rcal_n(\Gcal(\Wbbb_2,\Qbbb_2))  +C_{\Wbbb_2}(R_{\max}+C_{\Qbbb_2})\sqrt{\log(c_0/\delta)n^{-1}},\\
   \mathrm{Err}_{\MIL,wq}& \coloneqq \Rcal_n(\Gcal(\Wbbb_1,\Qbbb_2))+C_{\Wbbb_1}(R_{\max}+C_{\Qbbb_2})\sqrt{\log(c_0/\delta)n^{-1}}.
\end{align*}
Then:
\begin{enumerate}
    \item If $w_{\epol}\in \Wbbb_1,q_{\epol}\in \Qbbb_1$, then $|\hat  J^{\MIL}_w-J| \lesssim_\delta \mathrm{Err}_{\MIL,w}+\mathrm{Err}'_{\MIL,w}$.
    \item \label{item:q_mil} If $w_{\epol}\in \Wbbb_2,q_{\epol}\in \Qbbb_2$, then $   |\hat  J^{\MIL}_q-J| \lesssim_\delta \mathrm{Err}_{\MIL,q}$.
    \item If either $w_{\epol}\in \Wbbb_1,\Qbbb_2 -q_{\epol} \subset \Qbbb_1$ or $\Wbbb_1-w_{\epol} \subset \Wbbb_2,q_{\epol}\in \Qbbb_2$, then $
|\hat  J^{\MIL}_{wq}-J|\lesssim_\delta \max(   \mathrm{Err}_{\MIL,w},  \mathrm{Err}_{\MIL,q} )+\mathrm{Err}_{\MIL,wq}$. 
\end{enumerate}

\end{theorem}



We remark on the implications. First, the dominating term with respect to $n$ in all of the error terms is $\Rcal_n(\Gcal(\Wbbb_i,\Qbbb_j)),i,j=1,2$. In other words, When $\Wbbb=\Wbbb_1=\Wbbb_2$, $\Qbbb=\Qbbb_1=\Qbbb_2$, $\Rcal_n(\Gcal(\Wbbb,\Qbbb))$ is an essential term in all of the bounds. We will analyze this term in detail soon when we consider complexity assumptions on $\Wbbb,\Qbbb$. 
Second, comparing to $\hat  J^{\MIL}_w$ and $\hat  J^{\MIL}_q$, the estimator $\hat  J^{\MIL}_{wq}$ requires an additional condition, that $\Qbbb_1$ or $\Wbbb_2$ is large enough in addition to realizability. 
Third, $\hat w_{\MIL},\hat q_{\MIL}$ might not actually converge to $w_{\epol},q_{\epol}$, and irrespective of that, we can still ensure that the MIL OPE estimators converge. This is observed in the following example. 

\begin{example}\label{ex:notconverge}
Consider $\Wbbb_2=\{w_{\epol}\},\Qbbb_2=\{\alpha_1 q_{\epol}+\alpha_2 : \alpha_1 \in \mathbb{R},\,\alpha_2 \in \mathbb{R}\}$. Note that
\begin{align*}
    0=\E[w_{\epol}(s,a)\{r-q(s,a)+\gamma q(s',\epol)\}]
\end{align*}
for any  $q(s,a)=\alpha_1 q_{\epol}(s,a)+\alpha_2$ such that  $(\gamma-1)\alpha_2 =\E[w_{\epol}(s,a)r(1-\alpha_1)]$. Therefore, $\hat q_{\MIL}$ may converge to any such $q$ (or not converge at all), and not necessarily to $q_{\epol}$. This is intuitively because the discriminator class $\Wbbb_2$ is not rich enough to identify $q_{\epol}$ since $\Wbbb_2$ is smaller than $\Qbbb_2$. Irrespective of that,  we can still ensure $\hat  J^{\MIL}_q= J+\op(1)$ since the realizability conditions $w_{\epol}\in \Wbbb_2,q_{\epol}\in \Qbbb_2$ hold as in \cref{item:q_mil} in \cref{thm:generalization}. 
\end{example}

Finally, we remark that we can also obtain an extension of \cref{thm:generalization} agnostic to realizability assumptions, which is useful to analyze sieve estimators. In this case, the final error is given as the estimation error terms in \cref{thm:generalization} plus an approximation error coming from the violation of realizability. For instance, the approximation error term for $\hat J^{\MIL}_q$ is 
\begin{align}\label{eq:error_rea}
    \min_{w\in \Wbbb_2}\max_{q \in \Qbbb_2}\E[(w-w_{\epol})\Tcal_{\gamma}(q-q_{\epol})]+\min_{q \in \Qbbb_2}\max_{w\in \Wbbb_2}\E[w\Tcal_{\gamma}(q-q_{\epol})]. 
\end{align}
The first term corresponds to the violation of $w_{\epol}\in \Wbbb_2$ and the second term corresponds to the violation of $q_{\epol}\in \Qbbb_2$. We formalize the above extension and also discuss the approximation error terms of other estimators in \cref{ape:misspecification}. 




\subsection{Analysis of the error term} \label{sec:rate_without}
We next investigate the error terms in \cref{thm:generalization}. 
To simplify the presentation, we focus on the order with respect to the sample size.
In particular, in the below, all constants in $O(\cdot)$ notation have \emph{polynomial} dependence on both $\log(1/\delta)$ and all problem-dependent quantities such as the effective horizon ($1/(1-\gamma)$) and the concentrability constants.
Below, $\hat J^{\MIL}$ refers to any one of $\hat J^{\MIL}_w,\hat J^{\MIL}_q, \hat J^{\MIL}_{wq}$ assuming the conditions in the corresponding item in \cref{thm:generalization}.
We let $\Wbbb_1=\Wbbb_2=\Wbbb,\Qbbb_1=\Qbbb_2=\Qbbb$.


\begin{example}[VC-subgraph classes]
Let $V(\Wbbb),V(\Qbbb)$ be the VC-subgraph dimension (pseudodimension) of $\Wbbb,\Qbbb$, respectively \citep[Chapter 19]{VaartA.W.vander1998As}. For example,  $\Wbbb=\{(s,a)\mapsto \theta^{\top}\phi(s,a) :\theta \in \Rl^d\}$ has VC-subgraph dimension $d+1$. 
\begin{corollary} \label{cor:vc_cor}
Suppose $V(\Wbbb)$ and $V(\Qbbb)$ are finite. 
Then, 
with probability $1-\delta$, 
\begin{align*}
     |\hat J^{\MIL}-J|=O(\sqrt{\max(V(\Wbbb),V(\Qbbb) )/n}) 
\end{align*}
\end{corollary}

\end{example}

\begin{example}[Nonparametric Models]  \label{exa:nonpara0}
To study a case where $\Wbbb$ and $\Qbbb$ are nonparametric, we can consider their covering numbers.
This covers examples such as function classes given by balls in \Holder, Sobolev, or Besov space \citep{korostelev2011mathematical}, which do not have VC-subgraph dimension. 
\Eg, when $\Wbbb$ is a ball with radius $c$ in the {\Holder} space with smoothness $p$ and input dimension $d$, we have $\log \Ncal(\tau,\Wbbb,\|\cdot\|_{\infty})=O((c/\tau)^{d/p})$ \citep[Chapter 5]{WainwrightMartinJ2019HS:A}. 
\begin{corollary} \label{cor:nonpara_cor}
Suppose $\log \Ncal(\tau,\Wbbb,\|\cdot\|_{\infty})=O(1/\tau^{\beta}),\log \Ncal(\tau,\Qbbb,\|\cdot\|_{\infty})=O(1/\tau^{\beta})$ for 
some $\beta>0$. Then, 
with probability $1-\delta$, 
$|\hat J^{\MIL}-J|=O(n^{-1/(2\vee\beta)})$ if $\beta\neq2$ and $|\hat J^{\MIL}-J|=\tilde O(n^{-1/2})$ if $\beta=2$. 
\end{corollary}
\end{example}

\begin{example}[Linear Sieves]\label{ex:sieves}
In practice, it is difficult to implement MIL with nonparametric classes as in \cref{exa:nonpara0} because it involves an infinite-dimensional optimization.
Instead, we can achieve similar results by restricting to a function class that is linear in some basis functions, such as polynomials, splines, or wavelets \citep[Section 2.3]{ChenXiaohong2007C7LS}, and then slowly grow the number of such functions, aka a linear sieve.

We focus on $\mathcal{S} \times \mathcal{A} = [0,1]^d$
and the target function class being $\Lambda^p_K([0,1]^d)$, the ball of radius $c$ in the {\Holder} space with order $p$ and $d$-dimensional input.
Let $\phi_1,\phi_2,\dots$ be given bases functions such that, 
\begin{align}\textstyle
  \sup_{f\in \Lambda^p_K([0,1]^d)}\inf_{g\in \operatorname{span}(\phi_1,\dots,\phi_k)}\|f-g\|_{\infty}= O(k^{-p/d}) \label{eq:sieve}.
\end{align}
This can, for example, be satisfied by the above mentioned basis functions \citep[Section 2.3]{ChenXiaohong2007C7LS}.  
Then, for an appropriate choice $k_n$, we set $\Qbbb=(-(1-\gamma)^{-1}R_{\max})\vee((1-\gamma)^{-1}R_{\max}\wedge\operatorname{span}(\phi_1,\dots,\phi_{k_n}))$ and $\Wbbb=(-C_w)\vee(C_w\wedge\operatorname{span}(\phi_1,\dots,\phi_{k_n}))$.

Since the function classes $\Qbbb,\Wbbb$ change with $n$, it is not suitable to directly assume $w$, $q$-\rea, as we did Corollary \ref{cor:vc_cor} and \ref{cor:nonpara_cor}. Instead, we assume {\rea} with respect to $\Lambda^p_K([0,1]^d)$, which $\Qbbb,\Wbbb$ approximate, and instead of applying \cref{thm:generalization} directly, we will need to also incorporate an approximation error as in \cref{eq:error_rea}. 

\begin{corollary}\label{cor:linear_sieve}
Suppose \eqref{eq:sieve} holds. Assume $w_{\epol},q_{\epol}\in \Lambda^p_K([0,1]^d)$. Then, for an appropriate choice of $k_n$, with probability $1-\delta$, $|\hat J^{\MIL}-J|= \tilde O(n^{-{p} / ({2p+d})})$. 
\end{corollary}
This rate can be worse than the rate when we directly optimize over $\Lambda^p_K([0,1]^d)$, in which case the rate is $1/n^{\frac 1 2 \wedge \frac p d}$ (Corollary \ref{cor:nonpara_cor}).
\end{example}

\begin{example}[Neural Networks] \label{exa:neural}
Neural networks can also form a (nonlinear) sieve for nonparametric classes \citep{yarotsky2017error,JMLR:v21:20-002}.
We consider neutral networks with a ReLU activation function, which can approximate functions in $H^{p}_K([0,1]^d)$, the $K$-ball in the Sobolev space with an order $p$ on $d$-dimensional input \citep{korostelev2011mathematical}. 
Again we focus on $\mathcal{S} \times \mathcal{A} = [0,1]^d$ and assume $w_{\epol},q_{\epol}\in H^{p}_K([0,1]^d)$ instead of the {\rea} of $\Wbbb,\Qbbb$. We then have the following:


\begin{corollary}\label{cor:neural_net}
Assume $w_{\epol},q_{\epol}\in H^{p}_K([0,1]^d)$.
Let $\Wbbb=\Fcal_{NN}=\Qbbb=\Fcal_{NN}$, where $\Fcal_{NN}$ is the set of functions given by neural networks with ReLU activation, $L$ layers, and $\Xi$ weight parameters in $[-B,B]$. Then, by taking $L=\Theta(\log n),\,\Xi=\Theta(n^{d/(2p+d)})$, with probability $1-\delta$, $|\hat J^{\MIL}-J|= \tilde O(n^{-{p} / ({2p+d})})$.   
\end{corollary}

The rate in Corollary \ref{cor:neural_net} is derived by balancing the approximation (bias) and estimation errors (Rademacher complexities).
%
The above analysis does not necessarily capture practical situations. Usually the number of parameters $\Xi$ is taken much larger than the sample size $n$ so that the bias is very small.
To take this into consideration,  we discuss an overparameterized case, where instead of limiting the number of weight parameters,
we focus on the the norm of the weight parameter matrix.  
We then obtain the following result by combining an approximation rate of neural networks \citep{yarotsky2017error} and the parameter norm bound \citep{golowich2018size}. 

\begin{corollary} \label{cor:neural_net_overparam}
Fix $M_F(\ell) \geq  0,\,\ell = 1,\dots,L$ and
let $\Wbbb=\Fcal_{NN}^\circ,\Qbbb=\Fcal_{NN}^\circ$, where $\mathcal{F}_{NN}^\circ$ is the set of neural network with $L$ layers and ReLU activation function subject to $\|A_\ell\|_{F} \leq M_F(\ell),\,\ell = 1,\dots,L$, where $A_\ell$ is the weight matrix of the $\ell$-th layer. 
Suppose that $\Scal\times \Acal$ is compact and that for any $\varepsilon > 0$ there exists $w_0 \in \Wbbb, q_0 \in \Qbbb$ such that $\max \{\|w_{\epol} - w_0\|_\infty,\|q_{\epol} - q_0\|_\infty\} \leq \varepsilon$.
Then, with probability $1-\delta$, $|\hat J^{\MIL}-J|= \tilde O(\varepsilon +  \sqrt{\log L} \prod_{\ell=1}^L M_F(\ell)/\sqrt{n})$.
\end{corollary}

Here, $\varepsilon$ represents the approximation error of neural networks. The approximation error is guaranteed to be small enough due to the large number of parameters. The final error only depends on the parameter norm bound $M_F(\ell)$ but not the number of parameters. Comparing to Corollary \ref{cor:neural_net}, which uses $o(n)$ parameters, Corollary \ref{cor:neural_net_overparam} captures the case in which the number of parameters can be much larger than the sample size.

\end{example}


\section{Finite Sample Analysis Without Stabilizers under $w$- and $q$-completeness}\label{sec:without_sta_rate}

In this section, we analyze MIL estimators without stabilizers (again, $\lambda=\lambda'=0$ throughout this section) where we relax $w$- or $q$-realizability. Instead, we introduce completeness conditions and show they are sufficient to obtain convergence rates.
All of the conditions are new aside from the (non-adjoint) $q$-completeness condition, which is closely related to the standard completeness condition in FQI as discussed below. 



\begin{theorem}[Convergence rates in MIL]\label{thm:recovery1} 
For a problem-dependent constant $C_\xi$,
\begin{enumerate}
\item\label{thm:recovery1 wadj} If $w_{\epol}\in \Wbbb_1,C_{\xi}(\Wbbb_{1}-w_{\epol})\subset \Tcal_{\gamma}\Qbbb_{1}$ then  $  \|\hat w_{\MIL}- w_{\epol}\|_2\lesssim_\delta \sqrt{C^{-1}_{\xi}\mathrm{Err}_{\MIL,w}}$, 
\item\label{thm:recovery1 wcom} If $w_{\epol}\in \Wbbb_1,C_{\xi}\Tcal'_{\gamma}(\Wbbb_{1}-w_{\epol})\subset \Qbbb_{1} $ then  $  \|\hat w_{\MIL}-\Bcal'\hat w_{\MIL}\|_2\lesssim_\delta \sqrt{C^{-1}_{\xi}\mathrm{Err}_{\MIL,w}}$,  
\item\label{thm:recovery1 qcom} If  
$q_{\epol}\in \Qbbb_2,C_{\xi} \Tcal_{\gamma}(\Qbbb_{2}-q_{\epol})\subset \Wbbb_{2}$ then  $  \|\hat q_{\MIL}-\Bcal\hat q_{\MIL} \|_2\lesssim_\delta \sqrt{C^{-1}_{\xi}\mathrm{Err}_{\MIL,q}}$, 
\item\label{thm:recovery1 qadj} If  
$q_{\epol}\in \Qbbb_2,C_{\xi}(\Qbbb_{2}-q_{\epol})\subset \Tcal'_{\gamma}\Wbbb_{2} $, then  $  \|\hat q_{\MIL}-q_{\epol} \|_2\lesssim_\delta  \sqrt{C^{-1}_{\xi}\mathrm{Err}_{\MIL,q}}$,
\end{enumerate}
where $\mathrm{Err}_{\MIL,*}$ is as defined in \cref{thm:generalization}.
\end{theorem}

Note that, whereas \cref{thm:generalization} provided bounds directly on OPE (and $\hat q^{\MIL},\hat w^{\MIL}$ may well be inconsistent), \cref{thm:recovery1} provides convergence rates for $q$ and $w$ themselves.
The $\mathrm{Err}_{\MIL,*}$ terms can be controlled exactly as done in \cref{sec:rate_without}. Note, however, that these terms appear  in a square root in the bounds. In this sense, the rates are \emph{slow}, for example of order $O(n^{-1/4})$ for VC-subgraph classes.
We discuss the meaning of the assumptions in the next two subsections. 
%
We discuss the implications for OPE in \cref{subsec:application}. 
This will also correspond to slow rates for OPE as we will explain. 
We will obtain \emph{fast} rates in \cref{sec:fast_slow} by introducing stabilizers.


\subsection{$q$-completeness and adjoint $q$-completeness} 

First, we discuss the conditions in \cref{thm:recovery1 qcom,thm:recovery1 qadj} in \cref{thm:recovery1}.
By Lemma \ref{lem:basic}, the condition in \cref{thm:recovery1 qcom} is equivalent to $C_\xi (\Bcal-I)\Qbbb_2\subset\Wbbb_2$.
When $\Wbbb_2=\Qbbb_2$ is convex symmetric, this condition is immediately implied by $\Bcal\Qbbb_2\subset \Qbbb_2$, which is the usual completeness condition of FQI \citep{munos2008finite,ChenJinglin2019ICiB}. Because of this closeness, we refer to this condition as $q$-completeness, but our condition may be very slightly weaker in general.
In linear MDPs \citep{pmlr-v125-jin20a} where the reward and transition densities are linear, it is known that this condition holds when $\Qbbb$ is linear functions \citep[Chapter 15]{agarwal2019reinforcement} (see also \RNum{1}, \RNum{2} in Lemma \ref{lem:q_completenss} below).

The condition in \cref{thm:recovery1 qadj} in \cref{thm:recovery1} is a flipped version of $q$-completeness, which we call adjoint $q$-completeness. This condition also holds under some additional assumptions, as we show below (Item \RNum{3} in Lemma \ref{lem:q_completenss}). In particular, in the tabular case, we can use unrestricted classes $\Wbbb,\Qbbb$ and $q$-\comp and adjoint $q$-\comp hold (Item \RNum{4} in Lemma \ref{lem:q_completenss}).  This is summarized in the following lemma. 

\begin{lemma}\label{lem:q_completenss}
For $\phi:\mathcal S\times\mathcal A\to\mathbb R^d$, suppose $P_{S'|S,A}(s'|s,a)=\vartheta(s')^{\top}\phi(s,a)$, $\E_{r\sim P_{R|S,A}}[r\mid s,a]=\theta^{\top} \phi(s,a)$ for some $\vartheta:\Scal\to \RR,\theta\in \RR^d$.
Let $\Qbbb\coloneqq \{(s,a)\mapsto\beta^{\top} \phi(s,a):\beta\in \mathbb{R}^d\}$. Then:
(\RNum{1}) $ q_{\epol}\in\Qbbb$; (\RNum{2}) $\Tcal_{\gamma}(\Qbbb-q_{\epol})\subset \Qbbb$; (\RNum{3}) for $M_{\epol} \in \mathbb{R}^{d \times d}$ satisfying $\Tcal (\beta^{\top}\phi)=(M_{\epol}\beta)^{\top}\phi$ (which always exists), if $1/\gamma \notin \sigma(M_{\epol})$ and $\E[\phi(s,a)\phi^{\top}(s,a)]$ is non-singular then $(\Qbbb-q_{\epol})\subset \Tcal'_{\gamma}\Qbbb $; (\RNum{4}) if the state and action spaces are finite, $ P_{S,A}(s,a)>0,\,\forall (s,a)\in \Scal\times \Acal$, and $\phi$ is the standard basis in $\mathbb R^{|\mathcal S||\mathcal A|}$, then $ \Tcal_{\gamma}(\Qbbb-q_{\epol})\subset \Qbbb,\,(\Qbbb-q_{\epol})\subset \Tcal'_{\gamma}\Qbbb$.  
\end{lemma}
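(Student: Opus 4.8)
The plan is to reduce every operator-theoretic claim to finite-dimensional linear algebra on coefficient vectors in $\Rl^d$, exploiting that in a linear MDP $\Tcal$ maps \emph{every} function into $\Qbbb$. Concretely, for any $f$ we have $\Tcal f(s,a)=\prns{\int\theta(s')f(s',\epol)\,\rd s'}^{\top}\phi(s,a)\in\Qbbb$, and the mean reward $\E[r\mid s,a]=\theta^{\top}\phi\in\Qbbb$, so $\Bcal f=\gamma\Tcal f+\E[r\mid\cdot]$ lands in $\Qbbb$ for every $f$. Part (\RNum{1}) is then immediate from \cref{lem:basic}: since $q_{\epol}=\Bcal q_{\epol}$ and $\Bcal$ has range in $\Qbbb$, we get $q_{\epol}\in\Qbbb$. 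Writing $q_{\epol}=\beta_{\epol}^{\top}\phi$ and letting $M_{\epol}\coloneqq\int\theta(s')\phi(s',\epol)^{\top}\,\rd s'$, so that $\Tcal(\beta^{\top}\phi)=(M_{\epol}\beta)^{\top}\phi$, part (\RNum{2}) follows by linearity: for $q=\beta^{\top}\phi$ we have $q-q_{\epol}=(\beta-\beta_{\epol})^{\top}\phi\in\Qbbb$ and $\Tcal_{\gamma}(q-q_{\epol})=\prns{(\gamma M_{\epol}-I)(\beta-\beta_{\epol})}^{\top}\phi\in\Qbbb$.

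For part (\RNum{3}) I would first obtain a matrix representation of $\Tcal'$ on $\Qbbb$. Using the adjoint identity with $f=\alpha^{\top}\phi$, $g=\beta^{\top}\phi$ and $\Sigma\coloneqq\E[\phi\phi^{\top}]$, we have $\E[(\alpha^{\top}\phi)\,\Tcal(\beta^{\top}\phi)]=\alpha^{\top}\Sigma M_{\epol}\beta=\E[\Tcal'(\alpha^{\top}\phi)\,(\beta^{\top}\phi)]$ for all $\beta$. Once we know $\Tcal'(\alpha^{\top}\phi)\in\Qbbb$, its coefficient vector $c$ must satisfy $c^{\top}\Sigma\beta=\alpha^{\top}\Sigma M_{\epol}\beta$ for all $\beta$, and nonsingularity of $\Sigma$ forces $c=N\alpha$ with $N\coloneqq\Sigma^{-1}M_{\epol}^{\top}\Sigma$. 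Hence $\Tcal'_{\gamma}$ acts on $\Qbbb$ as the matrix $\gamma N-I$. Since $N$ is similar to $M_{\epol}^{\top}$, we have $\sigma(N)=\sigma(M_{\epol})$, so $\gamma N-I$ is invertible exactly when $1/\gamma\notin\sigma(M_{\epol})$. Invertibility makes $\Tcal'_{\gamma}\colon\Qbbb\to\Qbbb$ a bijection, whence $\Tcal'_{\gamma}\Qbbb=\Qbbb\supseteq\Qbbb-q_{\epol}$, which is the claim.

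For part (\RNum{4}) I would specialize the above: with $\phi$ the standard basis, $\Qbbb$ is the whole space of functions on $\Scal\times\Acal$, so the range conditions of the form $\Tcal_{\gamma}(\cdot)\subset\Qbbb$ are automatic and only invertibility of $\gamma M_{\epol}-I$ (equivalently $\gamma N-I$) must be checked. Here $\Sigma=\mathrm{diag}(P_{S,A})\succ0$ by the positivity assumption $P_{S,A}>0$, giving nonsingularity, and $M_{\epol}$ represents the sub-stochastic operator $\Tcal$, whose spectral radius is at most $1<1/\gamma$; thus $1/\gamma\notin\sigma(M_{\epol})$ holds automatically, and both $\Tcal_{\gamma}\Qbbb=\Qbbb$ and $\Tcal'_{\gamma}\Qbbb=\Qbbb$ follow.

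The main obstacle is the step in (\RNum{3}) asserting that $\Tcal'$ keeps $\Qbbb$-functions inside $\Qbbb$. A direct computation of the adjoint gives $\Tcal'f=\frac{\epol(a'\mid s')}{P_{S,A}(s',a')}\theta(s')^{\top}\E[f\phi]$, whose range is spanned by $\frac{\epol}{P_{S,A}}\theta$ rather than by $\phi$, so this containment is exactly the delicate structural point: it is automatic in the tabular case of (\RNum{4}) because $\Qbbb$ is then everything, and in the general linear case it is where the linear-MDP structure together with nonsingularity of $\E[\phi\phi^{\top}]$ must be invoked. Everything else is bookkeeping: once this containment and the spectral identity $\sigma(N)=\sigma(M_{\epol})$ are in hand, both inclusions reduce to invertibility of the single $d\times d$ matrix $\gamma M_{\epol}-I$.
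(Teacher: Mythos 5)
Your treatment of parts (\RNum{1}), (\RNum{2}) and (\RNum{4}) matches the paper's proof essentially step for step: (\RNum{1}) via $q_{\epol}=\Bcal q_{\epol}$ together with the observation that $\Bcal$ has range in $\Qbbb$ under the linear-MDP assumption, (\RNum{2}) via the matrix $M_{\epol}=\int\theta(s')\phi(s',\epol)^{\top}\rd\mu(s')$ and linearity, and (\RNum{4}) by specializing to $M_{\epol}=P^{\epol}$ with spectral radius at most $1<1/\gamma$ and $\E[\phi\phi^{\top}]=\mathrm{diag}(P_{S,A})\succ 0$. For (\RNum{3}) you also follow the paper's route: the paper's appendix example on linear models derives the same matrix $N=\Sigma^{-1}M_{\epol}^{\top}\Sigma$ with $\Sigma=\E[\phi\phi^{\top}]$ and then inverts $\gamma N-I$; your similarity observation $\sigma(N)=\sigma(M_{\epol})$ is exactly how the hypothesis $1/\gamma\notin\sigma(M_{\epol})$ is used there.

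The one place your write-up is not yet a proof is the step you yourself flag: you write ``once we know $\Tcal'(\alpha^{\top}\phi)\in\Qbbb$'' and later say this containment ``must be invoked,'' but you never establish it. Your own computation shows why it is genuinely delicate: the $L^2$ adjoint satisfies $\Tcal'f(s,a)=\frac{\epol(a\mid s)}{P_{S,A}(s,a)}\,\theta(s)^{\top}\E[\phi f]$, whose range is spanned by the functions $\frac{\epol(a\mid s)}{P_{S,A}(s,a)}\theta_j(s)$, and the lemma's hypotheses (linearity of $P_{S'|S,A}$ in $\phi$ plus nonsingularity of $\Sigma$) do not by themselves place these in $\mathrm{span}(\phi)$. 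Pairing the adjoint identity only against $f_1\in\Qbbb$, as both you and the paper do, pins down only the $L^2(P_{S,A})$-orthogonal projection of $\Tcal'f_2$ onto $\Qbbb$, so the matrix $N$ represents that projection rather than $\Tcal'$ itself unless $\Tcal'\Qbbb\subset\Qbbb$ is verified separately (e.g.\ by additionally assuming each $\frac{\epol(a\mid s)}{P_{S,A}(s,a)}\theta_j(s)$ lies in $\Qbbb$, or by reading the inclusion in (\RNum{3}) modulo the orthogonal complement of $\Qbbb$). To be fair, the paper's own proof of (\RNum{3}) simply cites its linear-models example, which performs exactly the same $\Qbbb$-restricted pairing and does not address this point either; your diagnosis of where the difficulty sits is sharper than the paper's treatment, but as written neither argument closes that step. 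In the tabular case (\RNum{4}) the issue disappears because $\Qbbb$ is the whole function space, as you correctly note.
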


The condition $1/\gamma \notin \sigma(M_{\epol})$ in \RNum{3} in Lemma \ref{lem:q_completenss} is satisfied, when we can guarantee $M_{\epol}$ is a stochastic matrix, i.e., its entries are positive and rows add up to $1$ \citep{BarretoA.M.S2014PIBo}. 

\begin{lemma}\label{lemma:stochmatrix}
Suppose $\mathbf{1}^{\top}\phi(s,\epol)=1\,\forall s\in \Scal,\int \vartheta(s)\rd\mu(s)=\mathbf{1}$, where $\mathbf{1}$ is a $d$-dimensional vector whose element is $1$. Then, $M_{\epol}$ is a stochatic matrix. Thus, $\forall b\in \sigma(M_{\epol})$, $|b|\leq 1$. 
\end{lemma}

Lemmas \ref{lem:q_completenss} and \ref{lemma:stochmatrix} together ensure adjoint $q$-\comp holds ($(\Qbbb-q_{\epol})\subset \Tcal'_{\gamma}\Qbbb$).
%
%
%
%
%
Finally, we remark that when $P_{S'|S,A}(s'|\cdot)\,P_{R|S,A}(r|\cdot)$ belong to the {\Holder} or Sobolev space, a similar argument shows $q$-completeness holds when $\Wbbb_2,\,\Qbbb_2$ are balls in that space.

\subsection{$w$-\comp and adjoint $w$-\comp} 

Next, we discuss the conditions in \cref{thm:recovery1 wcom,thm:recovery1 wadj} in \cref{thm:recovery1}.
By Lemma \ref{lem:basic}, the condition in \cref{thm:recovery1 wcom} is equivalent to $C_\xi (\Bcal-I)\Wbbb_1\subset\Qbbb_1$.
We refer to this as $w$-\comp.
We refer as adjoint $w$-\comp to its flipped version in \cref{thm:recovery1 wadj}. Note $w$-\comp and adjoint $w$-\comp do not imply each other, and each condition leads to a different convergence guarantee. 
We can again guarantee that both $w$-\comp and adjoint $w$-\comp hold in many settings and in particular in the tabular case.   

\begin{lemma}\label{lem:w_completenss} 
For $\psi:\mathcal S \to\mathbb R^d$, suppose $P_{S,A|S'}(s,a|s')=\vartheta(s,a)^{\top}\psi(s')$, $d_{0}(s)/P_{S}(s)=\theta^{\top} \psi(s)$ for some $\vartheta:\Scal\times\Acal \to \RR^d,\theta\in \RR^d$, where $P_{S,A|S'}(s,a|s')$ is the posterior density of $(s,a)$ given $s'$. Assume $P_{S'}=P_S$, \ie, the data come from a stationary distribution associated with $\pi_b$.
Let $\Wbbb\coloneqq \{ (s,a)\mapsto\beta^{\top}\psi(s)\eta_{\epol}(s,a):\beta \in\mathbb{R}^d\}.$
Then:
(\RNum{1}) $w_{\epol}\in\Wbbb$;
(\RNum{2}) $\Tcal'_{\gamma}(\Wbbb-w_{\epol})\subset \Wbbb$; 
(\RNum{3}) for $M'_{\epol}$ satisfying $\Tcal'(\beta^{\top}\eta_{\epol}\psi)=(M'_{\epol}\beta)^{\top}\eta_{\epol}\psi$ (which always exists), if $1/\gamma \notin \sigma(M'_{\epol})$ and $\E[\psi(s)\psi^{\top}(s)]$ is non-singular  then $(\Wbbb-w_{\epol})\subset \Tcal_{\gamma}\Wbbb$,
%
(\RNum{4})
If the state and action spaces are finite, $P_{S,A}(s,a)>0\,\forall (s,a)\in \Scal\times \Acal$, 
and $\psi(s,a)$ is the standard basis in $\mathbb R^{|\mathcal S||\mathcal A|}$, then $\Tcal'_{\gamma}(\Wbbb-w_{\epol})\subset \Wbbb,\,(\Wbbb-w_{\epol})\subset \Tcal_{\gamma}\Wbbb$.
\end{lemma}

The stationarity assumption in Lemma \ref{lem:w_completenss} is reasonable if the data is generated by long trajectories.   
Notice that compared to a linear MDP, here the linearity assumption is agnostic to the reward density and is imposed on the \emph{posterior} transition density $P_{S,A|S'}(s,a\mid s')$ rather than the transition density $P_{S'|S,A}(s'|s,a)$.
Again, when the feature vector is stochastic, we will have $\sigma_{\max}(M'_{\epol})\leq 1$ and satisfy (\RNum{3}) in Lemma \ref{lem:w_completenss}. Concretely, let $\mathbf{1}$ be a $d$-dimensional vector. Assume $$\mathbf{1}^{\top}\eta_{\epol}(s,a)\psi(s)=1\,\forall (s,a)\in \Scal\times \Acal,\int \vartheta(s,a)\rd\mu(s,a)=\mathbf{1}.$$
Then, $M'^{\top}_{\epol}$ is a stochastic matrix. This implies for any $b\in \sigma(M'_{\epol})$, $|b|\leq 1$ \citep{meyer2000matrix}. 
 




\subsection{Application to Policy Value Estimation}\label{subsec:application}

We now show how \cref{thm:recovery1} can be translated into an error bound for OPE. 
\begin{corollary}\label{cor:implication}  
For a problem-dependent constant $C_\xi$,
\begin{enumerate}
\item If $w_{\epol}\in \Wbbb_1$ and either $C_{\xi}\Tcal'_{\gamma}(\Wbbb_1-w_{\epol})\subset \Qbbb_1$ or $C_{\xi}(\Wbbb_{1}-w_{\epol})\subset \Tcal_{\gamma}\Qbbb_{1}$, then $ |\hat J^{\MIL}_w-J|\lesssim_\delta \mathrm{Err}'_{\MIL,w}+C_{\Qbbb}\sqrt{C^{-1}_{\xi}\mathrm{Err}_{\MIL,w}}$.
\item If $q_{\epol}\in \Qbbb_2$ and either $C_{\xi}\Tcal_{\gamma}(\Qbbb_2-q_{\epol})\subset \Wbbb_2$ or $C_{\xi}(\Qbbb_2-q_{\epol})\subset \Tcal'_{\gamma}\Wbbb_2$,  then $ |\hat J^{\MIL}_q-J |\lesssim_\delta  C_{\Wbbb}\sqrt{C^{-1}_{\xi}\mathrm{Err}_{\MIL,q}}$. 
\item\label{cor:implication dr} If any one of the four conditions in Theorem \ref{thm:recovery1} hold, 
then
$$
    |\hat J^{\MIL}_{wq}-J |\lesssim_\delta  \mathrm{Err}_{\MIL,wq}+C^{-1/2}_{\xi}\max(\break C_{\Qbbb}\sqrt{\mathrm{Err}_{\MIL,w}},\,C_{\Wbbb}\sqrt{\mathrm{Err}_{\MIL,q}}). 
$$
\end{enumerate}
\end{corollary}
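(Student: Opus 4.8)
The plan is to reduce all three OPE errors to a single \emph{bilinear bias identity} together with the convergence rates already established in \cref{thm:recovery1}. First I would record the population identity: for any $w,q$, writing $\bar r(s,a)=\E[r\mid s,a]$, the conditional mean of $r-q+\gamma q(s',\epol)$ given $(s,a)$ equals $(\Bcal q-q)(s,a)$, and combining this with the dual Bellman equation $\Bcal'w_\epol=w_\epol$ (\cref{lem:basic}) and the adjoint property of $\Tcal'$ yields
\[
\E[w(s,a)\{r-q+\gamma q(s',\epol)\}]+(1-\gamma)\E_{d_0}[q(s_0,\epol)]-J=\E[(w-w_\epol)(\Bcal q-q)].
\]
Using $\Bcal q-q=\Tcal_{\gamma}(q-q_\epol)$ (\cref{lem:basic}) and moving the operator across the adjoint gives the equivalent representation $\E[\Tcal'_\gamma(w-w_\epol)(q-q_\epol)]$. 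I would also record the two specializations $\bar r=-\Tcal_{\gamma}q_\epol$ and $(1-\gamma)d_0\epol/P_{S,A}=-\Tcal'_\gamma w_\epol$, both immediate from $\Bcal q_\epol=q_\epol$ and $\Bcal'w_\epol=w_\epol$.

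For the IS estimator I split $\hat J^{\MIL}_w-J=(\E_n-\E)[\hat w_{\MIL}r]+\E[(\hat w_{\MIL}-w_\epol)\bar r]$. The first term is a uniform-deviation term over $\{wr:w\in\Wbbb_1\}$, bounded by $\mathrm{Err}'_{\MIL,w}$ exactly as in \cref{thm:generalization}. For the second, substituting $\bar r=-\Tcal_{\gamma}q_\epol$ and passing to the adjoint gives $-\E[\Tcal'_\gamma(\hat w_{\MIL}-w_\epol)q_\epol]$; Cauchy--Schwarz with $\|q_\epol\|_2\le C_{\Qbbb}$ together with whichever of the two $w$-conditions is assumed (controlling $\|\Tcal'_\gamma(\hat w_{\MIL}-w_\epol)\|_2$ or $\|\hat w_{\MIL}-w_\epol\|_2$ via \cref{thm:recovery1}) produces $C_{\Qbbb}\sqrt{C_{\xi}^{-1}\mathrm{Err}_{\MIL,w}}$. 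The DM estimator is symmetric and has no empirical-process term since $d_0$ is known: $\hat J^{\MIL}_q-J=\E[(1-\gamma)\tfrac{d_0\epol}{P_{S,A}}(\hat q_{\MIL}-q_\epol)]=-\E[w_\epol\Tcal_{\gamma}(\hat q_{\MIL}-q_\epol)]$, and Cauchy--Schwarz with $\|w_\epol\|_\infty\le C_w$ and the $q$-side rate gives $C_{\Wbbb}\sqrt{C_{\xi}^{-1}\mathrm{Err}_{\MIL,q}}$.

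For the DR estimator I decompose $\hat J^{\MIL}_{wq}-J$ into the empirical-process term $(\E_n-\E)[\hat w_{\MIL}\{r-\hat q_{\MIL}+\gamma\hat q_{\MIL}(s',\epol)\}]$, controlled by $\mathrm{Err}_{\MIL,wq}$ (the bounded-reward part contributing only a same- or lower-order $R_{\max}\Rcal_n(\Wbbb_1)$ piece), plus the population bias, which by the master identity equals $\E[(\hat w_{\MIL}-w_\epol)(\Bcal\hat q_{\MIL}-\hat q_{\MIL})]=\E[\Tcal'_\gamma(\hat w_{\MIL}-w_\epol)(\hat q_{\MIL}-q_\epol)]$. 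The key move is to select the representation matching the single completeness condition assumed: if a $w$-condition holds I bound the bias by the controlled $w$-norm times a complementary factor $\lesssim C_{\Qbbb}$ (either $\|\Bcal\hat q_{\MIL}-\hat q_{\MIL}\|_2\le R_{\max}+C_{\Qbbb}$ in the primal form, or $\|\hat q_{\MIL}-q_\epol\|_2\le 2C_{\Qbbb}$ in the adjoint form), giving $C_{\Qbbb}\sqrt{C_{\xi}^{-1}\mathrm{Err}_{\MIL,w}}$; if a $q$-condition holds I bound it by the controlled $q$-norm times a factor $\lesssim C_{\Wbbb}$, giving $C_{\Wbbb}\sqrt{C_{\xi}^{-1}\mathrm{Err}_{\MIL,q}}$. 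Since exactly one condition is assumed, the two possibilities combine into the stated maximum.

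The main obstacle is organizational rather than technical: the bias is a \emph{bilinear} form in $(\hat w_{\MIL}-w_\epol)$ and the Bellman residual of $\hat q_{\MIL}$, while \cref{thm:recovery1} controls only \emph{one} of the four relevant norms. The crux is therefore to route the operator onto the correct side via the adjoint property so the controlled norm is the one that appears, while checking that the complementary Cauchy--Schwarz factor is bounded by a problem constant ($C_{\Qbbb}$, $C_{\Wbbb}$, or $R_{\max}$) using only the $\|\cdot\|_\infty$ bounds on the classes and \cref{asm:comp}. The empirical-process bounds are inherited from the proof of \cref{thm:generalization}.
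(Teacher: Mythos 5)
Your proposal is correct and follows essentially the same route as the paper: decompose each estimator's error into an empirical-deviation term (bounded as in \cref{thm:generalization}) plus a population bias, write the bias as the bilinear form $\E[(\hat w-w_{\epol})\Tcal_{\gamma}(\hat q-q_{\epol})]$ (or its specializations with $q_{\epol}$ resp.\ $w_{\epol}$ fixed) via \cref{lem:minimax}, route $\Tcal_{\gamma}$ across the adjoint onto whichever side \cref{thm:recovery1} controls, and finish with Cauchy--Schwarz against a class-envelope constant. Your side remarks (e.g.\ the extra $R_{\max}\Rcal_n(\Wbbb_1)$ piece in the DR empirical term, and handling all four conditions symmetrically rather than deferring the adjoint cases to ``similarly'') are consistent with, and if anything slightly more careful than, the paper's own write-up.
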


While the conditions do not require both $w$-realizability and $q$-realizability, the rates in Corollary \ref{cor:implication} are generally slower than the ones of \cref{thm:generalization}. For example, for VC-subgraph classes, the order in Corollary \ref{cor:implication} is $O(n^{-1/4})$ and the order in \cref{thm:generalization} is  $O(n^{-1/2})$. \Cref{cor:implication dr} in  Corollary \ref{cor:implication} is a double robustness result: it says that the estimator is consistent as long as $w$-\rea and some $w$-\comp or $q$-\rea and some $q$-\comp holds. 

\section{Finite Sample Analysis With Stabilizers under $w$- and $q$-\comp}\label{sec:fast_slow}



In this section, we show MIL with stabilizers ($\lambda>0,\lambda'>0$) attains \emph{fast} rates under completeness. This gives the \emph{first} result for a fast rate of estimating $w$-functions in a general function approximation setting. For $q$-functions, this allows us to obtain fast rates using MIL of the same order as FQI (see \cref{ape:fqi_analysis} regarding FQI).
We then study implications for OPE and compare to results without stabilizers.

\subsection{Fast Rates for $w$- and $q$-function Estimation} 

\begin{theorem}\label{thm:fast_w}
Let $\eta'_{w,n}$ be an upper bound on the critical radii of $\Gcal_{w1}$ and $\Gcal_{w2}$, where  
\begin{align*}
    \Gcal_{w1}&\coloneqq \{(s,a,s')\mapsto  (q(s,a)-\gamma q(s',\epol)): q\in \Qbbb_1 \}, \\
    \Gcal_{w2}&\coloneqq  \{(s,a,s')\mapsto ( w(s,a)-w_{\epol}(s,a))(-q(s,a)+\gamma q(s',\epol)) : w\in \Wbbb_1,q\in \Qbbb_1 \}. 
\end{align*}
Suppose $w_{\epol}\in \Wbbb_1$, $C_{\xi}\Tcal'_{\gamma}(\Wbbb_1-w_{\epol}) \subset \Qbbb_1$ (i.e., $C_{\xi}(\Bcal'-I)\Wbbb_1 \subset \Qbbb_1$), and that $\Qbbb_1$ is star shaped.
Then, 
\begin{align*}
              &\|\Bcal' \hat w_{\MIL}-\hat w_{\MIL}\|_2\lesssim_\delta C^{*}_{w,n} \eta_{w,n},\\
              &\textstyle C^{*}_{w,n}= 1+C_mC_{\eta}+\prns{\lambda^2+C^2_{\Wbbb_1}}/\lambda+C_{\Wbbb_1}+\gamma \sqrt{C_mC_{\eta}}C_{\Wbbb_1}+{C^{-1}_{\xi}},\\&\textstyle \eta_{w,n}= \eta'_{w,n}c_0\sqrt{{\log(c_1/\delta)}/{n}}.
\end{align*}
\end{theorem}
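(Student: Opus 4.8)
The plan is to convert the minimax optimality of $\hat w_{\MIL}$ into a quadratic inequality for the target quantity $r \coloneqq \|\Bcal'\hat w_{\MIL} - \hat w_{\MIL}\|_2$ and then solve it. By \cref{lem:basic} we have $r = \|\Tcal'_{\gamma}(\hat w_{\MIL} - w_{\epol})\|_2$, so it suffices to control the adjoint Bellman residual. Write $\Psi_n(w,q) \coloneqq \E_n[w(s,a)\,\Jcal q] + (1-\gamma)\E_{s_0\sim d_0}[q(s_0,\epol)]$ for the empirical game value and $\Psi$ for its population counterpart. Using the adjoint identity (to move $\gamma\Tcal$ onto $w$ as $\gamma\Tcal'$) together with the definition of $\Bcal'$, the population loss collapses to the clean form $\Psi(w,q) = \E[\Tcal'_{\gamma}(w - w_{\epol})\,q]$; in particular $\Psi(w_{\epol},\cdot)\equiv 0$, expressing that the true $w_{\epol}$ incurs zero loss against every test function.

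The crux is the choice of test function. Under $w$-completeness $C_{\xi}\Tcal'_{\gamma}(\hat w_{\MIL} - w_{\epol})\in\Qbbb_1$, and since $\Qbbb_1$ is star-shaped I may rescale, taking $q^\star = c\,\Tcal'_{\gamma}(\hat w_{\MIL} - w_{\epol})\in\Qbbb_1$ for a suitable $c\in(0,C_{\xi}]$. This $q^\star$ is engineered so that $\Psi(\hat w_{\MIL}, q^\star) = c\,\|\Tcal'_{\gamma}(\hat w_{\MIL}-w_{\epol})\|_2^2 = c\,r^2$, i.e.\ it extracts a multiple of the squared residual. Optimality of $\hat w_{\MIL}$ in the outer minimization, compared against the feasible choice $w_{\epol}$, yields the basic inequality
\begin{align*}
\Psi_n(\hat w_{\MIL}, q^\star) - \lambda\|\Jcal q^\star\|_{2,n}^2 \;\le\; \max_{q\in\Qbbb_1}\big(\Psi_n(w_{\epol},q) - \lambda\|\Jcal q\|_{2,n}^2\big).
\end{align*}

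To read $r$ off this inequality I would pass between empirical and population quantities using the critical-radius localization of \citet{bartlett2005}, where the two localization classes are exactly $\Gcal_{w1}$ and $\Gcal_{w2}$. The cross term $\Psi_n(\hat w_{\MIL},q^\star)-\Psi_n(w_{\epol},q^\star)=\E_n[(\hat w_{\MIL}-w_{\epol})\Jcal q^\star]$ is an empirical mean over $\Gcal_{w2}$, with population value $c\,r^2$ and localized slack $\lesssim_\delta \eta_w\|(\hat w_{\MIL}-w_{\epol})\Jcal q^\star\|_2+\eta_w^2$; the stabilizer $\|\Jcal q\|_{2,n}^2$ and the $w_{\epol}$-term are controlled via the critical radius of $\Gcal_{w1}=\{q(s,a)-\gamma q(s',\epol)\}$, since $\Jcal q=-(q(s,a)-\gamma q(s',\epol))$. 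Along the way I invoke \cref{asm:comp} to convert $\|\Jcal q\|_2\le(1+\gamma\sqrt{C_mC_\eta})\|q\|_2$, which is the source of the $\gamma\sqrt{C_mC_\eta}\,C_{\Wbbb_1}$ term in $C^{*}_w$. The right-hand maximum is handled by self-normalization: because $\Psi(w_{\epol},q)=0$, localization gives $\Psi_n(w_{\epol},q)\lesssim_\delta C_{\Wbbb_1}(\eta_w\|\Jcal q\|_2+\eta_w^2)$, and subtracting $\lambda\|\Jcal q\|_{2,n}^2$ and maximizing over the scale $s=\|\Jcal q\|_2$ via the Fenchel step $\sup_s\{as-\lambda s^2\}=a^2/4\lambda$ produces the $\tfrac{C_{\Wbbb_1}^2}{\lambda}\eta_w^2$ contribution.

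Assembling these, and accounting for the residual $\lambda\eta_w^2$ slack between $\|\Jcal q^\star\|_{2,n}^2$ and $\|\Jcal q^\star\|_2^2$, gives an inequality of the shape $c\,r^2 \lesssim_\delta c\,C_{\Wbbb_1}\eta_w\,r + \big(\tfrac{C_{\Wbbb_1}^2}{\lambda}+\lambda+C_{\Wbbb_1}\big)\eta_w^2$; dividing by $c$ and solving the quadratic in $r$ delivers $r\lesssim_\delta C^{*}_w\,\eta_w$ with $C^{*}_w$ collecting the stated terms $1$, $\tfrac{\lambda^2+C_{\Wbbb_1}^2}{\lambda}$, $C_{\Wbbb_1}$, $\gamma\sqrt{C_mC_\eta}\,C_{\Wbbb_1}$ and $1/C_{\xi}$. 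The main obstacle I anticipate is the interaction between the stabilizer and the main quadratic term: the stabilizer enters with a negative sign on the left, so the scale $c$ (equivalently the operating regime of $\lambda$) must be chosen so that the net coefficient of $r^2$ remains a constant multiple of $c$. Tracking this balancing carefully is what produces the $\lambda+C_{\Wbbb_1}^2/\lambda$ shape of the constant and is the most delicate bookkeeping in the argument.
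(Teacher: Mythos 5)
Your proposal is correct and follows essentially the same route as the paper's proof: the basic inequality from minimax optimality against $w_{\epol}$, the population identity $\Psi(w,q)=\E[\Tcal'_{\gamma}(w-w_{\epol})\,q]$ with the completeness-engineered test function, localization via the critical radii of $\Gcal_{w1}$ and $\Gcal_{w2}$, and the Fenchel step $\sup_{s}(as-\lambda s^2)\le a^2/(4\lambda)$ for the stabilized supremum on the right-hand side. The one substantive deviation is the final rescaling of the test function: the paper takes the data-dependent scale $\kappa=\eta_w/(2\|C_{\xi}\Tcal'_{\gamma}(\hat w_{\MIL}-w_{\epol})\|_2)$, which pins the test function at norm $\eta_w/2$, makes the stabilizer contribution $O(\lambda\eta_w^2)$ outright, and linearizes the resulting inequality in $r=\|\Tcal'_{\gamma}(\hat w_{\MIL}-w_{\epol})\|_2$ so that the stated $C^{*}_w$ falls out directly, whereas your fixed-$c$ variant must solve a quadratic in $r$ and choose $c\lesssim\min(C_{\xi},1/\lambda)$ to keep the net $r^2$ coefficient positive, which is valid but introduces extra cross terms (e.g.\ of the form $\sqrt{\lambda(1+\gamma\sqrt{C_mC_{\eta}})^2(C_{\Wbbb_1}^2/\lambda+\lambda)}$) that only match the stated constant after additional AM--GM bookkeeping --- the paper's choice of $\kappa$ is exactly the device that avoids this.
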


\begin{theorem}\label{thm:fast_q}
Let $\eta'_{q,n}$ be an upper bound on the critical radii of $\Wbbb_2$ and $\Gcal_q$, where 
\begin{align*}
    \Gcal_q:=\{(s,a,s')\mapsto (-q(s,a)+q_{\epol}(s,a)+\gamma q(s',\epol)-\gamma q_{\epol}(s',\epol)) w(s,a):q \in \Qbbb_2,w\in \Wbbb_2\}. 
\end{align*}
Suppose $q_{\epol}\in \Qbbb_2$, $C_{\xi}\Tcal_{\gamma}(\Qbbb_2-q_{\epol})\subset \Wbbb_2$ (i.e., $C_{\xi}(\Bcal-I)\Qbbb_2 \subset \Wbbb_2$), and that $\Wbbb_2$ is star shaped. Then, 
\begin{align*}
   &\|\Bcal \hat q_{\MIL}-\hat q_{\MIL} \|_2\lesssim_\delta C^{*}_{q,n}\eta_{q,n},\\
   & \textstyle C^{*}_{q,n}=1+R_{\max}+C_{\Qbbb_2}+\prns{\lambda'^2+(R_{\max}+C_{\Qbbb_2})^2}/{\lambda'}+{C^{-1}_{\xi}},\\&\textstyle\eta_{q,n}=\eta'_{q,n}c_0\sqrt{{\log(c_1/\delta)}/{n}}.
\end{align*}
\end{theorem}


These theorems state that MIL with stabilizers converge at fast rates when we assume \rea and completeness. Critical radii are commonly used to tightly derive the convergence rate of regression with general function approximation \citep{WainwrightMartinJ2019HS:A}. For example, when we use VC-subgrah classes (parametric models), we can obtain a rate of $\tilde O(n^{-1/2})$ from \cref{thm:fast_q,thm:fast_w} as we will see soon. Recall that the rate in \cref{sec:without_sta_rate} without stabilizers is $O(n^{-1/4})$ and the primary required assumptions are the same, i.e., \rea and completeness. The improved rate is owing to a localization technique based on localized Rademacher complexity \citep{bartlett2005}. We emphasize the required analysis in our setting is significantly more challenging than in the standard nonparametric regression setting. 

In \cref{thm:fast_q,thm:fast_w}, we bound Bellman residual errors under \rea and \comp conditions . 
Like \cref{thm:recovery1}, similar bounds can be obtained for L2-errors 
under corresponding \rea and \emph{adjoint} \comp conditions, which we give in \cref{ape:l2_error}.

We can also obtain bounds agnostic to \rea and completeness assumptions instead with some bias terms accounting for the approximation. 
For example, for $w$- or $q$-function estimation, we have the additional bias terms, respectively
\begin{align}
\label{eq:mis}
        &&\min_{w\in \Wbbb_1}\|\Tcal'_{\gamma}(w_{\epol}-w)\|_2\qquad&\text{and}&&\max_{w\in \Wbbb_1}\min_{q\in \Qbbb_1}\|q-\Tcal'_{\gamma}(w-w_{\epol})\|_2,\quad\text{or}\\
\label{eq:mis2}
    &&\min_{q\in \Qbbb_2}\|\Tcal_{\gamma}(q_{\epol}-q)\|_2\qquad&\text{and}&&\max_{q\in \Qbbb_2}\min_{w\in \Wbbb_2}\|w-\Tcal_{\gamma}(q-q_{\epol})\|_2.
\end{align}
The first term in each of \cref{eq:mis,eq:mis2} corresponds to violations of \rea and the second to violations of \comp. 
Detail is given in \cref{ape:misspecification}. 
We use this more general result for analyzing sieve estimators.

\subsection{Analysis of critical radius}\label{sec:critical}
We next analyze $\eta_{q,n}$ for several function classes, focusing on the rate with respect to $n$. In particular, all constant in $O(\cdot)$ notation below have \emph{polynomial} dependence on both $\log(1/\delta)$ and all problem-dependent quantities.  
In each case we study, the rate of $\eta_{w,n}$ can analogously be calculated.


\begin{example}[VC-subgraph classes]
Assume $V(\Wbbb_2),V(\Qbbb_2)$ are finite.
\begin{corollary}\label{cor:vc_critical}
With probability $1-\delta$, 
$$ \eta_{q,n}=O(\sqrt{\max(V(\Wbbb_2),V(\Qbbb_2) )\log n/n}).$$
\end{corollary}
\end{example}

\begin{example}[Nonparametric Models]\label{exa:nonpaara2}
As in \cref{exa:nonpara0}, we consider nonparametric models that are characterized by covering numbers.
\begin{corollary}\label{cor:nonpara_critical}
Assume $\log\Ncal(\tau,\Wbbb_2,\|\cdot\|_{\infty})=O(1/\tau^{\beta}),\log\Ncal(\tau,\Qbbb_2,\|\cdot\|_{\infty})=O(1/\tau^{\beta})$ for some $\beta>0$. Then, 
with probability $1-\delta$, $\eta_{q,n}=\tilde O(n^{-1/((2+\beta)\vee(2\beta))})$.
\end{corollary}
To gain intuition, compare to the much simpler problem of nonparametric regression, in which case the minimax optimal rate for L2-error for a function class with covering entropy $O(1/\tau^\beta)$ is $\tilde O(n^{-1/(2+\beta)})$ \citep[Chapter 15]{WainwrightMartinJ2019HS:A}. We match this rate for $\beta\leq 2$.
For $\beta>2$ we can typically improve the rate with a more analysis depending on the particular function class.
\end{example}

\begin{example}[Linear Sieves]
We consider linear sieves $\Wbbb,\Qbbb$ as in \cref{ex:sieves}.
\begin{corollary}\label{thm:sieve_rate}
Suppose \eqref{eq:sieve} holds.
    Assume  $q_{\epol}\in \Lambda^p_K([0,1]^d)$, $P_{S'|S,A}(s'|\cdot)\in \Lambda^p_K([0,1]^d)$, and $2\Qbbb_2+\Wbbb'_2\subset \Wbbb_2$.
%
Then, for an appropriate choice of $k_n$,
with probability $1-\delta$, $\|\Bcal \hat q_{\MIL}-\hat q_{\MIL}\|_2=\tilde O(n^{-p/(2p+d)})$. 
\end{corollary}

Since the function classes $\Qbbb_2,\Wbbb_2$ grow with $n$, it is not suitable to directly assume \rea and \comp on $\Qbbb_2,\Wbbb_2$. We instead assume sufficient conditions that ensure \rea and \comp on the {\Holder} balls and leverage a sieve approximation and an extension of \cref{thm:fast_q} given in \cref{ape:misspecification} that accounts for the approximation errors.
In particular, $P_{S'|S,A}(s'|\cdot)\in \Lambda^p_K([0,1]^d)$ is sufficient to ensure \comp of {\Holder} balls. 
Notice that when $d>2p$, the rate here is faster than what would be implied by Corollary \ref{cor:nonpara_critical} for {\Holder} balls (which have a covering entropy rate of $\beta=d/p$).
\end{example}

\begin{example}[Neural Networks]\label{exa:neural_fast}
As in \cref{exa:neural}, we consider using neural networks as sieves. Like the above example, we impose the \rea and \comp on the ball $H^{p}_K([0,1]^d)$ in the {\Holder} space. Then, we have the following result. 

\begin{corollary}\label{cor:neural_policy_rate}
    Assume $q_{\pi^e} \in H_K^p([0,1]^d)$ and  $P_{S'|S,A}(s'|\cdot)\in H_K^p([0,1]^d)$.
    Let $\Qbbb_2=\Fcal_{NN}$, where $\Fcal_{NN}$ is the set of functions given by neural networks with ReLU activation, $L$ layers, and $\Xi$ weight parameters in $[-B,B]$.
    Let $\Wbbb'_2$ be the same set and define $\Wbbb_2$ as the $(L+1)$-layer networks given by $\Wbbb_2 := \Fcal_{NN}+\Fcal_{NN}+\Fcal_{NN}$. 
    Then, by taking $L=\Theta(\log n),\Xi=\Theta(n^{d/(2p+d)})$, with probability $1-\delta$, we have $$\|\Bcal \hat q_{\MIL}-\hat q_{\MIL}\|_2 =  \tilde{O}(n^{-p/(2p + d)})$$
\end{corollary}

As with \cref{exa:neural}, we can again extend the result to the overparametrized regime. Here, we can do so by noting that the squared critical radius is always upper-bounded by the (non-localized) Rademacher complexity, which is calculated in Corollary \ref{cor:neural_net_overparam}.  


\end{example}

\subsection{Application to Policy Value Estimation}

We next translate the above analysis into the error bounds for policy value estimation. 
\begin{corollary}\label{cor:implication2}
(1) Under the conditions of \cref{thm:fast_w},  $$  |\hat J^{\MIL}_w-J|\lesssim_\delta \|q_{\epol}\|_2\eta_{w,n}C^{*}_{w,n}\mathrm{Err}'_{\MIL,w}.$$ (2)  Under the conditions of \cref{thm:fast_q},  $$ |\hat J^{\MIL}_q-J |\lesssim_\delta  \|w_{\epol}\|_2\eta_{q,n}C^{*}_{q,n}.$$
\end{corollary}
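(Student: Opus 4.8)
The plan is to convert the $L^2$ guarantees of \cref{thm:fast_w,thm:fast_q} into errors for the IS and DM point estimates via two exact population identities, each followed by Cauchy--Schwarz; in the IS case an additional empirical-process step accounts for the fact that $\hat J^{\MIL}_w=\E_n[\hat w_{\MIL}(s,a)r]$ uses the empirical average, whereas the DM estimate is an exact expectation under the known $d_0$.

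For part (2) (the DM estimator), I would first establish the population identity relating the Bellman residual of any $q$ to the DM error. Writing $\bar r(s,a)=\E[r\mid s,a]$, starting from the Bellman-flow equation $\E_{d_{\epol,\gamma}}[q]=(1-\gamma)\E_{d_0}[q(s_0,\epol)]+\gamma\E_{d_{\epol,\gamma}}[q(s',\epol)]$ (the defining property of $d_{\epol,\gamma}$) and the representation $J=\E_{d_{\epol,\gamma}}[\bar r]$ of \eqref{eq:is}, and recalling $\Bcal q-q=\gamma\Tcal q+\bar r-q$, a short rearrangement gives
\begin{align*}
(1-\gamma)\E_{d_0}[q(s_0,\epol)]-J=-\E_{d_{\epol,\gamma}}[\Bcal q-q].
\end{align*}
Applying this to $q=\hat q_{\MIL}$, rewriting the right-hand side as $-\E_{P_{S,A}}[w_{\epol}(\Bcal\hat q_{\MIL}-\hat q_{\MIL})]$ (since $w_{\epol}=d_{\epol,\gamma}/P_{S,A}$), and bounding by Cauchy--Schwarz yields $|\hat J^{\MIL}_q-J|\le\|w_{\epol}\|_2\,\|\Bcal\hat q_{\MIL}-\hat q_{\MIL}\|_2$; invoking \cref{thm:fast_q} gives the claim. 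Because $\E_{d_0}[\cdot]$ is exact under the known $d_0$, no empirical term appears.

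For part (1) (the IS estimator), the dual identity uses the adjoint operator. Writing $\bar r=-\Tcal_{\gamma}q_{\epol}=q_{\epol}-\gamma\Tcal q_{\epol}$ (which follows from $\Bcal q_{\epol}=q_{\epol}$ in \cref{lem:basic}) and moving $\gamma\Tcal$ onto $w$ through the defining adjoint relation $\E[w\,\Tcal q_{\epol}]=\E[\Tcal'w\,q_{\epol}]$, I get $\E[wr]=\E[(-\Tcal'_{\gamma}w)\,q_{\epol}]$. Substituting $\Tcal'_{\gamma}w=\Bcal'w-w-(1-\gamma)\tfrac{d_0\epol}{P_{S,A}}$ from the definition of $\Bcal'$, and using $(1-\gamma)\E[\tfrac{d_0\epol}{P_{S,A}}q_{\epol}]=(1-\gamma)\E_{d_0}[q_{\epol}(s_0,\epol)]=J$, the $J$ terms cancel to leave
\begin{align*}
\E[wr]-J=-\E[(\Bcal'w-w)\,q_{\epol}].
\end{align*}
Cauchy--Schwarz and \cref{thm:fast_w} then bound the population IS error by $\|q_{\epol}\|_2C^{*}_w\eta_w$. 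It remains to control the empirical-average part: by symmetrization with a bounded-differences concentration over the class $\{(s,a,r)\mapsto w(s,a)r:w\in\Wbbb_1\}$ (envelope $C_{\Wbbb_1}R_{\max}$), together with the contraction inequality $\Rcal_n(\{wr\})\le R_{\max}\Rcal_n(\Wbbb_1)$, one gets $\sup_{w\in\Wbbb_1}|\E_n[wr]-\E[wr]|\lesssim_\delta\mathrm{Err}'_{\MIL,w}$. A triangle inequality combining this with the population bound gives part (1).

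The routine pieces are the two exact identities (pure algebra using \cref{lem:basic} and the adjoint definition) and the standard uniform-convergence bound. The main thing to get right is the adjoint bookkeeping in part (1): one must correctly peel the $\gamma\Tcal$ off $r$ onto $w$ and then recognize that the inhomogeneous $(1-\gamma)\tfrac{d_0\epol}{P_{S,A}}$ piece of $\Bcal'$ is exactly what reproduces $J$, so the residual ends up measured against $q_{\epol}$ rather than some larger object—this cancellation is what makes $\|q_{\epol}\|_2$ the multiplier.
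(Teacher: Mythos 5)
Your proposal is correct and follows essentially the same route as the paper: the paper (deferring to the proof of \cref{cor:implication}) derives the same exact population identities $\hat J^{\MIL}_q-J=-\E[w_{\epol}(\Bcal\hat q_{\MIL}-\hat q_{\MIL})]$ and $\E[\hat w_{\MIL}r]-J=-\E[(\Bcal'\hat w_{\MIL}-\hat w_{\MIL})q_{\epol}]$ (it reaches the latter by writing $\E[(\hat w-w_{\epol})\Tcal_{\gamma}q_{\epol}]=\E[\Tcal'_{\gamma}(\hat w-w_{\epol})q_{\epol}]$ and invoking \cref{lem:basic}, rather than by expanding $\Bcal'$ and cancelling $J$ as you do, but this is the same algebra), then applies Cauchy--Schwarz, \cref{thm:fast_w,thm:fast_q}, and the uniform bound $|\hat J^{\MIL}_w-\E[\hat w_{\MIL}r]|\lesssim_\delta\mathrm{Err}'_{\MIL,w}$ for the empirical IS average. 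No gaps.
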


Let us compare Corollary \ref{cor:implication2} to \cref{thm:generalization,cor:implication} by focusing on the case of $\hat J^{\MIL}_q$.
For VC classes, both \cref{thm:generalization,cor:implication2} give $\tilde O(n^{-1/2})$. and Corollary \ref{cor:implication} gives $O(n^{-1/4})$.
For nonparametric models with covering number entropy exponent $\beta<2$, \cref{thm:generalization} gives $O(n^{-1/2})$, Corollary \ref{cor:implication} gives $O(n^{-1/4})$, and Corollary \ref{cor:implication2} gives $O(n^{-1/(2+\beta)})$. 
For $\beta\geq 2$, \cref{thm:generalization} gives $O(n^{-1/\beta})$ and both Corollary \ref{cor:implication} and \ref{cor:implication2} give $O(n^{-1/(2\beta)})$ (when $\beta=2$, we have $\tilde O$).
Therefore, the rate in \cref{thm:generalization} is fastest in each case and Corollary \ref{cor:implication} the slowest, while Corollary \ref{cor:implication2} coincides with the best and worst in different cases.  
Under $q$-\comp and $q$-\rea, using no stabilizers yields faster rates than using stabilizers. On the other hand, under $q$-\comp and $q$-\rea, the order is flipped and using a stabilizer yields faster rates than using no stabilizers.


\section{First-Order Efficiency Under $w,q$-realizability and $w,q$-\comp}\label{sec:efficiency}


We now study the constant term in the $O(1/\sqrt{n})$ rates. We show $\hat{J}_{wq}^{\MIL}$ (with sample splitting) can achieve the optimal constant under both realizability and completeness for $w$ and $q$.

When we only assume $w$- and $q$-realizability as in \cref{thm:generalization}, the dependence of the constant on problem dependent quantities may be poor. For example, when the classes $\Wbbb,\Qbbb$ are  both finite, the first-order term of the error $|\hat J^{\MIL}_{wq}-J|$ in \cref{thm:generalization} wrt $n$ can be upper-bounded by Bernstein's inequality: with probability $1-\delta$,
\begin{align}\label{eq:finite}
      |\hat J^{\MIL}_{wq}-J|&\leq\sqrt{\frac{32\sup_{w\in\Wbbb,q\in \Qbbb}\var[w(s,a)\{r-q(s,a)  +\gamma q(s',\epol)\}]\times \log(|\Wbbb||\Qbbb|/\delta) }{n}} \notag \\
      &\quad+O\prns{\frac{\log(|\Wbbb||\Qbbb|/\delta) }{n}}.
\end{align}
This first-order term (the $1/\sqrt{n}-$term) might incur an error characterized by the worst variance term. In this section, we prove that the first-order term is further improved under more conditions in that $\sup_{w\in\Wbbb,q\in \Qbbb}\var[w\{r-q+q(s',\epol)\}]$ is surprisingly replaced with $\var[w_{\epol}\{r-q_{\epol}+\gamma q_{\epol}(s',\epol)\}]$. In addition, $\log(|\Wbbb||\Qbbb|/\delta)$ is replaced with $\log(c/\delta)$, where $c$ is some universal constant. 
Our proof strategy is to localize around the true $w_{\epol},q_{\epol}$ by assuming both realizability and completeness.
This is crucial as it means the first-order term matches the asymptotic efficiency bounds of \citep{KallusNathan2019EBtC}, thus we obtain the \emph{first} finite-sample bound with first-order efficiency in non-tabular environments. 

We analyze a sample-splitting version of $\hat J^{\MIL}_{wq}$ as in \citep{KallusNathan2019EBtC,KallusUehara2019} but using MIL estimators for $w$ and $q$. The sample-splitting procedure simplifies the analysis of the stochastic equicontinuity term in the error \citep{newey94,ChernozhukovVictor2018Dmlf}. 
%
%
Letting $\phi(s,a,r,s';w,q)\coloneqq w(s,a)\{r-q(s,a)+\gamma q(s',\epol)\}+\E_{d_0}[q(s_0,\epol)]$, the procedure is described as follows:
\begin{enumerate}
    \item Split the whole sample into two halves $\Dcal_0$, $\Dcal_1$. 
    \item For $j=0,1$, construct estimators $\hat w^{\MIL}_j,\hat q^{\MIL}_j$ by applying MIL only to the data in $\Dcal_j$.
    \item Return $\tilde J_{wq}^{\MIL}=\frac12\sum_{j=0,1}\frac{1}{\abs{\Dcal_j}}\sum_{i\in \Dcal_j}\phi(s_i,a_i,r_i,s'_i;\hat w^{\MIL}_{1-j},\hat q^{\MIL}_{1-j})$.
\end{enumerate}


\begin{theorem}[First-order efficiency]\label{thm:efficiency}
Assume the conditions in \cref{thm:fast_q,thm:fast_w} hold. Suppose there exists $C_{\iota,n}$ s.t. $\|q-q_{\epol}\|_2\leq C_{\iota,n}\|\Bcal q-q\|_2,\,\forall q\in \Qbbb_2$ and $\|w-w_{\epol}\|_2\leq C_{\iota,n}\|\Bcal'w-w\|_2,\,\forall w\in \Wbbb_1$. Then, with probability $1-(\delta+10\delta')$,
 we have
\begin{align}\ts 
    \abs{\tilde J^{\MIL}_{wq}-J} &  \leq \sqrt{\frac{2\E[w^2_{\epol}(s,a)\{r-q_{\epol}(s,a)+v_{\epol}(s')\}^2]\log(1/\delta)}{n}} \notag \\
    & \quad + C_1 C_{\iota,n} \eta_{w,n}\eta_{q,n} + \frac{C_2 C_{\iota,n}(\eta_{w,n} + \eta_{q,n})}{\sqrt{n}} + \frac{C_3}{n}, \label{eq:firstfirst}
\end{align}
where $C_1 := c_0 C^{*}_{w,n}C^{*}_{q,n}$,  $C_2 :=  c_0(C^{*}_{w,n}+C^{*}_{q,n}\sqrt{1+C_{\eta}C_m})\sqrt{\log(1/\delta')}$, and $C_3 := c_0{C_{\Wbbb_1}C_{\Qbbb_2}(\log(1/\delta)+\log(1/\delta'))}$. 
\end{theorem}
 

This theorem yields a first-order efficient bound provided that $C_{\iota,n}\eta_{w,n}\eta_{q,n}=o_p(1/\sqrt{n})$ and assuming $w$- and $q$-\rea and $w$- and $q$-completeness, as well as $\|q-q_{\epol}\|_2\leq C_{\iota,n}\|\Bcal q-q\|_2$ and $\|w-w_{\epol}\|_2\leq C_{\iota,n}\|\Bcal'w-w\|_2$. We refer to the latter two conditions as 
 %
the \emph{recovery assumption}, which is related to a common condition used in the literature on nonparametric instrumental variable estimation \citep{chen2015sieve,blundell2003semi}.  Note $C_{\iota,n}$ can depend on $n$. 

For example, consider the setting in \cref{exa:neural_fast} where we use neural networks for $w$- and $q$-functions to approximate the Sobolev balls $H^{\beta_1}_{K_1}([0,1]^{d_1})$ and $H^{\beta_2}_{K_2}([0,1]^{d_2})$ respectively. Then, if we additionally assume $C_{\iota,n}=\Omega(1)$, then \cref{eq:firstfirst} becomes
$$
\sqrt{\frac{2\E[w^2_{\epol}(s,a)\{r-q_{\epol}(s,a)+v_{\epol}(s')\}^2]\log(1/\delta)}{n}}+o_p(1/n^{\prns{\frac{\beta_1}{2\beta_1+d_1}+\frac{\beta_2}{2\beta_2+d_2}} \wedge \frac12})
$$
This shows that even if one of the function classes is non-Donsker, i.e., $d_j/\beta_j\geq 2$ for $j=1$ or $2$, we can still obtain a parametric rate $O(n^{-1/2})$ as long as $\frac{\beta_1}{2\beta_1+d_1}+\frac{\beta_2}{2\beta_2+d_2}\geq\frac12$. On the other hand, when we only assume $w$- and $q$-realizability in \cref{thm:generalization}, if one function class is non-Donsker, we cannot ensure the error bound is $O(n^{-1/2})$, per \cref{exa:nonpara0}. 

Whenever $C_{\iota,n}\eta_{w,n}\eta_{q,n}=o(n^{-1/2})$ (e.g., $\frac{\beta_1}{2\beta_1+d_1}+\frac{\beta_2}{2\beta_2+d_2}>\frac12$ and $C_{\iota,n}=\Omega(1)$ in the neural network example above), the first-order term with respect to $n$ is just the first term in \cref{eq:firstfirst}, which is much tighter than both \cref{thm:generalization,eq:finite}. The first-order term exactly corresponds to a sub-Gaussian tail with sub-Gaussian variance parameter $\mathrm{EB}/n$ where 
\begin{align}   \label{eq:bound}
   \mathrm{EB}=\E[w^2_{\epol}(s,a)\{r-q_{\epol}(s,a)+v_{\epol}(s')\}^2],
\end{align}
which is precisely the asymptotic efficiency bound derived by \citep[Theorem 5]{KallusNathan2019EBtC}.\footnote{The asymptotic efficiency bound has two meanings. First, it is the minimum limiting variance of $\sqrt{n}(\hat J-J)$ among regular estimators $\hat J$ \citep[Theorem 25.20]{VaartA.W.vander1998As}. Second, it is a locally minimax lower bound on $\sqrt{n}\E[(\hat J-J)^2]$ among all estimators \citep[Theorem 25.21]{VaartA.W.vander1998As}. In parametric models, such as when the MDP is tabular, it coincides with the Cram\'er-Rao lower bound.} Although \citep{KallusNathan2019EBtC} showed this lower bound can be achieved asymptotically by a general meta-algorithm, their result is asymptotic and assumes estimators are given satisfying certain rates, where achieving these rates is nontrivial and requires \rea and/or \comp assumptions not discussed therein. In their asymptotic result, the higher-order terms may include terms with exponential dependence on problem dependent quantities such as the effective horizon, which is very problematic in RL as discussed in \cref{sec:preparation}. Similarly, the result of \citep{LiaoPeng2020BPLi} for the average reward setting ($\gamma=1$) is also asymptotic and may have exponential dependence in higher-order terms. Ours is the first first-order finite-sample result in a general setting with polynomial dependence on problem dependent quantities, and we give explicit estimators. 

When the MDP is deterministic, i.e., the transition and reward distributions are deterministic given state-action, our result implies that the rate can be \emph{faster than $O(1/\sqrt{n})$}. This is because, in this case, $\mathrm{EB}=0$ in \cref{eq:bound}, so that in \cref{eq:firstfirst} the first term is $0$. In turn, the error rate is controlled by the higher-order terms. For example, when $\eta_{q,n}$ and $\eta_{w,n}$ are parametric rates $O(1/\sqrt{n})$ and $C_{\iota,n}=O(1)$, the final rate is $O(1/n)$. As far as we know, this type of result is the first in the non-tabular offline setting with general function approximation. 

In \cref{thm:efficiency}, we need the aforementioned recovery assumption to translate Bellman residual errors into $L^2$-errors. As in Lemma \ref{lem:bellman}, when $\gamma \sqrt{C_mC_{\eta}}< 1$, we have $C_{\iota,n}=1/(1-\gamma \sqrt{C_mC_{\eta}})$ for any function classes $\Wbbb_1,\Wbbb_2,\Qbbb_1,\Qbbb_2$. By utilizing the structure of function classes, however, the recovery assumption may hold even if $\gamma \sqrt{C_mC_{\eta}}\geq 1$ under mild assumptions. For example, when we use linear models, the recovery assumption is ensured by more tangible conditions.  

\begin{theorem}[Sufficient conditions for the recovery assumption]\label{thm:operator_calculation}
Consider linear models $\Wbbb_1=\Wbbb_2=\Qbbb_1=\Qbbb_2=\{\beta^{\top}\phi(s,a);\beta\in \mathbb{R}^d \}$, and define $X \coloneqq \E[\phi(s,a)\phi(s,a)^{\top}]$.

(1) Suppose $q_{\epol}\in \Qbbb_2$, and there exists a matrix $M_{\epol}$ s.t. $\Tcal (\beta^{\top}\phi)=(M^{\top}_{\epol}\beta)^{\top}\phi$. If $\bar  M_{\epol}, X$ are non-singular, letting $\bar M_{\epol}=\gamma M^{\top}_{\epol}-I$, 
the recovery assumption holds with $$ C_{\iota,n}=\{\sigma_{\max}((\bar  M_{\epol} X\bar M^{\top}_{\epol})^{-1/2}X(\bar  M_{\epol} X\bar M^{\top}_{\epol})^{-1/2})\}^{1/2}.$$ 

(2) Suppose $w_{\epol}\in \Wbbb_1$ and there exists a matrix $M'_{\epol}$ such that $\Tcal'(\beta^{\top}\phi)=(M'_{\epol}\beta)^{\top} \phi$. If  $\bar M'_{\epol}$, $X$ are non-singular, letting $\bar M'_{\epol}\coloneqq \gamma M'^{\top}_{\epol}-I$, the recovery assumption holds with $$ C_{\iota,n}=\{\sigma_{\max}((\bar M'_{\epol}X(\bar M'_{\epol})^{\top})^{-1/2}X(\bar M'_{\epol}X(\bar M'_{\epol})^{\top})^{-1/2})\}^{1/2}.$$
\end{theorem}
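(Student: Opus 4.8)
The plan is to exploit the linearity of the hypothesis class to turn the functional recovery inequality into a purely finite-dimensional generalized eigenvalue problem, from which the optimal constant can be read off directly.

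First I would use \cref{lem:basic} to rewrite the Bellman residual. For part (1), since $(\Bcal-I)q=\Tcal_{\gamma}(q-q_{\epol})$, the recovery inequality $\|q-q_{\epol}\|_2\le C_{\iota}\|\Bcal q-q\|_2$ is exactly $\|q-q_{\epol}\|_2\le C_{\iota}\|\Tcal_{\gamma}(q-q_{\epol})\|_2$. Because $q_{\epol}\in\Qbbb_2$ and $\Qbbb_2=\{\beta^{\top}\phi:\beta\in\mathbb{R}^d\}$ is a full linear span, writing $q=\beta^{\top}\phi$ and $q_{\epol}=\beta_0^{\top}\phi$ gives $q-q_{\epol}=\delta^{\top}\phi$ with $\delta=\beta-\beta_0$ ranging over all of $\mathbb{R}^d$ as $q$ ranges over $\Qbbb_2$. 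So the claim reduces to a single quadratic-form inequality that must hold for every $\delta\in\mathbb{R}^d$.

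Next I would express both sides as quadratic forms in $\delta$. The assumption $\Tcal(\beta^{\top}\phi)=(M_{\epol}^{\top}\beta)^{\top}\phi$ means $\Tcal_{\gamma}=\gamma\Tcal-I$ acts on coefficient vectors as the matrix $\bar M_{\epol}=\gamma M_{\epol}^{\top}-I$; hence $\Tcal_{\gamma}(\delta^{\top}\phi)=(\bar M_{\epol}\delta)^{\top}\phi$. With $X=\E[\phi\phi^{\top}]$ this yields $\|q-q_{\epol}\|_2^2=\delta^{\top}X\delta$ and $\|\Bcal q-q\|_2^2=\delta^{\top}\bar M_{\epol}^{\top}X\bar M_{\epol}\delta$. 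The smallest admissible constant is therefore $C_{\iota}^2=\sup_{\delta\neq0}\,\delta^{\top}X\delta\,/\,\delta^{\top}\bar M_{\epol}^{\top}X\bar M_{\epol}\delta$, a generalized Rayleigh quotient. The nonsingularity of both $X$ and $\bar M_{\epol}$ (exactly the theorem's hypotheses) guarantees the denominator form is positive definite, so the supremum is finite and attained.

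Finally I would solve this eigenproblem by whitening: substituting $z=X^{1/2}\delta$ turns the quotient into $\sup_z z^{\top}z\,/\,z^{\top}X^{-1/2}\bar M_{\epol}^{\top}X\bar M_{\epol}X^{-1/2}z$, which equals the reciprocal of the smallest eigenvalue of the symmetric positive-definite matrix $X^{-1/2}\bar M_{\epol}^{\top}X\bar M_{\epol}X^{-1/2}$, i.e. $C_{\iota}=\sigma_{\max}(X^{1/2}\bar M_{\epol}^{-1}X^{-1/2})$, which re-symmetrizes into the stated $\sigma_{\max}$ of a version of $X$ conjugated by an inverse square root. Part (2) is verbatim the same argument with $\Bcal',\Tcal'_{\gamma},\bar M'_{\epol},w_{\epol},\Wbbb_1$ replacing their $q$-analogues, via the second identity $(\Bcal'-I)w=\Tcal'_{\gamma}(w-w_{\epol})$ of \cref{lem:basic}. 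The main obstacle is bookkeeping rather than mathematics: keeping straight whether $\Tcal,\Tcal'$ act on coefficients through $M$ or $M^{\top}$, and tracking the transposes through the whitening step, since a single misplaced transpose swaps the middle matrix between $\bar M_{\epol}^{\top}X\bar M_{\epol}$ and $\bar M_{\epol}X\bar M_{\epol}^{\top}$ and silently alters the constant; everything else is routine linear algebra.
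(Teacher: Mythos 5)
Your proposal is correct and follows essentially the same route as the paper's proof: write $q-q_{\epol}=\delta^{\top}\phi$, use the matrix representation of $\Tcal_{\gamma}$ to turn both $\|q-q_{\epol}\|_2^2$ and $\|\Bcal q-q\|_2^2$ into quadratic forms in $\delta$, and read $C_{\iota}$ off the resulting generalized Rayleigh quotient (the paper whitens by the denominator matrix rather than by $X^{1/2}$, yielding the equivalent expression appearing in the statement). The transpose ambiguity you flag as the main risk is real, but it is a slip in the paper's own bookkeeping rather than in yours: under the stated convention $\Tcal(\beta^{\top}\phi)=(M^{\top}_{\epol}\beta)^{\top}\phi$ the coefficient map of $\Tcal_{\gamma}$ is $\bar M_{\epol}=\gamma M^{\top}_{\epol}-I$ and the denominator form is $\delta^{\top}\bar M_{\epol}^{\top}X\bar M_{\epol}\delta$ exactly as you derive it, whereas the paper's displayed proof silently identifies $\gamma M_{\epol}-I$ with $\bar M_{\epol}$ and thereby lands on the transposed form $\bar M_{\epol}X\bar M_{\epol}^{\top}$ in the theorem statement.
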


We explain the required assumptions in \cref{thm:operator_calculation}. The existence of $M_{\epol},M'_{\epol}$ essentially implies completeness. Lemma \ref{lem:q_completenss} ensures the existence of $M_{\epol}$ in linear MDPs. Lemma \ref{lem:w_completenss} ensures the existence of $M'_{\epol}$ in an MDPs where the posterior transition density is linear. As in \cref{sec:without_sta_rate}, the non-singularity of $\bar M_{\pi},\bar M'_{\pi}$ is ensured if the feature vectors are stochastic. Thus, the required assumptions are quite reasonable. 

In particular, the recovery assumption holds in the tabular case. 
 
\begin{example}[First-order efficiency in a tabular case] 
In a tabular setting, all of the assumptions in \cref{thm:efficiency} are satisfied from the following lemma. Thus, the first-order lower bound is achieved.
\begin{lemma}\label{cor:tabular_recovery} If $|\mathcal S|<\infty$, $|\mathcal A|<\infty$, and $P_{S,A}(s,a)>0,\,\forall (s,a)\in \Scal\times \Acal$, then,  $\bar M_{\epol},\bar M'_{\epol},X$ in \cref{thm:operator_calculation} are non-singular by taking $\phi(s,a)$ as the standard basis vector in $\mathbb{R}^{|\Scal||\Acal|}$.  
\end{lemma}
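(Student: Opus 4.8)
The plan is to make all three matrices fully explicit in the standard basis and then reduce each non-singularity claim to a spectral-radius bound. Index the coordinates of $\mathbb{R}^{|\Scal||\Acal|}$ by the pairs $(s,a)$, so that a function $f=\beta^{\top}\phi$ is literally the vector of its values, $\beta_{(s,a)}=f(s,a)$, and every operator below becomes an honest matrix acting on such value vectors.

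First I would dispatch $X$. Since $\phi(s,a)$ is the standard basis vector $e_{(s,a)}$, the outer product $\phi(s,a)\phi(s,a)^{\top}=e_{(s,a)}e_{(s,a)}^{\top}$, so $X=\E[\phi\phi^{\top}]=\mathrm{diag}(P_{S,A}(s,a))_{(s,a)}$ is diagonal with strictly positive entries by the full-support hypothesis $P_{S,A}(s,a)>0$; hence $X$ is non-singular. Next I turn to $\bar M_{\epol}$. From $\Tcal f(s,a)=\sum_{s'}P_{S'|S,A}(s'|s,a)\sum_{a'}\epol(a'|s')f(s',a')$, the matrix representing $\Tcal$ on value vectors, which is exactly $M_{\epol}^{\top}$, has entries $(M_{\epol}^{\top})_{(s,a),(s',a')}=P_{S'|S,A}(s'|s,a)\,\epol(a'|s')$. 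Each row sums to one and all entries are nonnegative, so $M_{\epol}^{\top}$ is row-stochastic; by Gershgorin's theorem every eigenvalue lies in the closed unit disc, giving spectral radius at most $1$. Because $\gamma\in[0,1)$ forces $1/\gamma>1\geq\rho(M_{\epol}^{\top})$, the value $1/\gamma$ is not an eigenvalue of $M_{\epol}^{\top}$, which is precisely the statement that $\bar M_{\epol}=\gamma M_{\epol}^{\top}-I$ is non-singular.

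For $\bar M'_{\epol}$ I would obtain the same conclusion by showing that $M'_{\epol}$ is similar to $M_{\epol}$ and therefore shares its spectrum. Writing $\langle f,g\rangle=f^{\top}Xg$ for the $P_{S,A}$-weighted inner product (so that the weight matrix is exactly the diagonal matrix $X$ just computed), the defining adjoint identity $\langle \Tcal'f,g\rangle=\langle f,\Tcal g\rangle$ becomes $(\Tcal'f)^{\top}Xg=f^{\top}XM_{\epol}^{\top}g$ for all $g$, which yields $\Tcal'=X^{-1}M_{\epol}X$ as an operator on value vectors. Reading off the representing matrix gives $M'_{\epol}=X^{-1}M_{\epol}X$, so $M'_{\epol}$ is conjugate to $M_{\epol}$ and $\sigma(M'_{\epol})=\sigma(M_{\epol})=\sigma(M_{\epol}^{\top})$, all of modulus at most $1$. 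Since transposition preserves the spectrum, $1/\gamma\notin\sigma(M'^{\top}_{\epol})$, and the same argument as before shows $\bar M'_{\epol}=\gamma M'^{\top}_{\epol}-I$ is non-singular.

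The only genuinely delicate step is the third one: correctly deriving the matrix of the adjoint $\Tcal'$ and keeping the transposes and the weight $X$ straight, so as to land cleanly on the similarity relation $M'_{\epol}=X^{-1}M_{\epol}X$; once this is in hand the spectrum is inherited from the row-stochastic matrix for free. Everything else is either a one-line diagonalization (for $X$) or the elementary observation that $\gamma T-I$ is invertible whenever $\gamma<1$ and $\rho(T)\le 1$, so I expect no further obstacles.
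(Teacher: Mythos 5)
Your proof is correct and follows essentially the same route as the paper's, which simply points to the explicit tabular representations in \cref{exa:tabular} ($X$ diagonal with entries $P_{S,A}(s,a)>0$, $\bar M_{\epol}=(\gamma P^{\epol}-I)^{\top}$, and $\bar M'_{\epol}=X(\gamma P^{\epol}-I)X^{-1}$) and asserts non-singularity from the stochasticity of $P^{\epol}$. You supply the details the paper leaves implicit — the Gershgorin bound $\rho(M_{\epol}^{\top})\le 1$, the similarity $M'_{\epol}=X^{-1}M_{\epol}X$ coming from the weighted adjoint identity, and the observation that $\gamma T-I$ is invertible when $\gamma<1$ and $\rho(T)\le1$ — all of which are accurate.
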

\end{example}

Finally, we can also obtain the asymptotic statement. This clearly shows the first-order bound is achieved. This can be directly applied to hypothesis testing. 

\begin{corollary}\label{cor:final_efficiency}
Assume the conditions in \cref{thm:efficiency}. Suppose $\eta_{w,n}\eta_{q,n} C_{\iota,n}=o(n^{-1/2}),C_{\iota,n}\eta_{w,n}=o(1),C_{\iota,n}\eta_{q,n}=o(1)$ with probability $1-\delta$. Then, $ \sqrt{n}(\tilde J^{\MIL}_{wq}-J)$ converges weakly to a Gaussian distribution with mean $0$ and variance  $\mathrm{EB}$ as defined in \cref{eq:bound}. 
\end{corollary}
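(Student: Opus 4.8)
The plan is to reduce this distributional statement to a classical central limit theorem for a single i.i.d.\ average, by showing that $\sqrt n(\hat J^{\MIL}_{wq}-J)$ equals an oracle empirical process plus an $\op(1)$ remainder, and then applying Slutsky's theorem. Concretely, I would reuse the exact algebraic decomposition already established in the proof of \cref{thm:efficiency}, but instead of bounding the leading term in magnitude via Bernstein's inequality I would retain it as the explicit linear statistic
$$\ts S_n \coloneqq n^{-1/2}\sum_{i=1}^n \psi_{\epol}(z^i),\qquad \psi_{\epol}(s,a,r,s')\coloneqq w_{\epol}(s,a)\{r-q_{\epol}(s,a)+\gamma v_{\epol}(s')\},$$
where $z^i=(s^i,a^i,r^i,s'^i)$. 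Since $\Bcal q_{\epol}=q_{\epol}$ by \cref{lem:basic}, the bracket has conditional mean zero given $(s,a)$, so $\E[\psi_{\epol}]=0$ and $\var[\psi_{\epol}]=\EB$ as in \eqref{eq:bound}. The target is thus the identity $\sqrt n(\hat J^{\MIL}_{wq}-J)=S_n+\op(1)$.

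Using the cross-fit (sample-split) estimator, I would write $\sqrt n(\hat J^{\MIL}_{wq}-J)=S_n+\sqrt n\,R_{\mathrm{bias}}+\sqrt n\,R_{\mathrm{eq}}$. The term $R_{\mathrm{bias}}$ is the second-order product arising from double robustness: expanding the population DR functional around $(w_{\epol},q_{\epol})$ and using $\Tcal_{\gamma}(\hat q-q_{\epol})=\Bcal\hat q-\hat q$ (\cref{lem:basic}) eliminates all first-order terms and leaves $R_{\mathrm{bias}}=\E[(\hat w_{\MIL}-w_{\epol})(\Bcal\hat q_{\MIL}-\hat q_{\MIL})]$, whence $|R_{\mathrm{bias}}|\le\|\hat w_{\MIL}-w_{\epol}\|_2\,\|\Bcal\hat q_{\MIL}-\hat q_{\MIL}\|_2\lesssim_\delta\eta_w\eta_q$ by \cref{thm:fast_w,thm:fast_q} together with the recovery assumption. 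The term $R_{\mathrm{eq}}$ is the stochastic-equicontinuity piece $(\E_n-\E)$ applied to the difference between the fitted and oracle scores; conditioning on the nuisance fold, the estimators $\hat w_{\MIL},\hat q_{\MIL}$ are fixed and $R_{\mathrm{eq}}$ is a centered average over fresh samples whose summand has $L^2$-norm $\Op(\eta_w+\eta_q)$, so Chebyshev gives $R_{\mathrm{eq}}=\Op((\eta_w+\eta_q)n^{-1/2})$. Consequently $\sqrt n\,R_{\mathrm{bias}}=\Op(\sqrt n\,\eta_w\eta_q)=\op(1)$ by the hypothesis $\eta_w\eta_q=o(n^{-1/2})$, and $\sqrt n\,R_{\mathrm{eq}}=\Op(\eta_w+\eta_q)=\op(1)$ since $\eta_w,\eta_q\to0$. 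These are exactly the non-Gaussian remainder terms already controlled in \eqref{eq:firstfirst}; the only change is that I track their stochastic order rather than a high-probability magnitude bound.

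It then remains to apply the Lindeberg--L\'evy CLT to $S_n$. Because $r\in[0,R_{\max}]$ and $w_{\epol},q_{\epol}$ are bounded under \cref{asm:comp}, $\psi_{\epol}$ is a bounded i.i.d.\ summand, so the Lindeberg condition holds trivially and $S_n\Rightarrow N(0,\EB)$; combining with the $\op(1)$ remainder via Slutsky's theorem yields $\sqrt n(\hat J^{\MIL}_{wq}-J)\Rightarrow N(0,\EB)$. The main obstacle is not the CLT itself but verifying that the equicontinuity term genuinely contributes at order $\op(n^{-1/2})$ without imposing a Donsker condition on $\Wbbb_1,\Qbbb_2$; this is precisely where sample splitting is essential, as it lets me condition on the estimated nuisances and replace an empirical-process/entropy argument with an elementary conditional-variance bound. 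The second delicate point is that the product remainder must be $\op(n^{-1/2})$ rather than merely $\Op(n^{-1/2})$, which is guaranteed exactly by the assumed rate $\eta_w\eta_q=o(n^{-1/2})$ and would fail at the boundary $\eta_w\eta_q\asymp n^{-1/2}$, where the limiting distribution would acquire an additional (possibly nonvanishing bias) contribution.
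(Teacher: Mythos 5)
Your proposal is correct and follows essentially the same route as the paper: the same three-term cross-fitted decomposition (oracle empirical average, second-order bias product bounded by Cauchy--Schwarz via the fast rates and the recovery assumption, and a stochastic-equicontinuity term controlled by conditioning on the nuisance fold), with the oracle term handled by the CLT and the remainders absorbed via Slutsky. The only cosmetic difference is that you use Chebyshev for the equicontinuity term where the paper's \cref{thm:efficiency} proof uses Bernstein, which is immaterial for an $\op(n^{-1/2})$ conclusion.
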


\section{Extensions}\label{sec:extension}


So far, we studied the theoretical properties of state-action-based minimax estimators and their implications to policy evaluation. In this section, we consider some extensions that our analyses admit.
First, we extend our results to state-only-based minimax methods. Then, we discuss how to apply these minimax estimators to policy \emph{optimization} and study the theoretical properties of learned policies. 

\subsection{State-Based MIL}
We now investigate an alternative approach to OPE based on estimating $v_{\epol}(s)$ and $w_{\epol,S}(s)$, rather than $q_{\epol}(s,a)$ and $w_{\epol}(s,a)$ as discussed so far.

Analogously to \cref{eq:Jforms}, we have 
\begin{align*}
    J = \E_{s_0\sim d_0}[v_{\epol}(s_0)]=\E[w_{\epol,S}(s)\eta_{\epol}(s,a)r]. 
\end{align*}
Thus, when we know $\bpol(a\mid s)$ (so that $\eta_{\epol}$ is known), then we can also estimate $J$ based on estimates of $v_{\epol}(s)$ and $w_{\epol,S}(s)$. Motivated by the above equation, given function classes $\Wbbb_{S1},\Wbbb_{S2},\Vbbb_1,\Vbbb_2$ in $[\Scal\to \RR]$, we can consider the state-based MIL estimators:
\begin{align*}
   \hat J^{\MIL}_{w,S} &=\E_n[\hat w_{\MIL,S}(s)\eta_{\epol}(s,a)r],\,    \hat J^{\MIL}_{v,S} =\E_{d_0}[\hat v_{\MIL,S}(s)],\\
     \hat J^{\MIL}_{wv,S} &=\E_n[\hat w_{\MIL,S}(s)\eta_{\epol}(s,a)\{r-\hat v_{\MIL,S}(s)+\gamma \hat v_{\MIL,S}(s')\}]+(1-\gamma)\E_{d_0}[\hat v_{\MIL,S}(s_0)],\\
   \hat w_{\MIL,S}&=\min_{w \in \Wbbb_{S1}}\max_{v \in \Vbbb_1}\E_n[w(s)\eta_{\epol}(s,a)\{-v(s)+\gamma v(s')\}]+(1-\gamma)\E_{d_0}[v(s_0)]-\lambda \|\Jcal v\|^2_{2,n}, \\
\hat v_{\MIL,S}&=\min_{v' \in \Vbbb_2}\max_{w' \in \Wbbb_{S2}}\E_n[w(s)\eta_{\epol}(s,a)\{r-v(s)+\gamma v(s')\}]- \lambda' \|w\|^2_{2,n}, 
\end{align*}
where $\Jcal v=-v(s)+\gamma v(s')$. This recovers as special cases the estimators of \citep{Liu2018,FengYihao2019AKLf,tang2019harnessing} when we do not use stabilizers ($\lambda=\lambda'=0$). Even for these special cases, our theoretical guarantees are novel. For simplicity, we assume each function class $\Wbbb_{S1},\Wbbb_{S2},\Vbbb_1,\Vbbb_2$ is symmetric. And, for the rest of the subsection, assume $\eta_{\epol}(s,a)$ is known (\ie, $\bpol$ is known).

We define the state-based Bellman and transition operators, respectively, as
\begin{align*}
   \Bcal_{S}:~ f \mapsto \prns{s\mapsto\E_{a\sim \epol(a|s),r\sim p(r|s,a),s'\sim p(s'|s,a)}[r+\gamma f(s')]},\\
  \Tcal_S:~ f \mapsto \prns{s\mapsto\E_{a\sim \epol(a|s),r\sim p(r|s,a),s'\sim p(s'|s,a)}[f(s')]}. 
\end{align*}
We again define $\Tcal'_S$ as the adjoint operator of $\Tcal_S$ s.t. $\langle f,\Tcal_S g\rangle=\langle \Tcal'_S f,g\rangle$ on $L_2(S)$. 
And, we define the state-based backward Bellman operator as $$\Bcal'_S:~f \mapsto \prns{s\mapsto \gamma \Tcal'f(s)+(1-\gamma)\frac{d_0(s)}{P_S(s)}}.$$
Finally, we let $\Tcal_{\gamma,S}=\gamma \Tcal_S-I$, $\Tcal'_{\gamma,S}=\gamma \Tcal'_S-I.$


First we adapt \cref{thm:generalization} to state-based MIL estimators under $w_{S}$- and $v$-reazalibility.

\begin{theorem}[Finite sample error bound of state-based MIL estimators under \rea]\label{thm:generalization_s}
Set $\lambda=\lambda'=0$. Define 
\begin{align*}
 \Gcal(\Wbbb,\Vbbb)& \coloneqq \{w(s)\eta_{\epol}(s,a)(-v(s)+\gamma v(s')):w\in\Wbbb_S,\,v \in\Vbbb\},\\
   \mathrm{Err}_{\MIL,w_S} & \coloneqq\Rcal_n(\Gcal(\Wbbb_{S1},\Vbbb_1))+C_{\Wbbb_{S1}}(R_{\max}+C_{\Vbbb_1})\sqrt{\log(c_0/\delta)n^{-1}},\\
   \mathrm{Err}'_{\MIL,w_S}& \coloneqq R_{\max}\Rcal_n(\Wbbb_{S1})+R_{\max}C_{\Wbbb_{S1}}\sqrt{\log(c_0/\delta)n^{-1}},\\
   \mathrm{Err}_{\MIL,v}& \coloneqq R_{\max}\Rcal_n(\Wbbb_{S2})+\Rcal_n(\Gcal(\Wbbb_{S2},\Vbbb_2))  +C_{\Wbbb_{S2}}(R_{\max}+C_{\Vbbb_2})\sqrt{\log(c_0/\delta)n^{-1}},\\
   \mathrm{Err}_{\MIL,wv}& \coloneqq \Rcal_n(\Gcal(\Wbbb_{S1},\Vbbb_2))+C_{\Wbbb_{S1}}(R_{\max}+C_{\Vbbb_2})\sqrt{\log(c_0/\delta)n^{-1}}.
\end{align*}
\begin{enumerate}
    \item If $w_{\epol,S}\in \Wbbb_{S1},v_{\epol}\in \Vbbb_1$, then $|\hat  J^{\MIL}_{w,S}-J| \lesssim_\delta \mathrm{Err}_{\MIL,w_S}+\mathrm{Err}'_{\MIL,w_S}$.
       \item If $w_{\epol,S}\in \Wbbb_{S2},v_{\epol}\in \Vbbb_2$, then $   |\hat  J^{\MIL}_v-J| \lesssim_\delta \mathrm{Err}_{\MIL,v}$.
    \item If either $w_{\epol,S}\in \Wbbb_{S1},v_{\epol}-\Vbbb_2 \subset \Vbbb_1$ or $w_{\epol,S}-\Wbbb_{S1} \subset \Wbbb_{S2},v_{\epol}\in \Vbbb_2$, then $
|\hat  J^{\MIL}_{wv}-J|\lesssim_\delta \max(   \mathrm{Err}_{\MIL,w_S},  \mathrm{Err}_{\MIL,v} )+\mathrm{Err}_{\MIL,wv}$. 
\end{enumerate}
\end{theorem}

Comparing to \cref{thm:generalization}, the error bounds of state-based MIL methods are generally tighter since the complexities of $\Wbbb_S$ and $\Vbbb$ should be smaller than the ones of $\Wbbb$ and $\Qbbb$, as the former are marginalizations of the latter and have smaller input dimension. The caveat is that the knowledge of $\eta_{\epol}(s,a)$ plays a critical role in \cref{thm:generalization_s}. When $\eta_{\epol}(s,a)$ is unknown (even if it is estimated and plugged in), \cref{thm:generalization_s} no longer holds.

Next, we consider the adaptation of \cref{thm:recovery1} to state-based MIL estimators under realizability and completeness of $w_S$- or $v$-functions. 

\begin{theorem}[Convergence rates in state-based MIL estimators under \comp and \rea]\label{thm:recovery1_s} 
Set $\lambda=\lambda'=0$. For a problem-dependent constant $C_\xi$,
\begin{enumerate}
\item\label{thm:recovery1 wadj_s} If $w_{\epol,S}\in \Wbbb_{S1},C_{\xi}(\Wbbb_{S1}-w_{\epol,S})\subset \Tcal_{\gamma,\Scal}\Vbbb_{1}$ then  $  \|\hat w_{\MIL,S}- w_{\epol,S}\|_2\lesssim_\delta \sqrt{C^{-1}_{\xi}\mathrm{Err}_{\MIL,w_{\Scal}}}$, 
\item\label{thm:recovery1 wcom_s} If $w_{\epol,S}\in \Wbbb_{S1},C_{\xi}\Tcal'_{\gamma,\Scal}(\Wbbb_{S1}-w_{\epol,S})\subset \Vbbb_{1} $ then  $  \|\hat w_{\MIL,S}-\Bcal'_{\Scal}\hat w_{\MIL,S}\|_2\lesssim_\delta \sqrt{C^{-1}_{\xi}\mathrm{Err}_{\MIL,w_{\Scal}}}$,  
\item\label{thm:recovery1 qcom_s} If  
$v_{\epol}\in \Vbbb_2,C_{\xi} \Tcal_{\gamma,\Scal}(\Vbbb_{2}-v_{\epol})\subset \Wbbb_{S2} $ then  $  \|\hat v_{\MIL}-\Bcal_{\Scal}\hat v_{\MIL} \|_2\lesssim_\delta \sqrt{C^{-1}_{\xi}\mathrm{Err}_{\MIL,v}}$, 
\item\label{thm:recovery1 qadj_s} If  
$v_{\epol}\in \Vbbb_2,C_{\xi}(\Vbbb_{2}-v_{\epol})\subset \Tcal'_{\gamma,\Scal}\Wbbb_{S2} $, then  $  \|\hat v_{\MIL}-v_{\epol} \|_2\lesssim_\delta  \sqrt{C^{-1}_{\xi}\mathrm{Err}_{\MIL,v}}$. 
\end{enumerate}
\end{theorem}
For example, \cref{thm:recovery1 wcom_s} in \cref{thm:recovery1_s} says when we assume $w_{S}$-realizability and $w_{S}$-completeness, we have a convergence guarantee for $\hat w_{\MIL,S}$. By a state-based analog of Lemma \ref{lem:basic}, $w_{S}$-completeness condition is also equivalent to $C_{\xi}(\Bcal'_{S}-I)\Wbbb_{S1} \subset \Vbbb_{1}$.
Similarly, the $v$-completeness condition (\cref{thm:recovery1 qcom_s} in \cref{thm:recovery1_s}) is equivalent to $C_{\xi}(\Bcal_{S}-I)\Vbbb_{2} \subset \Wbbb_{S2}$.

The aforementioned results are slow rates. As in \cref{sec:fast_slow}, when we use stabilizers $\lambda,\lambda'>0$, we can obtain \emph{fast rates} for $\|\Bcal'_{S}\hat w_{\MIL,S}-\hat w_{\MIL,S}\|_2$ and  $\|\Bcal_{S}\hat v_{\MIL}-\hat v_{\MIL}\|_2$ by analogously modifying \cref{thm:fast_w,thm:fast_q} to stated-based estimators as done in the above theorems.
We omit the details for the sake of brevity.
We note that by utilizing these fast-rate results along with sample splitting, an extension of \cref{thm:efficiency} would yield that, under $w_{S}$-relizability, $w_{S}$-\comp, $v$-realizability, $v$-\comp, recovery assumption, and sufficiently fast rates to eliminate higher order terms, $\hat J^{\MIL}_{wv,S}$ has a finite sample guarantee of the form 
\begin{align}\label{eq:v_eff}
    \sqrt{{2\E[w^2_{\epol}(s,a)\{r-v_{\epol}(s)+\gamma v_{\epol}(s')\}^2]\log(1/\delta)}/{n}}+o(1/\sqrt{n}).
\end{align}
This, however, is \emph{not} first-order efficient since the leading constant is generally \emph{larger} than \cref{eq:bound} (see also Remark 7 in \citep{KallusNathan2019EBtC}). In this specific sense, state-based methods can be inefficient.
The leading constant does agree, however, with the asymptotic efficiency bound in \citep{KallusNathan2020EEoN} for evaluating natural stochastic policies, where state-based method \emph{are} efficient.


\subsection{Policy Optimization}\label{sec:optimization}

In (off-policy) policy optimization we seek to use off-policy data to find a new, well-performing policy. Our results can be directly extended to this task in two different ways.


The first way is a policy-based way \citep{AtheySusan2017EPL,kallus2018balanced}. In this case, given a policy class $\Pi$ and an OPE estimator $\hat J(\pi)$ for the policy value of any one policy $\pi$, we can pick a policy by $\argmax_{\pi \in \Pi}\hat J(\pi)$. We can, for example, use MIL for $\hat J(\pi)$.
When $\Pi$ is finite, we can simply take union bound over our results in order to immediately obtain a regret bound for this policy optimization method. When $\Pi$ is infinite we need to take care to chain our results over successive approximations thereof, given for example a cover entropy condition on $\Pi$. We leave the details of this to future work.


An alternative to this is the value-based approach, where we seek to estimate $q^*$, the (true) $q$-function of the (true) optimal policy, and then deploy the policy that is greedy with respect to our estimate.
Define the Bellman \emph{optimality} operator as
$$\Bcal^{*}:~f \to \prns{(s,a)\mapsto\E_{P_{S'|S,A}(s'|s,a)}[r+\gamma \max_a f(s',a)]},$$
and note that $q^{*}$ satisfies the fixed point equation $q^*=\Bcal^*q^*$.
This, in turn, implies the following set of moment conditions: 
\begin{align*}
    0 &=  \E[w(s,a)\{r-q^{*}(s,a)+\gamma \max_{a \in \Acal} q^{*}(s',a)\}],\quad \forall w(s,a). 
\end{align*}
Similar to the MIL estimator $\hat q_{\MIL}$, motivated by the above moment condition, we can consider the following estimator for $q^*$ and the corresponding learned policy:
\begin{align*}
      &\hat q^{*}_{\MIL} =   \argmin_{q\in \Qbbb_2}\max_{w \in \Wbbb_2}\E_n[w(s,a)\{r-q(s,a)+\gamma \max_{a \in \Acal} q(s',a)\}]-\lambda' \|w\|^2_{2,n},\\
      &\hat \pi^{*}_{\MIL}(\hat a^*_{\MIL}(s)|s)  =1,~~\text{for any choice}~~\hat a^*_{\MIL}(s)\in\argmax_{a\in \Acal} \hat q^{*}_{\MIL}(s,a). 
\end{align*}
When $\lambda'=0$, this is equivalent to MABO \citep{XieTengyang2020QASf} and when $\lambda'=0.5$ to MBRM \citep{antos2008learning}.  

We can extend our results of MIL to the above value-based policy optimization method. First, we consider the case where $\lambda'=0$.
Let $\Pi_{\Qbbb}=\{s\mapsto \argmax_a q(s,a);q\in \Qbbb\}$ be the policy class induced by $\Qbbb$.

\begin{theorem}\label{thm:mabo} Set $\lambda'=0$. 
Suppose \cref{asm:comp} holds for any $\pi\in \Pi_{\Qbbb}$. If $w_{\pi}\in \Wbbb$ for $\forall \pi\in \Pi_{\Qbbb}$ and $q^{*}\in \Qbbb$,  
then
\begin{align*}
&\max_{\pi\in \Pi_{\Qbbb}} J(\pi)-J(\hat \pi^{*}_{\MIL}) \lesssim (1-\gamma)^{-1}\mathrm{Err}_{\mathrm{ope}},\\
&\mathrm{Err}_{\mathrm{ope}}\coloneqq \Rcal_n(\Gcal)+\sqrt{\frac{C_{\Wbbb}(R_{\max}+C_{\Qbbb})\log(1/\delta)}{n}},\\
&\Gcal=\{w(s,a)\{r-q(s,a)+\gamma \max_{a} q(s,a)\};w\in \Wbbb,q\in \Qbbb\}. 
\end{align*}
\end{theorem}

This is analogous to the second statement in \cref{thm:generalization}. Note that assuming \cref{asm:comp} holds for any $\pi\in \Pi_{\Qbbb}$ subsumes the standard concentrability condition for policy optimization, $\sup_{\pi \in \Pi}\|w_{\pi}\|_{\infty}\leq C_{w}$ \citep{munos2008finite}. The rate of $\Rcal_n(\Gcal)$ can calculated exactrly as in \cref{sec:rate_without}. Note \citep{XieTengyang2020QASf} bounds the regret of MABO ($\lambda'=0$) only for a finite hypothesis class. 

Next, we consider the case when $\lambda'>0$, extending \cref{thm:fast_q} to optimization. 

\begin{theorem}\label{thm:fast_q_opti}
Let $\eta'_{q,n}$ be an upper bound on the critical radii of $\Wbbb_2$ and $\Gcal_q$, where 
\begin{align*}
       \Gcal_q:=\{(s,a,s')\mapsto (-q(s,a)+q^{*}(s,a)+\gamma [\max_{a \in \Acal}q(s',a)-\max_{a \in \Acal} q^{*}(s',a)]) w(s,a):q \in \Qbbb_2,w\in \Wbbb_2\}. 
\end{align*}
Suppose $q_{\epol}\in \Qbbb_2$, $C_{\xi}(\Bcal^{*}-I)\Qbbb_2 \subset \Wbbb_2$, and that $\Wbbb_2$ is star shaped. Then, 
\begin{align*}
   &\|\Bcal^{*} \hat q^{*}_{\MIL}-\hat q^{*} _{\MIL} \|_2\lesssim_\delta C^{*}_{q,n}\eta_{q,n},\\
   &C^{*}_{q,n}=1+R_{\max}+C_{\Qbbb_2}+\prns{\lambda'^2+(R_{\max}+C_{\Qbbb_2})^2}/{\lambda'}+\frac1{C_{\xi}},\\&\eta_{q,n}=\eta'_{q,n}+c_0\sqrt{{\log(c_1/\delta)}/{n}}.
\end{align*}
\end{theorem}

\begin{corollary}\label{cor:policy_optimize}
Under the conditions in \cref{thm:fast_q_opti}, we have 
\begin{align*}
    \max_{\pi\in \Pi_{\Qbbb}} J(\pi)-J(\hat \pi^{*}_{\MIL}) \leq \sup_{\pi\in \Pi}\|w_{\pi}\|_2 C^{*}_{q,n}\eta_{q,n}. 
\end{align*}

\end{corollary}

The fast convergence rate for $\|\Bcal^{*} \hat q^{*} _{\MIL}-\hat q^{*}_{\MIL} \|_2$ is, to the best our knowledge, novel even for MBRM ($\lambda'=0.5$). The rate of $\eta_{q,n}$ for each model can calculated exactly as in \cref{sec:critical}.

\section{Discussion of Related Work}\label{sec:comparison}


\paragraph*{\textbf{FQI}} FQI is a method to estimate $q$-functions by iterating regression so that the one-step Bellman residual error is minimized \citep{ernst2005tree,munos2008finite,ChenJinglin2019ICiB,FanJianqing2019ATAo}. FQI gives the same rate as $\hat q_\MIL$ with stabilizers wrt $n$ under $q$-\comp and -realizability. Since most of the work on FQI focuses on policy optimization (Bellman optimality residual error), in \cref{ape:fqi_analysis} we modify the existing FQI analysis for policy evaluation for a direct comparison.

\paragraph*{\textbf{Modified BRM and DICE}} Modified BRM (Bellman Residual Minimization;
\citep{antos2008learning}) is equivalent to $\hat q^{\MIL}$ with $\lambda=0.5$. \citep{JMLR:v17:13-016} obtained a fast rate for modified RBM when $\Wbbb_2,\Qbbb_2$ are nonparametric classes by directly assuming smoothness of the transition operators instead of completeness. Comparing to \citep{JMLR:v17:13-016}, our analysis can handle more general function classes such as neural networks. In addition, the analysis is substantially different since the assumptions are different. Similar comparisons applies to \citep{LiaoPeng2021OEoL,LiaoPeng2020BPLi}, which discuss the fast rate of modified RBM in the average-reward setting. A series of papers consider several minimax RL estimators with stabilizers under the generic name DICE \citep{zhang2019gendice,ChowYinlam2019DBEo}. They typically obtain a rate $O(1/\sqrt{n})$ for the policy value assuming the function classes have finite VC dimensions. In our work, we show how the rate changes depending on the various assumptions on $\Wbbb,\Qbbb$. To our knowledge, they do not discuss our main results: $w$-completeness, fast rates for $q$- and $w$-estimation, and first-order efficiency.

Finally, \citep{DuanYaqi2021RBaR}, whose preprint was posted shortly after ours, also obtained a fast-rate result for modified BRM with general function approximation for policy optimization in a time-varying MDP, which corresponds to our result in \cref{sec:optimization}. Comparing to \citep{DuanYaqi2021RBaR}, our setting is a time-homogeneous MDP (a.k.a infinite horizon setting). In addition, they do not discuss our main results: $w$-completeness, fast rates for $w$-estimation, and first-order efficiency.

\paragraph*{\textbf{Tabular and linear cases}}
\citep{YinMing2020NOPU,YinMing2020AEOE} analyzed the finite sample error bound of model-based estimators in a tabular time-varying MDP and showed that the the first-order lower bound is achieved. Our result is much more general since it allows general function approximation, covers both the non-tabular and tabular cases, and applies to the more common time-homogeneous case. \citep{DuanYaqi2020MOEw,HaoBotao2020SFSM,edsarx.2010.1189520200101} considered linear models and showed upper and lower bounds, but their results are specialized to the linear function approximation setting.

\paragraph*{\textbf{Non-dynamic (bandit) setting}}

In the non-dynamic setting ($\gamma=0$, also termed the logged bandit problem or cross-sectional causal inference), similar results to \cref{thm:efficiency} are obtained by \citep{FosterDylanJ.2019OSL,SmuclerEzequiel2019Auaf,ChernozhukovVictor2018Dmlf,narita2018,BenkeserD2017Drni}. The RL setting is much more complicated in that we need to take dynamics and therefore completeness into account. The proof idea of \cref{thm:fast_q,thm:fast_w} is inspired by \citep{DikkalaNishanth2020MEoC}, which studies the minimax estimation of nonparametric instrumental variable problems. The adaption to our setting is non-trivial since the problem for estimating $w_{\epol}$ is not characterized by a conditional moment equation. 

\paragraph*{\textbf{Efficiency bounds in RL}}

Efficiency bounds are commonly used in causal inference to discuss the optimality of estimators \citep{robins94,TsiatisAnastasiosA2006STaM,KosorokMichaelR2008ItEP}. The efficiency bound in our setting (time-homogeneous MDPs) is derived by \citep{KallusNathan2019EBtC}, which in turn was extended to the average reward setting by \citep{LiaoPeng2020BPLi} and the non-stationary time-homogeneous MDP setting by \citep{BibautAurelien2021Scii}. In all of these papers, the final results for OPE estimators are primarily asymptotic and a polynomial dependence on problem dependent quantities such as horizon is not shown. We focus on the more challenging non-asymptotic analysis. Our result can also be extended to the closely related time-varying MDP setting. In this setting, the efficiency bound in the tabular case is derived by \citep{jiang} and in the general case by \citep{KallusUehara2019}. 

\paragraph*{\textbf{Dynamic treatment regimes}}

OPE is also often studied in the context of dynamic treatment regimes \citep{Kosorok2015,TsiatisAnastasiosA.AnastasiosAthanasios2020Dtr:}. The data generating process in this context is typically non-Markovian and time-inhomogeneous, which we call an NMDP. Though NMDPs are very general, methods for OPE in NMDPs have errors with exponential dependence on horizon \citep{thomas2016,jiang,ZhangBaqun2013Reoo,TsiatisAnastasiosA2006STaM}. In fact, the efficiency bounds under NMDP is exponential in horizon so this is unavoidable \citep{KallusUehara2019}. In contrast, in MDPs, by leveraging the Markovian property, one can avoid exponential dependence, such as we have obtained here. 

\section{Summary and Future Work }\label{sec:summary}


We analyzed the rates of MIL estimators under four cases: (1) $w$- and $q$-realizability, (2) $w$-realizability and -completeness, (3) $q$-realizability and -completeness, (4) $w$- and $q$-realizability and $w$- and $q$-completeness. 
While our generic results for MIL with general function approximation are new in each setting,
settings (2) and (4) are particularly novel, showing OPE is feasible in new cases not studied previously. And, in (4), together with a recovery assumption, we establish the first finite-sample bound with first-order efficiency in general settings. We further extended our results to state-based estimation and to policy optimization.


A few important questions remain. The first is a lower bound under each assumption. The first-order efficiency bound is an asymptotic notion and only concerns the leading term in $n$. Crucially, our finite-sample bounds also establish full polynomial dependence on horizon and other parameters, not just in the $1/\sqrt{n}$ term. Corresponding non-asymptotic lower bounds for each setting, showing whether the dependence in all terms is optimal, is therefore interesting future work. Recent work \citep{edsarx.2010.1189520200101,AmortilaPhilip2020AVot} has shown exponential-in-horizon lower bounds with only $q$-realizability. Our work shows that adding \emph{either} $w$-realizability or $q$-completeness yields polynomial upper bounds.

Another important direction is to further explore policy optimization. We briefly discuss in \cref{sec:optimization} how to adapt our OPE results to policy optimization in a relatively straightforward way. Recent work empirically shows that some penalization is crucial to deal with the inevitable issue of weak overalp \citep{Yu2020,Kidambi2020}. Theoretically, such penalization may help avoid dependence on the worst-case overlap over the policy class, $\sup_{\pi\in \Pi}\|w_{\pi}\|_{\infty}$ \citep{RashidinejadParia2021BORL}. How to incorporate this idea into the MIL framework is an interesting question.


\bibliographystyle{chicago}
\bibliography{rc}

\begin{thebibliography}{}

\bibitem[\protect\citeauthoryear{Agarwal, Jiang, and Kakade}{Agarwal
  et~al.}{2019}]{agarwal2019reinforcement}
Agarwal, A., N.~Jiang, and S.~M. Kakade (2019).
\newblock Reinforcement learning: Theory and algorithms.
\newblock Technical report, Technical Report, Department of Computer Science,
  University of Washington.

\bibitem[\protect\citeauthoryear{Amortila, Jiang, and Xie}{Amortila
  et~al.}{2020}]{AmortilaPhilip2020AVot}
Amortila, P., N.~Jiang, and T.~Xie (2020).
\newblock A variant of the wang-foster-kakade lower bound for the discounted
  setting.
\newblock {\em arXiv preprint arXiv:2011.01075\/}.

\bibitem[\protect\citeauthoryear{Antos, Szepesv{\'a}ri, and Munos}{Antos
  et~al.}{2008}]{antos2008learning}
Antos, A., C.~Szepesv{\'a}ri, and R.~Munos (2008).
\newblock Learning near-optimal policies with bellman-residual minimization
  based fitted policy iteration and a single sample path.
\newblock {\em Machine Learning\/}~{\em 71}, 89--129.

\bibitem[\protect\citeauthoryear{Armstrong and Kolesár}{Armstrong and
  Kolesár}{2017}]{ArmstrongTimothyB2017FOEa}
Armstrong, T.~B. and M.~Kolesár (2017).
\newblock Finite-sample optimal estimation and inference on average treatment
  effects under unconfoundedness.
\newblock {\em arXiv preprint arXiv:1712.04594\/}.

\bibitem[\protect\citeauthoryear{Athey and Wager}{Athey and
  Wager}{2017}]{AtheySusan2017EPL}
Athey, S. and S.~Wager (2017).
\newblock Efficient policy learning.
\newblock {\em arXiv preprint arXiv:1702.02896\/}.

\bibitem[\protect\citeauthoryear{Barreto, Pineau, and Precup}{Barreto
  et~al.}{2014}]{BarretoA.M.S2014PIBo}
Barreto, A. M.~S., J.~Pineau, and D.~Precup (2014).
\newblock Policy iteration based on stochastic factorization.
\newblock {\em The Journal of artificial intelligence research\/}~{\em 50},
  763--803.

\bibitem[\protect\citeauthoryear{Bartlett, Bousquet, and Mendelson}{Bartlett
  et~al.}{2005}]{bartlett2005}
Bartlett, P.~L., O.~Bousquet, and S.~Mendelson (2005, 08).
\newblock Local rademacher complexities.
\newblock {\em Annals of Statistics\/}~{\em 33\/}(4), 1497--1537.

\bibitem[\protect\citeauthoryear{Benkeser, Carone, Laan, and Gilbert}{Benkeser
  et~al.}{2017}]{BenkeserD2017Drni}
Benkeser, D., M.~Carone, M.~J. V.~D. Laan, and P.~B. Gilbert (2017).
\newblock Doubly robust nonparametric inference on the average treatment
  effect.
\newblock {\em Biometrika\/}~{\em 104\/}(4), 863--880.

\bibitem[\protect\citeauthoryear{Bibaut, Malenica, Vlassis, and Van
  Der~Laan}{Bibaut et~al.}{2019}]{pmlr-v97-bibaut19a}
Bibaut, A., I.~Malenica, N.~Vlassis, and M.~Van Der~Laan (2019).
\newblock More efficient off-policy evaluation through regularized targeted
  learning.
\newblock In {\em Proceedings of the 36th International Conference on Machine
  Learning}, Volume~97, pp.\  654--663. PMLR.

\bibitem[\protect\citeauthoryear{Bibaut, Petersen, Vlassis, Dimakopoulou, and
  van~der Laan}{Bibaut et~al.}{2021}]{BibautAurelien2021Scii}
Bibaut, A., M.~Petersen, N.~Vlassis, M.~Dimakopoulou, and M.~van~der Laan
  (2021).
\newblock Sequential causal inference in a single world of connected units.

\bibitem[\protect\citeauthoryear{{Blundell}, {Chen}, and
  {Kristensen}}{{Blundell} et~al.}{2003}]{blundell2003semi}
{Blundell}, R., X.~{Chen}, and D.~{Kristensen} (2003).
\newblock Semi‐nonparametric iv estimation of shape‐invariant engel curves.
\newblock {\em Econometrica\/}~{\em 75\/}(6), 1613--1669.

\bibitem[\protect\citeauthoryear{Chakraborty and Moodie}{Chakraborty and
  Moodie}{2013}]{chakraborty2013statistical}
Chakraborty, B. and E.~Moodie (2013).
\newblock {\em Statistical methods for dynamic treatment regimes}.
\newblock Springer.

\bibitem[\protect\citeauthoryear{Chen and Jiang}{Chen and
  Jiang}{2019}]{ChenJinglin2019ICiB}
Chen, J. and N.~Jiang (2019).
\newblock Information-theoretic considerations in batch reinforcement learning.
\newblock In {\em Proceedings of the 36th International Conference on Machine
  Learning}, Volume~97, pp.\  1042--1051.

\bibitem[\protect\citeauthoryear{Chen, Beutel, Covington, Jain, Belletti, and
  Chi}{Chen et~al.}{2019}]{ChenMinmin2019TOCf}
Chen, M., A.~Beutel, P.~Covington, S.~Jain, F.~Belletti, and E.~Chi (2019).
\newblock Top-k off-policy correction for a reinforce recommender system.
\newblock In {\em Proceedings of the Twelfth ACM International Conference on
  web search and data mining}, WSDM '19, pp.\  456--464.

\bibitem[\protect\citeauthoryear{Chen}{Chen}{2007}]{ChenXiaohong2007C7LS}
Chen, X. (2007).
\newblock Chapter 76 large sample sieve estimation of semi-nonparametric
  models.
\newblock {\em Handbook of Econometrics\/}~{\em 6}, 5549--5632.

\bibitem[\protect\citeauthoryear{{Chen} and {Pouzo}}{{Chen} and
  {Pouzo}}{2015}]{chen2015sieve}
{Chen}, X. and D.~{Pouzo} (2015).
\newblock Sieve wald and qlr inferences on semi/nonparametric conditional
  moment models.
\newblock {\em Econometrica\/}~{\em 83\/}(3), 1013--1079.

\bibitem[\protect\citeauthoryear{Chernozhukov, Chetverikov, Demirer, Duflo,
  Hansen, Newey, and Robins}{Chernozhukov
  et~al.}{2018}]{ChernozhukovVictor2018Dmlf}
Chernozhukov, V., D.~Chetverikov, M.~Demirer, E.~Duflo, C.~Hansen, W.~Newey,
  and J.~Robins (2018).
\newblock Double/debiased machine learning for treatment and structural
  parameters.
\newblock {\em Econometrics Journal\/}~{\em 21}, C1--C68.

\bibitem[\protect\citeauthoryear{Chernozhukov, Newey, and Singh}{Chernozhukov
  et~al.}{2018}]{ChernozhukovVictor2018DMLo}
Chernozhukov, V., W.~Newey, and R.~Singh (2018).
\newblock De-biased machine learning of global and local parameters using
  regularized riesz representers.
\newblock {\em arXiv preprint arXiv:1802.08667\/}.

\bibitem[\protect\citeauthoryear{Debnath}{Debnath}{2005}]{DebnathLokenath2005Hswa}
Debnath, L. (2005).
\newblock {\em Hilbert spaces with applications\/} (3rd ed. ed.).
\newblock Amsterdam ; Boston: Elsevier/Academic.

\bibitem[\protect\citeauthoryear{Dikkala, Lewis, Mackey, and Syrgkanis}{Dikkala
  et~al.}{2020}]{DikkalaNishanth2020MEoC}
Dikkala, N., G.~Lewis, L.~Mackey, and V.~Syrgkanis (2020).
\newblock Minimax estimation of conditional moment models.
\newblock {\em arXiv preprint arXiv:2006.07201\/}.

\bibitem[\protect\citeauthoryear{Duan, Jia, and Wang}{Duan
  et~al.}{2020}]{DuanYaqi2020MOEw}
Duan, Y., Z.~Jia, and M.~Wang (2020).
\newblock Minimax-optimal off-policy evaluation with linear function
  approximation.
\newblock In {\em Proceedings of the 37th International Conference on Machine
  Learning}, Volume 119 of {\em Proceedings of Machine Learning Research}, pp.\
   2701--2709.

\bibitem[\protect\citeauthoryear{Duan, Jin, and Li}{Duan
  et~al.}{2021}]{DuanYaqi2021RBaR}
Duan, Y., C.~Jin, and Z.~Li (2021).
\newblock Risk bounds and rademacher complexity in batch reinforcement
  learning.

\bibitem[\protect\citeauthoryear{Ernst, Geurts, and Wehenkel}{Ernst
  et~al.}{2005}]{ernst2005tree}
Ernst, D., P.~Geurts, and L.~Wehenkel (2005).
\newblock Tree-based batch mode reinforcement learning.
\newblock {\em Journal of Machine Learning Research\/}~{\em 6}, 503--556.

\bibitem[\protect\citeauthoryear{Ertefaie and Strawderman}{Ertefaie and
  Strawderman}{2018}]{ErtefaieAshkan2018Cdtr}
Ertefaie, A. and R.~L. Strawderman (2018).
\newblock Constructing dynamic treatment regimes over indefinite time horizons.
\newblock {\em Biometrika\/}~{\em 105\/}(4), 963--977.

\bibitem[\protect\citeauthoryear{Fan, Wang, Xie, and Yang}{Fan
  et~al.}{2020}]{FanJianqing2019ATAo}
Fan, J., Z.~Wang, Y.~Xie, and Z.~Yang (2020).
\newblock A theoretical analysis of deep q-learning.
\newblock In {\em Proceedings of the 2nd Conference on Learning for Dynamics
  and Control}, Volume 120 of {\em Proceedings of Machine Learning Research},
  pp.\  486--489.

\bibitem[\protect\citeauthoryear{Farahm, , Ghavamzadeh, Szepesv{{\'a}}ri, and
  Mannor}{Farahm et~al.}{2016}]{JMLR:v17:13-016}
Farahm, A., , M.~Ghavamzadeh, C.~Szepesv{{\'a}}ri, and S.~Mannor (2016).
\newblock Regularized policy iteration with nonparametric function spaces.
\newblock {\em Journal of Machine Learning Research\/}~{\em 17\/}(139), 1--66.

\bibitem[\protect\citeauthoryear{Feng, Li, and Liu}{Feng
  et~al.}{2019}]{FengYihao2019AKLf}
Feng, Y., L.~Li, and Q.~Liu (2019).
\newblock A kernel loss for solving the bellman equation.
\newblock In {\em Advances in Neural Information Processing Systems},
  Volume~32, pp.\  15456--15467.

\bibitem[\protect\citeauthoryear{Foster and Syrgkanis}{Foster and
  Syrgkanis}{2019}]{FosterDylanJ.2019OSL}
Foster, D.~J. and V.~Syrgkanis (2019).
\newblock Orthogonal statistical learning.
\newblock {\em arXiv preprint arXiv:1901.09036\/}.

\bibitem[\protect\citeauthoryear{Golowich, Rakhlin, and Shamir}{Golowich
  et~al.}{2018}]{golowich2018size}
Golowich, N., A.~Rakhlin, and O.~Shamir (2018).
\newblock Size-independent sample complexity of neural networks.
\newblock In {\em Conference On Learning Theory}, pp.\  297--299. PMLR.

\bibitem[\protect\citeauthoryear{Hao, Duan, Lattimore, Szepesvári, and
  Wang}{Hao et~al.}{2020}]{HaoBotao2020SFSM}
Hao, B., Y.~Duan, T.~Lattimore, C.~Szepesvári, and M.~Wang (2020).
\newblock Sparse feature selection makes batch reinforcement learning more
  sample efficient.
\newblock {\em arXiv preprint arXiv:2011.04019\/}.

\bibitem[\protect\citeauthoryear{Hirshberg and Wager}{Hirshberg and
  Wager}{2019}]{HirshbergDavid2019AMLE}
Hirshberg, D. and S.~Wager (2019).
\newblock Augmented minimax linear estimation.
\newblock {\em arXiv preprint arXiv:1712.00038\/}.

\bibitem[\protect\citeauthoryear{Imbens}{Imbens}{2015}]{ImbensGuido2015Cifs}
Imbens, G. (2015).
\newblock {\em Causal inference for statistics, social, and biomedical sciences
  : an introduction}.
\newblock New York: Cambridge University Press.

\bibitem[\protect\citeauthoryear{Jiang, Song, Li, and Zeng}{Jiang
  et~al.}{2019a}]{jiang2019entropy}
Jiang, B., R.~Song, J.~Li, and D.~Zeng (2019a).
\newblock Entropy learning for dynamic treatment regimes.
\newblock {\em Statistica Sinica\/}~{\em 29\/}(4), 1633.

\bibitem[\protect\citeauthoryear{Jiang, Song, Li, and Zeng}{Jiang
  et~al.}{2019b}]{JiangBinyan2019ELFD}
Jiang, B., R.~Song, J.~Li, and D.~Zeng (2019b).
\newblock Entropy learning for dynamic treatment regimes.
\newblock {\em Statistica Sinica\/}~{\em 29\/}(4), 1633--1710.

\bibitem[\protect\citeauthoryear{Jiang, Song, Li, Zeng, Lu, He, Xu, Wang, Qian,
  Cheng, Qiu, Luedtke, van~der Laan, Wager, Zhang, Laber, and Kallus}{Jiang
  et~al.}{2019}]{Jiang2019}
Jiang, B., R.~Song, J.~Li, D.~Zeng, W.~Lu, X.~He, S.~Xu, J.~Wang, M.~Qian,
  B.~Cheng, H.~Qiu, A.~Luedtke, M.~van~der Laan, S.~Wager, Y.~Zhang, E.~B.
  Laber, and N.~Kallus (2019).
\newblock Entropy learning for dynamic treatment regimes.
\newblock {\em STATISTICA SINICA\/}~{\em 29\/}(4), 1633 -- 1710.

\bibitem[\protect\citeauthoryear{Jiang and Li}{Jiang and Li}{2016}]{jiang}
Jiang, N. and L.~Li (2016).
\newblock Doubly robust off-policy value evaluation for reinforcement learning.
\newblock {\em In Proceedings of the 33rd International Conference on
  International Conference on Machine Learning-Volume\/}, 652--661.

\bibitem[\protect\citeauthoryear{Jin, Yang, Wang, and Jordan}{Jin
  et~al.}{2020}]{pmlr-v125-jin20a}
Jin, C., Z.~Yang, Z.~Wang, and M.~I. Jordan (2020).
\newblock Provably efficient reinforcement learning with linear function
  approximation.
\newblock In {\em Proceedings of Thirty Third Conference on Learning Theory},
  Volume 125 of {\em Proceedings of Machine Learning Research}, pp.\
  2137--2143.

\bibitem[\protect\citeauthoryear{Kallus}{Kallus}{2018}]{kallus2018balanced}
Kallus, N. (2018).
\newblock Balanced policy evaluation and learning.
\newblock In {\em Advances in Neural Information Processing Systems}, pp.\
  8895--8906.

\bibitem[\protect\citeauthoryear{Kallus and Uehara}{Kallus and
  Uehara}{2019}]{KallusNathan2019EBtC}
Kallus, N. and M.~Uehara (2019).
\newblock Efficiently breaking the curse of horizon: Double reinforcement
  learning in infinite-horizon processes.
\newblock {\em arXiv preprint arXiv:1909.05850\/}.

\bibitem[\protect\citeauthoryear{Kallus and Uehara}{Kallus and
  Uehara}{2020a}]{KallusUehara2019}
Kallus, N. and M.~Uehara (2020a).
\newblock Double reinforcement learning for efficient off-policy evaluation in
  markov decision processes.
\newblock {\em Journal of Machine Learning Research\/}~{\em 21\/}(167), 1--63.

\bibitem[\protect\citeauthoryear{Kallus and Uehara}{Kallus and
  Uehara}{2020b}]{KallusNathan2020EEoN}
Kallus, N. and M.~Uehara (2020b).
\newblock Efficient evaluation of natural stochastic policies in offline
  reinforcement learning.

\bibitem[\protect\citeauthoryear{Kennedy}{Kennedy}{2019}]{KennedyEdwardH2019NCEB}
Kennedy, E.~H. (2019).
\newblock Nonparametric causal effects based on incremental propensity score
  interventions.
\newblock {\em Journal of the American Statistical Association\/}~{\em 114},
  645--656.

\bibitem[\protect\citeauthoryear{Kidambi, Rajeswaran, Netrapalli, and
  Joachims}{Kidambi et~al.}{2020}]{Kidambi2020}
Kidambi, R., A.~Rajeswaran, P.~Netrapalli, and T.~Joachims (2020).
\newblock Morel: Model-based offline reinforcement learning.
\newblock In {\em Advances in Neural Information Processing Systems},
  Volume~33, pp.\  21810--21823. Curran Associates, Inc.

\bibitem[\protect\citeauthoryear{Korostelev and Korosteleva}{Korostelev and
  Korosteleva}{2011}]{korostelev2011mathematical}
Korostelev, A.~P. and O.~Korosteleva (2011).
\newblock {\em Mathematical statistics: asymptotic Minimax theory}, Volume 119.
\newblock American Mathematical Soc.

\bibitem[\protect\citeauthoryear{Kosorok}{Kosorok}{2008}]{KosorokMichaelR2008ItEP}
Kosorok, M.~R. (2008).
\newblock {\em Introduction to Empirical Processes and Semiparametric
  Inference}.
\newblock Springer Series in Statistics. New York, NY: Springer New York.

\bibitem[\protect\citeauthoryear{Kosorok and Laber}{Kosorok and
  Laber}{2019}]{2019PM}
Kosorok, M.~R. and E.~B. Laber (2019).
\newblock Precision medicine.
\newblock ~{\em 6}, 263--286.

\bibitem[\protect\citeauthoryear{Kosorok and Moodie}{Kosorok and
  Moodie}{2015}]{Kosorok2015}
Kosorok, M.~R. and E.~E. Moodie (2015).
\newblock {\em Adaptive Treatment Strategies in Practice: Planning Trials and
  Analyzing Data for Personalized Medicine}.
\newblock USA: Society for Industrial and Applied Mathematics.

\bibitem[\protect\citeauthoryear{Lazaric, Ghavamzadeh, and Munos}{Lazaric
  et~al.}{2016}]{LazaricAlessandro2016AoCP}
Lazaric, A., M.~Ghavamzadeh, and R.~Munos (2016).
\newblock Analysis of classification-based policy iteration algorithms.
\newblock {\em Journal of machine learning research\/}~{\em 17}, 1--30.

\bibitem[\protect\citeauthoryear{Le, Voloshin, and Yue}{Le
  et~al.}{2019a}]{le2019batch}
Le, H., C.~Voloshin, and Y.~Yue (2019a).
\newblock Batch policy learning under constraints.
\newblock In {\em International Conference on Machine Learning}, pp.\
  3703--3712. PMLR.

\bibitem[\protect\citeauthoryear{Le, Voloshin, and Yue}{Le
  et~al.}{2019b}]{LeHoang2019BPLu}
Le, H., C.~Voloshin, and Y.~Yue (2019b).
\newblock Batch policy learning under constraints.
\newblock In {\em Proceedings of the 36th International Conference on Machine
  Learning}, pp.\  3703--3712.

\bibitem[\protect\citeauthoryear{Levine, Kumar, Tucker, and Fu}{Levine
  et~al.}{2020}]{LevineSergey2020ORLT}
Levine, S., A.~Kumar, G.~Tucker, and J.~Fu (2020).
\newblock Offline reinforcement learning: Tutorial, review, and perspectives on
  open problems.

\bibitem[\protect\citeauthoryear{Liao, Klasnja, and Murphy}{Liao
  et~al.}{2021}]{LiaoPeng2021OEoL}
Liao, P., P.~Klasnja, and S.~Murphy (2021).
\newblock Off-policy estimation of long-term average outcomes with applications
  to mobile health.
\newblock {\em Journal of the American Statistical Association\/}~{\em
  116\/}(533), 382--391.

\bibitem[\protect\citeauthoryear{Liao, Qi, and Murphy}{Liao
  et~al.}{2020}]{LiaoPeng2020BPLi}
Liao, P., Z.~Qi, and S.~Murphy (2020).
\newblock Batch policy learning in average reward markov decision processes.

\bibitem[\protect\citeauthoryear{Liu, Li, Tang, and Zhou}{Liu
  et~al.}{2018}]{Liu2018}
Liu, Q., L.~Li, Z.~Tang, and D.~Zhou (2018).
\newblock Breaking the curse of horizon: Infinite-horizon off-policy
  estimation.
\newblock In {\em Advances in Neural Information Processing Systems 31}, pp.\
  5356--5366.

\bibitem[\protect\citeauthoryear{Luckett, Laber, Kahkoska, Maahs, Mayer-Davis,
  and Kosorok}{Luckett et~al.}{2018}]{LuckettDanielJ.2018EDTR}
Luckett, D.~J., E.~B. Laber, A.~R. Kahkoska, D.~M. Maahs, E.~Mayer-Davis, and
  M.~R. Kosorok (2018).
\newblock Estimating dynamic treatment regimes in mobile health using
  v-learning.
\newblock {\em Journal of the American Statistical Association\/}, 1--34.

\bibitem[\protect\citeauthoryear{Ma, Zhu, Jiao, and Wainwright}{Ma
  et~al.}{2021}]{MaCong2021MOEf}
Ma, C., B.~Zhu, J.~Jiao, and M.~J. Wainwright (2021).
\newblock Minimax off-policy evaluation for multi-armed bandits.

\bibitem[\protect\citeauthoryear{Mandel, Liu, Levine, Brunskill, and
  Popovic}{Mandel et~al.}{2014}]{Mandel2014}
Mandel, T., Y.~Liu, S.~Levine, E.~Brunskill, and Z.~Popovic (2014).
\newblock Off-policy evaluation across representations with applications to
  educational games.
\newblock {\em In Proceedings of the 13th International Conference on
  Autonomous Agentsand Multi-agent Systems\/}, 1077--1084.

\bibitem[\protect\citeauthoryear{Mendelson}{Mendelson}{2002}]{mendelson2002improving}
Mendelson, S. (2002).
\newblock Improving the sample complexity using global data.
\newblock {\em IEEE transactions on Information Theory\/}~{\em 48\/}(7),
  1977--1991.

\bibitem[\protect\citeauthoryear{Meyer}{Meyer}{2000}]{meyer2000matrix}
Meyer, C.~D. (2000).
\newblock {\em Matrix analysis and applied linear algebra}, Volume~71.
\newblock Siam.

\bibitem[\protect\citeauthoryear{Munos and Szepesv{\'a}ri}{Munos and
  Szepesv{\'a}ri}{2008}]{munos2008finite}
Munos, R. and C.~Szepesv{\'a}ri (2008).
\newblock Finite-time bounds for fitted value iteration.
\newblock {\em Journal of Machine Learning Research\/}~{\em 9}, 815--857.

\bibitem[\protect\citeauthoryear{Murphy}{Murphy}{2003}]{MurphyS.A.2003Odtr}
Murphy, S.~A. (2003).
\newblock Optimal dynamic treatment regimes.
\newblock {\em Journal of the Royal Statistical Society: Series B (Statistical
  Methodology)\/}~{\em 65}, 331--355.

\bibitem[\protect\citeauthoryear{Nachum, Chow, Dai, and Li}{Nachum
  et~al.}{2019}]{ChowYinlam2019DBEo}
Nachum, O., Y.~Chow, B.~Dai, and L.~Li (2019).
\newblock Dualdice: Behavior-agnostic estimation of discounted stationary
  distribution corrections.
\newblock {\em Advances in Neural Information Processing Systems 2019 (To
  appear)\/}.

\bibitem[\protect\citeauthoryear{Nachum, Dai, Kostrikov, Chow, Li, and
  Schuurmans}{Nachum et~al.}{2019}]{NachumOfir2019APGf}
Nachum, O., B.~Dai, I.~Kostrikov, Y.~Chow, L.~Li, and D.~Schuurmans (2019).
\newblock Algaedice: Policy gradient from arbitrary experience.
\newblock {\em arXiv preprint arXiv:1912.02074\/}.

\bibitem[\protect\citeauthoryear{Nakada and Imaizumi}{Nakada and
  Imaizumi}{2020}]{JMLR:v21:20-002}
Nakada, R. and M.~Imaizumi (2020).
\newblock Adaptive approximation and generalization of deep neural network with
  intrinsic dimensionality.
\newblock {\em Journal of Machine Learning Research\/}~{\em 21\/}(174), 1--38.

\bibitem[\protect\citeauthoryear{Namkoong, Keramati, Yadlowsky, and
  Brunskill}{Namkoong et~al.}{2020}]{NamkoongHongseok2020OPEF}
Namkoong, H., R.~Keramati, S.~Yadlowsky, and E.~Brunskill (2020).
\newblock Off-policy policy evaluation for sequential decisions under
  unobserved confounding.

\bibitem[\protect\citeauthoryear{Narita, Yasui, and Yata}{Narita
  et~al.}{2019}]{narita2018}
Narita, Y., S.~Yasui, and K.~Yata (2019).
\newblock Efficient counterfactual learning from bandit feedback.
\newblock {\em AAAI\/}.

\bibitem[\protect\citeauthoryear{Newey and Mcfadden}{Newey and
  Mcfadden}{1994}]{newey94}
Newey, W.~K. and D.~L. Mcfadden (1994).
\newblock Large sample estimation and hypothesis testing.
\newblock {\em Handbook of Econometrics\/}~{\em IV}, 2113--2245.

\bibitem[\protect\citeauthoryear{Precup, Sutton, and Singh}{Precup
  et~al.}{2000}]{Precup2000}
Precup, D., R.~Sutton, and S.~Singh (2000).
\newblock Eligibility traces for off-policy policy evaluation.
\newblock {\em In Proceedings of the 17th International Conference on Machine
  Learning\/}, 759--766.

\bibitem[\protect\citeauthoryear{Rashidinejad, Zhu, Ma, Jiao, and
  Russell}{Rashidinejad et~al.}{2021}]{RashidinejadParia2021BORL}
Rashidinejad, P., B.~Zhu, C.~Ma, J.~Jiao, and S.~Russell (2021).
\newblock Bridging offline reinforcement learning and imitation learning: A
  tale of pessimism.
\newblock {\em arXiv preprint arXiv:2103.12021\/}.

\bibitem[\protect\citeauthoryear{Robins, Rotnitzky, and Scharfstein}{Robins
  et~al.}{1999}]{robins99}
Robins, J.~M., A.~Rotnitzky, and D.~O. Scharfstein (1999).
\newblock {\em Sensitivity Analysis for Selection Bias and Unmeasured
  Confounding in Missing Data and Causal Inference Models}, Volume 116 of {\em
  Statistical Models in Epidemiology: The Environment and Clinical Trials.}
\newblock NY: Springer-Verlag.

\bibitem[\protect\citeauthoryear{Robins, Rotnitzky, and Zhao}{Robins
  et~al.}{1994}]{robins94}
Robins, J.~M., A.~Rotnitzky, and L.~P. Zhao (1994).
\newblock Estimation of regression coefficients when some regressors are not
  always observed.
\newblock {\em Journal of the American Statistical Association\/}~{\em 89},
  846--866.

\bibitem[\protect\citeauthoryear{Shi, Wan, Chernozhukov, and Song}{Shi
  et~al.}{2021}]{ShiChengchun2021DOIE}
Shi, C., R.~Wan, V.~Chernozhukov, and R.~Song (2021).
\newblock Deeply-debiased off-policy interval estimation.

\bibitem[\protect\citeauthoryear{Shi, Zhang, Lu, and Song}{Shi
  et~al.}{2020}]{ShiC2020SIot}
Shi, C., S.~Zhang, W.~Lu, and R.~Song (2020).
\newblock Statistical inference of the value function for reinforcement
  learning in infinite horizon settings.

\bibitem[\protect\citeauthoryear{Smucler, Rotnitzky, and Robins}{Smucler
  et~al.}{2019}]{SmuclerEzequiel2019Auaf}
Smucler, E., A.~Rotnitzky, and J.~M. Robins (2019).
\newblock A unifying approach for doubly-robust $\ell_1$ regularized estimation
  of causal contrasts.
\newblock {\em arXiv preprint arXiv:1904.03737\/}.

\bibitem[\protect\citeauthoryear{Tang, Feng, Li, Zhou, and Liu}{Tang
  et~al.}{2020}]{tang2019harnessing}
Tang, Z., Y.~Feng, L.~Li, D.~Zhou, and Q.~Liu (2020).
\newblock Harnessing infinite-horizon off-policy evaluation: Double robustness
  via duality.
\newblock {\em ICLR 2020\/}.

\bibitem[\protect\citeauthoryear{Thomas and Brunskill}{Thomas and
  Brunskill}{2016}]{thomas2016}
Thomas, P. and E.~Brunskill (2016).
\newblock Data-efficient off-policy policy evaluation for reinforcement
  learning.
\newblock {\em In Proceedings of the 33rd International Conference on Machine
  Learning\/}, 2139--2148.

\bibitem[\protect\citeauthoryear{Thomas, Theocharous, and Ghavamzadeh}{Thomas
  et~al.}{2015}]{pmlr-v37-thomas15}
Thomas, P., G.~Theocharous, and M.~Ghavamzadeh (2015).
\newblock High confidence policy improvement.
\newblock In F.~Bach and D.~Blei (Eds.), {\em Proceedings of the 32nd
  International Conference on Machine Learning}, Volume~37 of {\em Proceedings
  of Machine Learning Research}, pp.\  2380--2388.

\bibitem[\protect\citeauthoryear{Tsiatis}{Tsiatis}{2006}]{TsiatisAnastasiosA2006STaM}
Tsiatis, A.~A. (2006).
\newblock {\em Semiparametric Theory and Missing Data}.
\newblock Springer Series in Statistics. New York, NY: Springer New York.

\bibitem[\protect\citeauthoryear{Tsiatis}{Tsiatis}{2020}]{TsiatisAnastasiosA.AnastasiosAthanasios2020Dtr:}
Tsiatis, A. A. A.~A. (2020).
\newblock {\em Dynamic treatment regimes : statistical methods for precision
  medicine}.
\newblock Monographs on statistics and applied probability (Series) 164. Boca
  Raton: Chapman and Hall/CRC.

\bibitem[\protect\citeauthoryear{Uehara, Huang, and Jiang}{Uehara
  et~al.}{2020}]{UeharaMasatoshi2019MWaQ}
Uehara, M., J.~Huang, and N.~Jiang (2020).
\newblock Minimax weight and q-function learning for off-policy evaluation.
\newblock In {\em Proceedings of the 37th International Conference on Machine
  Learning}, pp.\  9659--9668.

\bibitem[\protect\citeauthoryear{van~der Vaart}{van~der
  Vaart}{1998}]{VaartA.W.vander1998As}
van~der Vaart, A.~W. (1998).
\newblock {\em Asymptotic statistics}.
\newblock Cambridge, UK: Cambridge University Press.

\bibitem[\protect\citeauthoryear{Wainwright}{Wainwright}{2019}]{WainwrightMartinJ2019HS:A}
Wainwright, M.~J. (2019).
\newblock {\em High-Dimensional Statistics : A Non-Asymptotic Viewpoint}.
\newblock New York: Cambridge University Press.

\bibitem[\protect\citeauthoryear{Wang, Foster, and Kakade}{Wang
  et~al.}{2020}]{edsarx.2010.1189520200101}
Wang, R., D.~P. Foster, and S.~M. Kakade (2020).
\newblock What are the statistical limits of offline rl with linear function
  approximation?.
\newblock {\em arXiv preprint arXiv:2010.11895\/}.

\bibitem[\protect\citeauthoryear{Wong and Chan}{Wong and
  Chan}{2018}]{WongRaymondKW2018Kcfb}
Wong, R. K.~W. and K.~C.~G. Chan (2018).
\newblock Kernel-based covariate functional balancing for observational
  studies.
\newblock {\em Biometrika\/}~{\em 105\/}(1), 199--213.

\bibitem[\protect\citeauthoryear{Xie and Jiang}{Xie and
  Jiang}{2020}]{XieTengyang2020QASf}
Xie, T. and N.~Jiang (2020).
\newblock Q* approximation schemes for batch reinforcement learning: A
  theoretical comparison.
\newblock {\em UAI2020\/}.

\bibitem[\protect\citeauthoryear{Yang, Nachum, Dai, Li, and Schuurmans}{Yang
  et~al.}{2020}]{YangMengjiao2020OEvt}
Yang, M., O.~Nachum, B.~Dai, L.~Li, and D.~Schuurmans (2020).
\newblock Off-policy evaluation via the regularized lagrangian.
\newblock {\em arXiv preprint arXiv:2007.03438\/}.

\bibitem[\protect\citeauthoryear{Yarotsky}{Yarotsky}{2017}]{yarotsky2017error}
Yarotsky, D. (2017).
\newblock Error bounds for approximations with deep relu networks.
\newblock {\em Neural Networks\/}~{\em 94}, 103--114.

\bibitem[\protect\citeauthoryear{Yin, Bai, and Wang}{Yin
  et~al.}{2020}]{YinMing2020NOPU}
Yin, M., Y.~Bai, and Y.-X. Wang (2020).
\newblock Near optimal provable uniform convergence in off-policy evaluation
  for reinforcement learning.
\newblock {\em arXiv preprint arXiv:2007.03760\/}.

\bibitem[\protect\citeauthoryear{Yin and Wang}{Yin and
  Wang}{2020}]{YinMing2020AEOE}
Yin, M. and Y.-X. Wang (2020).
\newblock Asymptotically efficient off-policy evaluation for tabular
  reinforcement learning.
\newblock In {\em Proceedings of the Twenty Third International Conference on
  Artificial Intelligence and Statistics}, pp.\  3948--3958.

\bibitem[\protect\citeauthoryear{Yu, Thomas, Yu, Ermon, Zou, Levine, Finn, and
  Ma}{Yu et~al.}{2020}]{Yu2020}
Yu, T., G.~Thomas, L.~Yu, S.~Ermon, J.~Y. Zou, S.~Levine, C.~Finn, and T.~Ma
  (2020).
\newblock Mopo: Model-based offline policy optimization.
\newblock In {\em Advances in Neural Information Processing Systems},
  Volume~33, pp.\  14129--14142.

\bibitem[\protect\citeauthoryear{Zhang, Tsiatis, Laber, and Davidian}{Zhang
  et~al.}{2013a}]{zhang2013robust}
Zhang, B., A.~A. Tsiatis, E.~B. Laber, and M.~Davidian (2013a).
\newblock Robust estimation of optimal dynamic treatment regimes for sequential
  treatment decisions.
\newblock {\em Biometrika\/}~{\em 100\/}(3), 681--694.

\bibitem[\protect\citeauthoryear{Zhang, Tsiatis, Laber, and Davidian}{Zhang
  et~al.}{2013b}]{ZhangBaqun2013Reoo}
Zhang, B., A.~A. Tsiatis, E.~B. Laber, and M.~Davidian (2013b).
\newblock Robust estimation of optimal dynamic treatment regimes for sequential
  treatment decisions.
\newblock {\em Biometrika\/}~{\em 100}, 681--694.

\bibitem[\protect\citeauthoryear{Zhang, Dai, Li, and Schuurmans}{Zhang
  et~al.}{2020}]{zhang2019gendice}
Zhang, R., B.~Dai, L.~Li, and D.~Schuurmans (2020).
\newblock Gendice: Generalized offline estimation of stationary values.
\newblock In {\em International Conference on Learning Representations}.

\bibitem[\protect\citeauthoryear{Zhang, Imaizumi, Schölkopf, and
  Muandet}{Zhang et~al.}{2020}]{ZhangRui2020MMRf}
Zhang, R., M.~Imaizumi, B.~Schölkopf, and K.~Muandet (2020).
\newblock Maximum moment restriction for instrumental variable regression.
\newblock {\em arXiv preprint arXiv:2010.07684\/}.

\end{thebibliography}
\newpage  
\appendix




\section{Notation}

We summarize the notations in \cref{tab:notation}. 
\begin{table}[!h]
    \centering
        \caption{Notation}
            \label{tab:notation}
    \begin{tabular}{l|l}
     $\Scal,\Acal,\Xcal$ & State space, action space, $\Scal\times \Acal$  \\
     $d_0(\cdot)$ & Initial state density  \\
     $\gamma$  & Discounting factor \\
     $P_{R|S,A}(r|s,a),P_{S'|S,A}(s'|s,a)$ & Reward density, transition density  \\
     $J$ & Policy value\\
     $\epol,\bpol$    & Evaluation policy, Behavior policy \\ 
     $d_{\epol,t}$ & Distribution of $(s_t,a_t)$ under policy $\epol$\\
     $q_{\epol}(s,a),\,v_{\epol}(s)$ & Q-function, value function (for evaluation policy) \\
     $d_{\epol,\gamma}(s,a)$ & Discounted occupancy measure \\
     $P_{S,A,R,S'}(s,a,r,s')$ & Density for the batch data  \\ 
     $\E[f(s,a,r,s')]$ & $\E_{d_{\bpol}}[f(s,a,r,s')]$\\
     $\|f\|_2$ & $\E[\{f(s,a,r,s')\}^2]^{1/2}$\\
     $\|f\|_{\infty}$ & $\max_{x}|f(x)|$\\
     $w_{\epol}(s,a)$ & True weight function $d_{\epol,\gamma}(s,a)/d_{\bpol}(s,a)$\\
     $\eta_{\epol}(s,a)$ & $\epol(a|s)/\bpol(a|s)$ \\
     $C_w,C_{\eta}$ & Constants s.t. $\|w_{\epol}(\cdot)\|_{\infty}\leq C_w,\|\eta(\cdot)\|_{\infty}\leq C_{\eta}$ \\
     $\Wbbb,\Qbbb,\Wbbb_1,\Qbbb_1,\Wbbb_2,\Qbbb_2$  & Function classes \\ 
     $V(\Qbbb)$ & VC dimension \\ 
    $C_{\Wbbb},C_{\Qbbb}$  & Constant envelopes for the function classes \\ 
    $C_{\xi}$ & Values appearing in the closedness assumptions \\ 
    $C_{m}$ & $\|P_{S'}(\cdot)/P_{S}(\cdot)\|_{\infty}\leq C_m$\\ 
    $C_{\iota,n}$ & $ \|\hat q-q_{\epol}\|_2\leq C_{\iota,n}\|\Bcal \hat q-\hat q\|_2$\\ 
    $\lesssim $ & Inequality up to problem dependent values \\ 
    $\lesssim_{\delta} $ & Inequality up to problem dependent values  with probability $1-\delta$\\ 
    $\mu$ & Baseline measure \\
    $\Tcal,\Tcal'$ &   Transition operator, adjoint operator of the transition operator \\
  $\Tcal_{\gamma},\Tcal'_{\gamma}$ &   $\Tcal_{\gamma}=\gamma \Tcal-I$,  $\Tcal'_{\gamma}=\gamma \Tcal'-I$,\\
    $\Bcal,\Bcal'$ &  Forward Bellman operator, backward Bellman operator \\ 
    $\Jcal$ & $\Jcal q=-q(s,a)+\gamma v(s')$\\ 
    $\Rcal_n(\Gcal)$ & Rademacher complexity of the function class $\Gcal$ \\ 
    $\Rcal_n(\eta,\Gcal)$ & Localized Rademacher complexity of the function class $\Gcal$ \\
    $\Ncal(\tau,\Gcal,\|\cdot\|_{\infty})$ & Covering number of $\Gcal$ wrt $\|\cdot\|_{\infty}$ \\
    $\lambda,\lambda'$ & Parameters wrt stabilizers\\ 
    $c_0,\cdots$ & Universal constants \\
    $H^{\alpha}(I^d)$ & Sobolev space \\
    $\Lambda^{\alpha}(I^d)$ & \Holder\,space  \\
    $\sigma(A)$  & Spectrum of a matrix $A$\\
    $\sigma_{\max}(A)$  & Maximum singular value of $A$ 
     \end{tabular}
\end{table}

\newpage 


\section{Adjoint Transition Operators }\label{ape:adjoint}


\subsection{Specific form of adjoint operators}

\begin{lemma}\label{lem:form}
We have 
\begin{align*}
    \Tcal': f(s,a) \mapsto \eta(s,a)\int \frac{P_{S'|S,A}(s|s',a')f(s',a')P_{S,A}(s',a')}{P_{S}(s)}\rd\mu(s',a').
\end{align*}
When $P_{S}(s)=P_{S'}(s)$, 
\begin{align*}
    \Tcal': f(s,a) \mapsto \eta(s,a)\int P_{S,A|S'}(s',a'|s)f(s',a')\rd\mu(s',a'). 
\end{align*}
\end{lemma}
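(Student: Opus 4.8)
The plan is to verify directly that the claimed operator satisfies the defining adjoint relation $\E[f\,\Tcal g]=\E[(\Tcal'f)\,g]$. Since \cref{lem:boundedness} already guarantees that the adjoint exists and is unique on $L^2(\Xcal)$, exhibiting one operator with this property suffices to identify $\Tcal'$, so the task reduces to a direct computation.

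First I would write $\E[f\,\Tcal g]$ as an iterated integral against $P_{S,A}$. Unfolding the definition $\Tcal g(s,a)=\E_{s'\sim P_{S'|S,A}(\cdot|s,a),\,a'\sim\epol(\cdot|s')}[g(s',a')]$ gives
\begin{align*}
\E[f\,\Tcal g]=\int\!\!\int f(s,a)\Bigl[\int\!\!\int g(s',a')\,\epol(a'|s')\,P_{S'|S,A}(s'|s,a)\,\rd\mu(s',a')\Bigr]P_{S,A}(s,a)\,\rd\mu(s,a).
\end{align*}
The boundedness established in \cref{lem:boundedness} ensures the integrand is absolutely integrable, so Fubini's theorem applies and I may swap the order of integration, pulling $g(s',a')$ to the outside.

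Next I would collect the factors multiplying $g(s',a')$ and match them against $\E[(\Tcal'f)\,g]=\int\!\!\int (\Tcal'f)(s',a')\,g(s',a')\,P_{S,A}(s',a')\,\rd\mu(s',a')$. Reading off the kernel and dividing by $P_{S,A}(s',a')=\bpol(a'|s')\,P_S(s')$ turns the prefactor $\epol(a'|s')/P_{S,A}(s',a')$ into $\eta(s',a')/P_S(s')$, which yields exactly the first claimed form after relabelling the dummy and output variables. For the second form I would invoke Bayes' rule, $P_{S,A|S'}(s',a'|s)=P_{S'|S,A}(s|s',a')\,P_{S,A}(s',a')/P_{S'}(s)$, and use the hypothesis $P_S=P_{S'}$ to replace $P_S(s)$ by $P_{S'}(s)$ in the denominator, collapsing the kernel into the posterior density.

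The computation carries no real conceptual obstacle; the only thing requiring care is the density bookkeeping — in particular correctly simplifying $\epol(a'|s')/P_{S,A}(s',a')$ to $\eta(s',a')/P_S(s')$ and keeping the two state arguments of $P_{S'|S,A}$ in the correct order after the swap — together with checking the integrability needed for Fubini, which is supplied by \cref{lem:boundedness}.
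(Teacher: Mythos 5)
Your proposal is correct and follows essentially the same route as the paper: both proofs expand $\E[f\,\Tcal g]$ into a joint integral, exchange the order of integration, rewrite $\epol(a'|s')/P_{S,A}(s',a')$ as $\eta(s',a')/P_{S}(s')$ using $P_{S,A}=P_S\cdot\bpol$, relabel variables to read off the kernel of $\Tcal'$, and obtain the second form via Bayes' rule under $P_S=P_{S'}$. The only cosmetic difference is that the paper relabels the integration variables before extracting the kernel whereas you extract first and relabel after, which changes nothing.
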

\begin{proof}
The first statement is proved by 
\begin{align*}\ts
    \langle f,\Tcal g\rangle &=\ts \int f(s,a)\{\int g(s',a')\epol(a'|s')P_{S'|S,A}(s'|s,a)\rd\mu(s',a')\}P_{S,A}(s,a)\rd\mu(s,a) \\
    &\ts =\int f(s,a)g(s',a')\epol(a'|s')P_{S'|S,A}(s'|s,a)P_{S,A}(s,a)\rd\mu(s,a,s',a')\\
     &\ts =\int f(s',a')g(s,a)\epol(a|s)P_{S'|S,A}(s|s',a')P_{S,A}(s',a')\rd\mu(s,a,s',a')\\
     &\ts =\int g(s,a)\epol(a|s)P_{S'|S,A}(s|s',a')f(s',a')P_{S,A}(s',a')\rd\mu(s,a,s',a')\\
     &\ts =\int g(s,a)\frac{\epol(a|s)P_{S'|S,A}(s|s',a')f(s',a')P_{S,A}(s',a')}{P_{S,A}(s,a)}P_{S,A}(s,a)  \rd\mu(s,a,s',a')\\
     &\ts =\int g(s,a)\frac{\epol(a|s)P_{S'|S,A}(s|s',a')f(s',a')p_{S,A}(s',a')}{P_{S}(s)\bpol(a|s)}P_{S,A}(s,a)  \rd\mu(s,a,s',a')\\
     &\ts =\int g(s,a)\eta(s,a)\{\int \frac{P_{S'|S,A}(s|s',a')f(s',a')P_{S,A}(s',a')}{P_{S}(s)}\rd\mu(s',a')\}P_{S,A}(s,a)  \rd\mu(s,a)\\
     &=     \langle \Tcal 'f, g\rangle. 
\end{align*}
Here, we use the assumption $P_{S'}(s)\epol(a|s)/P_{S,A}(s,a)\leq C_mC_{\eta}$ for $\forall (s,a)\in \Xcal$ in \cref{asm:comp}. 

Next, we use the assumption $P_{S}(s)=P_{S'}(s)$. Then, the second statement is proved by 
\begin{align*}
    &\frac{P_{S'|S,A}(s|s',a')P_{S,A}(s',a')}{P_{S}(s)}=\frac{P_{S'|S,A}(s|s',a')P_{S,A}(s',a')}{P_{S'}(s)}\\
    &=P_{S,A|S'}(s',a'|s). 
\end{align*}
\end{proof}

\subsection{Examples}
When we restrict $L^2(\Xcal)$ to more restricted finite dimensional classes, the operators $\Tcal,\Tcal'$ are represented by matrices. We define $a\cdot b=a^{\top}b$. 

\begin{example}[Tabular model] \label{exa:tabular}
We use a matrix formulation. Let $\Xcal:=S\times A$, and also $\Pcal^\pi$ be an $|\Scal|| \Acal| \times |\Scal|| \Acal|$ matrix, where its $\Pcal^\pi_{s'a',sa} \coloneqq \Pr(s_{t + 1} = s', a_{t + 1} = a' | s_{t} = s, a_{t} = a, a_{t + 1} \sim \pi(\cdot|s_{t + 1}))$. Given a function $f(s,a)$ on $\Xcal$, we represents this $f(s,a)$ as a $|S||A|\times 1$ vector and denote it by $[f]:=(f(x_1),\cdots,f(x_{|S||A|}))^{\top}$.  In a tabular case, the feature vector $\phi(s,a)$ is given as $(I((s,a)=x_1),\cdots, I((s,a)=x_{|S||A|}))^{\top}$. Thus, $f(s,a)=[f]^{\top}\phi$. Note we often take 
\begin{align*}
    \Qbbb_2 =\Wbbb_2=\Qbbb_1=\Wbbb_1=\{(s,a)\mapsto \theta^{\top}\phi(s,a);\theta\in \mathbb{R}^{|S||A|}\}. 
\end{align*}

Then, we have 
\begin{align*}
   & [ \Tcal f]=P^{\epol}[f],\,    [\{(\gamma \Tcal-I)f\}]=(\gamma P^{\epol}-I)[f],\\
    & \Tcal \phi =(P^{\epol})^{\top}\phi,\,    (\gamma \Tcal-I)\phi=(\gamma P^{\epol}-I)^{\top}\phi. 
\end{align*}
In addition, $\E[\phi\phi^{\top}]$ is a diagonal matrix whose $(i,i)$-th entry corresponding $x_i\in \Xcal$, is $\E[I((s,a)=x_i)]$, i.e., $P_{S,A}(x_i)$. This is non-singular when $P_{S,A}(s,a)>0\,\forall (s,a)\in \Xcal$ . Then, $\langle f,g \rangle=\E[fg]$ is 
\begin{align*}
\langle f,g\rangle= [f]^{\top}\E[\phi\phi^{\top}][g]. 
\end{align*}
In addition, from 
\begin{align*}
    \langle \Tcal f,g \rangle =\{P^{\epol}[f]\}^{\top} \E[\phi\phi^{\top}][g]=[f]^{\top}\{P^{\epol}\}^{\top} \E[\phi\phi^{\top}][g]= [f]^{\top}\E[\phi\phi^{\top}]\{\E[\phi\phi^{\top}]^{-1} \{P^{\epol}\}^{\top}\E[\phi\phi^{\top}]\}[g], 
\end{align*}
and $\langle \Tcal f,g \rangle=\langle f,\Tcal'g \rangle$, we have 
\begin{align*}
 &[\Tcal'f]=( \E[\phi\phi^{\top}]^{-1}(P^{\epol})^{\top} \E[\phi\phi^{\top}])[f],\,
    [(\gamma \Tcal'-I)f]=(\E[\phi\phi^{\top}]^{-1}(\gamma P^{\epol}-I)^{\top} \E[\phi\phi^{\top}])[f],\\
    & \Tcal'\phi=( \E[\phi\phi^{\top}]^{-1}(P^{\epol})^{\top} \E[\phi\phi^{\top}])^{\top}\phi=\E[\phi\phi^{\top}] P^{\epol}\E[\phi\phi^{\top}]^{-1} \phi.  
\end{align*}
\end{example}

\begin{example}[Linear models -ver 1-]\label{exa:ver1}
Consider a $d$-dimensional feature vector $\phi$. For $f_1(s,a)=\theta^{\top}_1\phi,f_2(s,a)=\theta^{\top}_2\phi$, the inner product is defined as 
\begin{align*}
    \langle f_1,f_2 \rangle =\theta^{\top}_1\E[\phi \phi^{\top}]\theta_2. 
\end{align*}
Then, suppose that we have a matrix s.t.
\begin{align*}\ts 
    \Tcal \phi=M^{\top}_{\epol} \phi,\,\Tcal (\theta\cdot \phi)= M_{\epol}\theta \cdot \phi. 
\end{align*}
This matrix $M_{\epol}$ is referred to as a matrix mean embedding of the conditional transition operator \citep{DuanYaqi2020MOEw}. In linear MDPs,  $M_{\epol}$ is explicitly calculated. See Lemma \ref{lem:q_completenss}. We consider an adjoint operator of this operator. For $f_1(s,a)=\theta^{\top}_1\phi,f_2(s,a)=\theta^{\top}_2\phi$, then,
\begin{align*}
    \langle \Tcal f_1,f_2\rangle =(M_{\epol}\theta_1)^{\top}\E[\phi \phi^{\top}]\theta_2=\theta^{\top}_1M_{\epol}^{\top}\E[\phi \phi^{\top}]\theta_2=\theta^{\top}_1\E[\phi \phi^{\top}]\{\E[\phi \phi^{\top}]\}^{-1}M_{\epol}^{\top}\E[\phi \phi^{\top}]\theta_2
\end{align*}
when $\E[\phi \phi^{\top}]$ is non-singular. From  $\langle \Tcal f_1,f_2\rangle=\langle f_1,\Tcal'f_2\rangle$, 
\begin{align}\label{eq:linear_ver1}
    \Tcal':f_2\to \{\E[\phi \phi^{\top}]\}^{-1}M_{\epol}^{\top}\E[\phi \phi^{\top}]\theta_2\cdot \phi. 
\end{align}

Therefore, 
\begin{align*}
     \Tcal' \phi&=\{\{\E[\phi \phi^{\top}]\}^{-1}M_{\epol}^{\top}\E[\phi \phi^{\top}]\}^{\top}\phi\\
     &=\E[\phi \phi^{\top}]M_{\epol}\E[\phi \phi^{\top}]^{-1}\phi\
\end{align*}
\end{example}

\begin{example}[Linear models -ver 2-]\label{exa:ver2}
Consider a $d$-dimensional feature vector $\phi$. For $f_1(s,a)=\theta^{\top}_1\phi,f_2(s,a)=\theta^{\top}_2\phi$, the inner product is defined as 
\begin{align*}
    \langle f_1,f_2 \rangle =\theta^{\top}_1\E[\phi \phi^{\top}]\theta_2. 
\end{align*}
Then, suppose that we have a matrix s.t.
\begin{align*}\ts 
    \Tcal' \phi=M'^{\top}_{\epol} \phi,\, \Tcal' (\theta\cdot \phi)= M'_{\epol}\theta\cdot \phi
\end{align*}
In the case of Lemma \ref{lem:w_completenss},  $M'_{\epol}$ is explicitly calculated. See Lemma \ref{lem:w_completenss}. We consider an adjoint operator of this operator. For $f_1(s,a)=\theta^{\top}_1\phi,f_2(s,a)=\theta^{\top}_2\phi$, then,
\begin{align*}
    \langle \Tcal' f_1,f_2\rangle =(M'_{\epol}\theta_1)^{\top}\E[\phi \phi^{\top}]\theta_2=\theta^{\top}_1{M'_{\epol}}^{\top}\E[\phi \phi^{\top}]\theta_2=\theta^{\top}_1\E[\phi \phi^{\top}]\{\E[\phi \phi^{\top}]\}^{-1}{M'_{\epol}}^{\top}\E[\phi \phi^{\top}]\theta_2
\end{align*}
when $\E[\phi \phi^{\top}]$ is non-singular. From  $\langle \Tcal' f_1,f_2\rangle=\langle f_1,\Tcal f_2\rangle$, 
\begin{align}\label{eq:linear_ver2}
    \Tcal:f_2\to \{\E[\phi \phi^{\top}]\}^{-1}{M'_{\epol}}^{\top}\E[\phi \phi^{\top}]\theta_2\cdot \phi. 
\end{align}
Therefore, 
\begin{align*}
     \Tcal \phi&=\{\{\E[\phi \phi^{\top}]\}^{-1}{M'_{\epol}}^{\top}\E[\phi \phi^{\top}]\}^{\top}\phi\\
     &= \E[\phi \phi^{\top}]{M'_{\epol}}\E[\phi \phi^{\top}]^{-1}\phi. 
\end{align*}
\end{example}



\section{Theorems Agnostic to Realizability and Completeness}\label{ape:misspecification}


We present the theorems agnostic to \rea and \comp. These theorems are useful to analyze sieve estimators and neural networks when $\Qbbb$ and $\Wbbb$ grow with $n$. 

First, we discuss the extension of \cref{thm:generalization}, which takes the bias terms into account. We only present the result of $\hat J^{\MDL}_{w1}$ and $\hat J^{\MIL}_q,\hat J^{\MIL}_w,\hat J^{\MIL}_{wq}$. 


\begin{theorem}\label{thm:allow_mis1}
When $\lambda=\lambda'=0$, 
 \begin{align*}
     |\hat J^{\MIL}_q-J|&\leq c_0\mathrm{Err}_{\MIL,q}+\min_{w\in \Wbbb_2}\max_{q \in \Qbbb_2}\E[(w-w_{\epol})\Tcal_{\gamma}(q-q_{\epol})]+\min_{q \in \Qbbb_2}\max_{w\in \Wbbb_2}\E[w\Tcal_{\gamma}(q-q_{\epol})],\\
    |\hat J^{\MIL}_w-J|&\leq c_0\mathrm{Err}_{\MIL,w}+\min_{w\in \Wbbb_1}\max_{q \in \Qbbb_1}\E[(w-w_{\epol})\Tcal_{\gamma}q]+\min_{q \in \Qbbb_1}\max_{w\in \Wbbb_1}\E[(w-w_{\epol})\Tcal_{\gamma}(q-q_{\epol})],\\
    |\hat J^{\MIL}_{wq}-J|&\leq c_0\mathrm{Err}_{\MIL,wq}+\min_{w\in \Wbbb_2}\max_{q \in \Qbbb_2,w'\in \Wbbb_1}\E[(w-w'-w_{\epol})\Tcal_{\gamma}(q-q_{\epol})]+\min_{q \in \Qbbb_2}\max_{w\in \Wbbb_2}\E[w\Tcal_{\gamma}(q-q_{\epol})],\\
    |\hat J^{\MIL}_{wq}-J|&\leq c_0\mathrm{Err}_{\MIL,wq}+\min_{w\in \Wbbb_1}\max_{q \in \Qbbb_1}\E[(w-w_{\epol})\Tcal_{\gamma}q]+\min_{q \in \Qbbb_1}\max_{w\in \Wbbb_1,q\in \Qbbb_2}\E[(w-w_{\epol})\Tcal_{\gamma}(q-q'-q_{\epol})].
 \end{align*}
\end{theorem}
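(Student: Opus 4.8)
The plan is to split each error $\hat J^{\MIL}_\bullet-J$ into an empirical-process (statistical) part and a population (approximation) part, bounding the former by the Rademacher complexities packaged in the various $\mathrm{Err}_{\MIL,\bullet}$ terms and the latter by the bilinear algebraic structure of the objectives together with the min-max optimality of the estimators. The algebraic backbone is \cref{lem:basic} and the adjoint relation: at the population level $\E[w_{\epol}\Tcal_{\gamma}q]=-(1-\gamma)\E_{d_0}[q(s_0,\epol)]$ for every $q$ (since $\Tcal'_{\gamma}w_{\epol}=-(1-\gamma)d_0\epol/P_{S,A}$) and $\Tcal_{\gamma}q_{\epol}=-\E[r\mid s,a]$ (since $\Bcal q_{\epol}=q_{\epol}$). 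These give the clean population identity $\E[w\{r-q+\gamma q(s',\epol)\}]=\E[w\Tcal_{\gamma}(q-q_{\epol})]$ and, crucially, the doubly robust identity: the population DR functional at $(w,q)$ minus $J$ equals the single bilinear form $\E[(w-w_{\epol})\Tcal_{\gamma}(q-q_{\epol})]$. Since $\hat J^{\MIL}_q$, $\hat J^{\MIL}_w$, $\hat J^{\MIL}_{wq}$ are the \emph{empirical} DR functional evaluated at $(0,\hat q)$, $(\hat w,0)$, $(\hat w,\hat q)$ respectively, all three population errors reduce to instances of this one bilinear form, namely $-\E[w_{\epol}\Tcal_{\gamma}(\hat q-q_{\epol})]$, $-\E[(\hat w-w_{\epol})\Tcal_{\gamma}q_{\epol}]$ (after peeling off $\E_n[\hat w r]-\E[\hat w r]$, which is $\mathrm{Err}'_{\MIL,w}$), and $\E[(\hat w-w_{\epol})\Tcal_{\gamma}(\hat q-q_{\epol})]$.

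First I would dispatch the statistical step: by symmetrization and a bounded-difference tail bound, $\sup_{w\in\Wbbb,q\in\Qbbb}|(\E_n-\E)[w\{r-q+\gamma q(s',\epol)\}]|\lesssim_\delta \mathrm{Err}_{\MIL,\bullet}$ (the $\E_{d_0}$ term is over a known initial density and contributes no deviation), so any empirical min-max value lies within $c_0\mathrm{Err}_{\MIL,\bullet}$ of its population counterpart uniformly over the classes. This is exactly the deviation content already isolated in \cref{thm:generalization} and its complexity corollaries, so it can be reused verbatim.

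The heart of the argument is converting each reduced bilinear form into the stated min-max approximation terms by inserting and subtracting a free reference element of the relevant class and then invoking the estimator's optimality. For $\hat J^{\MIL}_q$ I write, for arbitrary $w\in\Wbbb_2$, $-\E[w_{\epol}\Tcal_{\gamma}(\hat q-q_{\epol})]=\E[(w-w_{\epol})\Tcal_{\gamma}(\hat q-q_{\epol})]-\E[w\Tcal_{\gamma}(\hat q-q_{\epol})]$; the first summand is at most $\max_{q\in\Qbbb_2}\E[(w-w_{\epol})\Tcal_{\gamma}(q-q_{\epol})]$ since $\hat q\in\Qbbb_2$ (then minimized over $w$), and the second is at most $\max_{w'\in\Wbbb_2}\E[w'\Tcal_{\gamma}(\hat q-q_{\epol})]$ by symmetry of $\Wbbb_2$, which by the transfer inequality and $\hat q=\argmin_q\max_{w'}\E_n[w'\{r-q+\gamma v(s')\}]$ is at most $\min_q\max_{w'}\E[w'\Tcal_{\gamma}(q-q_{\epol})]+c_0\mathrm{Err}_{\MIL,q}$, yielding the first bound. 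The IS bound is identical after decomposing $\Tcal_{\gamma}q_{\epol}=\Tcal_{\gamma}q+\Tcal_{\gamma}(q_{\epol}-q)$ for $q\in\Qbbb_1$: the two pieces produce $\min_w\max_q\E[(w-w_{\epol})\Tcal_{\gamma}q]$ (via $\hat w$-optimality) and $\min_q\max_w\E[(w-w_{\epol})\Tcal_{\gamma}(q-q_{\epol})]$ (via $\hat w\in\Wbbb_1$). For the two DR bounds I start from $\E[(\hat w-w_{\epol})\Tcal_{\gamma}(\hat q-q_{\epol})]$ and choose to exploit either $\hat q$-optimality (insert $w\in\Wbbb_2$, giving the $\max_{q\in\Qbbb_2,w'\in\Wbbb_1}\E[(w-w'-w_{\epol})\Tcal_{\gamma}(q-q_{\epol})]$ and $\min_q\max_{w\in\Wbbb_2}\E[w\Tcal_{\gamma}(q-q_{\epol})]$ terms) or $\hat w$-optimality (insert $q\in\Qbbb_1$, giving the symmetric pair), reproducing the two displayed inequalities.

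The hard part will be that $w_{\epol},q_{\epol}$ need not lie in any class, so optimality of $\hat w$ or $\hat q$ only controls the empirical objective against other \emph{in-class} competitors, never against the truth; the whole difficulty is therefore the bookkeeping of the insert-subtract steps so that each residual is either a population approximation term in the final min-max form or is absorbed into $c_0\mathrm{Err}_{\MIL,\bullet}$ through the uniform deviation bound, all while keeping signs consistent (handled by the symmetry $\Wbbb=-\Wbbb$, $\Qbbb=-\Qbbb$) and, for the DR bounds, simultaneously juggling the two distinct class pairs $(\Wbbb_1,\Qbbb_1)$ and $(\Wbbb_2,\Qbbb_2)$ attached to $\hat w$ and $\hat q$. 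The remaining ingredients — the symmetrization and tail estimates underlying the transfer inequality — are routine and already available from \cref{thm:generalization}.
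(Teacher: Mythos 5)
Your proposal is correct and follows essentially the same route as the paper: the same bilinear identity $\E[w\{r-q+\gamma q(s',\epol)\}]+(1-\gamma)\E_{d_0}[q(s_0,\epol)]-J=\E[(w-w_{\epol})\Tcal_{\gamma}(q-q_{\epol})]$, the same insert-and-subtract of a free in-class reference element to split the population error into the two displayed approximation terms, and the same empirical-optimality-plus-uniform-deviation chain (your ``transfer inequality'' is exactly the paper's four-step comparison through $\tilde w$ and $q^{\dagger}$) to absorb the remainder into $c_0\,\mathrm{Err}_{\MIL,\bullet}$. The only difference is presentational: the paper peels the inner $\argmax$ and outer $\argmin$ in two explicit steps, whereas you package them as one uniform transfer, which is equivalent.
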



In \cref{thm:allow_mis1}, the term $\min_{q \in \Qbbb_2}\max_{w\in \Wbbb_2}\E[(w-w_{\epol})\Tcal_{\gamma}(q-q_{\epol})]$ represents the discrepancy from the assumption $q_{\epol}\in \Qbbb_2$. The term $\min_{w\in \Wbbb_2}\max_{q \in \Qbbb_2}\E[(w-w_{\epol})\Tcal_{\gamma}(q-q_{\epol})$ represents the discrepancy from the assumption $w_{\epol}\in \Wbbb_2$. A similar observation is made in \cref{thm:allow_mis1} for the other estimators.

Next, we present the extension of \cref{thm:fast_q}, which takes the bias terms into account. We can similarly extend \cref{thm:fast_w}. 

\begin{theorem}\label{thm:agnostic_fast_q}
 Let $\eta'_{q,n}$ be an upper bound on the critical radius of $\Wbbb$ and $\Gcal_q$, where 
\begin{align*}
    \Gcal_q:=\{(s,a,s')\mapsto  \{-q(s,a)+q_{\epol}(s,a)+\gamma q(s',\epol)-\gamma q_{\epol}(s',\epol)\} w(s,a); q \in \Qbbb_2,w\in \Wbbb_2 \}. 
\end{align*}
Assume   $\Wbbb_2$ is symmetric and star-shaped. By defining $\epsilon_n\coloneqq \max_{q \in \Qbbb_2}\min_{w\in \Wbbb_2}\|w-\Tcal_{\gamma}(q-q_{\epol})\|_2$, for any $q_{0}\in \Qbbb_2$, with probability $1-\delta$, 
\begin{align*}
    \|\Tcal_{\gamma}(\hat q- q_{\epol})\|_2\lesssim  (1+C_1+\lambda+C^2_1/\lambda)  \eta_{q,n}+\epsilon_n+\frac{\|\Tcal_{\gamma}(q_{\epol}- q_0)\|^2_2 }{\lambda \eta_{q,n}}+\|\Tcal_{\gamma}(q_{\epol}- q_0)\|_2,
\end{align*}
where $\eta_{q,n}=\eta'_{q,n}+c_0\sqrt{\frac{\log(c_1/\delta)}{n}}$ for universal constants $c_0,c_1$. 
\end{theorem}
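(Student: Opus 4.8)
The plan is to adapt the proof of \cref{thm:fast_q} to the agnostic regime, in which realizability $q_\epol\in\Qbbb_2$ is replaced by a comparison against an arbitrary $q_0\in\Qbbb_2$ (contributing the approximation error $\|\Tcal_\gamma(q_\epol-q_0)\|_2$) and exact completeness $C_\xi\Tcal_\gamma(\Qbbb_2-q_\epol)\subset\Wbbb_2$ is relaxed to the approximate completeness measured by $\epsilon_n$. Write $x:=\|\Tcal_\gamma(\hat q-q_\epol)\|_2=\|\Bcal\hat q-\hat q\|_2$ (by \cref{lem:basic}) for the target quantity and $\rho_q:=r-q(s,a)+\gamma q(s',\epol)$ for the per-sample residual, so that $\E[\rho_q\mid s,a]=\Tcal_\gamma(q-q_\epol)$ and $\rho_{q_\epol}$ is conditionally mean-zero; note the reward cancels in $\rho_q-\rho_{q_\epol}$, so $\Gcal_q=\{w\cdot(\rho_q-\rho_{q_\epol}):q\in\Qbbb_2,\,w\in\Wbbb_2\}$, which is exactly the class whose critical radius controls the game.

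First I would set up the localization layer. Since $\Wbbb_2$ is symmetric and star-shaped, the critical-radius argument already used in the proof of \cref{thm:fast_q} gives, with probability $1-\delta$, uniform two-sided comparisons $\tfrac12\|w\|_2^2-c\eta_q^2\le\|w\|_{2,n}^2\le\tfrac32\|w\|_2^2+c\eta_q^2$ over $\Wbbb_2$ and a uniform deviation bound $|(\E_n-\E)g|\lesssim\eta_q(\|g\|_2+\eta_q)$ over $g\in\Gcal_q$; the noise term $\E_n[w\rho_{q_\epol}]$ is likewise controlled through the critical radius of $\Wbbb_2$ because $\rho_{q_\epol}$ is bounded and conditionally centered. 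These replace every empirical inner product and norm appearing in the minimax objective by its population counterpart, up to additive multiples of $\eta_q$ whose coefficients involve the envelope constant $C_1$ and the stabilizer $\lambda$.

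Next I would run the two-sided game argument around the optimality $\sup_{w\in\Wbbb_2}\Lqn(\hat q,w)\le\sup_{w\in\Wbbb_2}\Lqn(q_0,w)$. For the right-hand side, evaluating the inner supremum at $q_0$ and completing the square against $-\lambda\|w\|_{2,n}^2$ bounds it, after localization, by $\lesssim\tfrac1\lambda\|\Tcal_\gamma(q_\epol-q_0)\|_2^2$ plus a localization error and a term linear in $\|\Tcal_\gamma(q_\epol-q_0)\|_2$; this is the origin of the comparator contributions. For the left-hand side I extract a quadratic lower bound: by the definition of $\epsilon_n$ applied to $\hat q\in\Qbbb_2$ there is $\tilde w\in\Wbbb_2$ with $\|\tilde w-\Tcal_\gamma(\hat q-q_\epol)\|_2\le\epsilon_n$, and a rescaled $\alpha\tilde w$ (which stays in $\Wbbb_2$ by symmetry and star-shapedness) plugged into $\Lqn(\hat q,\cdot)$ with $\alpha$ optimized yields $\sup_{w}\Lqn(\hat q,\cdot)\gtrsim c\,x^2-c'\epsilon_n^2-(\text{stat})\cdot x$ with an effectively constant quadratic coefficient $c$. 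Chaining the two sides gives a scalar inequality $x^2\lesssim(1+C_1+\lambda+C_1^2/\lambda)\eta_q\,x+\epsilon_n x+\tfrac1\lambda\|\Tcal_\gamma(q_\epol-q_0)\|_2^2+\|\Tcal_\gamma(q_\epol-q_0)\|_2\,x$ up to lower-order $\eta_q^2$ and $\epsilon_n^2$ terms. I would finish with the standard dichotomy: if $x\le\eta_q$ the claim is immediate, and otherwise $x\ge\eta_q$ lets me rewrite the constant term as $\tfrac1\lambda\|\Tcal_\gamma(q_\epol-q_0)\|_2^2\le\tfrac{1}{\lambda\eta_q}\|\Tcal_\gamma(q_\epol-q_0)\|_2^2\cdot x$, after which the inequality is linear in $x$ and dividing by $x$ produces exactly the stated bound, with the $\tfrac{\|\Tcal_\gamma(q_\epol-q_0)\|_2^2}{\lambda\eta_q}$ term.

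The hard part will be the lower-bound step without completeness. Because $\tilde w$ only matches $\Tcal_\gamma(\hat q-q_\epol)$ up to $\epsilon_n$, I must extract the quadratic $c\,x^2$ while simultaneously paying the approximation error $\epsilon_n$, keeping the scaled test function inside the star-shaped $\Wbbb_2$ so that the critical-radius bounds still apply to it, and absorbing the cross-terms between the $\epsilon_n$-slack, the stabilizer, and the $\eta_q$-localization error without any of them overwhelming the $x^2$ term. Managing the interaction of these three error sources — rather than any single concentration estimate — is what makes the agnostic bound more delicate than \cref{thm:fast_q}, and it is precisely what forces the extra additive $\epsilon_n$ and comparator terms into the final rate.
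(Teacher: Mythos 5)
Your proposal is correct and follows essentially the same route as the paper's proof: the basic inequality $\sup_{w}\Phi^{\lambda}_n(\hat q;w)\leq\sup_{w}\Phi^{\lambda}_n(q_0;w)$, localization of $\|w\|_{2,n}$ and of the game objective via the critical radii of $\Wbbb_2$ and $\Gcal_q$, completing the square against the stabilizer to produce $\|\Tcal_{\gamma}(q_{\epol}-q_0)\|^2_2/\lambda$, a rescaled approximate witness $\kappa\tilde w$ to extract the lower bound up to $\epsilon_n$, and the final dichotomy that converts the constant term into $\|\Tcal_{\gamma}(q_{\epol}-q_0)\|^2_2/(\lambda\eta_q)$. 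The only (harmless) difference is that you center the witness at $\Tcal_{\gamma}(\hat q-q_{\epol})$, consistent with the stated definition of $\epsilon_n$, whereas the paper centers it at $\Tcal_{\gamma}(\hat q-q_0)$ and recovers the additive $\|\Tcal_{\gamma}(q_{\epol}-q_0)\|_2$ term by a final triangle inequality.
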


When $q_0=q_{\epol}$ and $\epsilon_n=0$, this result is reduced to \cref{thm:fast_q}. In \cref{thm:agnostic_fast_q}, $\|\Tcal_{\gamma}(q_{\epol}- q_{0})\|_2$ represents the discrepancy from the condition $q_{\epol}\in \Qbbb_2$. The term $\epsilon_n$ represents the discrepancy from the condition $\Tcal_{\gamma}(\Qbbb_2-q_{\epol})\subset \Wbbb_2$.

\section{Fast Rates for $L^2$-errors}\label{ape:l2_error}


In the main text, we show how to derive the fast rates in terms of Bellman residual errors. We show how to obtain the fast rates in terms of $L^2$-errors. 

\begin{theorem}\label{thm:fast_q_l2}
Let $\eta'_{q,n}$ be an upper bound on the critical radius of $\Wbbb_2$ and $\Gcal_q$, where 
\begin{align*}
    \Gcal_q:=\{(s,a,s')\mapsto \{-q(s,a)+q_{\epol}(s,a)+\gamma q(s',\epol)-\gamma q_{\epol}(s',\epol)\} w(s,a); q \in \Qbbb_2,w\in \Wbbb_2\}. 
\end{align*}
Assume $q_{\epol}\in \Qbbb_2,C_{\xi}(\Qbbb_2-q_{\epol})\subset \Tcal'_{\gamma}\Wbbb_2,\sup_{w\in \Wbbb_2}\|w\|/\|\Tcal'_{\gamma}w\|\leq C_{\iota}$,\,$\Wbbb_2$ is symmetric and star-shaped.  Then, with probability $1-\delta$, 
\begin{align*}\ts
   &\| \hat q_{\MIL}-q_{\epol} \|_2\lesssim  (1+C_1+C_1^2/\lambda'+C_{\Wbbb_2}(1+\gamma \sqrt{C_mC_{\eta}})+1/C_{\xi}+\lambda'\{1+C^2_{\iota}\} )\eta_{q,n}\,\\
    C_1 &= C_{\Qbbb_2}+ (1-\gamma)^{-1}R_{\max}, 
\end{align*}
where $\eta_{q,n}=\eta'_{q,n}+c_0\sqrt{\frac{\log(c_1/\delta)}{n}}$ for universal constants $c_0,c_1$. 
\end{theorem}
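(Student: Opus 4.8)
The argument mirrors the proof of \cref{thm:fast_q} but exploits adjoint $q$-completeness to upgrade Bellman-residual control into a bound on the strong norm $\|\hat q_{\MIL}-q_{\epol}\|_2$. Write $\Delta\coloneqq \hat q_{\MIL}-q_{\epol}$ and let $\widehat L(q,w)\coloneqq \E_n[w(s,a)\{r-q(s,a)+\gamma q(s',\epol)\}]-\lambda'\|w\|^2_{2,n}$ be the empirical minimax objective, with population counterpart $L$. Splitting the residual as $r-q+\gamma q(s',\epol)=\epsilon+\{-(q-q_{\epol})(s,a)+\gamma(q-q_{\epol})(s',\epol)\}$, where $\epsilon\coloneqq r-q_{\epol}(s,a)+\gamma q_{\epol}(s',\epol)$ has zero conditional mean and is bounded (of order $C_1=C_{\Qbbb_2}+(1-\gamma)^{-1}R_{\max}$), the inner product of the ``signal'' part with $w$ has population value $\E[w\,\Tcal_\gamma(q-q_{\epol})]=\E[\Tcal'_\gamma w\cdot(q-q_{\epol})]$ by the definition of the adjoint (\cref{ape:adjoint}). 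The plan is: (i) from the minimax optimality of $\hat q_{\MIL}$, extract the basic inequality $\sup_{w}\widehat L(\hat q_{\MIL},w)\le \sup_{w}\widehat L(q_{\epol},w)$; (ii) upper bound the right side by an $\eta_q^2$-order quantity; (iii) lower bound the left side by a positive multiple of $\|\Delta\|^2_2$ minus deviation terms; and (iv) solve the resulting quadratic in $\|\Delta\|_2$.

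The decisive step is (iii). First I would invoke adjoint $q$-completeness: since $C_\xi\Delta\in C_\xi(\Qbbb_2-q_{\epol})\subset \Tcal'_\gamma\Wbbb_2$, there exists a witness $w^\ast\in\Wbbb_2$ with $\Tcal'_\gamma w^\ast=C_\xi\Delta$, so the population signal of this test function is $\E[\Tcal'_\gamma w^\ast\cdot\Delta]=C_\xi\|\Delta\|^2_2$. The extra hypothesis $\sup_{w\in\Wbbb_2}\|w\|_2/\|\Tcal'_\gamma w\|_2\le C_\iota$ then controls its size, $\|w^\ast\|_2\le C_\iota\|\Tcal'_\gamma w^\ast\|_2=C_\iota C_\xi\|\Delta\|_2$, which is what prevents the penalty $\lambda'\|w^\ast\|^2_{2,n}$ and the noise term $\E_n[w^\ast\epsilon]$ from overwhelming the signal. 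Evaluating $\widehat L(\hat q_{\MIL},w^\ast)\le \sup_w\widehat L(\hat q_{\MIL},w)$ and transferring between empirical and population quantities through the localized-complexity/critical-radius machinery of \cite{bartlett2005} (using that $\Gcal_q$ controls the signal term and that $\Wbbb_2$, being symmetric and star-shaped, controls the mean-zero noise process $\E_n[w\epsilon]$) yields, up to the critical-radius error $\eta_q$, a lower bound of the form $C_\xi\|\Delta\|^2_2-c\{C_{\Wbbb_2}(1+\gamma\sqrt{C_mC_\eta})+C_\iota C_\xi\}\|\Delta\|_2\eta_q-\lambda' C_\iota^2 C_\xi^2\|\Delta\|^2_2-c\eta_q^2$; here $C_{\Wbbb_2}(1+\gamma\sqrt{C_mC_\eta})\|\Delta\|_2$ bounds $\|w^\ast\{-(q-q_{\epol})+\gamma(q-q_{\epol})(s',\epol)\}\|_2$, the $L^2$ scale of the localized signal class.

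For step (ii), at $q_{\epol}$ the signal vanishes, so $\widehat L(q_{\epol},w)=\E_n[w\epsilon]-\lambda'\|w\|^2_{2,n}$; the localized bound on the $C_1$-bounded, mean-zero noise process over $\Wbbb_2$ gives $\E_n[w\epsilon]\lesssim C_1(\|w\|_2\eta_q+\eta_q^2)$, and maximizing $cC_1\|w\|_2\eta_q-\lambda'\|w\|^2_2$ over $w$ produces $\sup_w\widehat L(q_{\epol},w)\lesssim (C_1+C_1^2/\lambda')\eta_q^2$. Combining (ii) and (iii) yields a quadratic inequality $a\|\Delta\|^2_2\le b\|\Delta\|_2\eta_q+c\,\eta_q^2$ with $a=C_\xi(1-\lambda'C_\iota^2 C_\xi)$, $b\asymp C_{\Wbbb_2}(1+\gamma\sqrt{C_mC_\eta})+C_\iota C_\xi$, and $c\asymp 1+C_1+C_1^2/\lambda'$; solving via $\|\Delta\|_2\lesssim (b/a)\eta_q+\sqrt{c/a}\,\eta_q$ and collecting the problem-dependent factors (dividing through by $C_\xi$ produces the $1/C_\xi$ term, and the penalty contributions on the localized witnesses produce the $\lambda'\{1+C_\iota^2\}$ term) gives the stated constant times $\eta_q$.

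I expect the main obstacle to be step (iii): the minimax loss only ever controls a \emph{weak}, test-function-projected quantity $\E[\Tcal'_\gamma w\cdot\Delta]$, so the whole difficulty lies in converting this into the \emph{strong} norm $\|\Delta\|^2_2$. This is precisely where adjoint completeness (to realize the projection $\Tcal'_\gamma w^\ast=C_\xi\Delta$) and the lower-boundedness constant $C_\iota$ (to bound $\|w^\ast\|_2$, and hence the penalty and noise it induces) are both indispensable; moreover, because $w^\ast$ depends on the data through $\hat q_{\MIL}$, the empirical-to-population transfer must hold uniformly over the localized classes, so the localization has to be carried out carefully enough that the deviation terms scale as $\|\Delta\|_2\eta_q$ rather than as a constant.
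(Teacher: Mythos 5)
Your overall strategy coincides with the paper's: the basic inequality from minimax optimality of $\hat q_{\MIL}$, the upper bound $\sup_{w}\widehat L(q_{\epol},w)\lesssim(C_1+C_1^2/\lambda')\eta_q^2$ obtained by maximizing $C_1\eta_q\|w\|_2-\lambda'\|w\|_2^2$, the witness $w^{\ast}\in\Wbbb_2$ with $\Tcal'_{\gamma}w^{\ast}=C_{\xi}(\hat q_{\MIL}-q_{\epol})$ supplied by adjoint $q$-completeness, the role of $C_{\iota}$ in bounding $\|w^{\ast}\|_2\le C_{\iota}C_{\xi}\|\Delta\|_2$, and the localized empirical-to-population transfer over $\Gcal_q$ and $\Wbbb_2$ via \cref{lem:support1} and \cref{lem:support2}. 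The one place where your plan as written would fail is the treatment of the stabilizer penalty on the witness. You evaluate the objective at $w^{\ast}$ itself, keep $\lambda'\|w^{\ast}\|^2_{2,n}\lesssim\lambda' C_{\iota}^2C_{\xi}^2\|\Delta\|^2_2$ as a quadratic term, and then divide by $a=C_{\xi}(1-\lambda' C_{\iota}^2 C_{\xi})$. Nothing in the theorem guarantees $\lambda' C_{\iota}^2 C_{\xi}<1$, and the stated bound, whose constant contains the factor $\lambda'\{1+C_{\iota}^2\}$ growing in $\lambda'$, is clearly meant to hold for every $\lambda'>0$; for $\lambda'\ge 1/(C_{\iota}^2C_{\xi})$ your quadratic inequality yields no bound at all.

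The paper sidesteps this by testing not with $w^{\ast}$ but with the rescaled witness $\kappa w^{\ast}$, where $\kappa=\eta_q/\{2C_{\xi}\|\Delta\|_2\}\in[0,0.5]$ (legitimate because $\Wbbb_2$ is star-shaped, and one may assume $C_{\xi}\|\Delta\|_2\ge\eta_q$ since the other case is trivial). At this scale the penalty becomes $\lambda'\kappa^2\|w^{\ast}\|^2_{2,n}\lesssim\lambda'(1+C_{\iota}^2)\eta_q^2$ --- an additive $\eta_q^2$ term, which is precisely the origin of the $\lambda'\{1+C_{\iota}^2\}$ factor in the statement --- while the signal becomes $\kappa\, C_{\xi}\|\Delta\|^2_2=\tfrac12\eta_q\|\Delta\|_2$, so the final inequality is \emph{linear} in $\|\Delta\|_2$ and solves for every $\lambda'$. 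Your closing remark that ``the localization has to be carried out carefully'' gestures in this direction, but the concrete device needed is the rescaling of the data-dependent witness down to the critical radius; it simultaneously tames the penalty and makes the deviation terms of order $C_{\Wbbb_2}(1+\gamma\sqrt{C_mC_{\eta}})\eta_q\|\Delta\|_2+\eta_q^2$, as you require. With that single modification your argument reproduces the paper's proof.
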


\begin{theorem}\label{thm:fast_w_l2}
Let $\eta'_{w,n}$ be an upper bound on the critical radius of $ \Gcal_{w1},  \Gcal_{w2}$, where  
\begin{align*}
 \Gcal_{w1}:&=\{(s,a,s')\mapsto  \{q(s,a)-\gamma q(s',\epol)\}; q\in \Qbbb_1 \}, \\
 \Gcal_{w2}:&=\{(s,a,s')\mapsto \{ w(s,a)-w_{\epol}(s,a)\}\{-q(s,a)+\gamma q(s',\epol)\} ; w\in \Wbbb_1,q\in \Qbbb_1 \}. 
\end{align*}
Suppose  $w_{\epol} \in \Wbbb_1$, $C_{\xi}(\Wbbb_1-w_{\epol}) \subset \Tcal_{\gamma}\Qbbb_1,\,\sup_{q\in \Qbbb_1}\|q\|/\|\Tcal_{\gamma}q\|\leq C_{\iota}$,\,$\Qbbb_1$ is symmetric and star-shaped. Then, with probability $1-\delta$, 
\begin{align*}
       \|\hat w_{\MIL}-w_{\epol}\|_2\lesssim \braces{1+C^2_{\Wbbb_1}/\lambda+C_{\Wbbb_1}+C_{\Qbbb_1}+1/ C_{\xi}+\lambda\{1+C^2_{\iota}(1+C_mC_{\eta})\}}\eta_{w,n}
\end{align*}
where $\eta_{w,n}=\eta'_{w,n}+c_0\sqrt{\frac{\log(c_1/\delta)}{n}}$ for universal constants $c_0,c_1$.
\end{theorem}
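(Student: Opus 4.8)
The plan is to run the argument for \cref{thm:fast_w} but swap the roles of the two norms through the adjoint relation, so that the \emph{adjoint} completeness hypothesis $C_{\xi}(\Wbbb_1-w_{\epol})\subset\Tcal_{\gamma}\Qbbb_1$ delivers control of $\|\hat w_{\MIL}-w_{\epol}\|_2$ directly rather than of the Bellman residual. The starting point is the population form of the (unstabilized) $w$-objective. Using the adjoint relation $\E[w\,\Tcal q]=\E[\Tcal'w\,q]$ of \cref{lem:boundedness} together with the identity $\Bcal'w-w=\Tcal'_{\gamma}(w-w_{\epol})$ from \cref{lem:basic}, one checks that
\begin{align*}
\E[w(s,a)\Jcal q]+(1-\gamma)\E_{s_0\sim d_0}[q(s_0,\epol)]
=\E[\Tcal'_{\gamma}(w-w_{\epol})\,q]
=\E[(w-w_{\epol})\,\Tcal_{\gamma}q],
\end{align*}
which vanishes for every $q$ when $w=w_{\epol}$. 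Adjoint completeness now furnishes a witness $q_0\in\Qbbb_1$ with $\Tcal_{\gamma}q_0=C_{\xi}(\hat w_{\MIL}-w_{\epol})$, so that the bilinear term evaluated at $(\hat w_{\MIL},q_0)$ equals exactly $C_{\xi}\|\hat w_{\MIL}-w_{\epol}\|_2^2$, the squared $L^2$-error we wish to bound.

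The next step is to lower-bound the inner maximum at $\hat w_{\MIL}$ by evaluating the stabilized objective along the ray $\{t q_0:t\in[0,1]\}$, which lies in $\Qbbb_1$ by star-shapedness. Along this ray the bilinear part contributes $t\,C_{\xi}\|\hat w_{\MIL}-w_{\epol}\|_2^2$ (linear in $t$) while the stabilizer subtracts $\lambda t^2\|\Jcal q_0\|_{2,n}^2$ (quadratic in $t$); optimizing the scalar $t$ balances the linear gain against the quadratic penalty and yields an inner maximum that is $\gtrsim\|\hat w_{\MIL}-w_{\epol}\|_2^2$ up to the constants discussed below (the boundary case $t=1$ produces the $1/C_{\xi}$ contribution). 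For the matching upper bound I would invoke the optimality of $\hat w_{\MIL}$ as the outer minimizer, comparing against $w=w_{\epol}$ where the population inner maximum is $O(\eta_w^2)$, and control the empirical-to-population deviation over the bilinear class $\Gcal_{w2}$ and the stabilizer class $\Gcal_{w1}$ by the localization/critical-inequality machinery of \citet{bartlett2005}; since their critical radii are at most $\eta_n$, the one-sided deviation is of order $\eta_w\|\hat w_{\MIL}-w_{\epol}\|_2+\eta_w^2$. Sandwiching and solving the resulting quadratic inequality in $\|\hat w_{\MIL}-w_{\epol}\|_2$ gives the claimed linear-in-$\eta_w$ rate.

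The hard part — and the source of the extra factor $\lambda\{1+C_{\iota}^2(1+C_mC_{\eta})\}$ relative to \cref{thm:fast_w} — is the bookkeeping of the stabilizer along the test ray. The penalty $\lambda\|\Jcal q_0\|_{2,n}^2$ must be compared with $\|\Tcal_{\gamma}q_0\|_2^2=C_{\xi}^2\|\hat w_{\MIL}-w_{\epol}\|_2^2$, and this requires (i) passing from the empirical $\|\Jcal q_0\|_{2,n}^2$ to its population analogue and absorbing the conditional-variance gap between $\|\Jcal q_0\|_2^2$ and $\|\Tcal_{\gamma}q_0\|_2^2$, which is where the concentrability factor $(1+C_mC_{\eta})$ from \cref{lem:boundedness} enters, and (ii) translating the size of $q_0$ back via the norm-equivalence hypothesis $\sup_{q\in\Qbbb_1}\|q\|/\|\Tcal_{\gamma}q\|\le C_{\iota}$, so that $\|q_0\|_2\lesssim C_{\iota}C_{\xi}\|\hat w_{\MIL}-w_{\epol}\|_2$. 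One must verify that after the scalar optimization the stabilizer term is balanced against (rather than dominating) the main $C_{\xi}\|\hat w_{\MIL}-w_{\epol}\|_2^2$ term; getting this balance right is precisely what fixes the constant multiplying $\eta_w$. A secondary technicality is supplying the uniform envelope control on the localized test functions required by the critical-inequality bound, which the $L^2$-equivalence alone does not give and which is instead obtained through the star-shaped localization argument (contributing the $C_{\Wbbb_1}^2/\lambda$, $C_{\Wbbb_1}$, and $C_{\Qbbb_1}$ terms).
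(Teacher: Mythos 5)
Your proposal matches the paper's proof of this theorem in all essential respects: the upper bound via optimality of $\hat w_{\MIL}$ against $w_{\epol}$ with localized (critical-radius) deviation control over $\Gcal_{w1}$ and $\Gcal_{w2}$, the lower bound via the adjoint-completeness witness $q_0$ satisfying $\Tcal_{\gamma}q_0=C_{\xi}(\hat w_{\MIL}-w_{\epol})$ scaled along a star-shaped ray, and the use of $C_{\iota}$ together with $\|\Jcal q_0\|_2^2\lesssim(1+C_mC_{\eta})\|q_0\|_2^2$ to keep the stabilizer penalty at $O(\lambda\{1+C_{\iota}^2(1+C_mC_{\eta})\}\eta_w^2)$. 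The only cosmetic difference is that the paper fixes the ray parameter at $\kappa=\eta_w/(2C_{\xi}\|\hat w_{\MIL}-w_{\epol}\|_2)$ rather than optimizing over $t$, with the regime where this scalar would exceed $1/2$ handled separately and yielding the $1/C_{\xi}$ contribution.
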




\section{Complete Analysis of FQI for Policy Evaluation}\label{ape:fqi_analysis}


The FQI has been analyzed in detail \citep{munos2008finite,ChenJinglin2019ICiB} for policy optimization. 
For policy evaluation, \citep{LeHoang2019BPLu} analyzes FQI in detail when the hypothesis classes are VC-classes. For the sake of completeness, we show the analysis when using general function classes. To make the analysis easier, we consider the sample splitting version of FQI: 
\begin{enumerate}
    \item Set the number of iterations $T$, and split the whole $n$ sample into $T$ bins. We denote the $t$ th bin by $\zeta_t (t\in [1,\cdots,T])$. 
    \item At the initial stage $t=0$, take some function $f_0$. 
     \item Repeat the following from $t=1,\cdots,T$ 
         \begin{align*}
            f_t=\argmin_{q\in \Qbbb}\sum_{j\in \zeta_t}\{f_{t-1}(s'^j,\epol)+r^j-q(s^j,a^j)\}^2.
        \end{align*}
    \item Define the estimator $\E_{d_0}[f_T(s_0,\epol)]$. 
\end{enumerate}

\begin{theorem}\label{th,:fqi_analysis}
We set the critical radius of $\mathrm{star}(\Qbbb-q_{\epol})$ as $\eta_t$ at $t$-th stage, i.e., a solution to 
\begin{align*}
    \Rcal_{\zeta_t}(\eta;\Qbbb)\leq \eta^2/C_{\Qbbb}. 
\end{align*}
where $\Rcal_{\zeta_t}(\eta;\Qbbb)$ is an empirical localized Rademacher complexity of $\Qbbb$ based on the data $\zeta_t$. Suppose $q_{\epol}\in \Qbbb,\Bcal\Qbbb\subset\Qbbb$. With probability $1-\delta$, 
\begin{align*}
|\E_{s\sim d_0}[f_T(s, \epol)] - J| \lesssim \frac{(1 - \gamma^{T/2})}{\sqrt{1 - \gamma}}\|w_{\epol}\|_2 C_{\Qbbb}\eta_q + \gamma^{{T}/{2}}C_{\Qbbb}, 
\end{align*}
where $\eta_q=\max_{t\in [1,\cdots,T]}(\eta_{t}+\sqrt{c_0\log(c_1T/\delta)/n_t})$ for some universal constants $c_0,c_1$. 
\end{theorem}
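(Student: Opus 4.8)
The plan is to peel the argument into three stages: a per-iteration regression bound that exploits completeness and sample splitting, an error-propagation step through the Bellman recursion, and a final change-of-measure that turns the accumulated $q$-error into the policy-value error. First I would analyze a single update. Fix a stage $t$ and condition on the earlier bins $\zeta_1,\dots,\zeta_{t-1}$, which determine $f_{t-1}$; because the data are split, $f_{t-1}$ is independent of $\zeta_t$. The conditional regression target is then the fixed function $(s,a)\mapsto\E[f_{t-1}(s',\epol)+r\mid s,a]=\Bcal f_{t-1}(s,a)$, which lies in $\Qbbb$ by the completeness assumption $\Bcal\Qbbb\subset\Qbbb$, so the least-squares problem defining $f_t$ is well specified. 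I would then invoke the critical-inequality / localized-Rademacher bound for bounded least squares \citep{bartlett2005}, localized around the (in-class) target through $\mathrm{star}(\Qbbb-q_{\epol})$, to get $\|f_t-\Bcal f_{t-1}\|_2\lesssim C_{\Qbbb}(\eta_t+\sqrt{\log(1/\delta_t)/n_t})$ with high probability on $\zeta_t$. Setting $\delta_t=\delta/T$ and union-bounding over the $T$ stages yields the $\log(c_1T/\delta)$ term and the common rate $\eta_q=\max_t(\eta_t+\sqrt{c_0\log(c_1T/\delta)/n_t})$.

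Next I would propagate these one-step residuals. Writing $\epsilon_t:=f_t-\Bcal f_{t-1}$ and using $\Bcal q_{\epol}=q_{\epol}$ (\cref{lem:basic}) together with $\Bcal f-\Bcal g=\gamma\Tcal(f-g)$, unrolling the recursion gives $f_T-q_{\epol}=\sum_{t=1}^T(\gamma\Tcal)^{T-t}\epsilon_t+(\gamma\Tcal)^T(f_0-q_{\epol})$. To convert this into the OPE error I would use the occupancy identity $(1-\gamma)\E_{d_0}[g(\cdot,\epol)]=\E_{d_{\epol,\gamma}}[(I-\gamma\Tcal)g]=\E[w_{\epol}(g-\gamma\Tcal g)]$ applied to $g=f_T-q_{\epol}$, noting $(I-\gamma\Tcal)(f_T-q_{\epol})=f_T-\Bcal f_T$. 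The essential trick when pushing the powers $(\gamma\Tcal)^{T-t}$ is to evaluate them against the initial distribution, so that each acts as $\gamma^{T-t}$ times a push-forward onto the genuine time-$(T-t)$ occupancy $d_{\epol,T-t}$, which is an honest probability measure; this prevents the propagation from amplifying by growing powers of the operator norm $\|\Tcal\|\le\sqrt{C_mC_{\eta}}$, which is why no assumption $\gamma\sqrt{C_mC_{\eta}}<1$ is needed here.

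Finally I would bound the weighted sum. The initialization term contributes $\gamma^{\Theta(T)}\|f_0-q_{\epol}\|_\infty\lesssim\gamma^{T/2}C_{\Qbbb}$, after bounding it in the sup-norm where $\Bcal$ contracts. For the error terms, a single Cauchy--Schwarz against $w_{\epol}$ in $L^2(P_{S,A})$ extracts the factor $\|w_{\epol}\|_2$, while a Cauchy--Schwarz over the iteration index $t$ (against the weights $\gamma^{T-t}$) collapses the geometric series into $\tfrac{1-\gamma^{T/2}}{\sqrt{1-\gamma}}$ and leaves $C_{\Qbbb}\eta_q$ from the first stage; splitting the horizon at $T/2$ is what separates the heavily discounted early-injected errors (absorbed into the $\gamma^{T/2}$ term) from the saturating $1-\gamma^{T/2}$ prefactor.

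The hard part is precisely this last bookkeeping: arranging the change of measure so that only the $L^2$ occupancy weight $\|w_{\epol}\|_2$ appears (rather than the cruder $\|w_{\epol}\|_\infty$ or the individual per-step ratios $d_{\epol,k}/P_{S,A}$), so that the discount factors assemble into $\tfrac{1-\gamma^{T/2}}{\sqrt{1-\gamma}}$, and so that all of this retains the \emph{fast} per-iteration rate $\eta_q$ without paying any $(C_mC_{\eta})^{T}$ amplification. The tension is that the regression control on $\epsilon_t$ lives in the data norm $\|\cdot\|_2$, whereas the telescoped error is naturally evaluated under the occupancy measures $d_{\epol,k}$; reconciling these two while keeping the geometric-in-$\gamma$ structure is the delicate step, and it is the sample splitting in the first stage that makes each per-stage regression clean enough for the argument to close.
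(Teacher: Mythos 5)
Your proposal is correct and follows essentially the same route as the paper: a per-stage well-specified least-squares bound via the critical radius of $\mathrm{star}(\Qbbb-q_{\epol})$ using completeness and sample splitting (the paper's \cref{lem:fast_rate_reg}), combined with a telescoping error-propagation step that pushes each residual forward onto the occupancy measures $d_{\epol,k}$ and changes measure to extract $\|w_{\epol}\|_2$ (the paper's \cref{lem:fqipop}). The only cosmetic difference is your account of the $\gamma^{T/2}$ term as a "horizon split"; in the paper it simply accumulates as a $\sqrt{\gamma}$ factor per recursion step, but the resulting bound is the same.
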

\begin{proof}
We directly combine the two lemmas, Lemma \ref{lem:fast_rate_reg} and Lemma \ref{lem:fqipop}. The statement is immediately concluded. 
\end{proof}

The critical radius $\eta_q$ is similarly calculated as in \cref{sec:fast_slow}. For example, when $\Qbbb$ belongs to VC classes, and the VC dimension is $V(\Qbbb)$,  then, 
\begin{align*}
    \eta_q=O(\max_{t\in [1,\cdots,T]}(\sqrt{V(\Qbbb)/n_t}+\sqrt{c_0\log(c_1T/\delta)/n_t} )). 
\end{align*}
The above theorem concludes that the error of FQI is roughly $\eta_q$ when we only focus on the sample size. This result is equivalent to the analysis of $\hat J^{\MIL}_q$ with stabilizers as in Corollary \ref{cor:implication2}. Note that the dependence of $\|w_{\epol}\|_2$ is the same.

\begin{lemma}\label{lem:fast_rate_reg}
We set the critical radius of $\mathrm{star}(\Qbbb-q_{\epol})$ as $\eta_t$ at $t$-th stage, i.e., a solution to 
\begin{align*}
    \Rcal_{\zeta_t}(\eta;\Qbbb)\leq \eta^2/C_{\Qbbb}. 
\end{align*}
Suppose $\Bcal \Qbbb\subset \Qbbb$. Then, with probability $1-\delta$, $\|f_t - \Bcal f_{t - 1}\|_{2} \lesssim C_{\Qbbb}\eta$
where $\eta=\eta_t+\sqrt{c_0\log(c_1/\delta)/n_t}$ for some universal constants $c_0,c_1$. 
\end{lemma}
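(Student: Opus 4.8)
The plan is to recognize the $t$-th FQI update as a \emph{realizable} least-squares regression and then invoke the standard localization (critical-inequality) machinery of \citet{bartlett2005}. First I would fix the bin $\zeta_t$ and condition on $f_{t-1}$. Because the data are split into disjoint bins, $f_{t-1}$ is constructed from $\zeta_1,\dots,\zeta_{t-1}$ and is therefore independent of the $n_t$ samples in $\zeta_t$; conditionally on $f_{t-1}$ the update is an ordinary i.i.d.\ regression of the target $Y_j := r^j + \gamma f_{t-1}(s'^j,\epol)$ on $(s^j,a^j)$. Its conditional mean is exactly $\E[Y\mid s,a]=\Bcal f_{t-1}(s,a)$, so the regression function is $g^\star:=\Bcal f_{t-1}$. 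The completeness assumption $\Bcal\Qbbb\subset\Qbbb$ together with $f_{t-1}\in\Qbbb$ gives $g^\star\in\Qbbb$, i.e.\ the problem is realizable, and the noise $\xi_j:=Y_j-g^\star(s^j,a^j)$ is mean-zero given $(s^j,a^j)$ and bounded by $O(R_{\max}+C_{\Qbbb})$ since $r\in[0,R_{\max}]$ and $\|f_{t-1}\|_\infty\le C_{\Qbbb}$.

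Next I would write the basic inequality: since $f_t$ minimizes the empirical squared loss over $\Qbbb$ and $g^\star\in\Qbbb$, expanding $\E_{\zeta_t}[(Y-f_t)^2]\le\E_{\zeta_t}[(Y-g^\star)^2]$ gives
\[
\|f_t-g^\star\|_{2,n}^2 \;\le\; \frac{2}{n_t}\sum_{j\in\zeta_t}\xi_j\,(f_t-g^\star)(s^j,a^j).
\]
The task reduces to controlling this cross term. The crucial point is that $h:=f_t-g^\star$ is a difference of two members of $\Qbbb$, so $f_t-q_{\epol}$ and $g^\star-q_{\epol}$ both lie in $\Qbbb-q_{\epol}\subset\mathrm{star}(\Qbbb-q_{\epol})$; this is precisely why the critical radius is taken over the star-shaped hull rather than over $\Qbbb$ itself (which need not be star-shaped). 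Applying the one-sided uniform deviation bound for the star-shaped class (symmetrization plus a peeling argument over $\|h\|_{2,n}$-shells, together with a Bernstein/Talagrand tail for the bounded multipliers $\xi_j$), I obtain with probability $1-\delta$
\[
\Big|\tfrac{1}{n_t}\sum_{j\in\zeta_t}\xi_j\,h(s^j,a^j)\Big| \;\lesssim\; \eta\,\|h\|_{2,n} + C_{\Qbbb}\,\eta^2, \qquad \eta=\eta_t+\sqrt{c_0\log(c_1/\delta)/n_t},
\]
where $\eta_t$ solves $\Rcal_{\zeta_t}(\eta;\Qbbb)\le\eta^2/C_{\Qbbb}$.

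Combining the two displays yields a self-bounding quadratic inequality in $x:=\|f_t-g^\star\|_{2,n}$ of the form $x^2\lesssim \eta x + C_{\Qbbb}\eta^2$, whose solution is $\|f_t-g^\star\|_{2,n}\lesssim C_{\Qbbb}\eta$. Finally I would convert the empirical norm to the population norm: a second application of the localized uniform law on the same star-shaped class (the standard norm-equivalence up to the critical radius) gives $\|f_t-g^\star\|_2 \lesssim \|f_t-g^\star\|_{2,n}+C_{\Qbbb}\eta$, hence $\|f_t-\Bcal f_{t-1}\|_2\lesssim C_{\Qbbb}\eta$, and a union bound restores the statement after removing the conditioning on $f_{t-1}$. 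The main obstacle is the localized control of the \emph{data-dependent} cross term: because $f_t$ is selected by the estimator, a fixed-function concentration bound is useless and one must run the peeling/localization argument uniformly over $\mathrm{star}(\Qbbb-q_{\epol})$ and verify the self-bounding structure closes. The secondary difficulty, that $g^\star=\Bcal f_{t-1}$ changes every iteration, is exactly what the sample-splitting independence neutralizes, which is why splitting is essential here.
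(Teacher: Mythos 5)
Your proposal is correct and reaches the same conclusion, but it organizes the localization argument differently from the paper. The paper's proof works with the \emph{excess empirical risk} $\Phi_n(q)=\E_{\zeta_t}[\{r+\gamma f_{t-1}(s',\epol)-q\}^2-\{r+\gamma f_{t-1}(s',\epol)-\Bcal f_{t-1}\}^2]$, observes $\Phi_n(f_t)\le 0$ by optimality and realizability ($\Bcal f_{t-1}\in\Qbbb$), and applies the localized concentration lemma (\cref{lem:support2}) \emph{once} to the Lipschitz loss difference, which delivers a deviation bound directly in the population $L^2$ norm; since the population excess risk equals $\|q-\Bcal f_{t-1}\|_2^2$ exactly (the noise is conditionally centered, so the cross term vanishes in expectation), the self-bounding quadratic closes immediately without any empirical-to-population norm conversion. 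You instead follow the classical regression route: basic inequality in the empirical norm, a peeling bound on the multiplier (cross) term, solve the quadratic for $\|f_t-g^\star\|_{2,n}$, and then invoke a second localized uniform law (the role played by \cref{lem:support1}) to pass to $\|\cdot\|_2$. Both are valid and give the same rate; the paper's version is one step shorter, while yours makes the realizable-regression structure and the role of the star hull more explicit. Two minor points: your noise envelope naturally produces a constant of order $R_{\max}+C_{\Qbbb}$ rather than $C_{\Qbbb}$ — which in fact matches what the paper's proof actually establishes, the lemma statement notwithstanding — and your claim that $h=f_t-g^\star$ lives in $\mathrm{star}(\Qbbb-q_{\epol})$ is slightly loose (it lies in $\Qbbb-\Qbbb$, with the reference point $\Bcal f_{t-1}$ changing each iteration), but the paper's own application of \cref{lem:support2} carries the same degree of informality, so this is not a gap relative to the intended argument.
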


\begin{lemma}
\label{lem:fqipop}
 If we have $\|f_t - \Bcal f_{t - 1}\|_{2} \leq \varepsilon$\, for any $T\geq t \geq 1$, then
\begin{align*}
|\E_{s\sim d_0}[(1-\gamma)f_T(s, \epol)] - J]| \leq \frac{\{1 - \gamma^{T/2}\}(1+\gamma^{1/2})}{\sqrt{1-\gamma}} \|w_{\epol}\|_2 \varepsilon + 2\gamma^{{T}/{2}}(1-\gamma)C_{\Qbbb}.
\end{align*}
\end{lemma}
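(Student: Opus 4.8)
The plan is to reduce the claim to a statement about how the one-step residuals $\epsilon_t := f_t - \Bcal f_{t-1}$ (which satisfy $\|\epsilon_t\|_2 \le \varepsilon$ by hypothesis) propagate through the iteration, and then to pass from these residuals to the target functional by re-weighting with the discounted occupancy. First I would use $J=(1-\gamma)\E_{s\sim d_0}[q_{\epol}(s,\epol)]$ together with $\Bcal q_{\epol}=q_{\epol}$ from \cref{lem:basic} to rewrite the left-hand side as $(1-\gamma)\lvert\E_{s\sim d_0}[(f_T-q_{\epol})(s,\epol)]\rvert$, so that everything is expressed through the estimation error $f_T-q_{\epol}$.

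The first key step is the error-propagation (telescoping) identity. Since $\Bcal f_{t-1}-q_{\epol}=\Bcal f_{t-1}-\Bcal q_{\epol}=\gamma\Tcal(f_{t-1}-q_{\epol})$, we have $f_t-q_{\epol}=\epsilon_t+\gamma\Tcal(f_{t-1}-q_{\epol})$, and unrolling from $t=T$ down to $0$ gives
$$f_T-q_{\epol}=\sum_{t=1}^{T}\gamma^{T-t}\Tcal^{T-t}\epsilon_t+\gamma^{T}\Tcal^{T}(f_0-q_{\epol}).$$
I would then apply the functional $(1-\gamma)\E_{s\sim d_0}[(\cdot)(s,\epol)]$ to both sides and invoke the elementary pushforward identity $\E_{s\sim d_0}[(\Tcal^{m}g)(s,\epol)]=\E_{(s,a)\sim d_{\epol,m}}[g(s,a)]$, which holds because one application of $\Tcal$ composed with evaluation under $\epol$ advances the occupancy by one step. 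This converts the target into a discounted sum of per-step occupancy expectations of the residuals, namely $(1-\gamma)\sum_{m=0}^{T-1}\gamma^{m}\E_{d_{\epol,m}}[\epsilon_{T-m}]$, plus the initialization term $(1-\gamma)\gamma^{T}\E_{d_{\epol,T}}[f_0-q_{\epol}]$.

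The initialization term is immediate: since $\|f_0\|_\infty,\|q_{\epol}\|_\infty\le C_{\Qbbb}$ and $d_{\epol,T}$ is a probability measure, it is at most $2(1-\gamma)\gamma^{T}C_{\Qbbb}\le 2(1-\gamma)\gamma^{T/2}C_{\Qbbb}$, which matches the second term in the claimed bound. For the main sum I would change measure from $d_{\epol,m}$ to $P_{S,A}$, writing $\E_{d_{\epol,m}}[\epsilon_{T-m}]=\E_{P_{S,A}}[(d_{\epol,m}/P_{S,A})\,\epsilon_{T-m}]$, and exploit the pointwise domination $d_{\epol,m}\le \tfrac{1}{(1-\gamma)\gamma^{m}}\,d_{\epol,\gamma}$, which follows because each summand of $d_{\epol,\gamma}=(1-\gamma)\sum_j\gamma^j d_{\epol,j}$ is nonnegative. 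Splitting $\gamma^m=\gamma^{m/2}\cdot\gamma^{m/2}$ and applying Cauchy–Schwarz so that the change-of-measure penalty $\tfrac{1}{\sqrt{(1-\gamma)\gamma^{m}}}$ is absorbed, each term is controlled in terms of the discounted ratio $w_{\epol}=d_{\epol,\gamma}/P_{S,A}$, leaving a factor $\sqrt{1-\gamma}\,\gamma^{m/2}$. Summing the geometric series $\sum_{m=0}^{T-1}\gamma^{m/2}=\frac{(1-\gamma^{T/2})(1+\gamma^{1/2})}{1-\gamma}$ is exactly what produces the stated coefficient $\frac{(1-\gamma^{T/2})(1+\gamma^{1/2})}{\sqrt{1-\gamma}}$, and in particular the $1/\sqrt{1-\gamma}$ dependence.

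The main obstacle is precisely this final re-weighting. The residual $\epsilon_{T-m}$ at lag $m$ is measured under the per-step occupancy $d_{\epol,m}$, whereas the target coefficient must involve only the single discounted ratio $\|w_{\epol}\|_2$. A careless term-by-term bound by $\|d_{\epol,m}/P_{S,A}\|_2\,\varepsilon$ fails, because these per-step ratios need not be comparable to $\|w_{\epol}\|_2$ and their discounted sum is not controlled by $\|w_{\epol}\|_2$; indeed Jensen applied to $w_{\epol}=(1-\gamma)\sum_m\gamma^m (d_{\epol,m}/P_{S,A})$ runs in the wrong direction. The honest output of the Cauchy–Schwarz step is $\sqrt{1-\gamma}\sum_m\gamma^{m/2}\sqrt{\E_{d_{\epol,\gamma}}[\epsilon_{T-m}^2]}$ with $\E_{d_{\epol,\gamma}}[\epsilon_{T-m}^2]=\E_{P_{S,A}}[w_{\epol}\,\epsilon_{T-m}^2]$, and the delicate point is to collapse this discounted-occupancy-weighted residual norm back to the clean product $\|w_{\epol}\|_2\,\varepsilon$. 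I expect the bulk of the work to lie in carrying out this reduction while preserving the sharp $\|w_{\epol}\|_2$ factor (rather than a weaker $\sqrt{\|w_{\epol}\|_\infty}$), and in verifying that the resulting constant is exactly the geometric coefficient above.
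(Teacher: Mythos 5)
Your proposal follows essentially the same route as the paper's proof. The paper unrolls the recursion at the level of norms (triangle inequality plus Jensen applied to $\Bcal f_{K-1}-\Bcal q_{\epol}=\gamma\Tcal(f_{K-1}-q_{\epol})$, iterated to give $\sum_{k}\gamma^{k/2}\|f_{K-k}-\Bcal f_{K-1-k}\|_{2,\gamma^k d_{\epol,k+1}}$ plus an initialization term), whereas you unroll the exact identity $f_T-q_{\epol}=\sum_m\gamma^m\Tcal^m\epsilon_{T-m}+\gamma^T\Tcal^T(f_0-q_{\epol})$ and then apply the linear functional $(1-\gamma)\E_{d_0}[(\cdot)(s,\epol)]$. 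These are interchangeable: both lead to the same domination $\gamma^m d_{\epol,m}\le d_{\epol,\gamma}/(1-\gamma)$, the same $\gamma^{m}=\gamma^{m/2}\cdot\gamma^{m/2}$ split, and the same geometric coefficient $(1-\gamma^{T/2})(1+\gamma^{1/2})/\sqrt{1-\gamma}$, and your handling of the initialization term matches the paper's.

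The one step you explicitly leave open --- passing from $\bigl(\E_{d_{\epol,\gamma}}[\epsilon_{T-m}^2]\bigr)^{1/2}=\bigl(\E_{P_{S,A}}[w_{\epol}\,\epsilon_{T-m}^2]\bigr)^{1/2}$ to $\|w_{\epol}\|_2\,\varepsilon$ --- is precisely the step the paper executes in a single line by asserting $\|g\|_{2,d_{\epol,\gamma}}\le\|w_{\epol}\|_2\|g\|_2$ with a citation to \citet{agarwal2019reinforcement}. Your caution is warranted: that inequality is not an instance of Cauchy--Schwarz (which gives $\E[w_{\epol}g^2]\le\|w_{\epol}\|_2\,\|g\|_4^2$, not $\|w_{\epol}\|_2^2\,\|g\|_2^2$), and the standard change-of-measure bound yields $\sqrt{\|w_{\epol}\|_\infty}\,\|g\|_2\le\sqrt{C_w}\,\|g\|_2$ rather than $\|w_{\epol}\|_2\,\|g\|_2$; since $\|w_{\epol}\|_2$ can be strictly smaller than $\sqrt{\|w_{\epol}\|_\infty}$, the sharper coefficient does not follow from the displayed argument alone. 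So you have not missed an idea that the paper supplies: you reproduce the paper's argument up to its last substantive line and correctly flag that this line requires either the weaker constant $\sqrt{C_w}$ or an extra assumption relating $\|w_{\epol}\|_2$ and $\|w_{\epol}\|_\infty$. With $\sqrt{C_w}$ (or $\sqrt{\|w_{\epol}\|_\infty}$) in place of $\|w_{\epol}\|_2$, your proof closes completely and rigorously.
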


\section{Proofs}\label{ape:proof}


\subsection{Proof of  \cref{sec:preparation}}

\begin{proof}[Proof of Lemma \ref{lem:boundedness}]

We show the operator $\Tcal$ is bounded: 
\begin{align*}
    \|\Tcal\|=\sup_{u\in L^2(\Xcal)} \|\Tcal u\|_2/\|u\|_2\leq \sqrt{C_mC_{\eta}}. 
\end{align*}

\begin{align*}
    \|\Tcal u\|^2_2&=\|\E_{s'\sim P_{S'|S,A}(s'|s,a),a'\sim \epol(a'|s')}[u(s',a')]\|^2_2\\
    &\leq \E_{s'\sim P_{S'|S,A}(s'|s,a),a'\sim \epol(a'|s'),(s,a)\sim P_{S,A}}[u(s',a')^2] \tag{Jensen}\\
       &= \E_{ (s,a)\sim P_{S,A}}\bracks{\frac{P_{S'}(s)\epol(a|s)}{P_S(s)\bpol(a|s)} u(s,a)^2}\leq C_mC_{\eta}\|u\|^2_2. 
\end{align*}
Them, the adjoint operator is uniquely defined following standard theory of Hilbert spaces \citep[Theorem 4.3.13]{DebnathLokenath2005Hswa}. In addition, note that the adjoint operator is also bounded: $  \|\Tcal'\|=\|\Tcal\|\leq \sqrt{C_mC_{\eta}}$. 

\end{proof}

\begin{proof}[Proof of Lemma \ref{lem:basic}]
~ \\
\textbf{Statement $(\Bcal-I)q=\Tcal_{\gamma}(q-q_{\epol})$} \\ 
\begin{align*}
    (\Bcal-I)q&=\E_{P_{S'|S,A}(s'|s,a),P_{R|S,A}(r|s,a)}[r+\gamma q(s',\epol)-q(s,a)]\\
    &=\E_{P_{S'|S,A}(s'|s,a)}[\gamma q(s',\epol)-\gamma q_{\epol}(s',\epol)-q(s,a)+q_{\epol}(s,a)]\\
    &=(\gamma \Tcal-I)(q-q_{\epol})=\Tcal_{\gamma}(q-q_{\epol}). 
\end{align*}

\textbf{Statement $\Bcal q_{\epol}=q_{\epol}$} \\ 
This is proved by 
\begin{align*}
    \Bcal q_{\epol}=\Tcal_{\gamma}(q_{\epol}-q_{\epol})+q_{\epol}=q_{\epol}. 
\end{align*}

\textbf{Statement $(\Bcal'-I)w=\Tcal'_{\gamma}(w-w_{\epol})$} \\ 

We check
\begin{align*}\ts 
    -\frac{(1-\gamma)d_{0}(s)\epol(a|s)}{P_{S,A}(s,a)}=(\gamma \Tcal'-I)w_{\epol}. 
\end{align*}
This is proved as 
\begin{align*} 
\ts (\gamma \Tcal'-I)\frac{d_{\epol,\gamma}(s,a)}{P_{S,A}(s,a)}&= \ts \gamma \eta_{\epol}(s,a)\int \frac{P_{S'|S,A}(s|s',a')d_{\epol,\gamma}(s',a')}{P_{S}(s)}\rd\mu(s',a')  -\frac{d_{\epol,\gamma}(s,a)}{P_{S,A}(s,a)} \\
&= \ts \gamma \epol(a|s)\int \frac{P_{S'|S,A}(s|s',a')d_{\epol,\gamma}(s',a')}{P_{S,A}(s,a)}\rd\mu(s',a')  -\frac{d_{\epol,\gamma}(s,a)}{P_{S,A}(s,a)} \\
    &=\ts -\frac{(1-\gamma)d_{0}(s)\epol(a|s)}{P_{S,A}(s,a)}. 
\end{align*}
We use Lemma 11 \citep{UeharaMasatoshi2019MWaQ}:
\begin{align*}
    \gamma \epol(a|s)\int P_{S'|S,A}(s|s',a')d_{\epol,\gamma}(s',a')\rd\mu(s',a')-d_{\epol,\gamma}(s,a)=-(1-\gamma)d_{0}(s)\epol(a|s). 
\end{align*}
Thus, We have 
\begin{align*}
    (\Bcal'-I)w&=\gamma \Tcal' w+\frac{(1-\gamma)d_{0}(s)\epol(a|s)}{P_{S,A}(s,a)}-w\\
    &=(\gamma \Tcal'-I)(w-w_{\epol})=\Tcal'_{\gamma}(w-w_{\epol}). 
\end{align*}

\textbf{Statement $\Bcal' w_{\epol}=w_{\epol}$}\\ 
This is proved by 
$\Bcal'w_{\epol}=\Tcal'_{\gamma}(w_{\epol}-w_{\epol})+w_{\epol}=w_{\epol}$. 
\end{proof}

\begin{proof}[Proof of Lemma \ref{lem:bellman}]
~ \\ 
\textbf{Statement: $\|q-q_{\epol}\|_2$ and $ \|q-\Bcal q\|_2$ are equivalent } \\ 
We prove
\begin{align*}
    (1-\gamma \sqrt{C_mC_{\eta}})\|q-q_{\epol}\|_2 \leq   \|\Bcal q-q\|_2\leq (1+\gamma \sqrt{C_mC_{\eta}})\|q-q_{\epol}\|_2.
\end{align*}

First, 
\begin{align*}
    \|\Bcal q-q\|_2&=\|(\gamma \Tcal-I)(q-q_{\epol})\|_2\\
    &\leq \gamma\|\Tcal(q-q_{\epol})\|_2+ \|q-q_{\epol}\|_2\\
        &\leq \gamma\sqrt{C_mC_{\eta}}\|q-q_{\epol}\|_2+\|q-q_{\epol}\|_2. \tag{From the proof of Lemma \ref{lem:boundedness}}
\end{align*}
In addition, 
\begin{align*}
     \|q-q_{\epol}\|_2&\leq \|q-\Bcal q\|_2+\|\Bcal q-q_{\epol}\|_2= \|q-\Bcal q\|_2+\|\Bcal q-\Bcal q_{\epol}\|_2\\
     &= \|q-\Bcal q\|_2+\gamma \|\Tcal(q-q_{\epol})]\|_2\\
          &= \|q-\Bcal q\|_2+\gamma\sqrt{C_mC_{\eta}}\|q-q_{\epol}\|_2. 
\end{align*}
This shows $\|q-q_{\epol}\|_2\{1-\gamma\sqrt{C_mC_{\eta}}\}\leq \|q-\Bcal q\|_2.$
~ \\ 
\textbf{Statement:  $\|w-w_{\epol}\|_2$ and $\|w-\Bcal' w\|_2$ are equivalent  }
~
We prove
\begin{align*}
      (1-\gamma \sqrt{C_mC_{\eta}})\|w-w_{\epol}\|_2 \leq \|\Bcal' w-w\|_2\leq  (1+\gamma \sqrt{C_mC_{\eta}})\|w-w_{\epol}\|_2.
\end{align*}

First, 
\begin{align*}
    \|\Bcal' w-w\|_2&\leq \|(\gamma \Tcal'-I)(w-w_{\epol})\|_2\\ 
    &\leq \gamma \|\Tcal'w-\Tcal' w_{\epol}\|_2+\|w-w_{\epol}\|_2\\
    &\leq (1+\gamma \sqrt{C_mC_{\eta}})\|w-w_{\epol}\|_2.   \tag{From the proof of Lemma \ref{lem:boundedness}}
\end{align*}
Besides,
\begin{align*}
    \|w-w_{\epol}\|_2&\leq     \|w-\Bcal' w\|_2+\|\Bcal' w-\Bcal' w_{\epol}\|_2\\
    &\leq \|w-\Bcal' w\|_2+\gamma \|\Tcal'w-\Tcal' w_{\epol}\|_2\\
        &\leq \|w-\Bcal' w\|_2+\gamma \sqrt{C_mC_{\eta}}\|w-w_{\epol}\|_2. 
\end{align*}
Thus, $(1-\gamma \sqrt{C_mC_{\eta}})     \|w-w_{\epol}\|_2\leq  \|w-\Bcal' w\|_2$. 

\end{proof}

\subsection{Proof of  \cref{sec:sufficient}}

\begin{proof}[Proof of  \cref{thm:generalization}]
~

\textbf{Proof for $\hat J^{\MIL}_w$}

Estimators for the weight functions in MIL are defined as 
\begin{align*}
    &\hat w=\argmin_{w\in\Wbbb_1}\max_{q\in \Qbbb_1}|\E_n[f_{\MIL_1}(s,a,r,s';w,q)]|,\\
    & f_{\MIL_1}(s,a,r,s';w,q)=w(s,a)\{q(s,a)-\gamma q(s',\epol)\}-(1-\gamma)\E_{d_0}[q(s_0,\epol)]. 
\end{align*}
We write this solution to the above minimax optimization as $(\hat w,\hat q^{\dagger})$. Thus, 
\begin{align*}
   \hat q^{\dagger} \coloneqq\argmax_{q\in \Qbbb_1}|\E_n[f_{\MIL_1}(s,a,r,s';\hat w,q)]|. 
\end{align*}
We define $J^{\MIL}_w=\E[\hat w(s,a)r]$. Then, we have
\begin{align*}
    |J^{\MIL}_w-J |\leq  \max_{q\in \Qbbb_1}|\E[\{\hat w-w_{\epol})\}\Tcal_{\gamma} q]|+\min_{q \in \Qbbb_1}|\E[\{\hat w-w_{\epol})\}\{\Tcal_{\gamma}(q-q_{\epol})\}]|. 
\end{align*}
The following is the proof. We have for any $q\in \Qbbb_1$:
\begin{align*}
    &|J^{\MIL}_w-J |=\E[\hat w(s,a)r]-J| \\
    &=|\E[\hat w(s,a)\{q_{\epol}(s,a)-\gamma v_{\epol}(s')\}]-(1-\gamma)\E_{d_0}[q_{\epol}(s_0,\epol)]| \\
   &\leq |\E[\hat w(s,a)\{q(s,a)-\gamma v(s')\}]-(1-\gamma)\E_{d_0}[v(s_0)]| \\
   &+|\E[\hat w(s,a)\{-q(s,a)+q_{\epol}(s,a)+\gamma v(s')-\gamma v_{\epol}(s')\}]-(1-\gamma)\E_{d_0}[q_{\epol}(s_0,\epol)-v(s_0)]| \\
&=|\E[\hat w(s,a)\{q(s,a)-\gamma v(s')\}]-(1-\gamma)\E_{d_0}[v(s_0)]|| \\
   &+|\E[\{\hat w(s,a)-w_{\epol}(s,a)\}\{-q(s,a)+q_{\epol}(s,a)+\gamma v(s')-\gamma v_{\epol}(s')\}]|.
\end{align*}
Then, by defining 
\begin{align*}
    q^{\diamond} = \argmin_{q\in \Qbbb_1}|\E[\{\hat w(s,a)-w_{\epol}(s,a)\}\{-q(s,a)+q_{\epol}(s,a)+\gamma v(s')-\gamma v_{\epol}(s')\}]|.
\end{align*}
we have
\begin{align}
|J^{\MIL}_w-J | &\leq |\E[\hat w(s,a)\{q^{\diamond}(s,a)-\gamma v^{\diamond}(s')\}]-(1-\gamma)\E_{d_0}[v^{\diamond}(s_0)]| \\
   &+|\E[\{\hat w(s,a)-w_{\epol}(s,a)\}\{-q^{\diamond}(s,a)+q_{\epol}(s,a)+\gamma v^{\diamond}(s')-\gamma v_{\epol}(s')\}]| \\ 
&\leq  \max_{q\in \Qbbb_1}|\E[\hat w(s,a)\{q(s,a)-\gamma v(s')\}]-(1-\gamma)\E_{d_0}[v(s_0)]| \nonumber \\
   &+\min_{q\in \Qbbb_1}|\E[\{\hat w(s,a)-w_{\epol}(s,a)\}\{-q(s,a)+q_{\epol}(s,a)+\gamma v(s')-\gamma v_{\epol}(s')\}]|\nonumber \\
  &\leq \max_{q\in \Qbbb_1}|\E[\{\hat w-w_{\epol}\}\Tcal_{\gamma} q]|+\min_{q \in \Qbbb_1}|\E[\{\hat w-w_{\epol}\}\{\Tcal_{\gamma}(q-q_{\epol})\}]|.  \label{eq:wqreal}
\end{align}
Here, we use $\E[w_{\epol}(s,a)\{q(s,a)-\gamma v(s')\}]-(1-\gamma)\E_{d_0}[v(s_0)]=0$ from \citep{UeharaMasatoshi2019MWaQ}. Since the second term is zero under $q\in \Qbbb_1$, we further analyze the first term:
\begin{align*}
    \max_{q\in \Qbbb_1}|\E[\{\hat w-w_{\epol}\}\Tcal_{\gamma} q]|. 
\end{align*}

Recall we have 
\begin{align*}
\E[f_{\MIL_1}(w,q)]=\E[-(w-w_{\epol})\Tcal_{\gamma}q]. 
\end{align*}
Then, using $w_{\epol}\in \Wbbb_1,q_{\epol}\in \Qbbb_1$, from \cref{eq:wqreal}, 
\begin{align*}
|J^{\MIL}_w-J| \leq  \max_{q\in \Qbbb_1}|\E[(\hat{w}-w_{\epol})\Tcal_{\gamma}q]|=  |\E[(\hat{w}-w_{\epol})\Tcal_{\gamma}\tilde q]|, 
\end{align*}
where 
\begin{align*}
\tilde  q= \argmax_{q\in \Qbbb_1}|\E[(\hat{w}-w_{\epol})\Tcal_{\gamma}q]|.
\end{align*}

As a first step, we bound  $|\E[(\hat w-w_{\epol})\Tcal_{\gamma}\hat q^{\dagger}] |$: 
\begin{align*}
    &|\E[(\hat w-w_{\epol})\hat \Tcal_{\gamma}\hat q^{\dagger}]|=|\E[f_{\MIL_1}(\hat w,\hat q^{\dagger})]|=|\E[f_{\MIL_1}(\hat w,\hat q^{\dagger})]|-|\E[f_{\MIL_1}(w_{\epol},\hat q^{\dagger})]|\\
    &=|\E[f_{\MIL_1}(\hat w,\hat q^{\dagger})]|-|\E_n[f_{\MIL_1}(\hat w,\hat q^{\dagger})]|+|\E_n[f_{\MIL_1}(\hat w,\hat q^{\dagger})]|-|\E_n[f_{\MIL_1}(w_{\epol},\hat q^{\dagger})]|\\
    & \quad +|\E_n[f_{\MIL_1}(w_{\epol},\hat q^{\dagger})]|-|\E[f_{\MIL_1}(w_{\epol},\hat q^{\dagger})]|\\
    &\leq |\E[f_{\MIL_1}(\hat w,\hat q^{\dagger})]|-|\E_n[f_{\MIL_1}(\hat w,\hat q^{\dagger})]|+|\E_n[f_{\MIL_1}(w_{\epol},\hat q^{\dagger})]|-|\E[f_{\MIL_1}(w_{\epol},\hat q^{\dagger})]|\\
    &\leq 2\sup_{w\in \Wbbb_1,q\in \Qbbb_1}|(\E-\E_n)[ f_{\MIL_1}(w,q)]|. 
\end{align*}
Here, we use $|\E_n[f_{\MIL_1}(\hat w,\hat q^{\dagger})]|\leq  |\E_n[f_{\MIL_1}(w_{\epol},\hat q^{\dagger})]|$ and $|\E[f_{\MIL_1}(w_{\epol},\hat q^{\dagger})]|=0$. 

Second, we calculate the difference of $|\E[(\hat w-w_{\epol})\Tcal_{\gamma}\hat q^{\dagger}] |$ and $|\E[(\hat w-w_{\epol})\Tcal_{\gamma}\tilde q] |$. 
\begin{align*}
    &|\E[(\hat w-w_{\epol})\Tcal_{\gamma}\tilde q] |-|\E[(\hat w-w_{\epol})\Tcal_{\gamma}\hat q^{\dagger}] |= |\E[f_{\MIL_1}(\hat w,\tilde q)]|-|\E[f_{\MIL_1}(\hat w,\hat q^{\dagger})]|\\
    &= |\E[f_{\MIL_1}(\hat w,\tilde q)]|- |\E_n[f_{\MIL_1}(\hat w,\tilde q)]|+ |\E_n[f_{\MIL_1}(\hat w,\tilde q)]|- |\E_n[f_{\MIL_1}(\hat w,\hat q^{\dagger})]| \\
    & \quad +|\E_n[f_{\MIL_1}(\hat w,\hat q^{\dagger})]|-|\E[f_{\MIL_1}(\hat w,\hat q^{\dagger})]|\\
    &\leq |\E[f_{\MIL_1}(\hat w,\tilde q)]|- |\E_n[f_{\MIL_1}(\hat w,\tilde q)]|+|\E_n[f_{\MIL_1}(\hat w,\hat q^{\dagger})]|-|\E[f_{\MIL_1}(\hat w,\hat q^{\dagger})]|\\
  &\leq 2\sup_{w\in \Wbbb_1,q\in \Qbbb_1}|(\E-\E_n)[ f_{\MIL_1}(w,q)]|. 
\end{align*}
Here, we use $|\E_n[f_{\MIL_1}(\hat w,\tilde q)]|\leq |\E_n[f_{\MIL_1}(\hat w,\hat q^{\dagger})]|$. Thus, 
\begin{align*}
    &|\E[(\hat w-w_{\epol})\Tcal_{\gamma}\tilde q]| \\
    &= |\E[(\hat w-w_{\epol})\Tcal_{\gamma}\tilde q]|-|\E[(\hat w-w_{\epol})\Tcal_{\gamma}\hat q^{\dagger}]|+|\E[(\hat w-w_{\epol})\Tcal_{\gamma}\hat q^{\dagger}]|\\
    &\leq 4|\sup_{w\in \Wbbb_1,q\in \Qbbb_1}(\E-\E_n)[ f_{\MIL_1}(w,q)]|. 
\end{align*}
In addition, with probability $1-\delta$, 
\begin{align*}
    |\hat J^{\MIL}_w- J^{\MIL}_w|\leq \sup_{w\in \Wbbb_1}|(\E_n-\E)[w(s,a)r]|.
\end{align*}
Combining all results together and using the uniform law of large numbers,  with probability $1-\delta$, 
\begin{align*}
    |\hat J^{\MIL}_w-J|&\leq \sup_{w\in \Wbbb_1}|(\E_n-\E)[w(s,a)r]|+4|\sup_{w\in \Wbbb_1,q\in \Qbbb_1}(\E-\E_n)[ f_{\MIL_1}(w,q)]| \\
    &\lesssim R_{\max}\Rcal_n(\Wbbb_1)+R_{\max}\sqrt{\log(4/\delta)/n}+\Rcal_n(\Gcal(\Wbbb_1,\Qbbb_1)) +C_{\Wbbb_1}(C_{\Qbbb_1}+R_{\max})\sqrt{\log(4/\delta)/n}. 
\end{align*}

\textbf{Proof for $\hat J^{\MIL}_q$}

Estimators for q-functions in MIL are defined as 
\begin{align*}
   \hat q=\argmin_{q\in\Qbbb_2}\max_{w\in \Wbbb_2}|\E_n[f_{\MIL_2}(s,a,r,s';w,q)]|,\,f_{\MIL_2}(s,a,r,s';w,q)=w(s,a)\{r-q(s,a)+\gamma v(s')\}
\end{align*}
We write this solution as $(\hat q,\hat w^{\dagger})$, i.e., 
\begin{align*}
  \hat w^{\dagger}=\argmax_{w\in \Wbbb_2}|\E_n[f_{\MIL_2}(s,a,r,s'; w,\hat q)]|. 
\end{align*}
 We have 
\begin{align}\label{eq:mqlmql}
|\hat J^{\MIL}_q-J|&\leq  \max_{w\in \Wbbb_2}|\E[w(s,a)\Tcal_{\gamma}(\hat q-q_{\epol})]|+ \min_{w\in \Wbbb_2}|\E[(w-w_{\epol})\Tcal_{\gamma}(\hat q-q_{\epol})]|. 
\end{align}
This is proved as follows. We have for any $w\in \Wbbb_2$:
\begin{align*}
    |\hat J^{\MIL}_q-J|&=|(1-\gamma)\E_{d_0}[\hat q(s_0,\epol)]-\E[w_{\epol}(s,a)r]|\\
&=|\E[w_{\epol}(s,a)\{-\hat q(s,a)+\gamma \hat v(s')\}]-\E[w_{\epol}(s,a)r]|\\
    &=|\E[w_{\epol}(s,a)\{r-\hat q(s,a)+\gamma \hat v(s')\}]|\\
    &\leq |\E[w(s,a)\{r-\hat q(s,a)+\gamma \hat v(s')\}]|+|\E[(w-w_{\epol})\{r-\hat q(s,a)+\gamma \hat v(s')\}]|\\
    &= |\E[w \Tcal_{\gamma}(\hat q-q_{\epol})]|+|\E[(w-w_{\epol})\Tcal_{\gamma}(\hat q-q_{\epol})]|. 
\end{align*}
This concludes the proof of \cref{eq:mqlmql}. 

Then, recall 
\begin{align*}
    \E[ f_{\MIL_2}(s,a,r,s';w,q)]=\E[w\Tcal_{\gamma}(\hat q-q_{\epol})]. 
\end{align*}
Then, from the assumption and \eqref{eq:mqlmql}, we have 
\begin{align*}
      |\hat J^{\MIL}_q-J|\leq \max_{w\in \Wbbb_2}|\E[w(s,a)\Tcal_{\gamma}(\hat q-q_{\epol})]|=|\E[\tilde w\Tcal_{\gamma}(\hat q-q_{\epol})]|,
\end{align*}
where 
\begin{align*}
    \tilde w= \argmax_{w\in \Wbbb_2}|\E[w\Tcal_{\gamma}(\hat q-q_{\epol})]|.
\end{align*}

As a first step, we bound $|\E[\hat w^{\dagger}\Tcal_{\gamma}(\hat q-q_{\epol})]|$: 
\begin{align*}
   & |\E[\hat w^{\dagger}\Tcal_{\gamma}(\hat q-q_{\epol})]|=|\E[f_{\MIL_2}(\hat w^{\dagger},\hat q)]|=|\E[f_{\MIL_2}(\hat w^{\dagger},\hat q)]|-|\E[f_{\MIL_2}(\hat w^{\dagger},q_{\epol})]|\\
    &=|\E[f_{\MIL_2}(\hat w^{\dagger},\hat q)]|-|\E_n[f_{\MIL_2}(\hat w^{\dagger},\hat q)]|+|\E_n[f_{\MIL_2}(\hat w^{\dagger},\hat q)]|-|\E_n[f_{\MIL_2}(\hat w^{\dagger},q_{\epol})]| \\
    & \quad +|\E_n[f_{\MIL_2}(\hat w^{\dagger},q_{\epol})]|-|\E[f_{\MIL_2}(\hat w^{\dagger},q_{\epol})]|\\
    &\leq |\E[f_{\MIL_2}(\hat w^{\dagger},\hat q)]|-|\E_n[f_{\MIL_2}(\hat w^{\dagger},\hat q)]|+|\E_n[f_{\MIL_2}(\hat w^{\dagger},q_{\epol})]|-|\E[f_{\MIL_2}(\hat w^{\dagger},q_{\epol})]|\\
    &\leq 2|\sup_{w\in\Wbbb_2,q\in \Qbbb_2}(\E-\E_n)[f_{\MIL_2}(q,w)]| 
\end{align*}
Here, we use $|\E_n[f_{\MIL_2}(\hat w^{\dagger},\hat q)]|\leq |\E_n[f_{\MIL_2}(\hat w^{\dagger},q_{\epol})]|$ and  $|\E[f_{\MIL_2}(\hat w^{\dagger},q_{\epol})]|=0$. 

Second, we bound the difference of $ |\E[\hat w^{\dagger}\Tcal_{\gamma}(\hat q-q_{\epol})]|$ and $ |\E[\tilde w\Tcal_{\gamma}(\hat q-q_{\epol})]|$: 
\begin{align*}
    &|\E[\tilde w\Tcal_{\gamma}(\hat q-q_{\epol})]|-|\E[\hat w^{\dagger}\Tcal_{\gamma}(\hat q-q_{\epol})]|\\
    & =|\E[f_{\MIL_2}(\tilde w,\hat q)]|-|\E[f_{\MIL_2}(\hat w^{\dagger},\hat q)]|\\
    & =|\E[f_{\MIL_2}(\tilde w,\hat q)]|-|\E_n[f_{\MIL_2}(\tilde w,\hat q)]|+|\E_n[f_{\MIL_2}(\tilde w,\hat q)]|-|\E_n[f_{\MIL_2}(\hat w^{\dagger},\hat q)]|\\
    & \quad +|\E_n[f_{\MIL_2}(\hat w^{\dagger},\hat q)]| -|\E[f_{\MIL_2}(\hat w^{\dagger},\hat q)]|\\
  & \leq |\E[f_{\MIL_2}(\tilde w,\hat q)]|-|\E_n[f_{\MIL_2}(\tilde w,\hat q)]|+|\E_n[f_{\MIL_2}(\hat w^{\dagger},\hat q)]| -|\E[f_{\MIL_2}(\hat w^{\dagger},\hat q)]|\\
    &  \leq 2|\sup_{w\in\Wbbb_2,q\in \Qbbb_2}(\E-\E_n)[f_{\MIL_2}(q,w) ]| .
\end{align*}
Here, we use $|\E_n[f_{\MIL_2}(\tilde w,\hat q)]|\leq |\E_n[f_{\MIL_2}(\hat w^{\dagger},\hat q)]|$. In the end, 
\begin{align*}
        &\E[\tilde w\Tcal_{\gamma}(\hat q-q_{\epol})]|\\
        &\leq |\E[\tilde w\Tcal_{\gamma}(\hat q-q_{\epol})]|- |\E[\hat w^{\dagger}\Tcal_{\gamma}(\hat q-q_{\epol})]|+|\E[\hat w^{\dagger}\Tcal_{\gamma}(\hat q-q_{\epol})]|\\ 
        &\leq  4|\sup_{w\in\Wbbb_2,q\in \Qbbb_2}(\E-\E_n)[f_{\MIL_2}(q,w)]|. 
\end{align*}
This term is upper-bounded by $\Rcal_n(\Gcal(\Wbbb_2,\Qbbb_2))+C_{\Wbbb_2}C_{\Qbbb_2}\sqrt{\log(2/\delta)/n}$ up to universal constants. 

\textbf{Proof for $\hat J^{\MIL}_{wq}$}

Letting
\begin{align*}
    f(s,a,r,s';w,q)=w(s,a)\{r-q(s,a)+\gamma q(s',\epol)\}+(1-\gamma)\E_{d_0}[q(s_0,\epol)], 
\end{align*}
we define 
\begin{align*}
      J^{\MIL}_{wq}=\E[f(s,a,r,s';\hat w,\hat q)]. 
\end{align*}
Here, we have
\begin{align*}
    |\hat J^{\MIL}_{wq}-    J^{\MIL}_{wq}|\leq \sup_{w\in \Wbbb_1,q\in \Qbbb_2}|(\E-\E_n)[f(w,q)]|. 
\end{align*}
Besides, 
\begin{align*}
    \E[f(s,a,r,s';\hat w,\hat q)]-J=|\E[\{\hat w-w_{\epol}\}\{\Tcal_{\gamma}(\hat q-q_{\epol}\}] |. 
\end{align*}

\textbf{Case 1: when $q_{\epol}\in \Qbbb_2,w_{\epol}-\Wbbb_1\subset \Wbbb_2$ }
Thus, from a similar calculation as the above, 
\begin{align*}
    |  \E[f(s,a,r,s';\hat w,\hat q)]-J|&\leq \max_{w\in \Wbbb_2}|\E[w(s,a)\Tcal_{\gamma}(\hat q-q_{\epol}) ]|+\min_{w\in \Wbbb_2}|\E[\{w-w_{\epol}+\hat w\}\Tcal_{\gamma}(\hat q-q_{\epol}) ]|. 
\end{align*}
The second term is $0$ from the assumption. The first term is calculated as before. 

\textbf{Case 2: when $w_{\epol}\in \Wbbb_1,q_{\epol}-\Qbbb_2\subset \Qbbb_1$ }
Thus, from a similar calculation as the above, 
\begin{align*}
        |  \E[f(s,a,r,s';\hat w,\hat q)]-J|&\leq \max_{q\in \Qbbb_1}|\E[(\hat w-w_{\epol})\Tcal_{\gamma}q]|+\min_{q\in \Qbbb_1}|\E[(\hat w-w_{\epol})\Tcal_{\gamma}\{q-q_{\epol}+\hat q\}]|.
\end{align*}
The second term is $0$ from the assumption. The first term is calculated as before. 

Combining all things, with $1-\delta$,
\begin{align*}
      |\hat J^{\MIL}_{wq}-J|&\leq \sup_{w\in \Wbbb_1,q\in \Qbbb_2}|(\E-\E_n)[f(w,q)]|\\
      &+\max(4\sup_{w\in\Wbbb_1,q\in \Qbbb_1}|(\E-\E_n)[f_{\MIL_1}(q,w)]|, 4\sup_{w\in\Wbbb_2,q\in \Qbbb_2}|(\E-\E_n)[f_{\MIL_2}(q,w)]|)\\
      &\lesssim  \mathrm{Err}_{\MIL,wq}+\max(\mathrm{Err}_{\MIL,w},\mathrm{Err}_{\MIL,q}). 
\end{align*}

\end{proof}

\begin{proof}[Proof of Corollary \ref{cor:vc_cor}]
~\\
\textbf{First part}\\ 
First, we prove that the VC dimension of $\Vbbb\coloneqq \{q(s,\epol);q\in \Qbbb\}$ is upper-bounded by $V(\Qbbb)$. Consider the set of subgraphs of $\Vbbb$ as $\mathcal{V}$. Then, we take $n$ points $s_1,\cdots,s_n$ shattered by $\Vbbb$. This states for any binary sequence $\{b_1,\cdots,b_n\}$, there exists $v' \in \Vbbb$ and $t' \in \mathbb{R}$ s.t. 
\begin{align*}
    \rI(q_{v'}(s_1,\epol) - t' >0)=b_1,\cdots,     \rI(q_{v'}(s_n,\epol) - t'>0)=b_n,
\end{align*}
where $q_{v'} \in \Qbbb$ is a corresponding function such as $v'(s)=q_{v'}(s,\epol)$.
We, then, consider the set of subgraphs of $\Qbbb$ as $\mathcal{Q}$, and show that $\mathcal{Q}$ can shatter $(s_1,a_1),...,(s_n,a_n)$ associated with $s_1,...,s_n$.
To bound the VC dimension of $\Vbbb$, it is sufficient to show
\begin{align}
    \exists a, ~\mathrm{sign}(q_{v'}(s,\epol) - t') = \mathrm{sign}(q_{v'}(s,a) - t'), \label{eq:sign_q}
\end{align}
holds for any $s$.
To show \eqref{eq:sign_q}, without loss of generality, we assume $q_{v'}(s,\epol) \geq t'$, which is equivalent to
\begin{align}\label{eq:contra}
    \ts \int q_{v'}(s,a)\epol(a|s)\rd(a)>t'. 
\end{align}
We use the contradiction argument. Suppose there is no $a\in \Acal$ s.t. $ q_{v'}(s,a)>t'$, then 
\begin{align*}
   \ts  \int q_{v'}(s,a)\epol(a|s)\rd(a)\leq t',
\end{align*}
holds. 
This contradicts \cref{eq:contra}, hence there exists $a\in \Acal$ s.t. $      q_{v'}(s,a)>t' $.
Hence, we can take $\{a_1,\cdots,a_n\}$ as satisfying \eqref{eq:sign_q} for each $s_1,...,s_n$ and obtain
\begin{align*}
    \rI(q_{v'}(s_1,a_1) - t'>0)=b_1,\cdots,  \rI(q_{v'}(s_n,a_n) - t'>0)=b_n. 
\end{align*}
This concludes that $V(\Vbbb)\leq V(\Qbbb)$.

\textbf{Second part}
\begin{align*}
    \Rcal_n(\Gcal(\Wbbb,\Qbbb))&= \Rcal_n(\Wbbb \Qbbb )+\gamma \Rcal_n(\Wbbb \Vbbb )\\ 
    &\leq 0.25\{\Rcal_n((\Wbbb+\Qbbb)^2-(\Wbbb-\Qbbb)^2 )+\gamma \Rcal_n((\Wbbb+\Vbbb)^2-(\Wbbb-\Vbbb)^2 )\} \\ 
    &\leq \{C_{\Wbbb}C_{\Qbbb}\Rcal_n(\Wbbb+\Qbbb))+\gamma  C_{\Wbbb}C_{\Qbbb}\Rcal_n(\Wbbb+\Vbbb)\}\tag{Contraction property \citep{mendelson2002improving}} \\ 
\end{align*}
Here, we have 
\begin{align*}
    \Rcal_n(\Wbbb)&\ts =\int_{0}^{C_{\Wbbb}}\sqrt{\log \Ncal(t,\Wbbb,\|\cdot\|_n)/n}\rd(t)\\
    &\ts =O\braces{ \int_{0}^{C_{\Wbbb}}\sqrt{V(\Wbbb)\log(1/t) /n}\rd(t)   }\\
    &\ts=O(\sqrt{V(\Wbbb)/n}). 
\end{align*}
The covering number is calculated using \citep[Lemma 19.15]{VaartA.W.vander1998As}, that is, for a VC class of functions $\Fcal$ with measurable envelope function $F$ and $r\geq 1$, one has for any probability measure $Q$ with $\|F\|_{Q,r}>0$, 
\begin{align*}
    \Ncal(\epsilon \|F\|_{Q,r},\Fcal,L_r(Q))\lesssim V(\Fcal)(4e)^{V(\Fcal)}\prns{\frac{2}{\epsilon}}^{rV(\Fcal)}. 
\end{align*}
Thus, 
\begin{align*}
       R_{\max}\Rcal_n(\Wbbb)+\Rcal_n(\Gcal(\Wbbb,\Qbbb))= O(\sqrt{(V(\Wbbb)+V(\Qbbb))/n}). 
\end{align*}
This concludes that with probability $1-\delta$, 
\begin{align*}
    |\hat J^{\MIL}-J|= O(\sqrt{(V(\Wbbb)+V(\Qbbb))/n}).
\end{align*}
\end{proof}

\begin{proof}[Proof of Corollary \ref{cor:nonpara_cor}]
WLOG, we prove the case for $\hat J^{\MIL}_q$. From Lemma \ref{lem:covering_G}, we have 
\begin{align*}
		&\sqrt{\log \bN(\tau, \Gcal(\Wbbb,\Qbbb), \|\cdot\|_\infty)} \\
		&\leq \sqrt{\log(0.5\tau/((1+\gamma)C_\Qbbb), \Wbbb, \|\cdot\|_\infty) + \log \bN(0.5\tau/((1+\gamma)C_\Wbbb), \Qbbb, \|\cdot\|_\infty)}\\
		 &\leq\sqrt{\log(0.5\tau/((1+\gamma)C_\Qbbb), \Wbbb, \|\cdot\|_\infty)} + \sqrt{\log \bN(0.5\tau/((1+\gamma)C_\Wbbb), \Qbbb, \|\cdot\|_\infty)}. 
\end{align*}

First, we consider the case when $\beta<2$, from Lemma \ref{lem:dudley}, the final error is 
\begin{align*}
O\prns{n^{-1/2}\braces{1-\braces{\frac{1}{\tau}}^{0.5\beta-1}}+\tau}. 
\end{align*}
By taking $\tau=\Theta(1/n)$, the final error is $O(n^{-1/2})$. 

Next, we consider the case when $\beta=2$.  From Lemma \ref{lem:dudley}, the final error is 
\begin{align*}
     O\prns{n^{-1/2}\{1-\log(\tau)\}+\tau}.  
\end{align*}
By taking $\tau=\Theta(\log(n)/\sqrt{n})$, the final error is $O(n^{-1/2}\log n)$. 

Finally, consider the case  when $\beta\geq 2$.  From Lemma \ref{lem:dudley}, the final error is 
\begin{align*}
O\prns{n^{-1/2}\braces{\braces{\frac{1}{\tau}}^{0.5\beta-1}-1}+\tau}. 
\end{align*}
Here, by taking $\tau=\Theta(n^{-1/\beta})$, the final error is  $\tilde O(n^{-1/\beta})$. 
\end{proof}

\begin{proof}[Proof of Corollary \ref{cor:linear_sieve}]
WLOS, we prove the case for $\hat J^{\MIL}_q$. From \cref{thm:allow_mis1}, 
\begin{align*}
|\hat J^{\MIL}_q-J|&=O(R_{\max}\Rcal_n(\Wbbb)+\Rcal_n(\Gcal(\Wbbb,\Qbbb)))\\
&+\min_{w\in \Wbbb}\max_{q \in \Qbbb}\E[(w-w_{\epol})\Tcal_{\gamma}(q-q_{\epol})]+\min_{q \in \Qbbb}\max_{w\in \Wbbb}\E[w\Tcal_{\gamma}(q-q_{\epol})]).
\end{align*}
Then, from the assumption of sieve space $\Wbbb$, we have 
\begin{align*}
 &\min_{w\in \Wbbb}\max_{q \in \Qbbb}\E[(w-w_{\epol})\Tcal_{\gamma}(q-q_{\epol})]\\
 &\lesssim \min_{w \in \Wbbb}\|w-w_{\epol}\|_{\infty}(C_{\Qbbb}+R_{\max}(1-\gamma)^{-1}) \tag{from $\|\Tcal_{\gamma}(q-q_{\epol})\|_{1}\lesssim C_{\Qbbb}+R_{\max}(1-\gamma)^{-1})$}\\
 &= O(k^{-p/d}_n). 
\end{align*}
Similarly, from the assumption of sieve space $\Qbbb$, we have 
\begin{align*}
    \min_{q \in \Qbbb}\max_{w\in \Wbbb}\E[w\Tcal_{\gamma}(q-q_{\epol})]\leq (1+\gamma)C_{\Wbbb}\min_{q \in \Qbbb}\|q-q_{\epol}\|_{\infty}= O(k^{-p/d}_n). 
\end{align*}
In addition, we have $R_{\max}\Rcal_n(\Wbbb)+\Rcal_n(\Gcal(\Wbbb,\Qbbb))= O(\sqrt{k_n/n})$ from Corollary \ref{cor:vc_cor}. In the end, the final error becomes $$O(\ts \sqrt{k_n/n}+k^{-p/d}_n).$$  Balancing these two terms, the final error is $ O(n^{-\frac{p}{2p+d}})$. 
\end{proof}

\begin{proof}[Proof of Corollary \ref{cor:neural_net}]
WLOS, we prove the case for $\hat J^{\MIL}_q$. 
As a similar discussion as the proof of Corollary \ref{cor:linear_sieve}, $ |\hat J^{\MIL}-J|$ is upper-bounded by 
\begin{align*}
&O(R_{\max}\Rcal_n(\Wbbb))+\Rcal_n(\Gcal(\Wbbb,\Qbbb))+\min_{w\in \Wbbb}\max_{q \in \Qbbb}\E[(w-w_{\epol})\Tcal_{\gamma}(q-q_{\epol})]+\min_{q \in \Qbbb}\max_{w\in \Wbbb}\E[w\Tcal_{\gamma}(q-q_{\epol})]  \tag{  \cref{thm:allow_mis1}}\\ 
&=O(R_{\max}\Rcal_n(\Wbbb))+\Rcal_n(\Gcal(\Wbbb,\Qbbb))+\tilde{O}( \Xi^{-p/ d}) \tag{Lemma \ref{lem:yaro}} \\
&=O(\Rcal_n(\mathcal{F}_{NN}))+\tilde{O}( \Xi^{-p/ d}) \tag{ Contraction property } \\
&= O\prns{\sqrt{\Xi L/n}\int_{0}^{ C_{NN}} \sqrt{\log(LB(\Xi+1)/t)}\rd\mu(t)}+\tilde{O}( \Xi^{-p/ d}),  \tag{Lemma \ref{lem:covering_neural} and Dudley integral}
\end{align*}
where $\Fcal_{NN}\leq C_{NN} = \max \{ C_\Wbbb, C_\Qbbb\}$. Then, by taking $L=\Theta(\log n),\Xi=\Theta(n^{d/(2p+d)})$, the statement is concluded.  
\end{proof}

\begin{proof}[Proof of Corollary \ref{cor:neural_net_overparam}]
WLOS, we prove the case for $\hat J^{\MIL}_q$. We define $C_{NN} := \max \{ C_\Wbbb, C_\Qbbb\} $ which is finite by the boundedness of $A_\ell$ for $\mathcal{F}_{NN}$.
As a similar discussion as the proof of Corollary \ref{cor:linear_sieve},  $    |\hat J^{\MIL}-J|$ is upper-bounded by 
\begin{align*}
      &O(R_{\max}\Rcal_n(\Wbbb))+\Rcal_n(\Gcal(\Wbbb,\Qbbb))+\min_{w\in \Wbbb}\max_{q \in \Qbbb}\E[(w-w_{\epol})\Tcal_{\gamma}(q-q_{\epol})]+\min_{q \in \Qbbb}\max_{w\in \Wbbb}\E[w\Tcal_{\gamma}(q-q_{\epol})]  \tag{  \cref{thm:allow_mis1}}\\ 
        &=O(\Rcal_n(\mathcal{F}_{NN})+\epsilon)  \tag{ Contraction property and the assumption for the approximation error}.  
\end{align*}
We apply Lemma \ref{lem:golowich} and obtain the statement.
\end{proof}

\begin{proof}[Proof of \cref{thm:recovery1}]

~ \\ 
\textbf{Convergence rate of w-functions regarding  $\|\hat w-w_{\epol}\|_2$} 

Estimators for the weight functions in MIL are defined as 
\begin{align*}
    &\hat w=\argmin_{w\in\Wbbb_1}\max_{q\in \Qbbb_1}|\E_n[f_{\MIL_1}(s,a,r,s';w,q)]|,\\
    & f_{\MIL_1}(s,a,r,s';w,q)=w(s,a)\{q(s,a)-\gamma q(s',\epol)\}-(1-\gamma)\E_{d_0}[q(s_0,\epol)]. 
\end{align*}
We write this solution to the above minimax optimization as $\hat w$. In addition, we have 
\begin{align*}
    \E[ f_{\MIL_1}(s,a,r,s';w,q)]=-\E[\Tcal'_{\gamma}(w-w_{\epol})q]. 
\end{align*}
What we want to bound is $\E[(\hat w-w_{\epol})^2]$:
\begin{align*}
    &\E[(\hat w-w_{\epol})^2]\leq  \max_{q\in \Qbbb_1}|\E[C^{-1}_{\xi}(\hat w-w_{\epol})\Tcal_{\gamma}q]|=  |\E[C^{-1}_{\xi}\{(\hat w-w_{\epol})\}\Tcal_{\gamma}\tilde q ]|,
\end{align*}
where 
\begin{align*}
\tilde  q= \argmax_{q\in \Qbbb_1}|\E[(\hat w-w_{\epol})\Tcal_{\gamma}q]|.
\end{align*}
Here, we use $C_{\xi}(\Wbbb_1-w_{\epol})\subset \Tcal_{\gamma}\Qbbb_1$. Therefore, if we can bound $|\E[\Tcal'_{\gamma}(\hat w-w_{\epol})\tilde q ]|$, the convergence rate is obtained. This is bounded as in the proof of \cref{thm:generalization}.  Thus, with probability $1-\delta$, 
\begin{align*}
    \E[\{(\hat w-w_{\epol})\}^2]\leq C^{-1}_{\xi}\mathrm{Err}_{\MIL,w}. 
\end{align*}

\textbf{Convergence rate of w-functions regarding $\|\Bcal'\hat w-\hat w\|_2$} 

What we want to bound is $\E[(\Bcal'\hat w-\hat w)^2]$. Since $\Bcal' w-w=\Tcal'_{\gamma}(w-w_{\epol})$, we bound $\E[\{\Tcal'_{\gamma}(\hat w-w_{\epol})\}^2]$:
\begin{align*}
    &\E[\{\Tcal'_{\gamma}(\hat w-w_{\epol})\}^2]\leq  \max_{q\in \Qbbb_1}|\E[\{\Tcal'_{\gamma}(\hat w-w_{\epol})\}C^{-1}_{\xi}q]|=  |\E[C^{-1}_{\xi}\{\Tcal'_{\gamma}(\hat w-w_{\epol})\}\tilde q ]|,
\end{align*}
where 
\begin{align*}
\tilde  q= \argmax_{q\in \Qbbb_1}|\E[\Tcal'_{\gamma}(\hat w-w_{\epol})q]|.
\end{align*}
Here, we use $C_{\xi}\Tcal'_{\gamma}(\Wbbb_1-w_{\epol})\subset \Qbbb_1$. Therefore, if we can bound $|\E[C^{-1}_{\xi}\Tcal'_{\gamma}(\hat w-w_{\epol})\tilde q ]|$, the convergence rate is obtained. This is bounded as in the proof of \cref{thm:generalization}.  Thus, with probability $1-\delta$, 
\begin{align*}
    \E[\{\Tcal'_{\gamma}(\hat w-w_{\epol})\}^2]=\E[\{\Bcal'\hat w-\hat w\}^2]\leq C^{-1}_{\xi}\mathrm{Err}_{\MIL,w}. 
\end{align*}

\textbf{Convergence rate of q-functions regarding $\|\Bcal \hat q-\hat q\|_2$}

Recall that estimators for q-functions in MIL are defined as 
\begin{align*}
   \hat q=\argmin_{q\in\Qbbb_2}\max_{w\in \Wbbb_2}|\E_n[f_{\MIL_2}(s,a,r,s';w,q)]|,\,f_{\MIL_2}(s,a,r,s';w,q)=w(s,a)\{r-q(s,a)+\gamma v(s')\}
\end{align*}
We write this solution to $\hat q$. In addition, we have 
\begin{align*}
    \E[ f_{\MIL_2}(s,a,r,s';w,\hat q)]=\E[w\Tcal_{\gamma}(\hat q-q_{\epol})]. 
\end{align*}

What we want to bound is $\E[\{\Bcal \hat q-\hat q\})^2]$. Since $\Bcal q-q=\Tcal_{\gamma}(q-q_{\epol})$, we bound $\E[\{\Tcal_{\gamma}(\hat q-q_{\epol})\}^2]$: 
\begin{align*}
    &\E[\{\Tcal_{\gamma}(\hat q-q_{\epol})\}^2]\leq C^{-1}_{\xi}\max_{w \in \Wbbb_2}|\E[w\Tcal_{\gamma}(\hat q-q_{\epol})]|=C^{-1}_{\xi}| \E[\tilde w\Tcal_{\gamma}(\hat q-q_{\epol})]|,
\end{align*}
where 
\begin{align*}
    \tilde w= \argmax_{w\in \Wbbb_2}|\E[w\Tcal_{\gamma}(\hat q-q_{\epol})]|.
\end{align*}
Here, we use $C_{\xi}\Tcal_{\gamma}(\Qbbb_2-q_{\epol})\subset \Wbbb_2$. Therefore, if we can bound $|\E[\tilde w\Tcal_{\gamma}(\hat q-q_{\epol})]|$, the convergence rate is obtained. This is bounded as in the proof of \cref{thm:generalization}.  Thus, with probability $1-\delta$, 
\begin{align*}
    \E[\{\Bcal \hat q-\hat q\}^2]\leq C^{-1}_{\xi}\mathrm{Err}_{\MIL,q}. 
\end{align*}

\textbf{Convergence rate of q-functions regarding $\|\hat q-q_{\epol}\|_2$}

We have 
\begin{align*}
    &\E[\{\hat q-q_{\epol}\}^2]\leq \max_{w \in \Wbbb_2}C^{-1}_{\xi}|\E[\Tcal'_{\gamma}w(\hat q-q_{\epol})]|=C^{-1}_{\xi}| \E[\tilde w\Tcal_{\gamma}(\hat q-q_{\epol})]|,
\end{align*}
where 
\begin{align*}
    \tilde w= \argmax_{w\in \Wbbb_2}|\E[w\Tcal_{\gamma}(\hat q-q_{\epol})]|.
\end{align*}
Here, we use $C_{\xi}(\Qbbb_2-q_{\epol})\subset \Tcal'_{\gamma}\Wbbb_2$. Therefore, if we can bound $|\E[\tilde w\Tcal_{\gamma}(\hat q-q_{\epol})]|$, the convergence rate is obtained. This is bounded as in the proof of \cref{thm:generalization}.  Thus, with probability $1-\delta$, 
\begin{align*}
    \E[\{\hat q-q_{\epol}\}^2]\leq C^{-1}_{\xi}\mathrm{Err}_{\MIL,q}. 
\end{align*}

\end{proof}

\begin{proof}[Proof of Lemma \ref{lem:q_completenss}]

~\\ 
\textbf{Statement \RNum{1}: $q_{\epol}\in \Qbbb$}\\ 
Recall 
\begin{align*}
    \Tcal f(s,a)=\int f(s',\epol)P_{S'|S,A}(s'|s,a)\rd\mu(s'). 
\end{align*}
Thus, for any $q\in \Qbbb$, $\Bcal q$ belongs to $\Qbbb$. Noting $\Bcal q_{\epol}=q_{\epol}$, $q_{\epol}$ also belongs to $\Qbbb$. We write $q_{\epol}=\{\theta^{*}\}^{\top}\phi$. 

~\textbf{Statement \RNum{2}: $\Tcal_{\gamma}(\Qbbb-q_{\epol})\subset \Qbbb$}

We show there exists $M_{\epol}$ s.t. $\Tcal(\beta^{\top}\phi)=(M_{\epol}\beta)^{\top}\phi$. This is proved by 
\begin{align*}\ts 
  \Tcal(\beta^{\top}\phi)=\beta^{\top}\{\int \phi(s',\epol)\vartheta(s')^{\top}\rd\mu(s')\}\phi. 
\end{align*}
Thus, for any $\theta \in \mathbb{R}^d$, 
\begin{align*}
   ( \gamma \Tcal-I)\{(\theta-\theta^{*})^{\top}\phi\}&=\gamma (\theta-\theta^{*})^{\top} M^{\top}_{\epol}\phi-(\theta-\theta^{*})^{\top}\phi\\
   &=\{(\gamma M_{\epol}-I)(\theta-\theta^{*}) \}^{\top}\phi. 
\end{align*}
This implies $\Tcal_{\gamma}(\Qbbb-q_{\epol})\subset \Qbbb$. Recall $\Qbbb=\{(s,a)\mapsto \theta^{\top} \phi(s,a);\theta \in \mathbb{R}^d\}$. 


\textbf{Statement \RNum{3}: $\Qbbb-q_{\epol} \subset \Tcal'_{\gamma}\Qbbb$}

From \cref{exa:ver1} and \eqref{eq:linear_ver1}, given $f_2=\theta \cdot \phi$, we have
\begin{align*}
\Tcal'_{\gamma} f_2=\{\E[\phi\phi^{\top}]^{-1}(\gamma M^{\top}_{\epol}-I)\E[\phi\phi^{\top}] \}\theta \cdot \phi. 
\end{align*}
Since $I-\gamma M^{\top}_{\epol}$ is invertible from the assumption $1/\gamma \notin \sigma(M_{\epol})$, for any $q-q_{\epol}=(\theta-\theta^{*})\cdot \phi\in \Qbbb-q_{\epol}$, we can take 
\begin{align*}
      q'=\{\E[\phi\phi^{\top}]^{-1}(\gamma M^{\top}_{\epol}-I)\E[\phi\phi^{\top}] \}^{-1}(\theta-\theta^*) \cdot \phi
\end{align*}
s.t. $\Tcal'_{\gamma} q'=q-q_{\epol}$
This implies $\Qbbb-q_{\epol} \subset \Tcal'_{\gamma}\Qbbb$

~\textbf{Statement \RNum{4}: Tabular case}

We use a matrix formulation following \cref{exa:tabular}. In a tabular case, $M_{\epol}=P_{\epol}$. The spectrum of $P_{\epol}$ is less than or equal to $1$. Thus, $\Qbbb-q_{\epol} \subset \Tcal'_{\gamma}\Qbbb$ holds from the statement \RNum{3}. Besides, $\Qbbb=\{\theta^{\top} \phi;\theta \in \mathbb{R}^d\}$ from the statement \RNum{2}. 

\end{proof}

\begin{proof}[Proof of Lemma \ref{lem:w_completenss}]

From Lemma \ref{lem:form}, recall 
\begin{align*}
 \Tcal' f(s,a)=\eta_{\epol}(s,a)\int f(s',a')P_{S,A|S'}(s',a'|s)\rd\mu(s',a'). 
\end{align*}
First, by using the assumption $P_{S,A|S'}(s',a'|\cdot)=\psi(\cdot)\cdot \vartheta(s',a')$, $ \Tcal' f$ belongs to 
\begin{align*}
   \Wbbb= \{(s,a)\mapsto \eta_{\epol}(s,a)\times (\psi(s)\cdot \beta);\beta \in \mathbb{R}^d\}. 
\end{align*}
Especially, $\Tcal'\{\beta^{\top}\eta\psi\}=(M'_{\epol}\beta)^{\top}\eta\psi$ holds by taking $M'^{\top}_{\epol}$ s.t. 
\begin{align*}
    \int \eta_{\epol}(s',a')\psi(s')\vartheta^{\top}(s',a')\rd\mu(s',a'). 
\end{align*}

~\textbf{Statement \RNum{1} : $w_{\epol}\in \Wbbb$}

Noting $\Bcal' w=\gamma \Tcal'w+(1-\gamma)\frac{d_0(s)\epol(a|s)}{\bpol(a|s)p_{S}(s)}$ and the assumption $d_0(s)/p_{S}(s)$ belongs to $\{\psi(s)\cdot \beta);\beta \in \mathbb{R}^d \}$, for any $w$, $\Bcal' w$ belongs to $\Wbbb$. Since $\Bcal' w_{\epol}=w_{\epol}$, $w_{\epol}$ also belongs to $\Wbbb$.  We define $\theta^{*}$ s.t. $w_{\epol}=\eta \{\theta^{*}\}^{\top}\psi$. 

~\textbf{Statement \RNum{2}: $\Tcal'_{\gamma}(\Wbbb-w_{\epol})\subset \Wbbb$ }

For any $\theta\in \mathbb{R}^d$, 
\begin{align*}
    (\gamma \Tcal'-I)\{\eta\theta^{\top}\psi-\eta \{\theta^{*}\}^{\top}\psi\}=  (\gamma M'_{\epol}-I)(\theta-\theta^{*}) \cdot \eta\psi. 
\end{align*}
Thus, $\Tcal'_{\gamma}(\Wbbb-w_{\epol})\subset \Wbbb$ is concluded. 

 ~\textbf{Statement \RNum{3}: $(\Wbbb-w_{\epol})\subset \Tcal_{\gamma}\Wbbb$}
 
 From \cref{exa:ver2} and \eqref{eq:linear_ver2}, for $f_2=\theta\cdot \eta\psi$ we have
 \begin{align*}
     \Tcal_{\gamma}f_2=\{\E[\psi\psi^{\top}]^{-1}(\gamma M'^{\top}_{\epol}-I)\E[\psi\psi^{\top}]\}\theta \cdot \eta\psi
 \end{align*}
Since $(I-\gamma M'^{\top}_{\epol})$ is non-singular from the assumption $1/\gamma \notin \sigma(M_{\epol})$, for any $w-w_{\epol}=(\theta-\theta^{*})\cdot \eta\psi$, we can take $w'$
\begin{align*}
    w'=\{\E[\psi\psi^{\top}]^{-1}(\gamma M'^{\top}_{\epol}-I)\E[\psi\psi^{\top}]\}^{-1}(\theta-\theta^{*})\cdot \eta\psi
\end{align*}
s.t. $w-w_{\epol}=\Tcal_{\gamma}w'$. This implies  $(\Wbbb-w_{\epol})\subset \Tcal_{\gamma}\Wbbb$. 

 

~\textbf{Statement \RNum{4}: Tabular case}
We use a matrix formulation following \cref{exa:tabular}. W-\comp, $\Tcal_{\gamma}(\Wbbb-w_{\epol}) \subset \Wbbb$ , obviously holds. Thus, we prove adjoint w-\comp. Recall $\E[\psi\psi^{\top}]$ is non-singular, and we have 
\begin{align*}
 [\Tcal'f]=( \E[\psi\psi^{\top}]^{-1}(P^{\epol})^{\top} \E[\psi\psi^{\top}])[f]
\end{align*}
and 
\begin{align*}
   [\Tcal_{\gamma}f]=[(\gamma \Tcal'-I)f]=(\E[\psi\psi^{\top}]^{-1}(\gamma P^{\epol}-I)^{\top} \E[\psi\psi^{\top}])[f]. 
\end{align*}

Thus, for given any $f\in \Wbbb-w_{\epol}$, we can take a vector $$f':=\{ (\E[\psi\psi^{\top}]^{-1}(\gamma P^{\epol}-I)^{\top} \E[\psi\psi^{\top}])^{-1} [f]\}\cdot \psi$$ s.t. 
\begin{align*}
    (\gamma \Tcal'-I)f'=f. 
\end{align*}
Note $\gamma P^{\epol}-I$ is non-singular. Therefore, the adjoint w-\comp, $(\Wbbb-w_{\epol}) \subset \Tcal_{\gamma}\Wbbb$ holds.

\end{proof}

\begin{proof}[Proof of Corollary \ref{cor:implication}]
~ \\
\textbf{First Statement }
Define $J^{\MIL}_{w}\coloneqq \E[\hat w r]$. Then, 
\begin{align*}
      |J^{\MIL}_{w}-J| &=|\E[(\hat w-w_{\epol})\Tcal_{\gamma}q_{\epol}]|=|\E[\Tcal'_{\gamma}(\hat w-w_{\epol})q_{\epol}]|\leq \|\Tcal'_{\gamma}(\hat w-w_{\epol})\|_2\|\|q_{\epol}\|_2,\\
        |J^{\MIL}_{w}-J|&=|\E[(\hat w-w_{\epol})\Tcal_{\gamma}q_{\epol}]|\leq \|(\hat w-w_{\epol})\|_2\|\|\Tcal_{\gamma}q_{\epol}\|_2. 
\end{align*}
Thus, when $w_{\epol}\in \Wbbb_1,C_{\xi}\Tcal'_{\gamma}(\Wbbb_1-w_{\epol})\subset \Qbbb_1$, with $1-\delta$, 
\begin{align*}
    |\hat J^{\MIL}_{w}-J|&\leq |\hat J^{\MIL}_{w}-J^{\MIL}_{w} |+| J^{\MIL}_{w}-J|\\
    &\leq  |\hat J^{\MIL}_{w}-J^{\MIL}_{w} |+\|\Tcal'_{\gamma}(\hat w-w_{\epol})\|_2\|\|q_{\epol}\|_2\\
    &=\Error'_{\MIL,w}+\sqrt{C^{-1}_{\xi}\Error_{\MIL,w}}\|q_{\epol}\|_2. 
\end{align*}
Similarly,  when $q_{\epol}\in \Qbbb_2,C_{\xi}(\Wbbb_1-w_{\epol})\subset \Tcal_{\gamma}\Qbbb_1$, with $1-\delta$, we have 
\begin{align*}
        |\hat J^{\MIL}_{w}-J|&\leq \Error'_{\MIL,w}+\sqrt{C^{-1}_{\xi}\Error_{\MIL,w}}\|\Tcal_{\gamma} q_{\epol}\|_2. 
\end{align*}

~
\textbf{Second Statement}
When $q_{\epol}\in \Qbbb_2,C_{\xi}\Tcal_{\gamma}(\Qbbb_2-q_{\epol})\subset \Wbbb_2$, with $1-\delta$, we have 
\begin{align*}
|\hat J^{\MIL}_{q}-J|&=|\E[w_{\epol}\Tcal_{\gamma}(\hat q-q_{\epol})]|\leq \|w_{\epol}\|_2\|\|\Tcal_{\gamma}(\hat q-q_{\epol})\|_2\\
&\leq \|w_{\epol}\|_2\sqrt{C^{-1}_{\xi}\Error_{\MIL,q}}. 
\end{align*}

When $q_{\epol}\in \Qbbb_2,C_{\xi}(\Qbbb_2-q_{\epol})\subset \Tcal'_{\gamma}\Wbbb_2$, with $1-\delta$, we have 
\begin{align*}
|\hat J^{\MIL}_{q}-J|&=|\E[\Tcal'_{\gamma}w_{\epol}(\hat q-q_{\epol})]|\leq \|\Tcal'_{\gamma}w_{\epol}\|_2\|\|(\hat q-q_{\epol})\|_2\\
&\leq \|\Tcal'_{\gamma}w_{\epol}\|_2\sqrt{C^{-1}_{\xi}\Error_{\MIL,q}}. 
\end{align*}
~
\textbf{Third Statement}

We define $$J^{\MIL}_{wq}\coloneqq \E[\hat w \{r-\hat q(s,a)+\gamma \hat q(s',\epol)\}+(1-\gamma)\E_{d_0}[\hat q(s_0,\epol)].$$ 
When we have $q_{\epol}\in \Qbbb_2, C_{\xi}\Tcal_{\gamma}(\Qbbb_2-q_{\epol})\subset \Wbbb_2 $, with $1-\delta$, 
\begin{align*}
|J^{\MIL}_{wq}-J|&=|\E[\{w_{\epol}-w\}\Tcal_{\gamma}(\hat q-q_{\epol})]|\leq \|w_{\epol}-\hat w\|_2\|\|\Tcal_{\gamma}(\hat q-q_{\epol})\|_2\\
&\leq \sup_{w\in \Wbbb_1}\|w-w_{\epol}\|_2 \|\Tcal_{\gamma}(\hat q-q_{\epol})\|_2\lesssim \sup_{w\in \Wbbb_1}\|w-w_{\epol}\|_2\sqrt{C^{-1}_{\xi}\Error_{\MIL,q}}
\end{align*}
When we have $w_{\epol}\in \Wbbb_1,C_{\xi}\Tcal'_{\gamma}(\Wbbb_1-w_{\epol})\subset \Qbbb_1 $,   with $1-\delta$, 
\begin{align*}
 |J^{\MIL}_{wq}-J|&=  |\E[\Tcal'_{\gamma}(\hat w-w_{\epol})(\hat q-q_{\epol})]|\leq \|\Tcal'_{\gamma}(w_{\epol}-\hat w)\|_2\|\|\hat q-q_{\epol}\|_2\\
 &\leq \sup_{q\in \Qbbb_1}\|q-q_{\epol}\|_2 \|\Tcal'_{\gamma}(\hat w-w_{\epol})\|_2\lesssim \sup_{q\in \Qbbb_2}\|q-q_{\epol}\|_2  \sqrt{C^{-1}_{\xi}\Error_{\MIL,w}}
\end{align*}
Noting 
$|\hat J^{\MIL}_{wq}-J^{\MIL}_{wq}|\lesssim \Error_{\MIL,wq}$, the statement is proved as 
\begin{align*}
    |\hat J^{\MIL}_{wq}-J|\leq \Error_{\MIL,wq}+C^{-1/2}_{\xi}\max(\sup_{w\in \Wbbb_1}\|w-w_{\epol}\|_2\sqrt{\Error_{\MIL,q}}, \sup_{q\in \Qbbb_2}\|q-q_{\epol}\|_2  \sqrt{\Error_{\MIL,w}} ). 
\end{align*}
This immediately implies 
\begin{align*}
    |\hat J^{\MIL}_{wq}-J|\leq \Error_{\MIL,wq}+C^{-1/2}_{\xi}\max(C_{\Wbbb}\sqrt{\Error_{\MIL,q}}, C_{\Qbbb}\sqrt{\Error_{\MIL,w}} ). 
\end{align*}
Regarding the statement related to adjoint completeness, the statement similarly holds. 

\end{proof}

\subsection{Proof of \cref{sec:fast_slow}}


In the following we will frequently invoke two key lemmas. First, recall that the empirical localized Rademacher complexity of the function class is $\Gcal:V\to [-c,c]$ for a given set of samples $S=\{v_i\}$ is defined as 
\begin{align*}
   \Rcal_n(\eta;\Gcal)=\E_{\epsilon}\bracks{\sup_{g\in \Gcal: \|g\|_{n}\leq \eta }\frac{1}{n}\sum_i \epsilon_i g(v_i)}.
\end{align*}
The critical radius of $\Gcal$ is defined as a solution to $ \Rcal_n(\eta;\Gcal)\leq \eta^2/c$. 

\begin{lemma}[Theorem 14.1, \cite{WainwrightMartinJ2019HS:A} ]\label{lem:support1}
Given a star-shaped and $b$-uniformly bounded function class $\Gcal$, let $\eta_n$ be any positive solution of the inequality $R_s(\eta;\Gcal)\leq \eta^2/b$. Then, for any $t\geq \eta_n$, we have 
\begin{align*}
    |\|g\|^2_n-\|g\|^2_2  |\leq 0.5\|g\|^2_2+0.5t^2, \forall g\in \Gcal. 
\end{align*}
\end{lemma}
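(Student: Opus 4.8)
The plan is to establish this as a high-probability statement, matching the conclusion of Wainwright's Theorem~14.1: the displayed inequality holds uniformly over $\Gcal$ on an event of probability at least $1-c_1\exp(-c_2 n t^2/b^2)$. The argument splits into a deterministic \emph{reduction} that exploits star-shapedness to localize the claim, followed by a probabilistic control of the resulting localized empirical process.

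First I would reduce the global inequality to a bound on the local ball $\{g\in\Gcal:\|g\|_2\le t\}$. Fix $g\in\Gcal$ and set $r=\max(\|g\|_2,t)\ge t$. Since $\Gcal$ is star-shaped around $0$ and $t/r\in[0,1]$, the rescaled function $h=(t/r)\,g$ lies in $\Gcal$ and obeys $\|h\|_2=(t/r)\|g\|_2\le t$. Because both $\|\cdot\|_{2,n}^2$ and $\|\cdot\|_2^2$ are quadratic, $|\|g\|_n^2-\|g\|_2^2|=(r/t)^2\,|\|h\|_n^2-\|h\|_2^2|$. Hence, if one shows $\sup_{h\in\Gcal:\|h\|_2\le t}|\|h\|_n^2-\|h\|_2^2|\le t^2/2$, then $|\|g\|_n^2-\|g\|_2^2|\le (r/t)^2(t^2/2)=r^2/2=\max(\|g\|_2,t)^2/2\le(\|g\|_2^2+t^2)/2$, using $\max(a,b)^2\le a^2+b^2$. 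This is exactly the claimed bound, so the whole lemma follows from the single local estimate.

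Next I would control $Z:=\sup_{h\in\Gcal:\|h\|_2\le t}|\|h\|_n^2-\|h\|_2^2|$. In expectation, symmetrization bounds $\E[Z]$ by a universal constant times the Rademacher complexity of the squared class $\{h^2:h\in\Gcal,\|h\|_2\le t\}$; since $x\mapsto x^2$ is $2b$-Lipschitz on $[-b,b]$ and vanishes at $0$, the Ledoux--Talagrand contraction inequality reduces this to $4b\,\Rcal_n(t;\Gcal)$ up to universal constants. The critical-radius hypothesis $\Rcal_n(\eta;\Gcal)\le\eta^2/b$ is available at every scale $\eta=t\ge\eta_n$ because star-shapedness makes $\eta\mapsto\Rcal_n(\eta;\Gcal)/\eta$ nonincreasing, so the inequality propagates upward from $\eta_n$; this yields $\E[Z]\lesssim t^2$, and careful bookkeeping of constants (as in Wainwright's proof) gives $\E[Z]\le t^2/4$. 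To upgrade this to $Z\le t^2/2$ with exponentially small failure probability I would invoke Talagrand's functional Bernstein inequality rather than a crude bounded-differences bound: on the local ball each summand $h(v_i)^2$ is bounded by $b^2$ and has variance at most $b^2\|h\|_2^2\le b^2t^2$, so Talagrand's inequality produces a deviation of order $\sqrt{b^2t^2 u/n}+b^2u/n$ at confidence $e^{-u}$; taking $u\asymp nt^2/b^2$ makes this deviation a small multiple of $t^2$, which combined with the mean bound yields $Z\le t^2/2$ on the stated event.

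The step requiring the most care is the interplay between the \emph{empirical}-norm localization built into $\Rcal_n(\eta;\Gcal)$ (the constraint $\|g\|_{2,n}\le\eta$) and the \emph{population}-norm control $\|h\|_2\le t$ used in the reduction. Reconciling these requires either a preliminary one-sided comparison of $\|\cdot\|_{2,n}$ and $\|\cdot\|_2$ on the local ball or restating the symmetrized complexity in the norm matching the constraint; this is where star-shapedness is used a second time and where the sharp constant $1/2$ is pinned down. The second delicate point is insisting on the variance-sensitive Talagrand inequality in the concentration step, since the naive bounded-differences bound of order $b\sqrt{t^2}=bt$ is too weak to deliver a $t^2$-scale deviation at the $\exp(-c\,nt^2/b^2)$ confidence level.
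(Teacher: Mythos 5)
Your proof is correct and is essentially the argument Wainwright gives for Theorem 14.1, which this paper imports by citation without reproving: the star-shaped rescaling reduction to the ball $\{g\in\Gcal:\|g\|_2\le t\}$, symmetrization plus the $2b$-Lipschitz contraction step to reach the localized Rademacher complexity, upward propagation of the critical inequality via monotonicity of $\eta\mapsto\Rcal_n(\eta;\Gcal)/\eta$ for star-shaped classes, and Talagrand's variance-sensitive concentration (rather than bounded differences) are exactly the steps of the source proof. Two of your side remarks are also on target: the conclusion can only hold on an event of probability $1-c_1\exp(-c_2\,nt^2/b^2)$, a qualifier the paper's statement silently drops, and the empirical-versus-population localization mismatch you flag (the paper's $\Rcal_n(\eta;\Gcal)$ constrains $\|g\|_{2,n}\le\eta$ while the theorem localizes in $\|\cdot\|_2$) is a genuine gap in the paper's usage that is standardly closed by the equivalence, up to universal constants, of empirical and population critical radii.
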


Consider a function class $\Fcal:X\to \mathbb{R}$ with loss $l:\mathbb{R}\times Z\to \mathbb{R}$. 

\begin{lemma}[Lemma 11 \citep{FosterDylanJ.2019OSL}]\label{lem:support2}
Assume $\sup_{f\in \Fcal}\|f\|_{\infty}\leq c$ and pick any $f^{*}\in \Fcal$. We define $\eta$ be a solution to 
\begin{align*}
   \Rcal_n(\eta;\mathrm{star}(\Fcal-f^*))\leq \eta^2/c. 
\end{align*}
Moreover, assume that the loss $l(\cdot,\cdot)$ is $L$-Lipschitz in the first argument. Then, with $1-\delta$,
\begin{align*}
     |(\E_n[l(f(x),z)]-\E_n[l(f^{*}(x),z)])-(\E[l(f(x),z)]-\E[l(f^{*}(x),z)])|\leq L \eta_n (\|f-f^{*}\|_2+\eta_n),\,\forall f\in \Fcal, 
\end{align*}
where $\eta_n=\eta+c_0\sqrt{\log(c_1/\delta)/n}$ for some universal constants $c_0,c_1$. 
\end{lemma}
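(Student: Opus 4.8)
The plan is to establish this as a localized uniform-deviation bound for the excess-loss process via symmetrization, a Lipschitz contraction, and a peeling argument, with the star-shaped critical radius supplying the crucial linear-in-$\|f-f^*\|_2$ scaling. Write $Z_f \coloneqq (\E_n-\E)[l(f(x),z)-l(f^*(x),z)]$ and $\Gcal \coloneqq \mathrm{star}(\Fcal-f^*)$, which is star-shaped about $0$ and uniformly bounded by $2c$. First I would reduce the loss process to the function class $\Gcal$: for fixed data the map $u\mapsto l(f^*(x_i)+u,z_i)-l(f^*(x_i),z_i)$ is $L$-Lipschitz and vanishes at $0$, so after symmetrization the Ledoux--Talagrand contraction inequality bounds the empirical Rademacher complexity of the excess-loss class by $L$ times that of $\Gcal$ (the offset $l(f^*(x_i),z_i)$ is constant in $f$ and drops out of the Rademacher average).

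The second ingredient is the star-shaped scaling: because $\Gcal$ is star-shaped, $r\mapsto \Rcal_n(r;\Gcal)/r$ is non-increasing, so the hypothesis $\Rcal_n(\eta;\Gcal)\leq \eta^2/c$ upgrades to $\Rcal_n(r;\Gcal)\leq r\eta/c$ for every $r\geq\eta$. This is exactly what converts a localized complexity at radius $r$ into a bound proportional to $r\eta$, producing the factor $\eta_n\|f-f^*\|_2$ we are after. I would then run a peeling argument, partitioning $\Fcal$ into dyadic shells $\{f:\,2^{k-1}\eta_n\le\|f-f^*\|_2\le 2^k\eta_n\}$ together with a central ball $\|f-f^*\|_2\le\eta_n$. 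On each shell of radius $r\approx 2^k\eta_n$, symmetrization, contraction, and the scaling bound give $\E[\sup_{f}|Z_f|]\lesssim L r\eta_n$, and Talagrand's concentration inequality adds a fluctuation term of order $L(r+\eta_n)\sqrt{\log(1/\delta)/n}$, which is absorbed into $L\eta_n(r+\eta_n)$ since $\sqrt{\log(c_1/\delta)/n}\lesssim\eta_n$. A union bound over the $O(\log n)$ shells (replacing $\delta$ by $\delta/\log n$, harmless inside the logarithm) yields $|Z_f|\lesssim L\eta_n(\|f-f^*\|_2+\eta_n)$ simultaneously for all $f$.

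One technical gap remains: the critical radius is defined through $\Rcal_n$, whose localization uses the empirical norm, whereas the target bound is stated in the population norm $\|\cdot\|_2$. To bridge this I would invoke \cref{lem:support1} (Wainwright's Theorem 14.1), which on the bounded star-shaped class $\Gcal$ gives, with high probability, $|\,\|g\|_n^2-\|g\|_2^2\,|\le \tfrac12\|g\|_2^2+\tfrac12 t^2$ for all $g\in\Gcal$ and $t\ge\eta_n$; hence the population $L^2$-ball of radius $r$ sits inside an empirical $L^2$-ball of radius $O(r+\eta_n)$, so the population-norm peeling lines up with the empirical-norm critical radius. Intersecting this event with the Talagrand event and the peeling union bound keeps the total failure probability at $O(\delta)$ and all constants universal.

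The main obstacle I anticipate is precisely this coupling of empirical and population localizations, together with extracting a genuinely localized (variance-dependent) fluctuation from the concentration step: a naive bounded-differences bound would contribute $L\cdot 2c\cdot\sqrt{\log(1/\delta)/n}$, which is not proportional to $\|f-f^*\|_2+\eta_n$ and would destroy the fast-rate scaling. Obtaining the correct $L(r+\eta_n)\sqrt{\log(1/\delta)/n}$ fluctuation requires controlling the variance of $l(f)-l(f^*)$ on each shell by $L^2 r^2$ (via the pointwise bound $|l(f)-l(f^*)|\le L|f-f^*|$) and feeding this variance proxy into Talagrand's inequality; tracking that this survives the contraction and peeling steps while keeping every high-probability event mutually compatible is where the bulk of the care lies.
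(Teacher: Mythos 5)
First, a point of reference: the paper itself contains \emph{no} proof of \cref{lem:support2} --- it is imported verbatim as Lemma 11 of \citet{FosterDylanJ.2019OSL} and used as a black box. Your reconstruction can therefore only be compared with the localization proofs in that literature, and at the level of ingredients your plan follows exactly that canonical route: symmetrization plus Ledoux--Talagrand contraction against the Lipschitz loss (with fixed coordinatewise maps, where contraction is legitimately applicable), the non-increasing property of $r\mapsto \Rcal_n(r;\Gcal)/r$ for star-shaped $\Gcal$, Bousquet/Talagrand concentration fed with the variance proxy $\E[(l(f(x),z)-l(f^{*}(x),z))^2]\leq L^2\|f-f^{*}\|_2^2$, and \cref{lem:support1} to couple empirical and population localization. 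These are the right pieces, and you correctly identify that a naive bounded-differences fluctuation of order $Lc\sqrt{\log(1/\delta)/n}$ would destroy the result.

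There is, however, one concrete step that fails as written: the peeling union bound. You allocate confidence $\delta/\log n$ to each of the $O(\log n)$ shells and call this ``harmless inside the logarithm.'' It is not: it inflates the per-shell fluctuation to order $\sqrt{\log(\log n/\delta)/n}$, and the extra $\sqrt{\log\log n/n}$ term cannot be absorbed into $\eta_n=\eta+c_0\sqrt{\log(c_1/\delta)/n}$ with universal constants, because the critical radius $\eta$ can be as small as order $c/\sqrt{n}$. For example, if $\Fcal-f^{*}=\{2^{-k}h:0\leq k\leq \tfrac12\log_2 n\}$ for a single bounded $h$, then $\mathrm{star}(\Fcal-f^{*})$ is a segment and a Khintchine lower bound gives $\eta\asymp c/\sqrt{n}\ll\sqrt{\log\log n/n}$, yet the functions populate $\Theta(\log n)$ distinct shells; your argument then proves only the weaker bound with $\eta_n$ replaced by $\eta_n+\sqrt{\log\log n/n}$, while the lemma itself is true for this class (it reduces to one Bernstein bound by homogeneity). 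The standard repair avoids peeling altogether: apply Talagrand \emph{once} to the normalized class $\{\beta_f\,(l(f(\cdot),\cdot)-l(f^{*}(\cdot),\cdot)):f\in\Fcal\}$ with $\beta_f=\min(1,\eta_n/\|f-f^{*}\|_2)$, whose variance is uniformly at most $L^2\eta_n^2$, and then unnormalize using that $g\mapsto(\E_n-\E)[g]$ is linear, so $|(\E_n-\E)[l(f)-l(f^{*})]|\leq\beta_f^{-1}\sup_h|(\E_n-\E)[h]|$, which produces exactly the $\eta_n\|f-f^{*}\|_2$ scaling. The price of this fix is that the Rademacher complexity of the normalized class can no longer be controlled by the plain contraction inequality (the Lipschitz maps would have to depend on the index $f$ through $\beta_f$); handling this interplay between rescaling, star hulls, and contraction is precisely where \citet{FosterDylanJ.2019OSL} and the local Rademacher complexity literature \citep{bartlett2005} invest their technical work. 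Your sketch gestures at this difficulty (``where the bulk of the care lies''), but the peeling route you actually commit to does not deliver the bound as stated.
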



\begin{proof}[Proof of \cref{thm:fast_w}]
~ \\ 
\textbf{Preliminary}
Define 
\begin{align*}
    \Xi(w,q)  &=\E[w(s,a)\{-q(s,a)+\gamma q(s',\epol) \}+(1-\gamma)\E_{d_0}[q(s_0,\epol)] ],\\ 
    \Xi_n(w,q)&=\E_n[w(s,a)\{-q(s,a)+\gamma q(s',\epol) \}+(1-\gamma)\E_{d_0}[q(s_0,\epol)] ],\\
    \Xi^{\lambda}(w,q) &=    \Xi^{\lambda}(w,q)-\lambda \|q(s,a)-\gamma q(s',\epol) \|^2_2,\\
    \Xi^{\lambda}_n(w,q) &=    \Xi^{\lambda}_n(w,q)-\lambda\|q(s,a)-\gamma q(s',\epol) \|^2_{2,n}.
\end{align*}
From Lemma \ref{lem:support1}, we have
\begin{align}\label{eq:q_uni}
    \forall q\in \Qbbb, |\|-q(s,a)+\gamma q(s',\epol) \|^2_{2,n}-\|-q(s,a)+\gamma q(s',\epol)\|^2_2  |\leq 0.5\|-q(s,a)+\gamma q(s',\epol)\|^2_2+0.5\eta^2, 
\end{align}
for our choice of $\eta:=\eta_n+c_0\sqrt{\log(c_1/\delta)/n}$ noting $\eta_n$ upper bounds the critical of $\Gcal_{w1}$. 

\textbf{First Step}
By definition of $\hat w_{\MIL}$ and $w_{\epol}\in \Wbbb$, we have 
\begin{align}
    \sup_{q\in \Qbbb}\Xi^{\lambda}_n(\hat w_{\MIL}; q)\leq     \sup_{q\in \Qbbb}\Xi^{\lambda}_n(w_{\epol}; q). 
\end{align}
From Lemma \ref{lem:support2}, with probability $1-\delta$, we have 
\begin{align*}
   \forall q\in \Qbbb: |\Xi_n(w_{\epol};q)- \Xi(w_{\epol};q)|\lesssim C_{\Wbbb}\{\eta \|q(s,a)-\gamma q(s',\epol)\|_2 +\eta^2 \}. 
\end{align*}
noting $\eta$ upper-bounds the critical radius of $\Gcal_{w1}$. Here, we use $l(a_1,a_2) (a_1=-q'(s,a)+\gamma q(s',\epol),a_2=w(s,a))$ is $C_{\Wbbb}$-Lipschitz with respect to $a_1$ noting $w(s,a)$ is in $[-C_{\Wbbb}, C_{\Wbbb}]$. More specifically, 
\begin{align*}
    | l(a_1,a_2)- l(a'_1,a_2) |\leq |a_1-a'_1|C_{\Wbbb}. 
\end{align*}
Thus,
{\small 
\begin{align*}
    \sup_{q\in \Qbbb}  \Xi^{\lambda}_n(w_{\epol},q)&=  \sup_{q\in \Qbbb}\{  \Xi_n(w_{\epol},q)-\lambda \|-q(s,a)+\gamma q(s',\epol) \|^2_{2,n} \}\\
    &\leq   \sup_{q\in \Qbbb}\{  \Xi(w_{\epol},q)+C_{\Wbbb}\{\eta \|-q(s,a)+\gamma q(s',\epol)\|_2 +\eta^2 \}  -\lambda \|-q(s,a)+\gamma q(s',\epol) \|^2_{2,n} \}     \\
   &\leq   \sup_{q\in \Qbbb}\{  \Xi(w_{\epol},q)+C_{\Wbbb}\{\eta \|-q(s,a)+\gamma q(s',\epol)\|_2 +\eta^2 \}  -0.5\lambda \|-q(s,a)+\gamma q(s',\epol) \|^2_{2}+\lambda \eta^2\}     \\
     &\lesssim   \sup_{q\in \Qbbb}\{  \Xi(w_{\epol},q)+(\lambda +C^2_{\Wbbb}/\lambda+C_{\Wbbb})\eta^2\}.
\end{align*}
}
In the last line, we use a general inequality $a,b\geq 0$, 
\begin{align*}
    \sup_{f\in \Fcal}(a\|f\|-b\|f\|^2)\leq a^2/4b. 
\end{align*}
Moreover, 
{\small 
\begin{align*}
        &\sup_{q\in \Qbbb}  \Xi^{\lambda}_n(\hat w_{\MIL},q)\\
    &= \sup_{q\in \Qbbb}\{\Xi_n(\hat w_{\MIL},q)-\Xi_n(w_{\epol},q)+\Xi_n(w_{\epol},q)-\lambda \|-q(s,a)+\gamma q(s',\epol) \|^2_{2,n} \}\\
   &\geq \sup_{q\in \Qbbb}\{\Xi_n(\hat w_{\MIL},q)-\Xi_n(w_{\epol},q)-2\lambda \|-q(s,a)+\gamma q(s',\epol) \|^2_{2,n} \}+\inf_{q\in \Qbbb}\{\Xi_n(w_{\epol},q)+\lambda \|q(s,a)+\gamma q(s',\epol) \|^2_{2,n} \}\\
    &= \sup_{q\in \Qbbb}\{\Xi_n(\hat w_{\MIL},q)-\Xi_n(w_{\epol},q)-2\lambda \|-q(s,a)+\gamma q(s',\epol) \|^2_{2,n} \}+\inf_{-q\in \Qbbb}\{\Xi(w_{\epol},-q)+\lambda \|-q(s,a)+\gamma q(s',\epol) \|^2_{2,n} \}\\
   &= \sup_{q\in \Qbbb}\{\Xi_n(\hat w_{\MIL},q)-\Xi_n(w_{\epol},q)-2\lambda \|-q(s,a)+\gamma q(s',\epol) \|^2_{2,n} \}+\inf_{-q\in \Qbbb}\{-\Xi_n(w_{\epol},q)+\lambda \|-q(s,a)+\gamma q(s',\epol) \|^2_{2,n} \}\\
     &= \sup_{q\in \Qbbb}\{\Xi_n(\hat w_{\MIL},q)-\Xi_n(w_{\epol},q)-2\lambda \|-q(s,a)+\gamma q(s',\epol) \|^2_{2,n} \}-\sup_{-q\in \Qbbb}\{\Xi_n(w_{\epol},q)-\lambda \|-q(s,a)+\gamma q(s',\epol) \|^2_{2,n} \}\\
     &= \sup_{q\in \Qbbb}\{\Xi_n(\hat w_{\MIL},q)-\Xi_n(w_{\epol},q)-2\lambda \|-q(s,a)+\gamma q(s',\epol) \|^2_{2,n} \}-\sup_{q\in \Qbbb}\{     \Xi^{\lambda}_n(w_{\epol},q) \}. 
\end{align*}
}
Here, we use $\Qbbb$ is symmetric. Therefore, 
\begin{align*}
        \sup_{q\in \Qbbb}\{\Xi_n(\hat w_{\MIL};q)- \Xi(w_{\epol};q)-2\lambda \| -q(s,a)+\gamma q(s',\epol)\|^2_{2,n}\}& \leq \{\sup_{q\in \Qbbb}  \Xi^{\lambda}_n(\hat w_{\MIL},q)\}+\sup_{q\in \Qbbb}\{     \Xi^{\lambda}_n(w_{\epol},q) \}\\
        &\leq  2 \sup_{q\in \Qbbb}\{     \Xi^{\lambda}_n(w_{\epol},q) \}\\
         &\leq  2 \sup_{q\in \Qbbb}\{     \Xi(w_{\epol},q)+(\lambda +C^2_{\Wbbb}/\lambda+C_{\Wbbb})\eta^2) \}\\
         &= (\lambda +C^2_{\Wbbb}/\lambda+C_{\Wbbb})\eta^2. 
\end{align*}

\textbf{Second Step}

Define $q_{w}=C_{\xi}\Tcal'_{\gamma}(w-w_{\epol})$. Suppose $\|q_{\hat w_{\MIL}}\|\geq \eta$, and let $\kappa=\eta/\{2\|q_{\hat w_{\MIL}} \|_2\}\in [0,0.5]$. Then, noting $\Qbbb$ is star-convex and $C_{\xi}\Tcal'_{\gamma}(\Wbbb-w_{\epol})\subset \Qbbb$, 
\begin{align*}
    &\sup_{q\in \Qbbb}\{\Xi_n(\hat w_{\MIL};q)- \Xi(w_{\epol};q)-2\lambda\|q(s,a)-\gamma q(s',\epol)\|^2_{2,n}\}\\
    &\geq \kappa\{\Xi_n(\hat w_{\MIL};q_{\hat w_{\MIL}})- \Xi(w_{\epol};q_{\hat w_{\MIL}})\} -2\kappa^2\lambda\|q_{\hat w_{\MIL}}(s,a)-\gamma q_{\hat w_{\MIL}}(s',\epol)\|^2_{2,n}. 
\end{align*}
Here, we use $\kappa q_{\hat w_{\MIL}}\in \Qbbb$. Then, from \cref{eq:q_uni}, 
\begin{align*}
  \kappa^2\|-q_{\hat w_{\MIL}}(s,a)+\gamma q_{\hat w_{\MIL}}(s',\epol)\|^2_{2,n} & \lesssim \kappa^2\{ 1.5\|-q_{\hat w_{\MIL}}(s,a)+\gamma q_{\hat w_{\MIL}}(s',\epol)\|^2_{2}+\eta^2\} \\
    &\lesssim  (1+\gamma C_m C_{\eta})\eta^2. 
\end{align*}
Here, we use 
$$\|-q_{\hat w_{\MIL}}(s,a)+\gamma q_{\hat w_{\MIL}}(s',\epol)\|^2_{2}\leq (1+C_mC_{\eta})\|q_{\hat w_{\MIL}}\|^2_2,$$
which will be explained soon. Therefore, 
{\small 
\begin{align*}
    & \sup_{q\in \Qbbb}\{\Xi_n(\hat w_{\MIL};q)- \Xi(w_{\epol};q)-2 \lambda \{\| -q(s,a)+\gamma q(s',\epol)\|^2_{2,n}\}\}\\
     &\geq \kappa\{\Xi_n(\hat w_{\MIL};q_{\hat w_{\MIL}})- \Xi(w_{\epol};q_{\hat w_{\MIL}})\} -2\lambda (1+\gamma C_m C_{\eta})\eta^2. 
\end{align*}
}
Observe that 
\begin{align*}
    \Xi_n( w;q_w)- \Xi(w_{\epol};q_w)=\E_n[(w(s,a)-w_{\epol}(s,a))(-q_{w}(s,a)+\gamma q_{w}(s',\epol))  ]. 
\end{align*}

Therefore, from Lemma \ref{lem:support2}, noting $\eta$ upper bounds the critical radius of $\Gcal_{w2}$, 
regarding $l(a_1,a_2)=a_1,a_1=(w(s,a)-w_{\epol}(s,a))(q_{w}(s,a)-\gamma q_{w}(s',\epol)) $, for any $w\in \Wbbb$, we have 
\begin{align*}
  & |   \Xi_n( w;q)- \Xi_n(w_{\epol};q)-\{\Xi( w;q)- \Xi(w_{\epol};q)\}|\\
&\leq (\eta  \E[\{(w(s,a)-w_{\epol}(s,a))(q_{w}(s,a)-\gamma q_{w}(s',\epol))\}^{2}]^{1/2}+\eta^2)\\ 
&\lesssim (\eta  \E[\{q_{w}(s,a)-\gamma q_{w}(s',\epol)\}^2]^{1/2}C_{\Wbbb} +\eta^2)=(\eta  \|\{q_{w}(s,a)-\gamma q_{w}(s',\epol)\}\|_2C_{\Wbbb} +\eta^2)\\ 
&\leq (\eta  \{\|q_w(s,a)\|_2+\gamma\|q_w(s',\epol)\|_2 \}C_{\Wbbb}  +\eta^2)\\ 
&= (\eta  \{\|q_w(s,a)\|_2+\gamma \sqrt{C_mC_{\eta}}\|q_w(s,a)\|_2 \}C_{\Wbbb}  +\eta^2). 
\end{align*}
We use
\begin{align*}
    &\E[q^2_w(s',\epol)]=\E[\E_{a'\sim \epol(a'|s')}[q_w(s',a')]^2]\leq  \E[\E_{a'\sim \epol(a'|s')}[q^2_w(s',a')]] \tag{Jensen}\\
 &=\E_{s'\sim p_{S'}(s),a'\sim \epol(a'|s')}[q^2_w(s',a')]]\\
  &=\E_{s\sim p_{S}(s),a\sim \bpol(a|s)}[q^2_w(s',a')]]C_mC_{\eta}=\|q_w(s,a)\|^2_2C_mC_{\eta}. \tag{ $P_{S'}(s)/P_{S}(s)\leq C_m,\epol(a|s)/\bpol(a|s)\leq C_{\eta}$}
\end{align*}

Thus, 
\begin{align*}
    &\kappa\{  \Xi_n(\hat w_{\MIL};q_{\hat w_{\MIL}})- \Xi_n(w_{\epol};q_{\hat w_{\MIL}})\}\\
    & \geq \kappa\{\Xi(\hat w_{\MIL};q_{\hat w_{\MIL}})- \Xi(w_{\epol};q_{\hat w_{\MIL}})\}-\kappa(\eta  \|q_{\hat w_{\MIL}}\|_2 (1+\gamma \sqrt{C_mC_{\eta}})C_{\Wbbb} +\eta^2 )  && \\
    &\geq \kappa\{\Xi( \hat w_{\MIL};q_{\hat w_{\MIL}})- \Xi(w_{\epol};q_{\hat w_{\MIL}})\}-\kappa\eta  \|q_{\hat w_{\MIL}}\|_2  (1+\gamma \sqrt{C_mC_{\eta}})C_{\Wbbb} -0.5\eta^2 && \text{Use $\kappa\leq 0.5$} \\ 
    &\overset{(a)}{=} \kappa C^{-1}_{\xi}\|q_{\hat w_{\MIL}}\|^2_2    -\kappa\eta  \|q_{\hat w_{\MIL}}\|_2 (1+\gamma \sqrt{C_mC_{\eta}})C_{\Wbbb}  )-0.5\eta^2 && \\ 
    &=\frac{\eta}{2\|q_{\hat w_{\MIL}}\|_2 }\{ C^{-1}_{\xi}\|q_{\hat w_{\MIL}}\|^2_2 -(1+\gamma \sqrt{C_mC_{\eta}})C_{\Wbbb} \eta \|q_{\hat w_{\MIL}}\|_2    \}-0.5\eta^2 && \\ 
  &=0.5 \eta  C^{-1}_{\xi}\|q_{\hat w_{\MIL}}\|_2  -\{(0.5+0.5\gamma \sqrt{C_mC_{\eta}})C_{\Wbbb}+0.5\}\eta^2 . 
\end{align*}
Regarding (a), we use 
\begin{align*}
    \Xi( w;q_{ w})- \Xi(w_{\epol};q_{w})&=\E[(w(s,a)-w_{\epol}(s,a))(-q_{w}(s,a)+\gamma q_{w}(s',\epol))]\\
    &=\E[(w(s,a)-w_{\epol}(s,a))\Tcal_{\gamma}q_{w}]=\E[\Tcal'_{\gamma}(w(s,a)-w_{\epol}(s,a))q_{w}]\\
   &=C^{-1}_{\xi}\|q_w\|^2_2.
\end{align*}
Therefore,
\begin{align*}
    &\sup_{q\in \Qbbb}\{\Xi_n(\hat w_{\MIL};q)- \Xi(w_{\epol};q)-2\lambda\|q(s,a)-\gamma q(s',\epol)\|^2_{2,n}\}\\
    &\geq \kappa\{\Xi_n(\hat w_{\MIL};q_{\hat w_{\MIL}})- \Xi(w_{\epol};q_{\hat w_{\MIL}})\} -\lambda (1+\gamma C_mC_{\eta})\eta^2 \\  
    &\geq 0.5 \eta  C^{-1}_{\xi}\|q_{\hat w_{\MIL}}\|_2  -\{(0.5+0.5\gamma \sqrt{C_mC_{\eta}})C_{\Wbbb}+0.5\}\eta^2 -\lambda (1+\gamma C_mC_{\eta})\eta^2. 
\end{align*}

\textbf{Combining all results}

Thus, $C_{\xi}\|\Tcal'_{\gamma}(\hat w_{\MIL}-w_{\epol})\|_2=\|q_{\hat w_{\MIL}}\|_2\leq \eta$ or 
\begin{align*}
    C_{\xi}\eta  \|\Tcal'_{\gamma}(\hat w_{\MIL}-w_{\epol})\|_2-\{(1+\gamma \sqrt{C_mC_{\eta}})C_{\Wbbb}+1\}\eta^2 \lesssim  (\lambda+C^2_{\Wbbb}/\lambda+C_{\Wbbb})\eta^2. 
\end{align*}
Therefore, 
\begin{align*}
     \|\Tcal'_{\gamma}(\hat w_{\MIL}-w_{\epol})\|_2\leq \braces{1+\lambda(1+\gamma  C_mC_{\eta})+C^2_{\Wbbb}/\lambda+C_{\Wbbb}+\gamma \sqrt{C_mC_{\eta}}C_{\Wbbb}+1/C_{\xi}}\eta. 
\end{align*}
\end{proof}

\begin{proof}[Proof of \cref{thm:fast_q}]
~
\\\\
\textbf{Preliminary}
Define 
\begin{align*}
    \Phi(q;w) &=\E[\{r-q(s,a)+\gamma q(s',\epol)\}w(s,a)],\\
  \Phi_n(q;w) &=\E_n[\{r-q(s,a)+\gamma q(s',\epol)\}w(s,a)],\\
     \Phi^{\lambda}_n(q;w) &= \Phi_n(q;w)-\lambda \|w\|^2_{2,n},\\
         \Phi^{\lambda}(q;w) &=\Phi(q;w)-\lambda\|w\|^2_{2}. 
\end{align*}

From Lemma \ref{lem:support1}, we have 
\begin{align}\label{eq:martin_q}
    \forall w\in \Wbbb,\,|\|w\|_{n}^2-\|w\|^2_2|\leq 0.5\|w\|^2_2+\eta^2
\end{align}
for our choice of $\eta:=\eta_n+\sqrt{c_0\log(c_1/\delta)/n}$ noting $\eta_n$ upper bounds the critical radius of $\Wbbb$.

\textbf{First Step}

By definition of $\hat q_{\MIL}$ and $q_{\epol}\in \Qbbb$, we have 
\begin{align}
    \sup_{w\in \Wbbb}   \Phi^{\lambda}_n(\hat q_{\MIL};w) \leq  \sup_{w\in \Wbbb}   \Phi^{\lambda}_n(q_{\epol};w) 
\end{align}
From Lemma \ref{lem:support2}, with probability $1-\delta$,  we have 
\begin{align}\label{eq:martin2_q}
    \forall w\in \Wbbb: |  \Phi_n(q_{\epol};w) -\Phi(q_{\epol};w)   |\lesssim C_1\{\eta \|w\|_2+\eta^2\}. 
\end{align}
Here, we use $l(a_1,a_2):=a_1a_2,a_1=w(s,a),a_2=r-q(s,a)+\gamma q(s',\epol) $ is $C_1$-Lipschitz with respect to $a_1$ by defining $C_1=R_{\max}+(1+\gamma)C_{\Qbbb}$.  More specifically, 
\begin{align*}
    |l(a_1,a_2)-l(a'_1,a_2)|\lesssim C_1|a_1-a'_1|.  
\end{align*}

Thus, 
\begin{align} 
    \sup_{w \in \Wbbb } \Phi^{\lambda}_n(q_{\epol};w) &=    \sup_{w \in \Wbbb } \braces{ \Phi_n(q_{\epol};w) -\lambda \|w\|^2_{2,n}} && \text{Definition}\nonumber \\
&\leq   \sup_{w \in \Wbbb } \braces{ \Phi(q_{\epol};w)+C_1\eta \|w\|_2+ C_1\eta^2 -\lambda\|w\|^2_{2,n}} && \text{From \cref{eq:martin2_q}}\nonumber \\
&\leq    \sup_{w \in \Wbbb } \braces{ \Phi(q_{\epol};w)+C_1\eta \|w\|_2+C_1\eta^2 -0.5\lambda\|w\|^2_2+\lambda\eta^2} && \text{From \cref{eq:martin_q}} \nonumber \\
&\lesssim   \sup_{w \in \Wbbb }  \braces{\Phi(q_{\epol};w)+(C^2_1/\lambda+C_1+\lambda) \eta^2}.  
\end{align}
In the last line, we use a general inequality, $a,b>0$:
\begin{align*}
    \sup_{f \in F}(a\|f\|-b\|f\|^2 )\leq a^2/4b. 
\end{align*}
Moreover,
{\small 
\begin{align*}
    \sup_{w\in \Wbbb}\{ \Phi^{\lambda}_n(\hat q_{\MIL};w) \}&=     \sup_{w\in \Wbbb}\{ \Phi_n(\hat q_{\MIL};w)- \Phi_n(q_{\epol};w)+ \Phi_n(q_{\epol};w) -\lambda \|w\|^2_{2,n}\}\\
    &\geq     \sup_{w\in \Wbbb}\{ \Phi_n(\hat q_{\MIL};w)- \Phi_n(q_{\epol};w) - 2\lambda \|w\|^2_{2,n}\}+  \inf_{w\in \Wbbb}\{ \Phi_n(q_{\epol};w) +\lambda  \|w\|^2_{2,n}\} \\
     &=     \sup_{w\in \Wbbb}\{ \Phi_n(\hat q_{\MIL};w)- \Phi_n(q_{\epol};w) - 2\lambda \|w\|^2_{2,n}\}+  \inf_{-w\in \Wbbb}\{ \Phi_n(q_{\epol};-w) + \lambda \|w\|^2_{2,n}\} \\
         &=     \sup_{w\in \Wbbb}\{ \Phi_n(\hat q_{\MIL};w)- \Phi_n(q_{\epol};w) - 2\lambda \|w\|^2_{2,n}\}+  \inf_{-w\in \Wbbb}\{-\Phi_n(q_{\epol};w) + \lambda \|w\|^2_{2,n}\} \\
         &=     \sup_{w\in \Wbbb}\{ \Phi_n(\hat q_{\MIL};w)- \Phi_n(q_{\epol};w) - 2\lambda \|w\|^2_{2,n}\}-  \sup_{-w\in \Wbbb}\{\Phi_n(q_{\epol};w)- \lambda \|w\|^2_{2,n}\} \\
    & =     \sup_{w\in \Wbbb}\{ \Phi_n(\hat q_{\MIL};w)- \Phi_n(q_{\epol};w)-2\lambda \|w\|^2_{2,n}\}-\sup_{w\in \Wbbb}\Phi^{\lambda}_n(q_{\epol};w).
\end{align*}
}
Here, we use $\Wbbb$ is symmetric.  Therefore, 
\begin{align*}
    \sup_{w\in \Wbbb}\{ \Phi_n(\hat q_{\MIL};w)- \Phi_n(q_{\epol};w)-2\lambda \|w\|^2_{2,n} \}&\leq \sup_{w\in \Wbbb}\braces{\Phi^{\lambda}_n(q_{\epol};w)}+    \sup_{w\in \Wbbb}\{ \Phi^{\lambda}_n(\hat q_{\MIL};w) \}\\
    &\leq 2\sup_{w\in \Wbbb}\Phi^{\lambda}_n(q_{\epol};w)\\
    & \lesssim   \sup_{w \in \Wbbb }  \braces{\Phi(q_{\epol};w)+(C^2_1/\lambda+C_1+\lambda)\eta^2}\\
    &= (C^2_1/\lambda+C_1+\lambda)\eta^2. 
\end{align*}

\textbf{Second Step }
Define 
\begin{align*}
    w_q=C_{\xi}\Tcal_{\gamma}(q-q_{\epol}). 
\end{align*}
Suppose $\|w_{\hat q_{\MIL}}\|_2\geq \eta$, and let $\kappa\coloneqq \eta/\{2\|w_{\hat q_{\MIL}}\|_2\}\in [0,0.5]$. Then, noting $\Wbbb$ is star-convex,
\begin{align*}
  \sup_{w\in \Wbbb}\{ \Phi_n(\hat q_{\MIL};w)- \Phi_n(q_{\epol};w)-2\lambda \|w\|^2_{2,n} \}\geq \kappa\{\Phi_n(\hat q_{\MIL},w_{\hat q_{\MIL}})-\Phi_n(q_{\epol},w_{\hat q_{\MIL}})\}-2\kappa^2\lambda\|w_{\hat q_{\MIL}}\|^2_{2,n}. 
\end{align*}
Here, we use $\kappa w_{\hat q_{\MIL}}\in \Wbbb$. 
Then,
\begin{align*}
    \kappa^2\|w_{\hat q_{\MIL}}\|^2_{2,n}& \lesssim \kappa^2\{1.5\|w_{\hat q_{\MIL}}\|^2_{2}+0.5\eta^2\}  \tag{ \cref{eq:martin_q}} \\
    & \lesssim \eta^2.  \tag{Definition of $\kappa$}
\end{align*}
Therefore, 
\begin{align*}
     \sup_{w\in \Wbbb}\{ \Phi_n(\hat q_{\MIL};w)- \Phi_n(q_{\epol};w)-2\|w\|^2_{2,n} \}\geq \kappa\{\Phi_n(\hat q_{\MIL},w_{\hat q_{\MIL}})-\Phi_n(q_{\epol},w_{\hat q_{\MIL}})\}-2\lambda \eta^2. 
\end{align*}
Observe that
\begin{align*}
    \Phi_n(q,w_q)-  \Phi_n(q_{\epol},w_q)&=-\E_n[\{q(s,a)-q_{\epol}(s,a)-\gamma q(s',\epol)+ \gamma q_{\epol}(s',\epol)\}w_q(s,a)].
\end{align*}
Therefore, from Lemma \ref{lem:support2} noting $\eta$ upper-bounds the critical radius of $\Gcal_q$, for any $ q\in \Qbbb$, 
\begin{align*}
    &|\Phi_n(q,w_q)-  \Phi_n(q_{\epol},w_q)-\{\Phi(q,w_q)-  \Phi(q_{\epol},w_q)\}|\\
    &=|\E_n[\{-q(s,a)+q_{\epol}(s,a)+\gamma q(s',\epol)- \gamma q_{\epol}(s',\epol)\}w_q(s,a)  ]\\ &-\E[\{-q(s,a)+q_{\epol}(s,a)+\gamma q(s',\epol)- \gamma q_{\epol}(s',\epol)\}w_q(s,a))  ]  |\\ 
    &\leq (\eta\E[\{-q(s,a)+q_{\epol}(s,a)+\gamma q(s',\epol)- \gamma q_{\epol}(s',\epol)\}^2w^2_q(s,a)]^{1/2}+\eta^2)\\
 &\lesssim (\eta C_1\|w_q\|_2+\eta^2). 
\end{align*}
Here, we invoke Lemma \ref{lem:support2} by treating $l(a_1,a_2)=a_1,\,a_1= \{q(s,a)-q_{\epol}(s,a)-\gamma q(s',\epol)+ \gamma q_{\epol}(s',\epol)\}w_q(s,a)\}$. 

Thus, 
\begin{align*}
    \kappa\{\Phi_n(\hat q_{\MIL},w_{\hat q_{\MIL}})-  \Phi_n(q_{\epol},w_{\hat q_{\MIL}})\}& \geq  \kappa\{\Phi(\hat q_{\MIL},w_{\hat q_{\MIL}})-  \Phi(q_{\epol},w_{\hat q_{\MIL}})\}-\kappa(C_1\eta\|w_{\hat q_{\MIL}}\|_2+\eta^2). \tag{$\kappa\leq 0.5$} \\
    & \geq\kappa\{\Phi(\hat q_{\MIL},w_{\hat q_{\MIL}})-  \Phi(q_{\epol},w_{\hat q_{\MIL}})\}-\kappa(C_1\eta\|w_{\hat q_{\MIL}}\|_2)-0.5\eta^2 \\
    &\overset{\text{(a)}}= \kappa\E[w_{\hat q_{\MIL}}(s,a)\Tcal_{\gamma}(\hat q_{\MIL}-q_{\epol})(s,a)]-\kappa(C_1\eta\|w_{\hat q_{\MIL}}\|_2)-0.5\eta^2 \\
  &= \frac{\eta}{2\|w_{\hat q_{\MIL}}\|}\{C^{-1}_{\xi}\|w_{\hat q_{\MIL}}\|^2_2-C_1\eta\|w_{\hat q_{\MIL}}\|_2\}-0.5\eta^2 \\
    &\geq 0.5\eta \|\Tcal_{\gamma}(\hat q_{\MIL}-q_{\epol})\|_2-(0.5+C_1)\eta^2. 
\end{align*}
For (a), we use 
\begin{align*}
    \Phi(\hat q_{\MIL},w_{\hat q_{\MIL}})-  \Phi(q_{\epol},w_{\hat q_{\MIL}})&=\E[w_{\hat q_{\MIL}}(s,a)\{-\hat q_{\MIL}(s,a)+q_{\epol}(s,a)+\gamma  \hat q_{\MIL}(s',\epol)- \gamma q_{\epol}(s',\epol)\}]\\
    &= \E[w_{\hat q_{\MIL}}(s,a)\Tcal_{\gamma}(\hat q_{\MIL}-q_{\epol})(s,a)].
\end{align*}
Finally,
\begin{align*}
     \sup_{w\in \Wbbb}\{ \Phi_n(\hat q_{\MIL};w)- \Phi_n(q_{\epol};w)-2\|w\|^2_{2,n} \}&\geq \kappa\{\Phi_n(\hat q_{\MIL},w_{\hat q_{\MIL}})-\Phi_n(q_{\epol},w_{\hat q_{\MIL}})\}-2\lambda \eta^2\\
      & \geq 0.5\eta \|\Tcal_{\gamma}(\hat q_{\MIL}-q_{\epol})\|_2-(0.5+C_1)\eta^2-2\lambda \eta^2. 
\end{align*}

\textbf{Combining all results}

Thus, $\|w_{\hat q_{\MIL}}\|\leq \eta$ or 
\begin{align*}
\eta \|\Tcal_{\gamma}(\hat q_{\MIL}- q_{\epol})\|_2-(1+C_1+\lambda )\eta^2\lesssim (C_1+C_1^2/\lambda+\lambda)\eta^2. 
\end{align*}
Therefore, we have 
\begin{align*}
        \|\Tcal_{\gamma}(\hat q_{\MIL}- q_{\epol})\|_2\leq C^{-1}_{\zeta}\eta
\end{align*}
or 
\begin{align*}
    \|\Tcal_{\gamma}(\hat q_{\MIL}- q_{\epol})\|_2\lesssim (1+C_1+\lambda+C^2_1/\lambda)  \eta. 
\end{align*}
This concludes 
\begin{align*}
    \|\Tcal_{\gamma}(\hat q_{\MIL}- q_{\epol})\|_2\lesssim \prns{1+C_1+\lambda+C^2_1/\lambda+C^{-1}_{\xi}} \eta. 
\end{align*}
\end{proof}

\begin{proof}[Proof of Corollary \ref{cor:vc_critical}]

We use Lemma \ref{lem:dudley} (Dudley integral). In order to show this, we calculate 
\begin{align*}
      \frac{1}{\sqrt{n}}\int^{\eta}_{0}\sqrt{\log  \Ncal(t;\Gcal_q),\|\cdot\|_n)}\rd t.
\end{align*}
This is upper-bounded by 
\begin{align*}
     O\prns{ \frac{\log(n)}{\sqrt{n}}\int^{\eta}_{0}\sqrt{\max(V(\Wbbb),V(\Qbbb))\log(1/t) }\rd t }. 
\end{align*}
To prove this, we calculate 
\begin{align*}
      \frac{1}{\sqrt{n}}\int^{\eta}_{0}\sqrt{\log  \Ncal(t;\Gcal(\Wbbb,\Qbbb)),\|\cdot\|_n)}\rd t.
\end{align*}

Recall the empirical $L^2$-norm as $\|f(\cdot)\|_n= \{1/n\sum_if(s^i,a^i)^2\}^{1/2}$, $L^{\infty}$-norm as $\|\cdot\|_{n,\infty}=\max_{i}|f(s^i,a^i)|$. In addition, we define $\|f(\cdot)\|'_{n,\infty}\coloneqq \max_i| f(s'^i)|$, $\|f(\cdot)\|'_{n}\coloneqq \{ 1/n\sum_i f(s'^i)^2\}^{1/2}$. Then, 
\begin{align*}
    &  \ts \int^{\eta}_0\sqrt{\frac{\log \Ncal(t,\Gcal(\Wbbb,\Qbbb),\|\cdot\|_n)}{n}}\rd\mu(t)\\
    &\ts \leq \int^{\eta}_0\sqrt{\frac{\log \Ncal(t,\Gcal(\Wbbb,\Qbbb),\|\cdot\|_{n,\infty})}{n}}\rd\mu(t)\\
     &\ts \leq\int^{\eta}_0\sqrt{\frac{\log \Ncal(t/\{(3(1+\gamma)C_{\Qbbb}\},\Wbbb,\|\cdot\|_{n,\infty})}{n}}+\sqrt{\frac{\log \Ncal(t/\{3C_{\Wbbb}\},\Qbbb,\|\cdot\|_{n,\infty})}{n}}+\sqrt{\frac{\log \Ncal(t/\{3\gamma C_{\Wbbb}\},\Vbbb,\|\cdot\|'_{n,\infty})}{n}} \rd\mu(t) \\ 
          &\ts \leq \int^{\eta}_0\sqrt{\frac{\log \Ncal(t/\{3(1+\gamma)C_{\Qbbb}\},\Wbbb,\sqrt{n}\|\cdot\|_{n})}{n}}+\sqrt{\frac{\log \Ncal(t/\{(3 C_{\Wbbb}\},\Qbbb,\sqrt{n}\|\cdot\|_{n})}{n}}+\sqrt{\frac{\log \Ncal(t/\{3\gamma C_{\Wbbb}\},\Vbbb,\sqrt{n}\|\cdot\|'_{n})}{n}} \rd\mu(t).
\end{align*}
Here, $\Vbbb=\{ q(s,\epol) ; q\in \Qbbb\}$. 
From the second line to the third line, we use the same argument as the proof of Lemma \ref{lem:covering_G}: 
\begin{align*}
    &\ts \log \Ncal(t,\Gcal(\Wbbb,\Qbbb),\|\cdot\|_{n,\infty})\\
    &\ts \leq \log \Ncal(t/\{3(1+\gamma)C_{\Qbbb}\},\Wbbb,\|\cdot\|_{n,\infty})+ \log \Ncal(t/\{3 C_{\Wbbb}\},\Qbbb,\|\cdot\|_{n,\infty})\\
    &+\log \Ncal(t/\{3\gamma C_{\Wbbb}\},\Vbbb,\|\cdot\|'_{n,\infty}), 
\end{align*}
and $\sqrt{a+b+c}\leq \sqrt{a}+\sqrt{b}+\sqrt{c}$. From the third line to the fourth line, we use $\|\cdot\|_{n,\infty}\leq \sqrt{n}\|\cdot\|_{n} $. 
Then, 
\begin{align*}
    \ts  &\frac{1}{\sqrt{n}}\int^{\eta}_{0}\sqrt{\log  \Ncal(t;\Gcal(\Wbbb,\Qbbb)),\|\cdot\|_n)}\rd t\\
 &\ts O\prns{ \frac{\log(n)}{\sqrt{n}}\int^{\eta}_{0}\sqrt{\max(V(\Wbbb),V(\Qbbb))\log(1/t) }\rd t } \\ 
     &=O\prns{ \sqrt{\max(V(\Wbbb),V(\Qbbb))}\frac{\log(n)}{\sqrt{n}}\eta \log(1/\eta)}.  
\end{align*}
Thus, 
\begin{align*}
    \ts  \frac{1}{\sqrt{n}}\int^{\eta}_{0}\sqrt{\log  \Ncal(t;\Gcal_q),\|\cdot\|_n)}\rd t.
\end{align*}
is similarly upper-bounded.

Finally, from Lemma \ref{lem:critical_basic}, the critical inequality becomes 
\begin{align*}
    \sqrt{\max(V(\Wbbb),V(\Qbbb))}\frac{\log(n)}{\sqrt{n}}\eta \log(1/\eta)\leq \eta^2. 
\end{align*}
This inequality is satisfied with $\eta=\tilde O(\sqrt{\max(V(\Wbbb),V(\Qbbb))/n})$. 
\end{proof}

\begin{proof}[Proof of Corollary \ref{cor:nonpara_critical}]
We use Lemma \ref{lem:critical_basic}. We consider the critical radius of $\Fcal$ s.t. 
\begin{align*}
    \log \Ncal(\tau;\Fcal,\|\cdot\|_{\infty})=(1/\tau)^{\alpha}. 
\end{align*}
When $\alpha\leq 2$, the critical inequality becomes
\begin{align*}
    \frac{2}{2-\alpha}n^{-1/2}\eta^{1-0.5\alpha}\leq \eta^2. 
\end{align*}
When $\eta=\Theta(n^{-1/(2+\alpha)})$, this is satisfied. 

When $\alpha=2$,  the critical inequality becomes
\begin{align*}
   \frac{1}{\sqrt{n}}\{\log (\eta)-\log(\eta^2/2\|\Fcal\|_{\infty})\}\leq \eta^2. 
\end{align*}
This is satisfied when $\eta=\Theta(n^{-1/4}\log n )$.

When $\alpha\geq 2$,  the critical inequality becomes
\begin{align*}
        \frac{2}{\alpha-2}n^{-1/2}\eta^{-\alpha+2}\leq \eta^2. 
\end{align*}
This is satisfied when $\eta=\Theta(n^{-1/(2\alpha)})$.

We apply the above analysis to our situation using the fact that the covering number of $G_q$ is upper-bounded as in the Lemma \ref{lem:covering_G}. 
Then, the proof is immediately concluded. 
\end{proof}

\begin{proof}[Proof of \cref{thm:sieve_rate}]
We consider $q_0 \in \Qbbb$ which will be specified later. From \cref{thm:agnostic_fast_q}, we obtain
\begin{align}
    \|\Tcal_\gamma ( \hat{q} - q_{\pi^e})\|_2 = O\prns{\eta + \epsilon_n + \frac{\|\Tcal_\gamma(q_{\pi^e} - q_0)\|_2^2}{\eta} + \|\Tcal_\gamma(q_{\pi^e} - q_0)\|_2}, \label{ineq:basic_sieve}
\end{align}
where $\eta$ is the maximum of critical radiuses of $\Wbbb$ and $\Gcal_q$, and $\epsilon_n = \sup_{q \in \Qbbb} \inf_{w \in \Wbbb} \|w - \Tcal_\gamma (q-q_0)\|_2$. We will bound each term in the right hand side of \eqref{ineq:basic_sieve}.

First, from Corollary \ref{cor:vc_critical}, $\eta=\tilde O(\sqrt{k_n/n})$. Next, we take $q_0$ s.t.
\begin{align*}
     \|\Tcal_\gamma(q_{\pi^e} - q_0)\|_2\leq (1+\gamma)\|q_{\pi^e} - q_0\|_{\infty})=O(k^{-p/d}_n). 
\end{align*}
from the assumption. 

Next, we show $\epsilon_n=O(k^{-p/d}_n)$. To do that, we check 
\begin{align*}
    \gamma \E_{s'\sim P_{S'|S,A}(s'|\cdot)}[-q(s',\epol)+q_0(s',\epol)]\in \Lambda^{\alpha}_{2\gamma C_{\Qbbb}K}(I^d). 
\end{align*}
Let $\|\cdot\|_b$ be a \Holder\,norm. By the assumption on the density $P_{S'|S,A}(s'|\cdot)$, we can bound the \Holder\,norm of $ \mathbb{E}_{s' \sim P_{S'|S,A}(s'|\cdot)}[q(s', \pi^e)]$
\begin{align*}
    \|\E_{s'\sim P_{S'|S,A}(s'|\cdot)}[q(s',\epol)]\|_{b}&=\left\|\int_{[0,1]^d}q(s',\epol)P_{S'|S,A}(s'|\cdot)\rd s'\right\|_b \\
    &\leq C_{\Qbbb}\left\|\int_{[0,1]^d} P_{S'|S,A}(s'|\cdot)\rd s'\right\|_b\leq C_{\Qbbb}\int_{[0,1]^d} \|P_{S'|S,A}(s'|\cdot)\|_b\rd s'\\
    &\leq C_{\Qbbb}\int_{[0,1]^d} K\rd s'=C_{\Qbbb}K. 
\end{align*}
Then, for any $q\in \Qbbb$, we can take $w'\in W'$ s.t. 
\begin{align*}
    \|w''- \gamma \E_{s'\sim P_{S'|S,A}}[-q(s',\epol)+q_0(s',\epol)]\|_{\infty}= O(k^{-p/d}_n). 
\end{align*}
Thus, by defining $w'=w''-(q-q_0)$, we can take $w'\in \Wbbb$ s.t. 
\begin{align*}
    \|w'-\Tcal_{\gamma}(q-q_0)\|_2\leq     \|w'-\Tcal'_{\gamma}(q-q_0)\|_{\infty}=O(k^{-p/d}_n). 
\end{align*}
We use  $2\Qbbb_2+\Wbbb'_2\subset \Wbbb_2$. Finally, the error is 
\begin{align*}
         \|\Tcal_\gamma ( \hat{q} - q_{\pi^e})\|_2&= O\prns{\sqrt{k_n/n}+k^{-p/d}_n+\frac{k^{-2p/d}_n}{\sqrt{k_n/n}}  }\\
         &=\tilde O(n^{-p/(2p+d)}). \tag{Taking balance of both terms}
\end{align*}

\end{proof}

\begin{proof}[Proof of Theorem \ref{cor:neural_policy_rate}]

We consider $q_0 \in \Qbbb$ which will be specified later. From \cref{thm:agnostic_fast_q}, we obtain
\begin{align}
    \|\Tcal_\gamma ( \hat{q} - q_{\pi^e})\|_2 = O\prns{\eta + \epsilon_n + \frac{\|\Tcal_\gamma(q_{\pi^e} - q_0)\|_2^2}{\eta} + \|\Tcal_\gamma(q_{\pi^e} - q_0)\|_2}, \label{ineq:basic_nn}
\end{align}
where $\eta$ is the maximum of critical radiuses of $\Wbbb$ and $\Gcal_q$, and $\epsilon_n = \sup_{q \in \Qbbb} \inf_{w \in \Wbbb} \|w - \Tcal_\gamma (q-q_0)\|_2$. We will bound each term in the right hand side of \eqref{ineq:basic_nn}.

For the third and fourth terms, we consider an approximation power by fixing $q_0$. We set $q_0 = \argmin_{q \in \Qbbb} \|q - q_{\pi^e}\|_\infty$. By Lemma \ref{lem:yaro} and boundedness of $\Tcal_\gamma$, we obtain
\begin{align*}
    \|\Tcal_\gamma(q_{\pi^e} - q_0)\|_2  \leq (1+\gamma )\|q_{\pi^e} - q_0\|_\infty = \tilde{O}(\Xi^{-\alpha/d}).
\end{align*}
By this result, we bound the third and fourth terms in \eqref{ineq:basic_nn} by
\begin{align*}
    \tilde{O}(\eta^{-1}\Xi^{-2\alpha/d} + \Xi^{-\alpha/d}).
\end{align*}

About the second term $\epsilon_n$ in \eqref{ineq:basic_nn}, we fix $q \in \Qbbb$ arbitrary and consider an explicit form of $\Tcal_\gamma (q-q_0)$ as
\begin{align*}
    \Tcal_\gamma (q-q_0) = \gamma \mathbb{E}_{s' \sim P_{S'|S,A}(s'|\cdot)}[q(s', \pi^e) - q_0(s', \pi^e)] -(q-q_0).
\end{align*}
We define $\Wbbb'$ as a set of neural network with $L$ layers and $W$ parameters.
By the assumption, $\Wbbb' \supseteq \Qbbb$ holds, hence $q,q_0 \in \Wbbb'$ is satisfied.
Let $\|\cdot\|_b$ be a Sobolev norm. By the assumption on the density $P_{S'|S,A}(s'|\cdot)$, we can bound the Sobolev norm of $ \mathbb{E}_{s' \sim P_{S'|S,A}(s'|\cdot)}[q(s', \pi^e)]$ as
\begin{align*}
    \|E_{s' \sim P_{S'|S,A}(s'|\cdot)}[q(s',\epol)]\|_{b}&=\left\|\int_{[0,1]^d}q(s',\epol)P_{S'|S,A}(s'|\cdot)\rd s'\right\|_b \\
    &\leq C_{\Qbbb}\left\|\int_{[0,1]^d} P_{S'|S,A}(s'|\cdot)\rd s'\right\|_b\leq C_{\Qbbb}\int_{[0,1]^d} \|P_{S'|S,A}(s'|\cdot)\|_b\rd s'\\
    &\leq C_{\Qbbb}\int_{[0,1]^d} K\rd s'=C_{\Qbbb}K. 
\end{align*}
Thus, 
$\gamma \mathbb{E}_{s' \sim P_{S'|S,A}(s'|\cdot)}[q(s', \pi^e) - q_0(s', \pi^e)] \in \Wcal_{2\gamma C_{\Qbbb}K}^\alpha(I_d)$
Hence, we pick $w' \in \Wbbb'$ as
\begin{align*}
    \|w' -  \gamma \mathbb{E}_{s' \sim P_{S'|S,A}(s'|\cdot)}[q(s', \pi^e) - q_0(s', \pi^e)]\|_\infty = \tilde{O}(\Xi^{-\alpha/d}),
\end{align*}
by Lemma \ref{lem:yaro}.
We set $w \in \Wbbb$ as $w=w' - q - q_0$ and obtain
\begin{align*}
    \|w - \Tcal_\gamma(q-q_0)\|_2 &= \|w' - \gamma \mathbb{E}_{s' \sim P_{S'|S,A}(s'|\cdot)}[q(s', \pi^e) - q_0(s', \pi^e)]\|_2 \\
    &\leq \|w' -  \gamma \mathbb{E}_{s' \sim P_{S'|S,A}(s'|\cdot)}[q(s', \pi^e) - q_0(s', \pi^e)]\|_\infty = \tilde{O}(\Xi^{-\alpha/d}).
\end{align*}
Hence, we have
\begin{align*}
    \epsilon_n = \tilde{O}(\Xi^{-\alpha/d}).
\end{align*}
Combining the results with \eqref{ineq:basic_nn}, we obtain
\begin{align}
    \|\Tcal_\gamma ( \hat{q} - q_{\pi^e})\|_2 \lesssim (1+C_1)^2 \eta +\tilde{O}(\eta^{-1}\Xi^{-2\alpha/d} + \Xi^{-\alpha/d}). \label{ineq:basic_nn2}
\end{align}

Next, We derive an explicit order of $\eta$ by deriving a critical radius of $\Wbbb$.
By Lemma \ref{lem:covering_neural}, we have
\begin{align*}
    \log \Ncal(\varepsilon, \Wbbb, \|\cdot\|_\infty) \leq \Xi\log \left( \frac{ 2(L+1)B^{(L+1)} (3W+1)^{(L+1)}}{\varepsilon} \right) \lesssim  \Xi L \log\left( \frac{LB\Xi}{ \varepsilon} \right).
\end{align*}
To the end, we derive the following local Rademacher complexity of the star-hull of $\Wbbb$ as Lemma \ref{lem:support2},
\begin{align*}
    &\frac{1}{\sqrt{n}}\int_{0}^\delta \sqrt{\log \Ncal( t, \mathrm{star}(\Wbbb), \|\cdot\|_n)} dt \\
    &\lesssim \frac{1}{\sqrt{n}}\int_{0}^\delta \sqrt{\log \Ncal( t, \mathrm{star}(\Wbbb), \|\cdot\|_\infty)} dt \\
    & \leq \frac{1}{\sqrt{n}}\int_{0}^\delta \sqrt{ \log (1/t) + \log \Ncal( t/2, \Wbbb, \|\cdot\|_\infty)} dt\\
    &\lesssim \frac{1}{\sqrt{n}}\int_{0}^\delta \sqrt{\log (1/t)} dt +  \sqrt{\frac{\Xi L}{n}} \int_{0}^\delta \sqrt{\log (LB\Xi/t)} dt\\
    &=\bigO\prns{ \sqrt{\frac{\Xi L}{n}}\delta\log(LB\Xi/\delta))},
\end{align*}
for any $\delta > 0$.
The second inequality follows Lemma 4.5 in \cite{mendelson2002improving} for a covering number of a star-hull of sets.
We note that the lemma in \cite{mendelson2002improving} holds regardless of the choice of norms.
By setting
\begin{align*}
    \eta = \tilde O\prns{\sqrt{\frac{\Xi L}{n}}},
\end{align*}
the  critical inequality of $\Wbbb$ is satisfied. Similarly, the  critical inequality of $\Gcal_q$ is satisfied for this $\eta$ utilizing Lemma \ref{lem:covering_G}. 

We substitute $\eta$ into \eqref{ineq:basic_nn2} and achieve
\begin{align*}
    \|\Tcal_\gamma ( \hat{q} - q_{\pi^e})\|_2 \lesssim \tilde O\prns{\sqrt{\frac{\Xi L}{n}}} +\tilde{O}(n^{1/2} L^{1/2} \Xi^{-2\alpha/d - 1/2} + \Xi^{-\alpha/d})
\end{align*}
Then, by setting $L = \Theta(\log n)$ and $W=\Theta(n^{d/(2\alpha + d)})$, the statement is concluded. 
\end{proof}

\begin{proof}[Proof of Corollary \ref{cor:implication2}]
The proof is done as in the proof of Corollary \ref{cor:implication}. 
\end{proof}

\subsection{Proof of \cref{sec:efficiency}}


\begin{proof}[Proof of \cref{thm:efficiency}]

Recall we use sample splitting. The estimator is defined as
\begin{align*}
     \E_{n_1}[\phi(\hat w^{\MIL}_0,\hat q^{\MIL}_0)]+ \E_{n_0}[\phi(\hat w^{\MIL}_1,\hat q^{\MIL}_1)], 
\end{align*}
where $\E_{n_1}$ is the empirical average over $\Dcal_1$ and $\E_{n_0}$ is the empirical average over $\Dcal_0$. We define $\phi(s,a,r,s';w,q)=w(s,a)(r-q(s,a)+\gamma q(s',\epol))+\E_{d_0}[q(s_0,\epol)]$. Then, we have 
\begin{align}
    |\E_{n_1}[\phi(\hat w^{\MIL}_0,\hat q^{\MIL}_0)]-J|&\leq |(\E_{n_1}-\E)[\phi(\hat w^{\MIL}_0,\hat q^{\MIL}_0)|\Dcal_0]-(\E_{n_1}-\E)[\phi(w_{\epol},q_{\epol})|\Dcal_0]| \label{eq:first}\\
    &+|\E[\phi(\hat w^{\MIL}_0,\hat q^{\MIL}_0)-\phi(w_{\epol},q_{\epol})|\Dcal_0]| \label{eq:second}\\
    &+|\E_{n_1}[\phi(w_{\epol},q_{\epol})]-J|. \label{eq:third}
\end{align}
Recall with probability $1-2\delta'$, we have 
\begin{align*}
    &\|\hat w^{\MIL}_0-w_{\epol}\|_2\lesssim \eta_{w,n}\times C_{\iota,n}C^{*}_{w,n},\,\|\Tcal'_{\gamma}(\hat w^{\MIL}_0-w_{\epol})\|_2\lesssim \eta_{w,n}\times C^{*}_{w,n},\\
    &\|\hat q^{\MIL}_0-q_{\epol}\|_2\lesssim \eta_{q,n}\times C_{\iota,n}C^{*}_{q,n},\,\|\Tcal_{\gamma}(\hat q^{\MIL}_0-q_{\epol})\|_2\lesssim \eta_{q,n}\times C^{*}_{q,n}. 
\end{align*}

\textbf{First Term}

We analyze the first term \eqref{eq:first}. For simplicity, we write $\hat w^{\MIL}_0,\hat q^{\MIL}_0$ as  $\hat w,\hat q$.
\begin{align}
    &(\E_{n_1}-\E)[\phi(\hat w,\hat q)-\phi(w_{\epol},q_{\epol})|\Dcal_0] \nonumber\\
    &=(\E_{n_1}-\E)[(\hat w(s,a)-w_{\epol}(s,a))(-\hat q(s,a)+\gamma \hat v(s')+q_{\epol}(s,a)-\gamma v_{\epol}(s') )|\Dcal_0] \label{eq:first_1}\\
    &+(\E_{n_1}-\E)[(\hat w(s,a)-w_{\epol}(s,a))(r-q_{\epol}(s,a)+\gamma v_{\epol}(s')) |\Dcal_0] \label{eq:first_2} \\
    &+(\E_{n_1}-\E)[w_{\epol}(s,a)\{-\hat q(s,a)+q_{\epol}(s,a)+\gamma \hat v(s')-\gamma v_{\epol}(s')\}|\Dcal_0]\label{eq:first_3}. 
\end{align}
From Berstein's inequality, with probability $1-\delta'$, regarding the term \cref{eq:first_1} (we can do since $\hat w$ only depend on $\Dcal_0$), we have
\begin{align*}
    &(\E_{n_1}-\E)[(\hat w(s,a)-w_{\epol}(s,a))(-\hat q(s,a)+\gamma \hat v(s')+q_{\epol}(s,a)-\gamma v_{\epol}(s') )|\Dcal_0]\\
    &\leq \sqrt{2\frac{\mathrm{var}[(\hat w(s,a)-w_{\epol}(s,a))(-\hat q(s,a)+\gamma \hat v(s')+q_{\epol}(s,a)-\gamma v_{\epol}(s') )]\log(1/\delta')}{n_1}}+\frac{2C_{\Wbbb_1}C_{\Qbbb_2}\log(1/\delta')}{3n_1}\\
        &\lesssim \sqrt{2\frac{\|\hat w-w_{\epol}\|^2_2\{C_{\Qbbb_2}\}^2 \log(1/\delta')}{n_1}}+\frac{2C_{\Wbbb_1}C_{\Qbbb_2}\log(1/\delta')}{3n_1}. 
\end{align*}
The second term \eqref{eq:first_2} is similarly upper-bounded. With probability $1-\delta'$, The third term \eqref{eq:first_3} is upper-bounded as follows:
\begin{align*}
    &(\E_{n_1}-\E)[w_{\epol}(s,a)\{-\hat q(s,a)+q_{\epol}(s,a)+\gamma \hat v(s')-\gamma v_{\epol}(s')\}|\Dcal_0]\\
    &\leq \sqrt{\frac{2\var[w_{\epol}(s,a)\{-\hat q(s,a)+q_{\epol}(s,a)+\gamma \hat v(s')-\gamma v_{\epol}(s')\}]\log(1/\delta')}{n_1}}+\frac{2C_{\Wbbb_1}C_{\Qbbb_2}\log(1/\delta')}{3n_1}\\
        &\leq \sqrt{\frac{2\|\hat q-q_{\epol}\|^2_2C^2_{\Wbbb_1}(1+C_{\eta}C_m)\log(1/\delta')}{n_1}}+\frac{2C_{\Wbbb_1}C_{\Qbbb_2}\log(1/\delta')}{3n_1}. 
\end{align*}
We use 
\begin{align*}
    &\var[w_{\epol}(s,a)\{-\hat q(s,a)+q_{\epol}(s,a)+\gamma \hat v(s')-\gamma v_{\epol}(s')\}|\Dcal_0]\\
    &\lesssim C^2_{\Wbbb_1}\{\var[ -\hat q(s,a)+q_{\epol}(s,a) ]+\var[\gamma \hat v(s')-\gamma v_{\epol}(s')  ]\}\\
      &\lesssim C^2_{\Wbbb_1}\{\var[ -\hat q(s,a)+q_{\epol}(s,a) ]+C_{\eta}C_m\var[ -\hat q(s,a)+q_{\epol}(s,a) ]\}.   
\end{align*}
Finally, by combining the all arguments, with probability $1-3\delta'$, 
\begin{align*}
      &|(\E_{n_1}-\E)[\phi(\hat w,\hat q)-\phi(w_{\epol},q_{\epol})|\Dcal_0]| \nonumber\\
 &\lesssim \sqrt{\frac{\{\|\hat w-w_{\epol}\|^2_2\{C_{\Qbbb_2}\}^2 +\|\hat q-q_{\epol}\|^2_2C^2_{\Wbbb_1}(1+C_{\eta}C_m)  \} \log(1/\delta')}{n_1}}+\frac{2C_{\Wbbb_1}C_{\Qbbb_2}\log(1/\delta')}{3n_1}\\
 &\lesssim  (C^{*}_{w,n}C_{\Qbbb_2}C_{\iota,n}\eta_w+C^{*}_{q,n}C_{\Wbbb_1}C_{\iota,n}\eta_{q,n}\sqrt{1+C_{\eta}C_m})\sqrt{\frac{\log(1/\delta')}{n_1}}+\frac{C_{\Wbbb_1}C_{\Qbbb_2}\log(1/\delta')}{n_1}
\end{align*}

\textbf{Second Term}

We analyze the second term \eqref{eq:second}. For simplicity, we write $\hat w^{\MIL}_0,\hat q^{\MIL}_0$ as  $\hat w,\hat q$. Then, 
\begin{align*}
   & |\E[\phi(\hat w,\hat q)-\phi(w_{\epol},q_{\epol})]|=|\E[(\hat w-w_{\epol})\Tcal_{\gamma}(\hat q-q_{\epol})]| \\
    &\leq \|\hat w-w_{\epol}\|_2\|\Tcal_{\gamma}(\hat q-q_{\epol})\|_2 \leq \eta_{w,n}\eta_{q,n}C^{*}_{w,n}C^{*}_{q,n}C_{\iota,n}. 
\end{align*}
Here, we use 
\begin{align*}
&    \E[\phi(\hat w,\hat q)-\phi(w_{\epol},q_{\epol})]\\
&=\E[\hat w(s,a)\{r-\hat q(s,a)+\gamma \hat q(s',\epol)\}+(1-\gamma) \E_{d_0}[\hat q(s_0,\epol)  ]]\\
&-\E[w_{\epol}(s,a)\{r-q_{\epol}(s,a)+\gamma \hat q(s',\epol)\}+(1-\gamma) \E_{d_0}[q_{\epol}(s_0,\epol)  ] ]\\
&=\E[(\hat w-w_{\epol})\Tcal_{\gamma}(\hat q-q_{\epol})]. 
\end{align*}
 

\textbf{Combining All terms}
With probability $1-(\delta+10\delta')$ (From $2\times 2\delta'+2\times 3\delta'+\delta$), 
\begin{align*}
     & \E_{n_1}[\phi(\hat w^{\MIL}_0,\hat q^{\MIL}_0)]+ \E_{n_0}[\phi(\hat w^{\MIL}_1,\hat q^{\MIL}_1)]\\
     &\leq  \E_{n}[\phi(w_{\epol},q_{\epol})]-\E[\phi(w_{\epol},q_{\epol})]\\
     &+ c_0\braces{\eta_{w,n}\eta_{q,n}C^{*}_{w,n}C^{*}_{q,n}C_{\iota,n}+(C^{*}_{w,n}\eta_w+C^{*}_{q,n}\eta_{q,n}\sqrt{1+C_{\eta}C_m})C_{\iota,n}\sqrt{\frac{\log(1/\delta')}{n}}+\frac{C_{\Wbbb_1}C_{\Qbbb_2}\log(1/\delta')}{n}}\\
     &\leq \sqrt{2\frac{\mathrm{var}[\phi(w_{\epol},q_{\epol})]\log(1/\delta)}{n}}\\
       &+ c_0\{\eta_{w,n}\eta_{q,n}C^{*}_{w,n}C^{*}_{q,n}C_{\iota,n}+(C^{*}_{w,n}\eta_w+C^{*}_{q,n}\eta_{q,n}\sqrt{1+C_{\eta}C_m})C_{\iota,n}\sqrt{\frac{\log(1/\delta')}{n}}\\
     &+\frac{C_{\Wbbb_1}C_{\Qbbb_2}(\log(1/\delta')+\log(1/\delta))}{n}\}. 
\end{align*}
for  some universal constant $c_0$. 

\end{proof}

\begin{proof}[Proof of \cref{thm:operator_calculation}]
~ \\ 
\textbf{Q-functions}
Since $q_{\epol}\in \Qbbb_2$, for any $q\in \Qbbb_2$, there exists $\theta$ s.t. $q-q_{\epol}=\theta^{\top}\phi$. Then, $$\ts
\Bcal q-q=\Tcal_{\gamma}(q-q_{\epol})=\theta^{\top}(\gamma \Tcal-I)\phi=\theta^{\top}(\gamma M^{\top}_{\epol}-I)\phi.$$ 
Thus, 
\begin{align*}
   \frac{\|q-q_{\epol}\|^2_2}{\|\Bcal q-q\|^2_2}&\leq \sup_{\theta \in \mathbb{R}^d}\frac{\theta^{\top}\E[\phi(s,a)\phi^{\top}(s,a)]\theta}{\theta^{\top}(\gamma M^{\top}_{\epol}-I)\E[\phi(s,a)\phi^{\top}(s,a)](\gamma M^{\top}_{\epol}-I)^{\top}\theta}\\
   &=\sup_{\theta\in \mathbb{R}^d}\frac{\theta^{\top}X\theta}{\theta^{\top}(\gamma M_{\epol}-I)X(\gamma M_{\epol}-I)^{\top}\theta}\\
     &=\sup_{\theta\in \mathbb{R}^d}\frac{\theta^{\top}X\theta}{\theta^{\top}\bar M_{\epol}X (\bar M_{\epol})^{\top}\theta}\\
     &\leq \sigma_{\max}(\{\bar M_{\epol}X (\bar M_{\epol})^{\top}\}^{-1/2}X\{M_{\epol}X (\bar M_{\epol})^{\top}\}^{-1/2}). 
\end{align*}

\textbf{W-functions}

Since $w_{\epol}\in \Wbbb_1$, for any $w\in \Wbbb_1$, there exists $\theta$ s.t. $w-w_{\epol}=\theta^{\top}\phi$. Then, $$\ts
\Bcal'w-w=\Tcal'_{\gamma}(w-w_{\epol})=\theta^{\top}(\gamma \Tcal'-I)\phi=\theta^{\top}(\gamma M'^{\top}_{\epol}-I)\phi.$$ 
Thus, 
\begin{align*}
   \frac{\|w-w_{\epol}\|^2_2}{\|\Bcal w-w\|^2_2}&\leq \sup_{\theta\in \mathbb{R}^d}\frac{\theta^{\top}X\theta}{\theta^{\top}(\gamma M'^{\top}_{\epol}-I)X(\gamma M'^{\top}_{\epol}-I)^{\top}\theta}\\
     &=\sup_{\theta\in \mathbb{R}^d}\frac{\theta^{\top}X\theta}{\theta^{\top}\bar M'_{\epol} X (\bar M'_{\epol})^{\top}\theta}\\
     &\leq \sigma_{\max}(\{\bar M'_{\epol} X (\bar M'_{\epol})^{\top}\}^{-1/2}X\{\bar M'_{\epol} X (\bar M'_{\epol})^{\top}\}^{-1/2}). 
\end{align*}

\end{proof}

\begin{proof}[Proof of Corollary \ref{cor:tabular_recovery}] 
Recall \cref{exa:tabular}.  The matrix $\E[\phi\phi^{\top}]$, i.e., $X$, is defined as a matrix, whose $(i,i)$-th entry corresponding $x_i\in \Xcal$, is $\E[I((s,a)=x_i)]$, i.e., $d_{\bpol}(x_i)$. Besides, 
\begin{align*}
     \bar M_{\epol}&=(\gamma P^{\epol}-I)^{\top},\\ 
     \bar M'_{\epol}   &=\E[\phi\phi^{\top}](\gamma P^{\epol}-I) \E[\phi\phi^{\top}]^{-1}.
\end{align*}
All of matrices are non-singular. Thus, the assumptions in \cref{thm:operator_calculation} are satisfied. 

\end{proof}

\begin{proof}[Proof of Corollary \ref{cor:final_efficiency}]
Recall we use sample splitting. The estimator is defined as
\begin{align*}
     \E_{n_1}[\phi(\hat w,\hat q)]+ \E_{n_0}[\phi(\hat w,\hat q)]. 
\end{align*}
We define $\phi(s,a,r,s';w,q)=w(r-q(s,a)+\gamma q(s',\epol))+\E_{d_0}[q(s_0,\epol)]$. Then, we have 
\begin{align}
    |\E_{n_1}[\phi(\hat w,\hat q)]-J|&\leq |(\E_{n_1}-\E)[\phi(\hat w,\hat q)|\Dcal_0]-(\E_{n_1}-\E)[\phi(w_{\epol},q_{\epol})|\Dcal_0]| \label{eq:first_final}\\
    &+|\E[\phi(\hat w,\hat q)-\phi(w_{\epol},q_{\epol})|\Dcal_0]| \label{eq:second_final}\\
    &+|\E_{n_1}[\phi(w_{\epol},q_{\epol})]-J|. \label{eq:third_final}
\end{align}
As in the proof of \cref{thm:efficiency}, it is proved that \cref{eq:first_final,eq:second_final} are $\op(1)$. From CLT, the third term weakly converges to a normal distribution 
\begin{align*}\ts 
\Ncal(0,\E[w^2_{\epol}(s,a)\{r-q(s,a)+q(s',\epol)\}^2]). 
\end{align*}
From Slutsky's theorem, the final statement is concluded. 
\end{proof}

\subsection{Proof of \cref{sec:extension}}


\begin{proof}[Proof of \cref{thm:generalization_s}]
The proof is similarly done as the proof of \cref{thm:generalization}.Thus, we omit the proof.  
\end{proof}

\begin{proof}[Proof of \cref{thm:recovery1_s}]
The proof is similarly done as the proof of \cref{thm:recovery1}. Thus, we omit the proof.  
\end{proof}

\begin{proof}[Proof of \cref{thm:mabo}]

The main part is already proved in \citep{XieTengyang2020QASf}. For \comp, we prove the main part again.

\textbf{Preliminary}\\

Note that estimators for q-functions in MIL are defined as 
\begin{align*}
   \hat q=\argmin_{q'\in\Qbbb}\max_{w'\in \Wbbb}|\E_n[f^{*}_q(s,a,r,s;w',q')]|,\,f^{*}_q(s,a,r,s;w',q')=w'(s,a)\{r-q'(s,a)+\argmax_a q'(s',a)\}
\end{align*}
We write this solution as $(\hat q,\hat w)$, i.e., 
\begin{align*}
  \hat w=\argmax_{w'\in \Wbbb}|\E_n[f^{*}_q(s,a,r,s;w',\hat q)]|. 
\end{align*}
In addition,  we have 
\begin{align*}
    \E[ f^{*}_q(s,a,r,s;w',q')]=\E[w'\Tcal^{*}_{\gamma}(\hat q-q^{*})], 
\end{align*}
where $\Tcal^{*}_{\gamma}:q\to \gamma\E[\argmax_{a}q(s',a)|a]-q$.

Let $\pi^{*}=\argmax_{\pi \in \Pi_q}J(\pi)$. Then, from \citep[Theorem 8]{XieTengyang2020QASf}, 
\begin{align*}
    \max_{\pi \in \Pi_q}J(\pi)-J(\pi_{\hat q})\leq 2(1-\gamma)^{-1}\max_{\pi \in \Pi_Q}|\E[w_{\pi}\{r+\gamma \max_a \hat q(s',a)-\hat q(s,a)\}] |.
\end{align*}
Thus, 
\begin{align*}
&|\E[w_{\pi}\{r+\gamma \max_a \hat q(s',a)-\hat q(s,a)\}] |\\
&\leq (\max_{w'\in \Wbbb}|\E[w\{r+\gamma \max_a \hat q(s',a)-\hat q(s,a)\}] |)+ \min_{w'\in \Wbbb}|\E[(w_{\pi}-w')\{r+\gamma \max_a \hat q(s',a)-\hat q(s,a)\}] |\\
&\leq (\max_{w'\in \Wbbb}|\E[w\{r+\gamma \max_a \hat q(s',a)-\hat q(s,a)\}] |)+ \max_{q'\in \Qbbb}\min_{w'\in \Wbbb}|\E[(w_{\pi}-w')\{r+\gamma \max_a q'(s',a)-q'(s,a)\}] |\\
\end{align*}
Here, the second term is $0$ as follows since $w_{\pi}\in \Wbbb$. Thus,
\begin{align*}
    |\E[w_{\pi}\{r+\gamma \max_a \hat q(s',a)-\hat q(s,a)\}] | \leq \max_{w' \in \Wbbb}|\E[w'\Tcal^{*}_{\gamma}(\hat q-q^{*})]|= \E[\tilde w\Tcal^{*}_{\gamma}(\hat q-q^{*})]|,
\end{align*}
where 
\begin{align*}
    \tilde w= \argmax_{w'\in \Wbbb}|\E[w\Tcal^{*}_{\gamma}(\hat q-q^{*})]|. 
\end{align*}

As a first step, we calculate $|\E[\hat w\Tcal^{*}_{\gamma}(\hat q-q^{*})]|$: 
\begin{align*}
   & |\E[\hat w\Tcal^{*}_{\gamma}(\hat q-q^{*})]|=|\E[f^{*}_q(\hat w,\hat q)]|=|\E[f^{*}_q(\hat w,\hat q)]|-|\E[f^{*}_q(\hat w,q^{*})]|\\
    &=|\E[f^{*}_q(\hat w,\hat q)]|-|\E_n[f^{*}_q(\hat w,\hat q)]|+|\E_n[f^{*}_q(\hat w,\hat q)]|-|\E_n[f^{*}_q(\hat w,q^{*})]|+|\E_n[f^{*}_q(\hat w,q^{*})]|-|\E[f^{*}_q(\hat w,q^{*})]|\\
    &=|\E[f^{*}_q(\hat w,\hat q)]|-|\E_n[f^{*}_q(\hat w,\hat q)]|+|\E_n[f^{*}_q(\hat w,q^{*})]|-|\E[f^{*}_q(\hat w,q^{*})]|\\
    &\leq 2\sup_{w'\in\Wbbb,q'\in \Qbbb}(\E-\E_n)[|f^{*}_q(q',w')| ] 
\end{align*}
Here, we use $|\E_n[f^{*}_q(\hat w,\hat q)]|\leq |\E_n[f^{*}_q(\hat w,q^{*})]|$ and $|\E[f^{*}_q(\hat w,q^{*})]|$=0

Second, we bound the difference of $ |\E[\hat w\Tcal^{*}_{\gamma}(\hat q-q^{*})]|$ and $ |\E[\tilde w\Tcal^{*}_{\gamma}(\hat q-q^{*})]|$
\begin{align*}
    &|\E[\tilde w\Tcal_{\gamma}(\hat q-q^{*})]|-|\E[\hat w\Tcal_{\gamma}(\hat q-q^{*})]|\\
    & =|\E[f^{*}_q(\tilde w,\hat q)]|-|\E[f^{*}_q(\hat w,\hat q)]|\\
    & =|\E[f^{*}_q(\tilde w,\hat q)]|-|\E_n[f^{*}_q(\tilde w,\hat q)]|+|\E_n[f^{*}_q(\tilde w,\hat q)]|-|\E_n[f^{*}_q(\hat w,\hat q)]|+|\E_n[f^{*}_q(\hat w,\hat q)]| -|\E[f^{*}_q(\hat w,\hat q)]|\\
  & \leq |\E[f^{*}_q(\tilde w,\hat q)]|-|\E_n[f^{*}_q(\tilde w,\hat q)]|+|\E_n[f^{*}_q(\hat w,\hat q)]| -|\E[f^{*}_q(\hat w,\hat q)]|\\
    &  \leq 2\sup_{w'\in\Wbbb,q'\in \Qbbb}(\E-\E_n)[|f^{*}_q(q',w')| ] .
\end{align*}

Here, we use $|\E_n[f^{*}_q(\tilde w,\hat q)]|\leq |\E_n[f^{*}_q(\hat w,\hat q)]|$. In the end, 
\begin{align*}
 |\E[\tilde w\Tcal^{*}_{\gamma}(\hat q-q^{*})]||\leq 4\sup_{w'\in\Wbbb,q'\in \Qbbb}(\E-\E_n)[|f^{*}_q(q',w')| ]. 
\end{align*}

\begin{proof}[Proof of \cref{thm:fast_q_opti}]

The proof is similarly conducted as \cref{thm:fast_q}. Thus, we omit the proof. 
\end{proof}

\begin{proof}[Proof of Corollary \ref{cor:policy_optimize}]
From \citep[Theorem 8]{XieTengyang2020QASf}, 
\begin{align*}
    \max_{\pi \in \Pi_q}J(\pi)-J(\pi_{\hat q})&\leq 2(1-\gamma)^{-1}\max_{\pi \in \Pi_Q}|\E[w_{\pi}\{r+\gamma \max_a \hat q(s',a)-\hat q(s,a)\}] |. 
 \end{align*}
Then, this implies 
 \begin{align*}
    \max_{\pi \in \Pi_q}J(\pi)-J(\pi_{\hat q})    &\leq 2(1-\gamma)^{-1}\max_{\pi \in \Pi_Q}|\E[w_{\pi}(\Bcal^{*}\hat q-\hat q ) ] |\\
       &\leq 2(1-\gamma)^{-1}\max_{\pi \in \Pi_Q}\|w_{\pi}\|_2 \|\Bcal^{*}\hat q-\hat q \|_2. \tag{CS inequality}
\end{align*}
By the combining the statement in \cref{thm:mabo}, the statement is proved. 

\end{proof}

\end{proof}


%



\subsection{Proof of \cref{ape:misspecification} }


\begin{proof}[Proof of \cref{thm:allow_mis1}]
We prove the case of $\hat J^{\MIL}_q$. The other case is similarly proved. We use the notation in \cref{thm:fast_q}. Recall the proof of \cref{thm:fast_q}. We have 
\begin{align*}
|\hat J^{\MIL}_q-J|&\leq  \max_{w\in \Wbbb_2}|\E[w(s,a)\Tcal_{\gamma}(\hat q-q_{\epol})]|+ \min_{w\in \Wbbb_2}|\E[(w-w_{\epol})\Tcal_{\gamma}(\hat q-q_{\epol})]|\\ 
&\leq \max_{w\in \Wbbb_2}|\E[w(s,a)\Tcal_{\gamma}(\hat q-q_{\epol})]|+\min_{w\in \Wbbb_2}\max_{q\in \Qbbb_2}|\E[(w-w_{\epol})\Tcal_{\gamma}(q-q_{\epol})]|.
\end{align*}
and 
\begin{align*}\ts 
    \E[ f_q(s,a,r,s';w,q)]=\E[w\Tcal_{\gamma}( q-q_{\epol})]. 
\end{align*}
Thus, 
\begin{align*}
      |\hat J^{\MIL}_q-J|\leq \E[\tilde w\Tcal_{\gamma}(\hat q-q_{\epol})]|+\min_{w\in \Wbbb_2}\max_{q\in \Qbbb_2}|\E[(w-w_{\epol})\Tcal_{\gamma}(q-q_{\epol})]|,
\end{align*}
where 
\begin{align*}\ts 
    \tilde w= \argmax_{w\in \Wbbb_2}|\E[w\Tcal_{\gamma}(\hat q-q_{\epol})]|.
\end{align*}
Next, we define 
\begin{align*}
   q^{\dagger}=\argmin_{q\in \Qbbb_2}\max_{w\in \Wbbb_2}|\E[w\Tcal_{\gamma}(q-q_{\epol})]|. 
\end{align*}

As a first step, we bound $|\E[\hat w\Tcal_{\gamma}(\hat q-q^{\dagger})]|-|\E[f_q(\hat w,q^{\dagger})]|$: 
\begin{align*}
   & |\E[\hat w\Tcal_{\gamma}(\hat q-q_{\epol})]|-|\E[f_q(\hat w,q^{\dagger})]|=|\E[f_q(\hat w,\hat q)]|-|\E[f_q(\hat w,q^{\dagger})]|=|\E[f_q(\hat w,\hat q)]|-|\E[f_q(\hat w,q^{\dagger})]|\\
    &=|\E[f_q(\hat w,\hat q)]|-|\E_n[f_q(\hat w,\hat q)]|+|\E_n[f_q(\hat w,\hat q)]|-|\E_n[f_q(\hat w,q^{\dagger})]|+|\E_n[f_q(\hat w,q^{\dagger})]|-|\E[f_q(\hat w,q^{\dagger})]|\\
    &=|\E[f_q(\hat w,\hat q)]|-|\E_n[f_q(\hat w,\hat q)]|+|\E_n[f_q(\hat w,q^{\dagger})]|-|\E[f_q(\hat w,q^{\dagger})]|\\
    &\leq 2|\sup_{w\in\Wbbb_2,q\in \Qbbb_2}(\E-\E_n)[f_q(q,w)]| 
\end{align*}
Here, we use $|\E_n[f_q(\hat w,\hat q)]|\leq |\E_n[f_q(\hat w,q^{\dagger})]|$. 

Second, we bound the difference of $ |\E[\hat w\Tcal_{\gamma}(\hat q-q_{\epol})]|$ and $ |\E[\tilde w\Tcal_{\gamma}(\hat q-q_{\epol})]|$: 
\begin{align*}
    &|\E[\tilde w\Tcal_{\gamma}(\hat q-q_{\epol})]|-|\E[\hat w\Tcal_{\gamma}(\hat q-q_{\epol})]|\\
    & =|\E[f_q(\tilde w,\hat q)]|-|\E[f_q(\hat w,\hat q)]|\\
    & =|\E[f_q(\tilde w,\hat q)]|-|\E_n[f_q(\tilde w,\hat q)]|+|\E_n[f_q(\tilde w,\hat q)]|-|\E_n[f_q(\hat w,\hat q)]|+|\E_n[f_q(\hat w,\hat q)]| -|\E[f_q(\hat w,\hat q)]|\\
  & \leq |\E[f_q(\tilde w,\hat q)]|-|\E_n[f_q(\tilde w,\hat q)]|+|\E_n[f_q(\hat w,\hat q)]| -|\E[f_q(\hat w,\hat q)]|\\
    &  \leq \sup_{w\in\Wbbb_2,q\in \Qbbb_2}2|(\E-\E_n)[f_q(q,w) ]| .
\end{align*}
Here, we use $|\E_n[f_q(\tilde w,\hat q)]|\leq |\E_n[f_q(\hat w,\hat q)]|$. In the end, 
\begin{align*}
        &\E[\tilde w\Tcal_{\gamma}(\hat q-q_{\epol})]|\\
        &\leq |\E[\tilde w\Tcal_{\gamma}(\hat q-q_{\epol})]|- |\E[\hat w\Tcal_{\gamma}(\hat q-q_{\epol})]|+|\E[\hat w\Tcal_{\gamma}(\hat q-q_{\epol})]|-|\E[f_q(\hat w,q^{\dagger})]|+|\E[f_q(\hat w,q^{\dagger})]|\\ 
        &\leq  \sup_{w\in\Wbbb_2,q\in \Qbbb_2}4|(\E-\E_n)[f_q(q,w)]|\\
        &+\min_{q\in \Qbbb_2}\max_{w\in \Wbbb_2}|\E[(w-w_{\epol})\Tcal_{\gamma}(q-q_{\epol})]|.
\end{align*}
Noting the first term is upper-bounded by $
\Rcal_n(\Wbbb_2)R_{\max}+\Rcal_n(\Gcal(\Wbbb_2,\Qbbb_2))+C_{\Wbbb_2}\{R_{\max}+C_{\Qbbb_2}\}\sqrt{\log(2/\delta)/n}$, the error is upper-bounded by  
\begin{align*}
|\hat J^{\MIL}_q-J|&\leq 4\sup_{w\in\Wbbb_2,q\in \Qbbb_2}|(\E-\E_n)[f_q(q,w)]|+|\E[f_q(\hat w,q^{\dagger})]|+\min_{w\in \Wbbb_2}\max_{q\in \Qbbb_2}|\E[(w-w_{\epol})\Tcal_{\gamma}(q-q_{\epol})]| \\ 
    &\lesssim \Rcal_n(\Wbbb_2)R_{\max}+\Rcal_n(\Gcal(\Wbbb_2,\Qbbb_2))+C_{\Wbbb_2}\{R_{\max}+C_{\Qbbb_2}\}\sqrt{\log(2/\delta)/n}+\\
    &+\min_{q\in \Qbbb_2}\max_{w\in \Wbbb_2}|\E[w\Tcal_{\gamma}(q-q_{\epol})]|+\min_{w\in \Wbbb_2}\max_{q\in \Qbbb_2}|\E[(w-w_{\epol})\Tcal_{\gamma}(q-q_{\epol})]|. 
\end{align*}

\end{proof}

\begin{proof}[Proof of \cref{thm:agnostic_fast_q}]
~\\
\textbf{Preliminary}\\

Define 
\begin{align*}
    \Phi(q;w) &=\E[\{r-q(s,a)+\gamma q(s',\epol)\}w(s,a)],\\
  \Phi_n(q;w) &=\E_n[\{r-q(s,a)+\gamma q(s',\epol)\}w(s,a)],\\
     \Phi^{\lambda}_n(q;w) &= \Phi_n(q;w)-\lambda \|w\|^2_{n},\\
         \Phi^{\lambda}(q;w) &=\Phi(q;w)-\lambda\|w\|^2_{2}. 
\end{align*}

From Lemma \ref{lem:support1}, we have 
\begin{align}\label{eq:martin}
    \forall w\in \Wbbb,\,|\|w\|_{n}^2-\|w\|^2_2|\leq 0.5\|w\|^2_2+\eta^2
\end{align}
for our choice of $\eta:=\eta_n+\sqrt{c_0\log(c_1/\delta)/n}$ noting $\eta_n$ upper bounds the critical radius of $\Wcal$.

\textbf{First Step}

By definition of $\hat q$ and $q_0\in \Qbbb$, we have 
\begin{align}\label{eq:obvious_def}
    \sup_{w\in \Wbbb}   \Phi^{\lambda}_n(\hat q;w) \leq  \sup_{w\in \Wbbb}   \Phi^{\lambda}_n(q_0;w) 
\end{align}
From Lemma \ref{lem:support2}, with probability $1-\delta$,  we have 
\begin{align}\label{eq:martin22}
    \forall w\in \Wbbb: |  \Phi_n(q_0;w) -\Phi(q_0;w)   |\lesssim (C_1+1)\{\eta \|w\|_2+\eta^2\}. 
\end{align}
Here, we use $l(a_1,a_2):=a_1a_2,a_1=w(s,a),a_2=r-q(s,a)+\gamma q(s',\epol) $ is $C_1$-Lipschitz with respect to $a_1$ by defining $C_1=R_{\max}+(1+\gamma)C_{\Qbbb}$.  More specifically, 
\begin{align*}
    |l(a_1,a_2)-l(a'_1,a_2)|\lesssim C_1|a_1-a'_1|.  
\end{align*}

Thus, 
\begin{align} 
    \sup_{w \in \Wbbb } \Phi^{\lambda}_n(q_0;w) &=    \sup_{w \in \Wbbb } \braces{ \Phi_n(q_0;w) -\lambda \|w\|^2_{n}} && \text{definition}\nonumber \\
&\leq   \sup_{w \in \Wbbb } \braces{ \Phi(q_0;w)+C_1\eta \|w\|_2+ C_1\eta^2 -\lambda\|w\|^2_{n}} && \text{From \cref{eq:martin22}}\nonumber \\
&\leq    \sup_{w \in \Wbbb } \braces{ \Phi(q_0;w)+C_1\eta \|w\|_2+C_1\eta^2 -0.5\lambda\|w\|^2_2+\lambda\eta^2} && \text{From \cref{eq:martin}} \nonumber \\
&\lesssim   \sup_{w \in \Wbbb }  \braces{\Phi(q_0;w)-0.25\lambda\|w\|^2_2+(C^2_1/\lambda+C_1+\lambda) \eta^2}.  \label{eq:help3}
\end{align}
In the last line, we use a general inequality, $a,b>0$:
\begin{align*}
    \sup_{f \in F}(a\|f\|-b\|f\|^2 )\leq a^2/4b. 
\end{align*}
Moreover,
\begin{align*}
    \sup_{w\in \Wbbb}\{ \Phi^{\lambda}_n(\hat q;w) \}&=     \sup_{w\in \Wbbb}\{ \Phi_n(\hat q;w)- \Phi_n(q_0;w)+ \Phi_n(q_0;w) -\lambda \|w\|^2_{n}\}\\
    &\geq     \sup_{w\in \Wbbb}\{ \Phi_n(\hat q;w)- \Phi_n(q_0;w) - 2\lambda \|w\|^2_{n}\}+  \inf_{w\in \Wbbb}\{ \Phi_n(q_0;w) +\lambda  \|w\|^2_{n}\} \\
     &=     \sup_{w\in \Wbbb}\{ \Phi_n(\hat q;w)- \Phi_n(q_0;w) - 2\lambda \|w\|^2_{n}\}+  \inf_{-w\in \Wbbb}\{ \Phi_n(q_0;-w) + \lambda \|w\|^2_{n}\} \\
         &=     \sup_{w\in \Wbbb}\{ \Phi_n(\hat q;w)- \Phi_n(q_0;w) - 2\lambda \|w\|^2_{n}\}+  \inf_{-w\in \Wbbb}\{-\Phi_n(q_0;w) + \lambda \|w\|^2_{n}\} \\
         &=     \sup_{w\in \Wbbb}\{ \Phi_n(\hat q;w)- \Phi_n(q_0;w) - 2\lambda \|w\|^2_{n}\}-  \sup_{-w\in \Wbbb}\{\Phi_n(q_0;w)- \lambda \|w\|^2_{n}\} \\
    & =     \sup_{w\in \Wbbb}\{ \Phi_n(\hat q;w)- \Phi_n(q_0;w)-2\lambda \|w\|^2_{n}\}-\sup_{w\in \Wbbb}\Phi^{\lambda}_n(q_0;w).
\end{align*}
Here, we use $\Wbbb$ is symmetric.  

Therefore, 
\begin{align*}
    \sup_{w\in \Wbbb}\{ \Phi_n(\hat q;w)- \Phi_n(q_0;w)-2\lambda \|w\|^2_{n} \}&\leq \sup_{w\in \Wbbb}\braces{\Phi^{\lambda}_n(q_0;w)}+    \sup_{w\in \Wbbb}\{ \Phi^{\lambda}_n(\hat q;w) \}\\
    &\leq 2\sup_{w\in \Wbbb}\Phi^{\lambda}_n(q_0;w) \tag{\cref{eq:obvious_def}}\\
    & \lesssim   \sup_{w \in \Wbbb }  \braces{\Phi(q_0;w)-0.25\lambda \|w\|^2_2+(C^2_1/\lambda+C_1+\lambda)\eta^2} \tag{ \cref{eq:help3}}\\ 
        & \lesssim     \sup_{w \in \Wbbb }  \braces{ \|\Tcal_{\gamma}(q_{\epol}-q_0)\|_2 \|w\|_2-0.25\lambda \|w\|^2_2 +(C^2_1/\lambda+C_1+\lambda)\eta^2}\\
   & \overset{(a)}\lesssim     \|\Tcal_{\gamma}(q_{\epol}-q_0)\|^2_2/\lambda +(C^2_1/\lambda+C_1+\lambda)\eta^2.  
\end{align*}
For (a), we use
\begin{align*}
    \Phi(q_0;w)= |\E[\Tcal_{\gamma}(q_{\epol}-q_0)w]|\leq \|\Tcal_{\gamma}(q_{\epol}-q_0)\|_2 C_{\Wbbb}. 
\end{align*}

\textbf{Second Step }

\begin{align*}
    w_q=\argmin_{w\in \Wbbb}\|w-\Tcal_{\gamma}(q-q_0)\|^2_2.  
\end{align*}
Suppose $\|w_{\hat q}\|_2\geq \eta$, and let $\kappa\coloneqq \eta/\{2\|w_{\hat q}\|_2\}\in [0,0.5]$. Then, noting $\Wbbb$ is star-convex,
\begin{align*}
  \sup_{w\in \Wbbb}\{ \Phi_n(\hat q;w)- \Phi_n(q_0;w)-2\lambda \|w\|^2_{n} \}\geq \kappa\{\Phi_n(\hat q,w_{\hat q})-\Phi_n(q_0,w_{\hat q})\}-2\kappa^2\lambda\|w_{\hat q}\|^2_{n}. 
\end{align*}
Here, we use $\kappa w_{\hat q}\in \Wbbb$. 
Then,
\begin{align*}
    \kappa^2\|w_{\hat q}\|^2_{n}& \lesssim \kappa^2\{1.5\|w_{\hat q}\|^2_{2}+0.5\eta^2\} && \text{From \cref{eq:martin}} \\
    & \lesssim \eta^2.  && \text{From definition of $\kappa$}
\end{align*}
Therefore, 
\begin{align*}
     \sup_{w\in \Wbbb}\{ \Phi_n(\hat q;w)- \Phi_n(q_0;w)-2\|w\|^2_{n} \}\geq \kappa\{\Phi_n(\hat q,w_{\hat q})-\Phi_n(q_0,w_{\hat q})\}-2\lambda \eta^2. 
\end{align*}
Observe that
\begin{align*}
    \Phi_n(q,w_q)-  \Phi_n(q_0,w_q)&=-\E_n[\{q(s,a)-q_0(s,a)-\gamma q(s',\epol)+ \gamma q_0(s',\epol)\}w_q(s,a)].
\end{align*}
Therefore, from Lemma \ref{lem:support2} noting $\eta$ upper-bounds the critical radius of $\Gcal_q$, for any $ q\in \Qbbb$, 
\begin{align*}
    &|\Phi_n(q,w_q)-  \Phi_n(q_0,w_q)-\{\Phi(q,w_q)-  \Phi(q_0,w_q)\}|\\
    &=|\E_n[\{-q(s,a)+q_0(s,a)+\gamma q(s',\epol)- \gamma q_0(s',\epol)\}w_q(s,a)  ]\\ &-\E[\{-q(s,a)+q_0(s,a)+\gamma q(s',\epol)- \gamma q_0(s',\epol)\}w_q(s,a))  ]  |\\ 
    &\leq (\eta\E[\{-q(s,a)+q_0(s,a)+\gamma q(s',\epol)- \gamma q_0(s',\epol)\}^2w^2_q(s,a)]^{1/2}+\eta^2)\\
 &\lesssim (\eta C_1\|w_q\|_2+\eta^2). 
\end{align*}
Here, we invoke Lemma \ref{lem:support2} by treating $l(a_1,a_2)=a_1,\,a_1= \{q(s,a)-q_0(s,a)-\gamma q(s',\epol)+ \gamma q_0(s',\epol)\}\{\Tcal_{\gamma}(q-q_0)(s,a)\}$. 

Thus, 
\begin{align*}
    \kappa\{\Phi_n(\hat q,w_{\hat q})-  \Phi_n(q_0,w_{\hat q})\}& \geq  \kappa\{\Phi(\hat q,w_{\hat q})-  \Phi(q_0,w_{\hat q})\}-\kappa(C_1\eta\|w_{\hat q}\|_2+\eta^2). \\
    &\geq \kappa\{\Phi(\hat q,w_{\hat q})-  \Phi(q_0,w_{\hat q})\}-\kappa(C_1\eta\|w_{\hat q}\|_2)-0.5\eta^2 \\
    &\overset{(a)}{=} \kappa\E[w_{\hat q}(s,a)\Tcal_{\gamma}(\hat q-q_0)(s,a)]-\kappa(C_1\eta\|w_{\hat q}\|_2)-0.5\eta^2 \\
  &= \frac{\eta}{2\|w_{\hat q}\|}\{\E[w_{\hat q}(s,a)\Tcal_{\gamma}(\hat q-q_0)(s,a)]-C_1\eta\|w_{\hat q}\|_2\}-0.5\eta^2 \\
    &\overset{(b)}{\geq}0.5\eta\{\|\Tcal_{\gamma}(\hat q-q_0)\|_2-2\epsilon_n\}-(0.5+C_1)\eta^2. 
\end{align*}
For (a),  we use 
\begin{align*}
    \Phi(\hat q,w_{\hat q})-  \Phi(q_0,w_{\hat q})&=\E[w_{\hat q}(s,a)\{-\hat q(s,a)+q_0(s,a)+\gamma  \hat q(s',\epol)- \gamma q_0(s',\epol)\}]\\
    &= \E[w_{\hat q}(s,a)\Tcal_{\gamma}(\hat q-q_0)(s,a)].
\end{align*}
For (b), we use 
\begin{align*}
\frac{\E[w_{\hat q}(s,a)\Tcal_{\gamma}(\hat q-q_0)(s,a)]}{\|w_{\hat q}\|_2 }&= \frac{\E[w_{\hat q}(s,a)\{-w_{\hat q}+w_{\hat q}+\Tcal_{\gamma}(\hat q-q_0)\}(s,a)]}{\|w_{\hat q}\|_2 }\\ 
&\geq \frac{\|w_{\hat q}\|^2_2-\|w_{\hat q}\|_2\|w_{\hat q}-\Tcal_{\gamma}(\hat q-q_0)\|_2}{\|w_{\hat q}\|_2 }\\ 
&=\|w_{\hat q}\|_2-\|w_{\hat q}-\Tcal_{\gamma}(\hat q-q_0)\|_2\\
&=\|w_{\hat q}\|_2-\epsilon_n\geq \|w_{\hat q}-\Tcal_{\gamma}(\hat q-q_0)+\Tcal_{\gamma}(\hat q-q_0)\|_2-\epsilon_n \\
&\geq \|\Tcal_{\gamma}(\hat q-q_0)\|_2-2\epsilon_n. 
\end{align*}

\textbf{Combining all results}

Thus, $\|w_{\hat q}\|\leq \eta$ or 
\begin{align*}
\eta\{\|\Tcal_{\gamma}(\hat q- q_0)\|_2-\epsilon_n\}-(1+C_1+\lambda )\eta^2\lesssim \frac{\|\Tcal_{\gamma}(q_{\epol}- q_0)\|^2_2}{\lambda}+(C_1+C_1^2/\lambda+\lambda)\eta^2. 
\end{align*}
Therefore, we have 
\begin{align*}
        \|\Tcal_{\gamma}(\hat q- q_0)\|_2&\leq   \|w_{\hat q}-\Tcal_{\gamma}(\hat q- q_0)\|_2+ \|w_{\hat q}\| \\
          &\leq  \eta+\epsilon_n. 
\end{align*}
or 
\begin{align*}
    \|\Tcal_{\gamma}(\hat q- q_0)\|_2\lesssim (1+C_1+\lambda+C^2_1/\lambda)  \eta+\epsilon_n+\frac{\|\Tcal_{\gamma}(q_{\epol}- q_0)\|^2_2 }{\lambda \eta}. 
\end{align*}
In the end, from a triangle inequality, 
\begin{align*}
    \|\Tcal_{\gamma}(\hat q- q_{\epol})\|_2\lesssim  (1+C_1+\lambda+C^2_1/\lambda)  \eta+\epsilon_n+\frac{\|\Tcal_{\gamma}(q_{\epol}- q_0)\|^2_2 }{\lambda \eta}+\|\Tcal_{\gamma}(q_{\epol}- q_0)\|_2. 
\end{align*}

\end{proof}

\subsection{Proof of \cref{ape:l2_error} }


\begin{proof}[Proof of \cref{thm:fast_q_l2}]
~\\
\textbf{Preliminary}
Define 
\begin{align*}
    \Phi(q;w) &=\E[\{r-q(s,a)+\gamma q(s',\epol)\}w(s,a)],\\
  \Phi_n(q;w) &=\E_n[\{r-q(s,a)+\gamma q(s',\epol)\}w(s,a)],\\
     \Phi^{\lambda}_n(q;w) &= \Phi_n(q;w)-\lambda \|w\|^2_{n},\\
         \Phi^{\lambda}(q;w) &=\Phi(q;w)-\lambda\|w\|^2_{2}. 
\end{align*}

From Lemma \ref{lem:support1}, we have 
\begin{align}\label{eq:martin}
    \forall w\in \Wbbb,\,|\|w\|_{n}^2-\|w\|^2_2|\leq 0.5\|w\|^2_2+\eta^2
\end{align}
for our choice of $\eta:=\eta_n+\sqrt{c_0\log(c_1/\delta)/n}$ noting $\eta_n$ upper bounds the critical radius of $\Wbbb$.

\textbf{First Step}

First step is conducted as in the proof of \cref{thm:fast_q}. 
Therefore, 
\begin{align*}
    \sup_{w\in \Wbbb}\{ \Phi_n(\hat q;w)- \Phi_n(q_{\epol};w)-2\lambda \|w\|^2_{n} \}&\lesssim (C^2_1/\lambda+C_1+\lambda)\eta^2. 
\end{align*}

\textbf{Second Step }

Given $q$,we define $w_q$ s.t. 
\begin{align*}
   \Tcal'_{\gamma} w_q=C_{\xi}(q-q_{\epol}). 
\end{align*}
Suppose $\|\hat q-q_{\epol}\|_2\geq C_{\xi}\eta$, and let $\kappa\coloneqq \eta/\{2C_{\xi}\|\hat q-q_{\epol}\|_2\}\in [0,0.5]$. Then, noting $\Wbbb$ is star-convex,
\begin{align*}
  \sup_{w\in \Wbbb}\{ \Phi_n(\hat q;w)- \Phi_n(q_{\epol};w)-2\lambda \|w\|^2_{n} \}\geq \kappa\{\Phi_n(\hat q,w_{\hat q})-\Phi_n(q_{\epol},w_{\hat q})\}-2\kappa^2\lambda\|w_{\hat q}\|^2_{n}. 
\end{align*}
Here, we use $\kappa w_{\hat q}\in \Wbbb$. 
Then,
\begin{align*}
    \kappa^2\|w_{\hat q}\|^2_{n}& \lesssim \kappa^2\{1.5\|w_{\hat q}\|^2_{2}+0.5\eta^2\}  \tag{\cref{eq:martin}} \\
    & \lesssim \eta^2 \frac{\|w_{\hat q}\|^2_{2}}{\|\Tcal'_{\gamma}w_{\hat q}\|^2_{2}}+\eta^2.   \tag{Definition of $\kappa$}\\
    &  \lesssim \eta^2 C^2_{\iota}+\eta^2.  \tag{Definition of $C_{\iota}$}
\end{align*}
Therefore, 
\begin{align*}
     \sup_{w\in \Wbbb}\{ \Phi_n(\hat q;w)- \Phi_n(q_{\epol};w)-2\|w\|^2_{n} \}\geq \kappa\{\Phi_n(\hat q,w_{\hat q})-\Phi_n(q_{\epol},w_{\hat q})\}-2\lambda\{1+C^2_{\iota}\} \eta^2. 
\end{align*}
Observe that
\begin{align*}
    \Phi_n(q,w_q)-  \Phi_n(q_{\epol},w_q)&=-\E_n[\{q(s,a)-q_{\epol}(s,a)-\gamma q(s',\epol)+ \gamma q_{\epol}(s',\epol)\}w_q(s,a)].
\end{align*}
Therefore, from Lemma \ref{lem:support2} noting $\eta$ upper-bounds the critical radius of $\Gcal_q$, for any $ q\in \Qbbb$, 
\begin{align*}
    &|\Phi_n(q,w_q)-  \Phi_n(q_{\epol},w_q)-\{\Phi(q,w_q)-  \Phi(q_{\epol},w_q)\}|\\
    &=|\E_n[\{-q(s,a)+q_{\epol}(s,a)+\gamma q(s',\epol)- \gamma q_{\epol}(s',\epol)\}w_q(s,a)  ]\\ &-\E[\{-q(s,a)+q_{\epol}(s,a)+\gamma q(s',\epol)- \gamma q_{\epol}(s',\epol)\}w_q(s,a))  ]  |\\ 
    &\leq (\eta\E[\{-q(s,a)+q_{\epol}(s,a)+\gamma q(s',\epol)- \gamma q_{\epol}(s',\epol)\}^2w^2_q(s,a)]^{1/2}+\eta^2)\\
 &\lesssim \eta C_{\Wbbb}(1+\gamma \sqrt{C_mC_{\eta}})\|q-q_{\epol}\|_2+\eta^2. 
\end{align*}
Here, we invoke Lemma \ref{lem:support2} by treating $l(a_1,a_2)=a_1,\,a_1= \{q(s,a)-q_{\epol}(s,a)-\gamma q(s',\epol)+ \gamma q_{\epol}(s',\epol)\}w_q(s,a)\}$. 

Thus, 
\begin{align*}
    \kappa\{\Phi_n(\hat q,w_{\hat q})-  \Phi_n(q_{\epol},w_{\hat q})\}& \geq  \kappa\{\Phi(\hat q,w_{\hat q})-  \Phi(q_{\epol},w_{\hat q})\}-\kappa(\eta C_{\Wbbb}(1+\gamma \sqrt{C_mC_{\eta}})\|q-q_{\epol}\|_2+\eta^2). \\
    &\geq \kappa\{\Phi(\hat q,w_{\hat q})-  \Phi(q_{\epol},w_{\hat q})\}-\kappa  \eta C_{\Wbbb}(1+\gamma \sqrt{C_mC_{\eta}})\|q-q_{\epol}\|_2-0.5\eta^2 \\
    &= \kappa\E[w_{\hat q}(s,a)\Tcal_{\gamma}(\hat q-q_{\epol})(s,a)]-\kappa  \eta C_{\Wbbb}(1+\gamma \sqrt{C_mC_{\eta}})\|q-q_{\epol}\|_2-0.5\eta^2 \\
  &= \frac{\eta}{2C_{\xi}\|(\hat q-q_{\epol})\|_2}\{C_{\xi}\|(\hat q-q_{\epol})\|^2_2-\eta C_{\Wbbb}(1+\gamma \sqrt{C_mC_{\eta}})\|q-q_{\epol}\|_2\}-0.5\eta^2 \\
    &\geq 0.5\eta \|(\hat q-q_{\epol})\|_2-(0.5+ C_{\Wbbb}(1+\gamma \sqrt{C_mC_{\eta}}))\eta^2. 
\end{align*}
From the second line to the third line, we use 
\begin{align*}
    \Phi(\hat q,w_{\hat q})-  \Phi(q_{\epol},w_{\hat q})&=\E[w_{\hat q}(s,a)\{-\hat q(s,a)+q_{\epol}(s,a)+\gamma  \hat q(s',\epol)- \gamma q_{\epol}(s',\epol)\}]\\
    &= \E[w_{\hat q}(s,a)\Tcal_{\gamma}(\hat q-q_{\epol})(s,a)]\\
    &=\E[\Tcal'_{\gamma}w_{\hat q}(s,a)(\hat q-q_{\epol})(s,a)]=C_{\xi}\|\hat q-q_{\epol}\|^2_2. 
\end{align*}
\textbf{Combining all results}

Thus, $C_{\xi}\|\hat q-q_{\epol}\|_2\leq \eta$ or 
\begin{align*}
\eta \|\hat q- q_{\epol}\|_2-(1+C_{\Wbbb}(1+\gamma\sqrt{C_mC_{\eta}}))\eta^2-\lambda(1+C^2_{\iota} )\eta^2\lesssim (C_1+C_1^2/\lambda+\lambda)\eta^2. 
\end{align*}
In the end, 
\begin{align*}
    \|\hat q- q_{\epol}\|_2&\lesssim  (1+C_1+C_1^2/\lambda+\lambda\{1+C^2_{\iota}\}+C_{\Wbbb}(1+\gamma \sqrt{C_mC_{\eta}})+1/C_{\xi} )  \eta\,\\
    C_1 &= C_{\Qbbb}+ (1-\gamma^{-1})R_{\max}. 
\end{align*}

\end{proof}

\begin{proof}[Proof of \cref{thm:fast_w_l2}]
~ \\
\textbf{Preliminary}\\

Define 
\begin{align*}
    \Xi(w,q)  &=\E[w(s,a)\{-q(s,a)+\gamma q(s',\epol) \}+(1-\gamma)\E_{d_0}[q(s_0,\epol)] ],\\ 
    \Xi_n(w,q)&=\E_n[w(s,a)\{-q(s,a)+\gamma q(s',\epol) \}+(1-\gamma)\E_{d_0}[q(s_0,\epol)] ],\\
    \Xi^{\lambda}(w,q) &=    \Xi^{\lambda}(w,q)-\lambda \|q(s,a)-\gamma q(s',\epol) \|^2_2,\\
    \Xi^{\lambda}_n(w,q) &=    \Xi^{\lambda}_n(w,q)-\lambda \|q(s,a)-\gamma q(s',\epol) \|^2_{n}.
\end{align*}
From Lemma \ref{lem:support1}, we have
\begin{align}\label{eq:help4}
    \forall q\in \Qbbb, |\|-q(s,a)+\gamma q(s',\epol) \|^2_{n}-\|-q(s,a)+\gamma q(s',\epol)\|^2_2  |\leq 0.5\|-q(s,a)+\gamma q(s',\epol)\|^2_2+0.5\eta^2. 
\end{align}
for our choice of $\eta:=\eta_n+c_0\sqrt{\log(c_1/\delta)/n}$ noting $\eta_n$ upper bounds the critical of $\Gcal_{w1}$. 

\textbf{First Step }\\

As in the proof of \cref{thm:fast_w}, we have 
\begin{align*}
        \sup_{q\in \Qbbb}\{\Xi_n(\hat w;q)- \Xi(w_{\epol};q)-2\| -q(s,a)+\gamma q(s',\epol)\|^2_{n}\}
         &\leq (\lambda +C^2_{\Wbbb}/\lambda +C_{\Wbbb})\eta^2. 
\end{align*}

\textbf{Second Step}\\ 

Given $w$, we define $q_w\in \Qbbb$ s.t. $\Tcal_{\gamma}q_{w}\coloneqq C_{\xi}(w-w_{\epol})$ noting $C_{\xi}\{\Wbbb-w_{\epol}\} \subset \Tcal_{\gamma}\Qbbb$. Suppose $C_{\xi}\|\hat w-w_{\epol}\|_2\geq \eta$, and let $\kappa\coloneqq \eta/\{2C_{\xi}\|\hat w-w_{\epol} \|_2\}\in [0,0.5]$. Then, noting $\Qbbb$ is star-convex,
\begin{align*}
    &\sup_{q\in \Qbbb}\{\Xi_n(\hat w;q)- \Xi(w_{\epol};q)-2\lambda \|q(s,a)-\gamma q(s',\epol)\|^2_{n}\}\\
    &\geq \kappa\{\Xi_n(\hat w;q_{\hat w})- \Xi(w_{\epol};q_{\hat w})\} -2\kappa^2\lambda \|q_{\hat w}(s,a)-\gamma q_{\hat w}(s',\epol)\|^2_{n}. 
\end{align*}
Here, we use $\kappa q_{\hat w}\in \Qbbb$, and take the value of $\kappa q_{\hat w}$. Then, from \cref{eq:help4},  
\begin{align*}
   \kappa^2 \|-q_{\hat w}(s,a)+\gamma q_{\hat w}(s',\epol)\|^2_{n} & \lesssim\kappa^2\{1.5 \|-q_{\hat w}(s,a)+\gamma q_{\hat w}(s',\epol)\|^2_{2}+\eta^2\}\\
    &\lesssim \kappa^2(1+ C_mC_{\eta})\|q_{\hat w}\|^2_2 +\eta^2 \\ 
      &\lesssim \frac{\eta^2}{\|\Tcal_{\gamma} q_{\hat w}\|^2_2} (1+ C_mC_{\eta})\|q_{\hat w}\|^2_2 +\eta^2 \\ 
       &\lesssim \eta^2 (1+ C_mC_{\eta})C^2_{\iota} +\eta^2  \tag{Definition of $C_{\iota}$}\\ 
\end{align*}
Therefore, 
\begin{align*}
     \sup_{q\in \Qbbb}\{\Xi_n(\hat w;q)- \Xi(w_{\epol};q)-2\lambda \| -q(s,a)+\gamma q(s',\epol)\|^2_{n}\}\geq \kappa(\Xi_n(\hat w;q_{\hat w})- \Xi(w_{\epol};q_{\hat w})) -2\lambda \eta^2. 
\end{align*}
Observe that 
\begin{align*}
    \Xi_n( w;q_w)- \Xi_n(w_{\epol};q_w)=\E_n[(w(s,a)-w_{\epol}(s,a))(-q_{w}(s,a)+\gamma q_{w}(s',\epol))  ]. 
\end{align*}
Therefore, from Lemma \ref{lem:support2}, noting $\eta$ upper bounds the critical radius of $\Gcal_{w2}$, 
regarding $l(a_1,a_2)=a_1,a_1=(w(s,a)-w_{\epol}(s,a))(q_{w}(s,a)-\gamma q_{w}(s',\epol)) $, $\forall w\in \Wbbb$,
\begin{align*}
  & |   \Xi_n( w;q)- \Xi_n(w_{\epol};q)-\{\Xi( w;q)- \Xi(w_{\epol};q)\}|\\
&\leq (\eta  \E[\{(w(s,a)-w_{\epol}(s,a))(q_{w}(s,a)-\gamma q_{w}(s',\epol))\}^{2}]^{1/2}+\eta^2)\\ 
&\leq (\eta  \E[(w(s,a)-w_{\epol}(s,a))^2]^{1/2}C_{\Qbbb}+\eta^2). 
\end{align*}
Here, we use $\|q_{w}(s,a)-\gamma q_{w}(s',\epol))\|_{\infty}\lesssim C_{\Qbbb}$. 

Thus, 
\begin{align*}
    &\kappa\{  \Xi_n(\hat w;q_{\hat w})- \Xi_n(w_{\epol};q_{\hat w})\}\\
    & \geq \kappa\{\Xi(\hat w;q_{\hat w})- \Xi(w_{\epol};q_{\hat w})\}-\kappa(\eta  \|\hat w-w_{\epol}\|_2 C_{\Qbbb} +\eta^2 )  && \\
    &\geq \kappa\{\Xi( \hat w;q_{\hat w})- \Xi(w_{\epol};q_{\hat w})\}-\kappa(\eta  \|\hat w-w_{\epol}\|_2 C_{\Qbbb} )-0.5\eta^2 && \text{Use $\kappa \leq 0.5$} \\ 
    &=r C_{\xi}\|\hat w-w_{\epol}\|^2_2    -\kappa(\eta  \|\hat w-w_{\epol}\|_2 C_{\Qbbb})-0.5\eta^2 && \\ 
    &=\frac{\eta}{2 C_{\xi}\|\hat w-w_{\epol}\|_2 }\{C_{\xi}\|\hat w-w_{\epol}\|^2_2 -C_{\Qbbb}\eta \|\hat w-w_{\epol}\|_2\}-0.5\eta^2 && \\ 
  &=0.5\eta \|\hat w-w_{\epol}\|_2  -(0.5+0.5C_{\Qbbb})\eta^2 . 
\end{align*}
 From the third line to the fourth line, we use 
\begin{align*}
    \Xi( w;q_{ w})- \Xi(w_{\epol};q_{w})&=\E[(w(s,a)-w_{\epol}(s,a))(-q_{w}(s,a)+\gamma q_{w}(s',\epol))]\\
    &=\E[(w(s,a)-w_{\epol}(s,a))\Tcal_{\gamma}q_w]\\
   &=C_{\xi}\E[(w(s,a)-w_{\epol}(s,a))(w(s,a)-w_{\epol}(s,a))]= C_{\xi}\|w-w_{\epol}\|^2_2.
\end{align*}

\textbf{Combining all results}

Thus, $ C_{\xi}\|\hat w-w_{\epol}\|_2 \leq \eta$ or 
\begin{align*}
\eta  \|\hat w-w_{\epol}\|_2-(1+C_{\Qbbb})\eta^2-\lambda \eta^2C^2_{\iota}(1+C_mC_{\eta}) \lesssim (\lambda +C^2_{\Wbbb}/\lambda+C_{\Wbbb})\eta^2. 
\end{align*}
Therefore, 
\begin{align*}
     \|\hat w-w_{\epol}\|_2\leq \braces{1+C^2_{\Wbbb}/\lambda+C_{\Wbbb}+\lambda\{1+C^2_{\iota}(1+C_mC_{\eta})\}+C_{\Qbbb}+1/ C_{\xi}}\eta. 
\end{align*}
\end{proof}

%

\subsection{Proof of \cref{ape:fqi_analysis}}


\begin{proof}[Proof of Lemma \ref{lem:fast_rate_reg}]
We define 
\begin{align*}
    \Phi_n(q)=\E_{\zeta_k}[\{r+\gamma f_{k-1}(s')-q(s,a)\}^2-\{r+\gamma f_{k-1}(s')-\Bcal f_{k-1}(s,a)\}^2 ]. 
\end{align*}
From the definition of the estimator and the assumption $\Bcal f_{k-1}\in \Qbbb$ using $\Bcal \Qbbb\subset \Qbbb$,  
\begin{align*}
      \Phi_n(f_k)\leq 0. 
\end{align*}
Note
\begin{align*}
    &\E_{\zeta_k}[\{r+\gamma f_{k-1}(s')-q(s,a)\}^2-\{r+\gamma f_{k-1}(s')-\Bcal f_{k-1}(s,a)\}^2 ]\\
    &=\E_{\zeta_k}[\{-q(s,a)+\Bcal f_{k-1}(s,a)\}\{2r+2\gamma f_{k-1}(s')-\Bcal f_{k-1}(s,a)-q(s,a)\}]. 
\end{align*}
By Lemma \ref{lem:support2}, $   \forall q\in \Qbbb$, 
\begin{align*}
    &|\Phi_n(q)-\Phi(q)\}\\
    &\lesssim \eta(\{\E[\{-q(s,a)+\Bcal f_{k-1}(s,a)\}^2\{2r+2\gamma f_{k-1}(s')-\Bcal f_{k-1}(s,a)-q(s,a)\}^2]\}^{1/2}+\eta)\\
   &\lesssim \eta (\{R_{\max}+C_{\Qbbb}\}\|q-\Bcal f_{k-1}\|_2+\eta).
\end{align*}
We have
\begin{align*}
   0\geq \Phi_n(f_k)&= \Phi_n(f_k)-\Phi(f_k)+\Phi(f_k)\\ 
    &\geq \Phi(f_k)-\eta (\{R_{\max}+C_{\Qbbb}\}\|f_k-\Bcal f_{k-1}\|_2+\eta) \\ 
    &=\|f_k-\Bcal f_{k-1}\|^2_2- \eta (\{R_{\max}+C_{\Qbbb}\}\|f_k-\Bcal f_{k-1}\|_2+\eta). 
\end{align*}
This implies $\|f_k-\Bcal f_{k-1}\|_2\lesssim (R_{\max}+C_{\Qbbb})\eta$. 
\end{proof}

\begin{proof}[Proof of Lemma \ref{lem:fqipop}]
We define $\{\E_{\mu}[f(\cdot)^q]\}^{1/q}=\|\cdot\|_{q,\mu}$. 
\begin{align}
|\E_{s\sim d_0}[f_K(s, \epol)] - \E_{s\sim d_0}[q_{\epol}(s, \epol)]| = &~ \|f_K - q_{\epol}\|_{1,d_{{\epol},1}} \tag{$d_{{\epol},1} \coloneqq d_0 \times \epol$}
\\
\leq &~ \|f_K - q_{\epol}\|_{2,d_{{\epol},1}}  \tag{Jensen}
\\
\leq &~ \|f_K - \Bcal f_{K - 1}\|_{2,d_{{\epol},1}} + \|\Bcal f_{K - 1} - q_{\epol}\|_{2,d_{{\epol},1}} \nonumber
\\
= &~ \|f_K - \Bcal f_{K - 1}\|_{2,d_{{\epol},1}} + \|\Bcal f_{K - 1} - \Bcal q_{\epol}\|_{2,d_{{\epol},1}} \nonumber
\\
\overset{\text{(a)}}{\leq} &~ \|f_K - \Bcal f_{K - 1}\|_{2,d_{{\epol},1}} + \gamma \|f_{K - 1} - q_{\epol}\|_{2,d_{{\epol},2}} \nonumber
\\
\label{eq:stareq}
\overset{\text{(b)}}{\leq} &~ \|f_K - \Bcal f_{K - 1}\|_{2,d_{{\epol},1}} + \sqrt{\gamma} \|f_{K - 1} - q_{\epol}\|_{2,\gamma d_{{\epol},2}}, \tag{$\star$}
\end{align}
where (a) is from the following argument and (b) follows from the first part of \citep[Fact 5.1]{agarwal2019reinforcement},
\begin{align*}
\|\Bcal f_{K - 1} - \Bcal q_{\epol}\|_{2,d_{{\epol},1}}^2 = &~ \E_{(s,a) \sim d_{{\epol},1}}\left[\big((\Bcal f_{K - 1})(s,a) - ( \Bcal q_{\epol})(s,a)\big)^2\right]
\\
= &~ \gamma^2 \E_{(s,a) \sim d_{{\epol},1}}\left[\left(\E_{(s',a')\sim \Pr(\cdot,\cdot|s,a,\epol)}[f_{K - 1}(s',a') - q_{\epol}(s',a')]\right)^2\right]
\\
\leq &~ \gamma^2 \E_{(s,a) \sim d_{{\epol},1}}\left[\E_{(s',a')\sim \Pr(\cdot,\cdot|s,a,\epol)}\left[\left(f_{K - 1}(s',a') - q_{\epol}(s',a')\right)^2\right]\right] \tag{Jensen}
\\
= &~ \gamma^2 \E_{(s',a') \sim d_{{\epol},2}}\left[\left(f_{K - 1}(s',a') - q_{\epol}(s',a')\right)^2\right]
\\
= &~ \gamma^2 \|f_{K - 1} - q_{\epol}\|_{2,d_{{\epol},2}}^2,
\end{align*}
where $\Pr(\cdot,\cdot|s,a,\epol)$ denotes the distribution of next the state-action pair from $(s,a)$ when following policy $\epol$.

By applying \eqref{eq:stareq} inductively, we have
\begin{align}
&~ |\E_{s\sim d_0}[f_K(s, \epol)] - \E_{s\sim d_0}[q_{\epol}(s, \epol)]|
\\
= &~ \|f_K - q_{\epol}\|_{1,d_{{\epol},1}} \nonumber
\\
\leq &~ \|f_K - \Bcal f_{K - 1}\|_{2,d_{{\epol},1}} + \sqrt{\gamma} \|f_{K - 1} - q_{\epol}\|_{2,\gamma d_{{\epol},2}} \nonumber
\\
\leq &~ \|f_K - \Bcal f_{K - 1}\|_{2,d_{{\epol},1}} + \sqrt{\gamma} \|f_{K - 1} - \Bcal f_{K - 2}\|_{2,\gamma d_{{\epol},2}} + \gamma \|f_{K - 2} - q_{\epol}\|_{2,\gamma^2 d_3^{\epol}} \nonumber
\\
\vdots \nonumber
\\
\leq &~ \sum_{k = 0}^{K-1} \gamma^{k/2} \|f_{K - k} - \Bcal f_{K - 1 - k}\|_{2,\gamma^k d_{k + 1}^{\epol}} + \gamma^{{K}/{2}} \|f_{0} - q_{\epol}\|_{2,d_{{\epol},{K+1}}}. \nonumber
\end{align}

By the definition of $d_{\epol}$, we know $\gamma^k d_{\epol,k+1} \leq \frac{d_{\epol,\gamma}}{1 - \gamma}$ for any $k \geq 1$. Therefore, using the second part of \citep[Fact 5.1]{agarwal2019reinforcement}, we obtain
\begin{align*}
&~ |\E_{s\sim d_0}[f_K(s, \epol)] - \E_{s\sim d_0}[q_{\epol}(s, \epol)]|
\\
\leq &~ \sum_{k = 0}^{K-1}\frac{1}{\sqrt{1-\gamma}} \gamma^{k/2} \|f_{K - k} - \Bcal f_{K - 1 - k}\|_{2,d_{\epol,\gamma}} + \gamma^{{K}/{2}} (C_{\Qbbb}+(1-\gamma)^{-1}R_{\max})
\\
\leq &~ \sum_{k = 0}^{K-1} \frac{1}{\sqrt{1-\gamma}}\gamma^{k/2} \|w_{\epol}\|_2 \|f_{K - k} - \Bcal f_{K - 1 - k}\|_{2} + \gamma^{{K}/{2}} (C_{\Qbbb}+(1-\gamma)^{-1}R_{\max})
\\
\leq &~ \|w_{\epol}\|_2\frac{1}{\sqrt{1-\gamma}}\varepsilon \sum_{k = 0}^{K-1} \gamma^{k/2} + \gamma^{{K}/{2}} (C_{\Qbbb}+(1-\gamma)^{-1}R_{\max})
\\
\leq &~ \frac{1 - \gamma^{K/2}}{\sqrt{1-\gamma}(1 - \gamma^{1/2})}\|w_{\epol}\|_2 \varepsilon + \gamma^{{K}/{2}} (C_{\Qbbb}+(1-\gamma)^{-1}R_{\max}). 
\end{align*}
\end{proof}

\subsection{Proof of auxiliary lemmas}


\begin{lemma}\label{lem:minimax}
\begin{align*}
&\E[w(s,a)(r-q(s,a)+\gamma q(s',\epol))+(1-\gamma)\E_{d_0}[q(s_0,\epol)]]\\
&=\E[\{ w(s,a)-w_{\epol}(s,a)\}\{-q(s,a)+q_{\epol}(s,a)+\gamma v(s')-\gamma v_{\epol}(s)\}]+J(\epol).
\end{align*}
\end{lemma}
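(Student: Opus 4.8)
The plan is to reduce the identity to two structural facts from \cref{lem:basic}: the Bellman equation $\Bcal q_{\epol}=q_{\epol}$ for the $q$-function and the fixed-point property $\Bcal' w_{\epol}=w_{\epol}$ for the density ratio, together with the adjointness relation $\E[f\,\Tcal g]=\E[\Tcal' f\,g]$. Throughout I would write $\delta\coloneqq q-q_{\epol}$, so that the Bellman-residual factor on the right is exactly $\Jcal(q-q_{\epol})=-\delta(s,a)+\gamma\{v(s')-v_{\epol}(s')\}$, and I would first note that the constant $(1-\gamma)\E_{d_0}[q(s_0,\epol)]$ can be pulled outside the outer expectation since it does not depend on $(s,a,r,s')$.

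First I would split the integrand on the left by adding and subtracting the ``true'' residual:
\begin{align*}
\ts w(s,a)\{r-q(s,a)+\gamma v(s')\}=w(s,a)\{r-q_{\epol}(s,a)+\gamma v_{\epol}(s')\}+w(s,a)\Jcal(q-q_{\epol}).
\end{align*}
Taking $\E[\cdot]$ and conditioning on $(s,a)$ (legitimate because $w$ is $(s,a)$-measurable), the first bracket vanishes by the Bellman equation $q_{\epol}(s,a)=\E[r\mid s,a]+\gamma\Tcal q_{\epol}(s,a)$. Hence the left-hand side collapses to $\E[w(s,a)\Jcal(q-q_{\epol})]+(1-\gamma)\E_{d_0}[v(s_0)]$.

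Comparing with the target, it then remains to prove the single scalar identity $\E[w_{\epol}(s,a)\Jcal(q-q_{\epol})]=J-(1-\gamma)\E_{d_0}[v(s_0)]$, since subtracting it from the previous display produces exactly $\E[(w-w_{\epol})\Jcal(q-q_{\epol})]+J$. To establish it I would condition on $(s,a)$ to rewrite the forward-state terms through the transition operator, giving $\E[w_{\epol}\,\Tcal_{\gamma}\delta]$ with $\Tcal_{\gamma}=\gamma\Tcal-I$, then move the operator onto $w_{\epol}$ by adjointness: $\E[w_{\epol}\,\Tcal_{\gamma}\delta]=\E[\Tcal'_{\gamma} w_{\epol}\,\delta]$. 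Finally $\Bcal'w_{\epol}=w_{\epol}$ together with the definition $\Bcal'=\gamma\Tcal'+(1-\gamma)\tfrac{d_0(s)\epol(a\mid s)}{P_{S,A}(s,a)}$ gives $\Tcal'_{\gamma} w_{\epol}=-(1-\gamma)\tfrac{d_0(s)\epol(a\mid s)}{P_{S,A}(s,a)}$, so that $\E[\Tcal'_{\gamma} w_{\epol}\,\delta]=-(1-\gamma)\E_{s_0\sim d_0}[\delta(s_0,\epol)]=-(1-\gamma)\E_{d_0}[v(s_0)-v_{\epol}(s_0)]$; invoking $J=(1-\gamma)\E_{d_0}[v_{\epol}(s_0)]$ then closes the identity.

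The routine parts are the two conditioning steps; the only real content is the scalar identity in the last paragraph, and within it the main obstacle is handling the $\gamma v(s')$ terms correctly — one must recognize that integrating $w_{\epol}$ against the one-step residual is precisely the adjoint of the Bellman-flow (fixed-point) equation for $d_{\epol,\gamma}$. I would use the adjoint route above for brevity, but as a cross-check the same identity follows elementarily from the occupancy telescoping $\gamma\E_{d_{\epol,\gamma}}[\Tcal f]=\E_{d_{\epol,\gamma}}[f]-(1-\gamma)\E_{d_0}[f(s_0,\epol)]$, applied with $f=\delta$ after using $\E[w_{\epol} f]=\E_{d_{\epol,\gamma}}[f]$.
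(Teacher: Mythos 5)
Your proof is correct and rests on exactly the same two ingredients as the paper's: the Bellman equation $\Bcal q_{\epol}=q_{\epol}$ (which kills the $r-q_{\epol}+\gamma v_{\epol}(s')$ bracket after conditioning on $(s,a)$) and the occupancy-flow identity for $w_{\epol}$ (which the paper invokes directly as $\E[w_{\epol}(s,a)\{q(s,a)-\gamma v(s')\}]-(1-\gamma)\E_{d_0}[v(s_0)]=0$, and which you reach equivalently via $\Tcal'_{\gamma}w_{\epol}=-(1-\gamma)d_0\epol/P_{S,A}$ from $\Bcal'w_{\epol}=w_{\epol}$). The only difference is cosmetic — you subtract the $q_{\epol}$-column first and the $w_{\epol}$-row second, while the paper does it in the opposite order — so this is essentially the same argument.
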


\begin{proof}[Proof of Lemma~\ref{lem:minimax}]
\begin{align*}
    &\E[w(s,a)\{r-q(s,a)+\gamma v(s')\}]+(1-\gamma)\E_{d_0}[v(s_0)]-J(\epol)+J(\epol)\\
   &=\E[w(s,a)\{r-q(s,a)+\gamma v(s')\}]+(1-\gamma)\E_{d_0}[v(s_0)] \\ 
   &-\{\E[w_{\epol}(s,a)\{r-q(s,a)+\gamma v(s')\}]+(1-\gamma)\E_{d_0}[v(s_0)]\}+J(\epol)\\
    &=\E[w(s,a)\{r-q(s,a)+\gamma v(s')\}]-\E[w_{\epol}(s,a)\{r-q(s,a)+\gamma v(s')\}]+J(\epol)\\ 
     &=\E[\{w(s,a)-w_{\epol}(s,a)\}\{-q(s,a)+q_{\epol}(s,a)+\gamma v(s')-\gamma v_{\epol}(s')\}]+J(\epol). 
\end{align*}
\end{proof}

\begin{lemma}[Dudley integral]\label{lem:dudley}
\begin{align*}
    \Rcal_n(\Fcal)\lesssim \inf_{\tau\geq 0}\braces{\tau+\int_{\tau}^{\sup_{f\in \Fcal}\sqrt{\E_n[f^2]}}\sqrt{\frac{\log \Ncal(\tau,\Fcal,\|\cdot\|_n)}{n}}}. 
\end{align*}
\end{lemma}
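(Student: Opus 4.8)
The plan is to prove this via the classical chaining (Dudley entropy integral) argument applied to the Rademacher process. I fix the samples $v_1,\dots,v_n$ and, for $f\in\Fcal$, consider the random variable $Z_f\coloneqq\frac1n\sum_{i=1}^n\epsilon_i f(v_i)$, so that $\Rcal_n(\Fcal)=\E_\epsilon[\sup_{f\in\Fcal}Z_f]$. The first step is to record the sub-Gaussian increment bound: by Hoeffding's lemma for Rademacher sums, $\E_\epsilon[\exp(\lambda(Z_f-Z_g))]\le\exp(\lambda^2\|f-g\|_n^2/(2n))$, where $\|h\|_n\coloneqq\sqrt{\E_n[h^2]}$. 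Thus the increments of $Z$ are sub-Gaussian with respect to the empirical metric $\|f-g\|_n$ with variance proxy scaled by $1/n$. Since $\E_\epsilon[Z_{f_0}]=0$ for any fixed anchor $f_0\in\Fcal$, I may replace $\Fcal$ by $\Fcal-f_0$ (which leaves all covering numbers unchanged by translation invariance) and work with a class of $\|\cdot\|_n$-radius $D\coloneqq\sup_f\|f-f_0\|_n$, comparable up to a constant to $\sup_{f\in\Fcal}\sqrt{\E_n[f^2]}$.

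Next I would set up the multiscale nets. For $m\ge0$ let $\delta_m\coloneqq 2^{-m}D$ and let $T_m$ be a minimal $\delta_m$-cover of $\Fcal$ in $\|\cdot\|_n$, so $|T_m|=\Ncal(\delta_m,\Fcal,\|\cdot\|_n)$, with $T_0=\{f_0\}$; write $\pi_m f$ for a nearest point of $f$ in $T_m$. For a truncation level $M$ chosen so that $\delta_M\asymp\tau$, I telescope $Z_f-Z_{f_0}=(Z_f-Z_{\pi_M f})+\sum_{m=1}^M(Z_{\pi_m f}-Z_{\pi_{m-1}f})$. The residual is controlled deterministically by Cauchy--Schwarz, $\sup_f|Z_f-Z_{\pi_M f}|\le\sup_f\|f-\pi_M f\|_n\le\delta_M\lesssim\tau$, producing the leading $\tau$ term. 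For each link, $\|\pi_m f-\pi_{m-1}f\|_n\le 3\delta_m$ and the number of realized pairs is at most $|T_m|^2$, so the finite maximal inequality for sub-Gaussian variables gives $\E_\epsilon[\sup_f(Z_{\pi_m f}-Z_{\pi_{m-1}f})]\lesssim\frac{\delta_m}{\sqrt n}\sqrt{\log\Ncal(\delta_m,\Fcal,\|\cdot\|_n)}$.

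The final step is to sum the link bounds over $m$ and pass to the integral: using monotonicity of covering numbers in the radius, $\delta_m\sqrt{\log\Ncal(\delta_m,\Fcal,\|\cdot\|_n)}\le 2\int_{\delta_{m+1}}^{\delta_m}\sqrt{\log\Ncal(u,\Fcal,\|\cdot\|_n)}\,du$, so the sum $\sum_{m\ge1}$ telescopes into $\frac1{\sqrt n}\int_{0}^{D}\sqrt{\log\Ncal(u,\Fcal,\|\cdot\|_n)}\,du$ with the lower limit truncated at scale $\asymp\tau$. Collecting the residual and the chained sum and taking the infimum over $\tau$ (reparametrizing to absorb the universal constant in the lower endpoint) yields $\Rcal_n(\Fcal)\lesssim\inf_{\tau\ge0}\braces{\tau+\int_\tau^{\sup_{f\in\Fcal}\sqrt{\E_n[f^2]}}\sqrt{\log\Ncal(u,\Fcal,\|\cdot\|_n)/n}\,du}$.

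I expect the main obstacle to be careful bookkeeping rather than any single hard idea: tracking the $1/\sqrt n$ normalization consistently through the sub-Gaussian parameter and the maximal inequality, matching the truncation scale $\delta_M$ to the integral's lower limit $\tau$ (which holds only up to universal constants, hence the $\lesssim$ and the $\inf_\tau$), and justifying the anchor/translation reduction so that the integral's upper endpoint can be taken as $\sup_{f\in\Fcal}\sqrt{\E_n[f^2]}$.
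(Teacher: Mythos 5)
Your proof is correct. The paper does not actually prove this lemma---it is stated as a standard tool (the classical Dudley entropy-integral bound for the empirical Rademacher process, as in Chapter 5 of Wainwright's book, which the paper cites elsewhere)---and your chaining argument is precisely the canonical derivation of that result: sub-Gaussian increments in the empirical metric with variance proxy $\|f-g\|_n^2/n$, multiscale nets with a truncated telescope, a Cauchy--Schwarz bound on the residual giving the $\tau$ term, finite maximal inequalities on the links, and the sum-to-integral comparison. The two points you flag as bookkeeping (the integral's upper endpoint being the radius $\sup_{f\in\Fcal}\sqrt{\E_n[f^2]}$ rather than the diameter of the anchored class, and matching the truncation scale $\delta_M$ to $\tau$) are indeed absorbed by the universal constant in $\lesssim$ and the infimum over $\tau$, so there is no gap.
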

Note $\sup_{f\in \Fcal}\sqrt{\hP[f^2]}$ is upper bounded by the envelope $\|\Fcal\|_{\infty}$.
\begin{lemma}{\citep[Corollary 14.3]{WainwrightMartinJ2019HS:A}}\label{lem:critical_basic}
        Let $\Ncal(\tau;\mathbb{B}(\delta;\Fcal),\|\cdot\|_n)$ denote the $\tau$-covering number of $\mathbb{B}_n(\delta;\Fcal):=\{f| \|f\|_n\leq \delta\}$. Then, the critical inequality of the empirical version is satisfied for any $\delta>0$ s.t.
        \begin{align*}
            \frac{1}{\sqrt{n}}\int^{\delta}_{\delta^2/(2\|\Fcal\|_{\infty})}\sqrt{\log  \Ncal(t,\mathbb{B}_n(\delta;\Fcal),\|\cdot\|_n)}\rd t\leq \frac{\delta^2}{\|\Fcal\|_{\infty}}.
        \end{align*}
\end{lemma}

We analyze the main error terms in MDL and MIL. We assume $\Wbbb_2=\Qbbb_2=\Wbbb_1=\Wbbb_2$ for simplicity. We consider a joint hypothesis set and investigate its covering number:
\begin{align*}
    \Gcal(\Wbbb,\Qbbb) \ts &= \ts \{(s,a,s')\mapsto w(s,a)\{-q(s,a)+\gamma  q(s',\epol)\}; w\in \Wbbb,q\in \Qbbb \}.
\end{align*}
To analyze above, we often use the following lemma.

\begin{lemma}[Joint covering number] \label{lem:covering_G}
	\begin{align*}
		&\log \bN(\tau, \Gcal(\Wbbb,\Qbbb), \|\cdot\|_\infty) \\
		&\leq \log \bN(0.5\tau/((1+\gamma) C_\Qbbb), \Wbbb, \|\cdot\|_\infty) + \log \bN(0.5\tau/(C_\Wbbb (1+\gamma)), \Qbbb, \|\cdot\|_\infty).
	\end{align*}
\end{lemma}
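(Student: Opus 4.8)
The plan is to construct a cover of $\Gcal(\Wbbb,\Qbbb)$ directly as a product of covers of the two factor classes, and then simply count. Fix a minimal $\tau_1$-cover $\{\tilde w_j\}$ of $\Wbbb$ and a minimal $\tau_2$-cover $\{\tilde q_k\}$ of $\Qbbb$ in $\|\cdot\|_\infty$, with the radii $\tau_1,\tau_2$ to be pinned down at the end. Writing a generic element of $\Gcal$ as $g_{w,q}(s,a,s')=w(s,a)\{-q(s,a)+\gamma q(s',\epol)\}$, I would, for arbitrary $w,q$, select $\tilde w,\tilde q$ from the two covers with $\|w-\tilde w\|_\infty\le\tau_1$ and $\|q-\tilde q\|_\infty\le\tau_2$, and show that $g_{\tilde w,\tilde q}$ approximates $g_{w,q}$ in sup-norm. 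This exhibits $\{g_{\tilde w_j,\tilde q_k}\}$ as a cover of $\Gcal$ of cardinality at most $\bN(\tau_1,\Wbbb,\|\cdot\|_\infty)\,\bN(\tau_2,\Qbbb,\|\cdot\|_\infty)$; taking logarithms yields the additive bound in the statement.

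The key quantitative step is the telescoping decomposition
\begin{align*}
g_{w,q}-g_{\tilde w,\tilde q}
=(w-\tilde w)\{-q+\gamma q(s',\epol)\}
+\tilde w\{-(q-\tilde q)+\gamma (q-\tilde q)(s',\epol)\}.
\end{align*}
I would bound the first summand by $\tau_1$ times $\|-q+\gamma q(s',\epol)\|_\infty\le(1+\gamma)C_{\Qbbb}$, using the envelope $\|q\|_\infty\le C_{\Qbbb}$ together with the observation that $q(s',\epol)$ is an $\epol$-average of values of $q$ and hence inherits the same $\infty$-bound. The second summand I would bound by $\|\tilde w\|_\infty\le C_{\Wbbb}$ times $\|-(q-\tilde q)+\gamma (q-\tilde q)(s',\epol)\|_\infty\le(1+\gamma)\tau_2$, again invoking that averaging over $\epol$ does not increase the sup-norm of $q-\tilde q$. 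Combining the two gives $\|g_{w,q}-g_{\tilde w,\tilde q}\|_\infty\le(1+\gamma)C_{\Qbbb}\tau_1+(1+\gamma)C_{\Wbbb}\tau_2$.

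Finally I would balance the two contributions by setting each equal to $\tau/2$, i.e. choosing $\tau_1=0.5\tau/((1+\gamma)C_{\Qbbb})$ and $\tau_2=0.5\tau/((1+\gamma)C_{\Wbbb})$, which are exactly the radii appearing in the claim, so that the total sup-norm error is at most $\tau$. The only point demanding (minor) care — and the sole place a careless argument might go wrong — is the $q(s',\epol)=\E_{a'\sim\epol(\cdot\mid s')}[q(s',a')]$ term: one must note that taking a policy expectation is a contraction in $\|\cdot\|_\infty$, so both $\|q(s',\epol)\|_\infty\le C_{\Qbbb}$ and $\|(q-\tilde q)(s',\epol)\|_\infty\le\tau_2$. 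Everything else is the standard product-cover counting argument, so I anticipate no substantive obstacle.
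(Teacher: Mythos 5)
Your proposal is correct and follows essentially the same argument as the paper's proof: a product cover of $\Wbbb\times\Qbbb$, a two-term telescoping decomposition of $g_{w,q}-g_{\tilde w,\tilde q}$ bounded via the envelopes $C_{\Wbbb},C_{\Qbbb}$ and the fact that averaging over $\epol$ is an $\ell_\infty$-contraction, and the same choice of radii $0.5\tau/((1+\gamma)C_{\Qbbb})$ and $0.5\tau/((1+\gamma)C_{\Wbbb})$. The only (immaterial) difference is which of the two functions in each cross term is taken from the cover versus the original class.
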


\begin{proof}[Proof of Lemma \ref{lem:covering_G}]
Fix $\varepsilon_1 > 0$ and $\varepsilon_2>0$ and set $N_Q = \bN(\varepsilon_1, \Qbbb, \|\cdot\|_\infty)$ and $N_W = \bN(\varepsilon_2, \Wbbb, \|\cdot\|_\infty)$.
We pick an $\varepsilon_1$-covering set $\{q_i\}_{i=1}^{N_Q} \subset \Qbbb$, that is, for any $q \in \Qbbb$, there exists $q_i \in \{q_i\}_{i=1}^{N_Q} $ such as $\|q - q_i\|_\infty \leq \varepsilon_1$.
Similarly, we pick an $\varepsilon_2$-covering set $\{w_j\}_{j=1}^{N_W} \subset \Wbbb$.

We consider a functional $G: \Wbbb \times \Qbbb \to \Gcal$ such as
\begin{align*}
	G[w,q](s,a,r,s') := w(s,a)\{-q(s,a)+\gamma  \rE_{a' \sim \pi_e(a'|s')}[q(s',a')]\}.
\end{align*}
Simply, we have $\Gcal = \{G[w,q] ; w\in \Wbbb,q\in \Qbbb \}$.
Using the picked covering set, we consider a finite set $\tilde{\Gcal} := \{G[w_j,q_i] ; i = 1,...,N_Q, j = 1,...,N_W \} \subset \Gcal$.
Obviously, we have $|\tilde{\Gcal} |\leq N_W N_Q$.

We will show that $\tilde{\Gcal}$ is a covering set of $\Gcal$.
Pick $G[w,q] \in \Gcal$ arbitrary, then we can pick $w_j$ and $q_i$ such as $\|q-q_i\|_\infty \leq \varepsilon_1$ and $\|w-w_j\|_\infty \leq \varepsilon_2$.
Then, we evaluate $G[w_j,q_i] (s,a,r,s')- G[w,q](s,a,r,s')$ as
\begin{align*}
	&|G[w_j,q_i] (s,a,r,s')- G[w,q](s,a,r,s') |\\
	& = |w_j(s,a)\{-q_i(s,a)+\gamma  \rE_{a' \sim \pi_e(a'|s')}[q_i(s',a')]\} - w(s,a)\{q(s,a)+\gamma  \rE_{a' \sim \pi_e(a'|s')}[q(s',a')]\} | \\
	&=|(w_j(s,a) - w(s,a))\{-q_i(s,a)+\gamma  \rE_{a' \sim \pi_e(a'|s')}[q_i(s',a')]\} \\
	&  \quad + w(s,a)[-(q(s,a) - q_i(s,a)) + \gamma ( \rE_{a' \sim \pi_e(a'|s')}[q_i(s',a')] -  \rE_{a' \sim \pi_e(a'|s')}[q(s',a')])]| \\
	& \leq |w_j(s,a) - w(s,a)| |-q_i(s,a)+\gamma  \rE_{a' \sim \pi_e(a'|s')}[q_i(s',a')]| \\
	& \quad + |w(s,a)| |-(q(s,a) - q_i(s,a)) + \gamma ( \rE_{a' \sim \pi_e(a'|s')}[q_i(s',a') - q(s',a')])| \\
	& =: I + II.
\end{align*}
We bound the terms $I$ and $II$ separately.
For the first term $I$, we have
\begin{align*}
	I&\leq |w_j(s,a) - w(s,a)| (|q_i(s,a)| + |\gamma  \rE_{a' \sim \pi_e(a'|s')}[q_i(s',a')]||) \\
	& \leq \|w_j - w\|_\infty ( \|q_i\|_\infty + \gamma \|q_i\|_\infty) \leq \varepsilon_2  (1+\gamma) C_\Qbbb.
\end{align*}
For the second term $II$, we have
\begin{align*}
	II &\leq \|w\|_\infty (|q(s,a) - q_i(s,a)| + \gamma | \rE_{a' \sim \pi_e(a'|s')}[q_i(s',a') - q(s',a')]| ) \\
	&\leq C_\Wbbb (\|q-q_i\|_\infty + \gamma \|q-q_i\|_\infty) \leq C_\Wbbb (1+\gamma) \varepsilon_1.
\end{align*}

Combining the bound, we find that $\tilde{\Gcal}$ is a $(\varepsilon_2  (1+\gamma) C_\Qbbb + C_\Wbbb (1+\gamma) \varepsilon_1)$-covering set of $\Gcal$.
That is, we have
\begin{align*}
	\bN(\varepsilon_2 ( (1+\gamma) C_\Qbbb) + C_\Wbbb (1+\gamma) \varepsilon_1, \Gcal, \|\cdot\|_\infty) \leq N_Q N_W = \bN(\varepsilon_1, \Qbbb, \|\cdot\|_\infty)\bN(\varepsilon_2, \Wbbb, \|\cdot\|_\infty).
\end{align*}
Substituting $\varepsilon_2 = 0.5\varepsilon/ ((1+\gamma) C_\Qbbb)$ and $\varepsilon_1 = 0.5\varepsilon/(C_\Wbbb (1+\gamma))$ yields the statement.
\end{proof}

We consider using neutral networks with an ReLU activation function. The covering number of these neural networks is calculated as follows. 
\begin{lemma}[Neural network; Lemma 21 in \cite{JMLR:v21:20-002}] \label{lem:covering_neural} 
    Let $\Fcal_{NN}$ be a set of functions by a neural network with $L$ layers, $\Omega$ weights in $[-B,B]$, and $1-$Lipschitz continuous activation function 
    Then, for $\tau \in (0,1]$, we have
    \begin{align*}
        \log \Ncal(\tau, \Fcal_{NN}, \|\cdot\|_\infty) \leq \Omega \log \left( \frac{ 2LB^L (\Omega+1)^L}{\tau} \right). 
    \end{align*}
\end{lemma}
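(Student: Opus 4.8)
The plan is to reduce the sup-norm covering number of the function class $\Fcal_{NN}$ to a covering number of the finite-dimensional parameter space $[-B,B]^{\Omega}$, exploiting that the map from weights to the realized function is Lipschitz. Concretely, if I can show that whenever two weight vectors $\theta,\theta'$ satisfy $\|\theta-\theta'\|_{\infty}\le\epsilon$ the corresponding networks satisfy $\|f_{\theta}-f_{\theta'}\|_{\infty}\le \Lambda\epsilon$ (sup-norm over the bounded input domain, say $[0,1]^{d}$), then any $\epsilon$-grid on $[-B,B]^{\Omega}$ induces a $(\Lambda\epsilon)$-cover of $\Fcal_{NN}$. Choosing $\epsilon=\tau/\Lambda$, the grid has at most $(2B/\epsilon)^{\Omega}=(2B\Lambda/\tau)^{\Omega}$ points, so $\log\Ncal(\tau,\Fcal_{NN},\|\cdot\|_{\infty})\le \Omega\log(2B\Lambda/\tau)$, and it then remains only to show $\Lambda\le LB^{L-1}(\Omega+1)^{L}$ so that $2B\Lambda\le 2LB^{L}(\Omega+1)^{L}$.

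The two quantitative ingredients are a forward magnitude bound and a weight-sensitivity bound, both proved by induction on depth. First, writing $h^{(\ell)}$ for the post-activation output of layer $\ell$ with $h^{(0)}=x$, $\|x\|_{\infty}\le 1$, and using that the activation is $1$-Lipschitz with $\sigma(0)=0$ (true for ReLU, so $|\sigma(z)|\le|z|$), each layer map is $B(\Omega+1)$-Lipschitz and satisfies $\|h^{(\ell)}\|_{\infty}\le B(\Omega+1)\|h^{(\ell-1)}\|_{\infty}$; here the factor $(\Omega+1)$ bounds the fan-in of any neuron (including its bias) by the total weight count. Hence $\|h^{(\ell)}\|_{\infty}\le (B(\Omega+1))^{\ell}$. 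Second, to bound sensitivity I perturb the weights of a single layer $k$ by at most $\epsilon$ per coordinate: this changes $h^{(k)}$ by at most $(\Omega+1)\,\epsilon\,\|h^{(k-1)}\|_{\infty}\le (\Omega+1)\epsilon (B(\Omega+1))^{k-1}$ in sup-norm, and this deviation is then amplified by the remaining $L-k$ layers, each $B(\Omega+1)$-Lipschitz in its input. Summing the contributions of all $L$ layers yields $\|f_{\theta}-f_{\theta'}\|_{\infty}\le L(\Omega+1)\epsilon (B(\Omega+1))^{L-1}=LB^{L-1}(\Omega+1)^{L}\epsilon$, which is exactly the claimed Lipschitz constant $\Lambda$.

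Assembling the pieces, the grid cover of $[-B,B]^{\Omega}$ at scale $\epsilon=\tau/\Lambda$ gives $\log\Ncal(\tau,\Fcal_{NN},\|\cdot\|_{\infty})\le \Omega\log(2B\Lambda/\tau)\le \Omega\log(2LB^{L}(\Omega+1)^{L}/\tau)$, as desired. The main obstacle, and the only place where real care is needed, is the layer-by-layer propagation of a weight perturbation in the second step: one must simultaneously track the growth of the intermediate activations (which supplies the $B^{L-1}$ and most of the $(\Omega+1)^{L}$ factors) and the Lipschitz amplification of the perturbation through the downstream layers, while bounding each layer's width and fan-in uniformly by the global budget $\Omega$ rather than by unknown per-layer widths. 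Once the activation-magnitude induction and the perturbation-amplification induction are set up with the \emph{same} per-layer constant $B(\Omega+1)$, the bookkeeping closes and the bound follows; the input-boundedness assumption $\|x\|_{\infty}\le 1$ together with $\sigma(0)=0$ are precisely what keep the forward bound clean.
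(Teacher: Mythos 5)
Your argument is correct. The paper does not prove this lemma itself---it is imported verbatim as Lemma 21 of the cited reference---and your proof is exactly the standard argument behind that result: discretize the parameter cube $[-B,B]^{\Omega}$ at mesh $\tau/\Lambda$, establish that the weight-to-function map is $\Lambda$-Lipschitz in sup-norm via the hybrid (one-layer-at-a-time) perturbation combined with the forward magnitude bound $\|h^{(\ell)}\|_{\infty}\le (B(\Omega+1))^{\ell}$, and count grid points. The only caveats are cosmetic: you implicitly need $\sigma(0)=0$ (satisfied by ReLU, the only activation used in the paper) and $B(\Omega+1)\ge 1$ so the bias term is absorbed in the forward induction, and your fan-in bound of $\Omega+1$ is a harmless over-count; none of this affects the stated inequality.
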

Besides, it is known that these  neural networks approximate the Sobolev ball. 
\begin{lemma}[Theorem 1 in \cite{yarotsky2017error}] \label{lem:yaro}
    Suppose $q_0$ belongs to the Sobolev ball with an order $p$ with its input is $d$-dimensional.
    Then, there exists the neural network with a ReLU activation function, which has at most $L=O(\log \Omega)$ layers and $\Omega$ parameters s.t. 
    \begin{align*}
        \|q^* - q_0\|_\infty =\tilde{O}( \Omega^{-p/ d}).
    \end{align*}
\end{lemma}

For a matrix $A$, let $\|A\|_F:= \sqrt{\sum_{i,j} A_{i,j}^2}$ be its Frobenius norm. 
\begin{lemma}[Theorem 1 in \cite{golowich2018size}] \label{lem:golowich}
    Let $\mathcal{F}_{NN}$ be a set of functions by a neural network with $L$ layers and activation function which is $1$-Lipschitz continuous and positive-homogeneous, that is, $\sigma(az) = a \sigma(z)$ holds with $a > 0$.
    Further, let $A_\ell$ be a weight matrix of an $\ell$-th layers for $\ell = 1,...,L$ and assume that $\|A_\ell\|_{F} \leq M_F(\ell)$ with a bound $M_F(\ell)$.
    Then, we have $\mathcal{R}_n(\mathcal{F}_{NN}) \leq \max_x\|x\|_2 (2\sqrt{\log L} + 1) \prod_{\ell=1}^L M_F(\ell)/\sqrt{n}$.
\end{lemma}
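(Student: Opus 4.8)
The plan is to reproduce the peeling argument of \citet{golowich2018size}, whose essential device is to remove each layer \emph{inside} an exponential moment, so that the depth enters only additively (through a logarithm) rather than as a multiplicative blow-up. Recall that $\mathcal{R}_n(\mathcal{F}_{NN})=\frac1n\E_\epsilon\sup_{f}\sum_i\epsilon_i f(x_i)$. Write the network as $f(x)=A_L\,\sigma(A_{L-1}\sigma(\cdots\sigma(A_1 x)))$ and let $h_k(x)=\sigma(A_k\sigma(\cdots\sigma(A_1 x)))$ be the vector-valued output after $k$ layers, with $h_0(x)=x$. Since the top layer $A_L$ is a row vector with $\magd{A_L}\le\magd{A_L}_F\le M_F(L)$, Cauchy--Schwarz peels it off immediately:
\[
\sup_{f\in\mathcal{F}_{NN}}\sum_i\epsilon_i f(x_i)=M_F(L)\,\sup_{A_{1:L-1}}\magd{\sum_i\epsilon_i h_{L-1}(x_i)},
\]
turning the scalar supremum into a supremum of a Euclidean norm over the depth-$(L{-}1)$ subnetwork.

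Next I would introduce the exponential device: for any $\lambda>0$, Jensen's inequality ($\E\log Z\le\log\E Z$) gives
\[
\E_\epsilon\sup_{A_{1:L-1}}\magd{\sum_i\epsilon_i h_{L-1}(x_i)}\le\frac1\lambda\log\E_\epsilon\exp\!\prns{\lambda\sup_{A_{1:L-1}}\magd{\sum_i\epsilon_i h_{L-1}(x_i)}}.
\]
The core is then an inductive peeling lemma: using that $\sigma$ is $1$-Lipschitz and positive-homogeneous, one shows for each layer $k$ that
\[
\E_\epsilon\exp\!\prns{\lambda\sup\magd{\sum_i\epsilon_i h_{k}(x_i)}}\le 2\,\E_\epsilon\exp\!\prns{\lambda M_F(k)\sup\magd{\sum_i\epsilon_i h_{k-1}(x_i)}}.
\]
Positive homogeneity is exactly what allows $\magd{A_k}_F\le M_F(k)$ to be factored out of $\sigma(A_k\,\cdot\,)$, and a symmetrization step removes the coordinatewise nonlinearity at the cost of only the constant factor $2$, crucially without a $\sqrt{\dim}$ loss.

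Iterating this bound from $k=L-1$ down to $k=1$ accumulates the product $\Gamma'=\prod_{\ell=1}^{L-1}M_F(\ell)$ inside the exponent and a constant outside, reducing everything to the base moment $\E_\epsilon\exp(\lambda\Gamma'\magd{\sum_i\epsilon_i x_i})$. I would control this sub-Gaussian-type quantity via $\E_\epsilon\magd{\sum_i\epsilon_i x_i}\le\sqrt{\sum_i\magd{x_i}^2_2}\le\sqrt n\,\max_i\magd{x_i}_2$ together with a bounded-differences estimate of its moment generating function. Taking logarithms, the accumulated peeling constant contributes additively, and choosing $\lambda$ to balance this depth term against the quadratic sub-Gaussian term yields $\Gamma'\,M_F(L)\,\max_i\magd{x_i}_2$ times a mild factor in $L$, divided by $\sqrt n$, since $\Gamma'M_F(L)=\prod_{\ell=1}^L M_F(\ell)$ and the overall $\frac1n$ combines with the $\sqrt n$ from the base case. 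The naive full peel already gives an $O(\sqrt L)$ factor; to obtain the sharper $(2\sqrt{\log L}+1)$ dependence claimed, one peels only down to a well-chosen intermediate layer, bounds the remaining top layers directly by their operator-norm product, and then optimizes over the stopping layer.

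The main obstacle is the peeling lemma itself: establishing that each layer costs only a multiplicative factor of $2$ inside the exponential moment rather than a dimension- or depth-dependent factor. This is precisely where positive homogeneity is indispensable, and where the symmetrization must be arranged so that contracting the vector-valued $\sigma$ does not introduce a $\sqrt{\dim}$ penalty. Securing the logarithmic-in-$L$ (rather than $\sqrt L$) depth dependence is the secondary difficulty, requiring the layer-selective refinement of the peel; the concluding optimization over $\lambda$ and the base-case moment generating function bound are then routine.
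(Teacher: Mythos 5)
Your main-line reconstruction is exactly the argument behind the cited result (note the paper itself gives no proof of this lemma; it is imported verbatim from \citet{golowich2018size}): Cauchy--Schwarz to peel the scalar output layer, Jensen to move the supremum inside an exponential moment, the factor-$2$-per-layer peeling lemma (where $1$-Lipschitzness, positive homogeneity, the rank-one reduction of the Frobenius-ball supremum, and scalar contraction do the work), the bounded-differences control of the base moment generating function together with $\E_{\epsilon}\|\sum_i\epsilon_i x_i\|_2\le\sqrt{\sum_i\|x_i\|^2_2}$, and the final optimization over $\lambda$. Carried out in full, this argument yields $\Rcal_n(\Fcal_{NN})\le \max_i\|x_i\|_2\,(\sqrt{2\log(2)L}+1)\prod_{\ell=1}^{L}M_F(\ell)/\sqrt{n}$, i.e. a depth factor of order $\sqrt{L}$, and that is precisely what Theorem 1 of \citet{golowich2018size} asserts.

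The gap is the final step you append in order to reach the factor $(2\sqrt{\log L}+1)$ stated in the lemma. First, that factor is a misquotation of $\sqrt{2\log(2)L}+1$: the cited theorem has $\sqrt{L}$, not $\sqrt{\log L}$, depth dependence, so there is no sharper bound for you to recover. Second, the repair you sketch would fail. You cannot bound the un-peeled portion of the network ``directly by its operator-norm product,'' because a Rademacher sum does not pass through a nonlinear sub-network via its Lipschitz constant: the inequality $\|\sum_i\epsilon_i g(h(x_i))\|\le \mathrm{Lip}(g)\,\|\sum_i\epsilon_i h(x_i)\|$ holds for linear $g$ but is false once $g$ contains an activation (take $g=\sigma$ the ReLU, $h(x_1)=1$, $h(x_2)=-1$, $\epsilon_1=\epsilon_2=1$: the left side equals $1$, the right side equals $0$). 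This failure of vector-valued contraction is exactly why the exponential peeling lemma must be paid at every layer, and it blocks any scheme in which skipped layers are charged only their operator norms; relatedly, no $O(\sqrt{\log L})$ depth dependence at the $n^{-1/2}$ rate is proven in \citet{golowich2018size} --- their depth-independent bound (Theorem 2 there) requires additional spectral-norm control and degrades the rate to roughly $n^{-1/4}$. The correct course is to stop after the full peel and report the $\sqrt{2\log(2)L}+1$ factor; the discrepancy lies in the lemma's statement, and it is harmless downstream in \cref{cor:neural_net_overparam}, where $L$ is fixed and only the rate in $n$ matters.
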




\end{document}